\documentclass[twoside]{article}

%
\usepackage[accepted]{aistats2022}
%


\setlength{\pdfpageheight}{11in}
\setlength{\pdfpagewidth}{8.5in}
\usepackage[round]{natbib}


\usepackage{setting}

\begin{document}

%

%

\twocolumn[

\aistatstitle{Metalearning Linear Bandits by Prior Update}

\aistatsauthor{ Amit Peleg \And Naama Pearl \And  Ron Meir }

\aistatsaddress{Viterbi Faculty of ECE\\ 
Technion, Israel\\ samit22@campus.technion.ac.il \And  University of Haifa, Israel \\ npearl@campus.haifa.ac.il \And Viterbi Faculty of ECE \\ Technion, Israel\\ rmeir@ee.technion.ac.il } ]

\begin{abstract}
Fully Bayesian approaches to sequential decision-making assume that problem parameters are generated from a \emph{known} prior.
In practice, such information is often lacking. 
This problem is exacerbated in setups with partial information, where a misspecified prior may lead to poor exploration and performance. 
In this work we prove, in the context of stochastic linear bandits and Gaussian priors, that as long as the prior is sufficiently close to the true prior, the performance of the applied algorithm is close to that of the algorithm that uses the true prior. 
Furthermore, we address the task of learning the prior through metalearning, where a learner updates her estimate of the prior across multiple task instances in order to improve performance on future tasks. 
We provide an algorithm and regret bounds, demonstrate its effectiveness in comparison to an algorithm that knows the correct prior, and support our theoretical results empirically. 
Our theoretical results hold for a broad class of algorithms, including Thompson Sampling and Information Directed Sampling. 
\end{abstract}

\section{INTRODUCTION}\label{sec:Intro}
Stochastic bandit problems involve sequential decision-making in the face of partial feedback, aiming to maximize cumulative reward or minimize regret gained over a series of interactions with the environment (for a comprehensive overview see \cite{lattimore2020bandit}). 
Bandit algorithms often differ in their prior knowledge about the nature of the rewards.
In a frequentist setting, one assumes a reward distribution with fixed, but unknown parameters, while in a Bayesian setting, these parameters are generated from a known prior.
While much effort has been devoted to devising effective algorithms with provably low regret in both settings, the situation is far less clear in a mixed setup,
where the reward parameters are drawn from some \emph{unknown} or partially known prior distribution.
A particular challenge in this case is that exploration based on an incorrect prior assumption may lead an algorithm to waste resources by exploring irrelevant actions or, on the other hand, to disregard good ones (for earlier discussions of the influence of prior choice see  \cite{chapelle2011empirical, bubeck2013prior,honda2014optimality,liu2016prior}).

In the frequentist settings, for algorithmic reasons, some algorithms treat the parameters as if they arise from a prior distribution even though it does not reflect nature \citep{agrawal2012thompson, abeille2017linear}.
Although these algorithms can be applied in the mixed setup, it is natural to expect improved performance when an adequate estimate of the prior exists.
In this work we demonstrate in Theorem \ref{Theorem: single instance regret}, for Gaussian prior distributions, that as long as the prior estimate is sufficiently accurate, the performance of an algorithm that uses the approximate prior is close to that of the same algorithm that uses the true prior. 
This analysis is challenging, since it compares two learning algorithms, both evolving throughout their interaction with the environment.
    
One natural approach to acquire a good prior estimation is based on metalearning.
We study $d$-dimensional linear bandits in a metalearning setup where, at the beginning of each one of the $N$ instances, each of duration $T$, a linear bandit task is sampled from an unknown prior distribution.
The meta-learner maintains a continually updated meta-prior estimator across instances, and uses it as a prior for each instance. Then, within an instance, she selects actions in pursuance of maximizing accumulated rewards, based on an updated within-instance posterior. We provide an explicit algorithm and establish regret bounds with respect to (WRT) the algorithm that knows the prior.

The main contributions of this work are the following: 
\begin{itemize}
    \item In the single instance setting, we prove that when the prior deviation is small, an algorithm's regret is close, up to a multiplicative constant, to the regret of the same algorithm that uses the correct prior.
    This result holds even when the prior deviation is not restricted by a function of the instance duration as implied in previous works, e.g., \citep{bastani2019meta}.
    \item We present a class of algorithms that can use \emph{any} single-instance prior-based approach in a metalearning setup to derive regret bounds with $\tO(\sqrt{NT})$ regret, as opposed to previous results with $\tO(\sqrt{N}T^\alpha)$ regret, $\alpha \geq 3/2$. As far as we are aware, our results provide the first regret bounds of order $\tO(\sqrt{NT})$ when \emph{both} the prior mean and covariance are unknown.
    \item Technically, we develop a two-stage approach to compare algorithms using different priors, and hence \emph{different} actions along the run. This significantly reduces the time-dependence of the regret bounds, and allows us to deal with the uncertainty in both the mean and covariance of the prior (See Table~\ref{table:compare}).
    \item We demonstrate empirically the importance of meta-prior learning in general, and covariance estimation in particular.
\end{itemize}
\section{PRELIMINARIES AND SETTING}
We use the following convention:
variables appear with small letters $x$, vectors with capital letters $X$ and matrices with bold capital letters $\B{X}$.
For $X \in \R^d, \B{A}, \B{B} \in \R^{d \times d}$, $\norm{X}_{p}$ is the $l_{p}$ norm, $\norm{X}$ is the $l_{2}$ norm and $\normop{\B{A}}$ is the $l_2$ operator norm.
The smallest and largest eigenvalues of a matrix $\B{A}$ are $\lmin{\B{A}}, \lmax{\B{A}}$ and $\B{A} \succeq \B{B}$ represents that $\B{A}-\B{B}$ is PSD.
The unique square root of a PSD matrix $\B{A}$ is $\B{A}^{1/2}$.
We introduce the notation $\B{A}[\B{B}]$, when we wish to emphasize that $\B{A}$ is a function of a matrix $\B{B}$. This notation is used for vectors and variables as well.
The set $\{1,\ldots,n\}$ is denoted by $[n]$ for $n \in \mathbb{N}$ and the indicator function is denoted by $\I{\cdot}$.
Finally $\tO$ represents the $\mathcal{O}$ notation up to polylogarithmic factors and so does $\tilde \Omega$ and $\Omega$.
\subsection{Setting and Assumptions}
\label{Sec: Setting and assumptions}
We consider a metalearning problem where a learner interacts with $N$ instances sequentially.
At the start of each instance $n\in[N]$, a random vector $\theta_{n}\in \R^d$ is sampled from a multivariate Gaussian distribution $\N \left(\mu_{*}, \ssigma \right)$ with unknown parameters. 
At each time $t \in [T]$, the learner chooses an action $A_{n,t} \in \R^d$ from a subset of available actions $\A_{n,t}$ presented to her and receives a reward $x_{n,t} \left[A_{n,t}\right]=A_{n,t}^{\top} \theta_{n} + \xi_{n,t}$, where $\xi_{n,t}$ is a noise term sampled independently from a known distribution $\N(0, \sigma^2)$.
We also define the vector $X_{n,t} = [x_{n,1},\ldots,x_{n,t}]$
and the matrix $\B{A}_{n,t}$, which is formed by concatenating the vectors $\left\{A_{n,s}^{\top} \right\}_{s=1}^{t}$ in its rows.

The following technical assumptions are required for the proofs.
We first define $B_a(0)$ as a $d$-dimensional ball of radius $a$ centered at $0$ and the density function $f_A\left(A\right) \triangleq \tilde{f}_A\left(A\right)\I{\norm{A} \leq a}/Z_a$ for some function $\tilde{f}_A$ and an appropriate normalization constant $Z_a$.
\begin{assumption}
\label{Assumption: eigenvalues action Covariance matrix} 
The set of actions can be either deterministic, $\A_{n,t}=B_a(0)$, or a set of actions of any size, $\A_{n,t} \subset B_a(0)$, each of which is sampled i.i.d.~from a distribution $f_A$ with a covariance matrix whose minimal eigenvalue is lower bounded by a known constant $\lmin{\Msigma{\A}}  \geq \lbar{\lambda}_{\Msigma{\A}}>0$.
The function $\tilde{f}_A(A)$, can be either a zero mean Gaussian distribution or one which satisfies monotonicity, i.e., for every $\norm{A_1} \leq \norm{A_2}$ in the support of $f_A$, $\tilde{f}_A(A_1) \geq \tilde{f}_A(A_2)$. 
\end{assumption}
\begin{assumption}
\label{Assumption: eigenvalues prior Covariance matrix}
The minimal and maximal eigenvalues of the prior covariance matrix are lower and upper bounded by known constants, $\lmin{\Msigma{*}} 
\geq \lbar{\lambda}_{\Msigma{*}} > 0, \; \lmax{\Msigma{*}}
\leq \bar{\lambda}_{\Msigma{*}}$.
\end{assumption}
\begin{assumption}
\label{Assumption: prior mean is bounded}
The norm of the prior mean is upper bounded by a known constant, $\norm{\mu_*} \leq m$.
\end{assumption}
Regarding Assumption~\ref{Assumption: eigenvalues action Covariance matrix}, only the boundedness of the actions is necessary during all time-steps, while the monotonicity and the eigenvalues bound are used just during the exploration steps of the algorithm.
\subsection{\texorpdfstring{$\mathrm{\textbf{QB}}_{\B{\tau}}$}{QB} Algorithms and Regret Definition}
While optimal Bayesian approaches operate by an exact computation of predictive distributions, we consider algorithms that work with posterior estimates, and which are not committed to Bayesian optimality. 
We refer to such algorithms as \emph{Quasi-Bayesian} (QB), including, for example, Thompson Sampling (TS) \citep{Thomps33, russo2014learning}, and Information Directed Sampling (IDS) \citep{russo2014ids}.
The Bayesian regret of a QB algorithm that uses a prior $\N \left(\mu_{n}, \B{\Sigma}_{n}\right)$ in the $n_{th}$ instance is defined WRT an oracle that chooses at each step the action that yields the highest expected reward, i.e., 
$A^{*}_{n,t}=\argmax_{{A_{n,t} \in \A_{n,t}}} A_{n,t}^{\top} \theta_{n}$,
\begin{equation}
\begin{aligned}
\label{Eq: Bayesian regret}
    &\E{\RQBn{\mu_{n}}{\B{\Sigma}_{n}}{T}}\\
    &\quad \quad  \quad \quad 
    \triangleq \sum_{t=1}^{T} \E{x_{n,t}\left[A^{*}_{n,t}\right]-x_{n,t} \left[A_{n,t}\right]}.
\end{aligned}
\end{equation}
The expectation is taken over the prior used by the learner, which may be random, due to previous observations, the realization of $\theta_n$, the actions that were presented during the instance, the randomness of the algorithm and the received noise terms.

We analyze algorithms that use the first $\tau$ steps of each instance to explore the actions uniformly at random in order to gain information.
We refer to such algorithms as $\QB$ (for example $\TS,$ $\IDS$) and select $\tau$ so as to minimally affect the regret.

For clarification, there are three degrees of knowledge in this problem setup. 
The highest one is direct knowledge of the realization of each instance $\left\{\theta_{n}\right\}_{n=1}^{N}$. 
The oracle which knows these realizations always chooses the best actions $A^{*}_{n,t}$ and does not need to learn anything in the environment.
The second level is knowledge of the prior. 
We use the term $\KQB$ for the special version of each $\QB$ algorithm that \emph{knows} the true prior and denote its actions by $A^{\mathrm{K}}_{n,t}$.
Such algorithms attempt to learn the realization of $\theta_n$ within the instance, but do not need to learn the meta environment between the instances.
Hence their regret scales linearly in the number of instances $N$.
This type of algorithm is the one usually analyzed under the Bayesian setting, e.g., \citep{russo2014learning, russo2014ids}.
The last level of knowledge includes general $\QB$ algorithms that are unaware of the prior and the realizations and may learn both within and between instances. 

The regret of a $\QB$ algorithm incurred by the incorrect prior is defined 
WRT $\KQB$ and essentially measures the cost of `not knowing' the true prior.
We refer to it as the \emph{relative regret},
\begin{align}
\label{Eq: KQB regret}
    \nonumber
    &\E{\RQBKn{\mu_{n}}{\B{\Sigma}_{n}}{T}}\\
    &\quad \quad \triangleq \sum_{t=1}^{T} \E{x_{n,t}\left[A^{\mathrm{K}}_{n,t}\right]-x_{n,t} \left[A_{n,t}\right]}\\
    \nonumber &\quad \quad =\E{\RQBtaun{\mu_{n}}{\B{\Sigma}_{n}}{T}
    -\RQBtaun{\mu_{*}}{\ssigma}{T}}.
\end{align}
Note that a naive approach that uses the same initial prior, without transferring knowledge between instances, yields a relative regret linear in $N$.

By rewriting~\eqref{Eq: KQB regret}, we can view the Bayesian regret of a $\QB$ algorithm as a sum of the `cost of not knowing the realization of $\theta_n$ when the prior is known' and the `cost of not knowing the prior',
\begin{equation}
\label{Eq: different prespective}
\begin{aligned}
    &\E{\RQBtaun{\mu_{n}}{\B{\Sigma}_{n}}{T}}\\
    &\quad \quad =\E{\RQBtaun{\mu_{*}}{\ssigma}{T}
    +\RQBKn{\mu_{n}}{\B{\Sigma}_{n}}{T}}.
\end{aligned}
\end{equation}

For brevity, we now omit the index $n$ until presenting the meta setting in Section~\ref{Sec: MQB algorithm}.
Given an assumed prior $\N \left( \mu, \B{\Sigma}\right)$ at the beginning of an instance, a $\QB$ algorithm updates its posterior at time $t$ based on the actions taken and the rewards received,
\begin{equation}
\label{Eq: posterior update rule}
\begin{aligned}
    \B{\Sigma}_{t}
    &=\left(\B{\Sigma}^{-1} 
    +\frac{1}{\sigma^2} \B{V}_{t-1} \right)^{-1},\\
    \mu_{t}
    &=\B{\Sigma}_{t}
    \left(
    \B{\Sigma}^{-1} \mu
    +\frac{1}{\sigma^2}\B{A}_{t-1}^{\top} X_{t-1}
    \right),
\end{aligned}
\end{equation}
for the Gram matrix $\B{V}_t \triangleq \B{A}^{\top}_t \B{A}_t$.
We remind the reader that $\B{A}_t, X_t$ contain the actions and rewards up to time $t$ respectively.
The full derivation of the posterior calculation can be found in Appendix~\ref{Sec: Posterior calculations}. 

Algorithm~\ref{Alg: general QB} presents a general scheme of a $\QB$ algorithm.
The specific mechanism of each algorithm is reflected in Line~\ref{Alg: line:play}. 
For example, at time $t > \tau$, $\TS$ samples from the posterior $\tilde{\theta} \sim \p{\mu_{t},\B{\Sigma}_{t}}$ and then plays the best action given that sample, $\argmax_{A_t \in \B{\A}_{t}} A_t^{\top} \tilde{\theta}$.
\begin{algorithm}[htbp]
\label{Alg: general QB}
    \SetKwInOut{KwIn}{Inputs}
    \SetKwInOut{KwOut}{Outputs}

    \KwIn{$\mu$, $\B{\Sigma}$, $\tau$, $\sigma$}
    
    \KwOut{$\B{A}_{\tau}, X_{\tau}$
    \tcp*[h]{for meta estimation}}
    
    \textbf{Initialization:} empty matrix $\B{A}_{0}$ and vector $X_{0}$
	
	\For{$ t=1, \ldots, T$}{
	    \eIf( \tcp*[h]{within instance exploration}){$t \leq \tau$}{
    	    Sample $A_{t}$ uniformly from $\A_{t}$, observe a reward $x_{t}$ 
        }{
	        Play $A_{t}$ according to the specific algorithm scheme, observe a reward $x_{t}$ \label{Alg: line:play}
        }
    Concatenate the actions and rewards $\B{A}_{t} \gets \B{A}_{t-1} \con A_{t}$, $X_{t} \gets X_{t-1} \con x_{t}$

    Update the posterior $\N \left(\mu_{t+1},\B{\Sigma}_{t+1} \right)$ by \eqref{Eq: posterior update rule}
	}
\caption{$\QB\left(\mu, \B{\Sigma}, \tau , \sigma\right)$}
\end{algorithm}
\section{SINGLE INSTANCE REGRET} 
\label{Sec: Single instance regret}
Our main result, Theorem~\ref{Theorem: single instance regret}, bounds the relative regret~\eqref{Eq: KQB regret} of \emph{any} $\QB$ algorithm in a Gaussian prior setting.
In order to establish the result, we follow common practice in the bandit literature of dividing random events into the set of `good events' and their complement, e.g., \citep{lattimore2020bandit}. The former refers to situations where the various estimates are `reasonably' close to their true or expected values, and the latter is the complementary event that is shown to occur with low probability. The bulk of the proof consists of bounding the regret for the good event. In our setting, the good event $\mathcal{E}$ is defined as the intersection of four basic events for $\delta > 0$,
\begin{alignat}{2}
\label{Eq: events for main theorem}
    \nonumber &\mathcal{E}_{\theta}
    &&\triangleq \left\{
    \norm{\ssigma^{-1/2} \left(\theta-\mu_{*} \right)}_{\infty}^2
    \leq 2\ln \left(\frac{d^2T}{\delta} \right) \right \},\\
    \nonumber
    &\mathcal{E}_{v}
    &&\triangleq \left\{
    \lmin{\vmtau} \geq 
    \frac{\lbar{\lambda}_{\Msigma{\A}} d}{2} \right \},\\
    &\mathcal{E}_{m} &&\triangleq \left\{\norm{\app{\mu}-\mu_{*}} \leq \sqrt{\fmd \delta}
     \right\}, \\
    \nonumber &\mathcal{E}_{s} &&\triangleq
    \left\{\normop{\asigma-\ssigma}
    \leq
    \sqrt{\fsd \delta},
    \quad \asigma  \succeq \ssigma
    \right\},\\
    \nonumber &\mathcal{E} &&\triangleq \left\{\mathcal{E}_{\theta} \cap \mathcal{E}_{v} \cap \mathcal{E}_{m} \cap \mathcal{E}_{s}\right\}.
\end{alignat}
The event $\mathcal{E}_{\theta}$ is an instance-based event, unrelated to the performed algorithm, and represents the event that the realization of $\theta$ is not too far from its mean.
The event $\mathcal{E}_{v}$ indicates that the $\QB$ algorithm explores sufficiently in all directions during the exploration steps.
The events $\mathcal{E}_{m}, \mathcal{E}_{s}$ represent the distance between the prior of the $\QB$ algorithm $\N (\app{\mu}, \asigma)$ and the true unknown prior $\N \left(\mu_{*}, \ssigma\right)$. 
The arguments $\fmd$ and $\fsd$, introduced in \eqref{Eq: events for main theorem}, quantify these distances.
Moreover, the event $\mathcal{E}_{s}$ specifies that the estimated covariance is wider than the true covariance, reflecting the learner's lower level of certainty compared to an oracle that \textit{knows} the true prior and thus prevents under-exploration. 
This issue can be also realized from a Bayesian point of view, where in the case that both the mean and covariance are unknown, the posterior mean distribution is broader compared to the case that only the mean is unknown (see section 4.6 in \cite{murphy2012machine}).
The arguments $\fmd, \fsd$, as well as $\tau$, may depend on the dimension and the horizon, and may also depend logarithmically on $\nicefrac{1}{\delta}$.

\begin{restatable}{theorem}
{singleInstanceRegret}
\label{Theorem: single instance regret}
Let $\theta \sim \N (\mu_{*},\ssigma )$ and let $\N (\app{\mu},\asigma)$ be the prior of a $\QB$ algorithm.
For $\tau < T$, if for some $0<\delta \leq \nicefrac{1}{M}$ the event $\mathcal{E}$ holds with probability larger than $1-\frac{9\delta}{dT}$, then the relative regret is bounded by,
\begin{align*}
    &\underset{\substack{\textrm{\rm cost of not knowing} \\ \textrm{\rm the prior}}}
    {\underbrace{\E{\RQBK{\app{\mu}}{\asigma}{T}}}}\\
    &\quad \quad
    \leq
    k_1 \cdot
    \underset{\substack{\textrm{\rm cost of not knowing the realization of $\theta$} \\ \textrm{\rm when the prior is known}}}{\underbrace{
    \vphantom{\frac{c_{\text{\rm bad}}\delta}{\sqrt{d}}}
    \E{\RQB
    {\mu_{*,\tau+1}}
    {\B{\Sigma}_{*,\tau+1}}
    {T-\tau}}}}
    +\underset{\substack{\textrm{\rm bad} \\ \textrm{\rm event}}}{\underbrace{\frac{c_{\text{\rm bad}}\delta}{\sqrt{d}}}},
\end{align*}
\end{restatable}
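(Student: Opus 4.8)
The plan is to (i) separate the good event $\mathcal{E}$ from its complement, (ii) use a coupling to cancel the exploration phase, and (iii) compare the two post‑exploration runs through a \emph{two‑stage change of measure in the parameter $\theta$}, which is what keeps the bound essentially free of the horizon $T$. For the bad event, note that the two algorithms face the same reward function, so $x_t[A^{\mathrm K}_t]-x_t[A_t]=(A^{\mathrm K}_t-A_t)^{\top}\theta$, which is at most $2a\norm{\theta}$ since all actions lie in $B_a(0)$; hence the relative regret restricted to $\mathcal{E}^c$ is at most $2aT\,\E{\norm{\theta}\I{\mathcal{E}^c}}$ in absolute value. Splitting on whether $\norm{\theta}$ exceeds a threshold of order $\sqrt{\bar{\lambda}_{\Msigma{*}}\,d\log(dT/\delta)}$ — controlling the tail by the Gaussian concentration of $\theta\sim\N(\mu_*,\ssigma)$ and the bounded part by the hypothesis that $\mathcal{E}^c$ has probability at most $9\delta/(dT)$ — gives a bound of order $aT\cdot\sqrt{d\log(dT/\delta)}\cdot\delta/(dT)$, which is $c_{\text{bad}}\delta/\sqrt d$ once $\delta\le 1/M$ (this fixes part of the definition of $M$, absorbing $a$ and the polylogarithm into $c_{\text{bad}}$); the $1/\sqrt d$ is exactly the price of the $\sqrt d$ threshold.

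On $\mathcal{E}$, for $t\le\tau$ both the $\QB$ algorithm and $\KQB$ sample uniformly from $\A_t$, so under a coupling of their exploration randomness they play identical actions and receive identical rewards, contributing zero relative regret. Writing $x_t[A^{\mathrm K}_t]-x_t[A_t]=(x_t[A^*_t]-x_t[A_t])-(x_t[A^*_t]-x_t[A^{\mathrm K}_t])$ and using that $\KQB$'s pseudo‑regret $\sum_{t>\tau}(x_t[A^*_t]-x_t[A^{\mathrm K}_t])$ is nonnegative pathwise, it suffices to bound $\E{R\,\I{\mathcal{E}}}$ where $R=\sum_{t>\tau}(x_t[A^*_t]-x_t[A_t])$ is the $\QB$ algorithm's pseudo‑regret after exploration. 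Conditioning on the (common) exploration data $\B{A}_{\tau},X_{\tau}$, the continuation of the $\QB$ algorithm is a $\QB$ run of length $T-\tau$ started from the posterior $\N(\mu_{\tau+1},\B{\Sigma}_{\tau+1})$ obtained from its prior $\N(\app{\mu},\asigma)$, while conditionally $\theta\sim\N(\mu_{*,\tau+1},\B{\Sigma}_{*,\tau+1})$; hence, up to the $\B{A}_{\tau},X_{\tau}$‑measurable factor $\I{\mathcal{E}_v\cap\mathcal{E}_m\cap\mathcal{E}_s}$, the conditional expectation of $R\,\I{\mathcal{E}}$ equals $\mathbb{E}_{\theta\sim\N(\mu_{*,\tau+1},\B{\Sigma}_{*,\tau+1})}\bigl[\rho(\theta)\,\I{\mathcal{E}_\theta}\bigr]$, where $\rho(\theta)$ is the expected continuation regret (over the algorithm's internal randomness, future action sets and future noise) of the $\QB$ algorithm that \emph{believes} the prior is $\N(\mu_{\tau+1},\B{\Sigma}_{\tau+1})$ and faces a fixed $\theta$.

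The two‑stage step is the heart of the argument and the main obstacle: the $\QB$ algorithm and $\KQB$ take different actions, so their posteriors evolve on different data, and a naive step‑by‑step coupling would lose a factor $T$. Instead I change measure in the one object the two runs share, $\theta$, in two stages — first the mean, then the covariance. First I replace the believed prior mean and the law of $\theta$ by $\N(\mu_{*,\tau+1},\B{\Sigma}_{\tau+1})$: the Gaussian likelihood ratio between $\N(\mu_{\tau+1},\cdot)$ and $\N(\mu_{*,\tau+1},\cdot)$ is uniformly $O(1)$ over $\theta$ in $\mathcal{E}_\theta$, because on $\mathcal{E}_m\cap\mathcal{E}_v$ the two \emph{posterior} means differ by $O(\sqrt{\fmd\,\delta})$ once the prior gap is damped by $\vmtau\succeq\tfrac12\lbar{\lambda}_{\Msigma{\A}}\,d\,\B{I}$ — this damping is exactly why the prior deviation need not shrink with $T$. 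Then I replace the believed covariance and the law of $\theta$ by $\N(\mu_{*,\tau+1},\B{\Sigma}_{*,\tau+1})$; here $\mathcal{E}_s$ supplies not only $\normop{\asigma-\ssigma}\le\sqrt{\fsd\,\delta}$ but the ordering $\asigma\succeq\ssigma$, hence $\B{\Sigma}_{\tau+1}\succeq\B{\Sigma}_{*,\tau+1}$, which makes the determinant ratio at least $1$ and bounded and fixes the sign of the quadratic form in the exponent, so the likelihood ratio is again $O(1)$ on $\mathcal{E}$ (this ordering is also what prevents the $\QB$ algorithm from under‑exploring relative to $\KQB$). Each stage costs a multiplicative constant; their product is $k_1$, and the resulting $\mathbb{E}_{\theta\sim\N(\mu_{*,\tau+1},\B{\Sigma}_{*,\tau+1})}[\rho_*(\theta)]$ is precisely the conditional $\RQB{\mu_{*,\tau+1}}{\B{\Sigma}_{*,\tau+1}}{T-\tau}$. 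Undoing the conditioning on $\B{A}_{\tau},X_{\tau}$, dropping the remaining indicator via $\RQB{\mu_{*,\tau+1}}{\B{\Sigma}_{*,\tau+1}}{T-\tau}\ge 0$, and adding back the bad‑event term yields the claim. The delicate points are the uniform $O(1)$ control of the two likelihood ratios on $\mathcal{E}$ — where $\delta\le 1/M$, the explicit forms of $\fmd,\fsd$, and the eigenvalue assumptions on $\ssigma$ and $\Msigma{\A}$ all enter — and the verification that the $\QB$ continuation given exploration data is genuinely a $\QB$ instance with the stated posterior.
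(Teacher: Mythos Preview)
Your bad-event bound and the observation that the $\tau$ exploration steps contribute zero relative regret in expectation are both in line with the paper. The gap is in your two-stage change of measure.

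A change of measure in $\theta$ can only alter the \emph{law} of $\theta$; it cannot alter what the algorithm \emph{believes}. After your likelihood-ratio step you obtain (at best) a bound of the form
\[
\E_{\theta\sim \N(\mu_{\tau+1},\B{\Sigma}_{\tau+1})}\bigl[\rho_{\mu_{\tau+1},\B{\Sigma}_{\tau+1}}(\theta)\bigr],
\]
i.e.\ the well-specified Bayesian regret of an algorithm whose prior is $\N(\mu_{\tau+1},\B{\Sigma}_{\tau+1})$. This is \emph{not} the quantity on the right-hand side of the theorem, which is the well-specified regret under $\N(\mu_{*,\tau+1},\B{\Sigma}_{*,\tau+1})$. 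Since $\B{\Sigma}_{\tau+1}\succeq\B{\Sigma}_{*,\tau+1}$ (and $\mu_{\tau+1}$ is the ``wrong'' posterior mean), there is no reason the former is bounded by a constant times the latter, and no further change of measure in $\theta$ will convert the subscript $(\mu_{\tau+1},\B{\Sigma}_{\tau+1})$ into $(\mu_{*,\tau+1},\B{\Sigma}_{*,\tau+1})$. In short, you have conflated ``the distribution $\theta$ is drawn from'' with ``the prior the algorithm uses,'' and these are decoupled once the prior is misspecified.

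The paper's proof avoids this by performing the change of measure in the exploration \emph{data} rather than in $\theta$: first in the random exploration actions $\B{A}_\tau$ (so that $\KQB$'s Gram matrix satisfies $\B{V}^{\mathrm K}_\tau=\B{V}_\tau-\sigma^2(\ssigma^{-1}-\asigma^{-1})$, forcing $\B{\Sigma}_{*,\tau+1}=\asigma_{\tau+1}$), and then in the exploration noise $\Xi_\tau$ (to force $\mu_{*,\tau+1}=\app\mu_{\tau+1}$). Because the posterior is a deterministic function of the prior and the data, changing the data \emph{does} change the believed posterior, and after both alignments the two algorithms are indistinguishable for the remaining $T-\tau$ steps. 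Note that this requires the two algorithms to take \emph{different} exploration actions (your pathwise coupling to identical actions precludes covariance alignment), which is precisely why Assumption~\ref{Assumption: eigenvalues action Covariance matrix} on the action density is needed and why the determinant of the action-Jacobian must be controlled.
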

\vspace*{-3mm}
where $M \in \tO\left(\fmd + \tau^2 \fsd\right), 
    \;
    k_1 \in \tO\left(\sqrt{\fmd \delta} +
    \tau \sqrt{\fsd \delta}\right).$
    
The definitions of $M, k_1$ and $c_{\text{\rm bad}}$ are in \eqref{Eq: constants for Theorem: single instance regret} in Appendix~\ref{Appendix: Single instance regret proof} as well as further details.
The relationship between the performance of the algorithm and the initial prior deviation in the events $\mathcal{E}_m$ and $\mathcal{E}_s$ is represented by $k_1$.
The term $M$ ties $\delta$ to the arguments $\fmd$ and $\fsd$, thus forcing the prior deviations to be small,
and $c_{\text{\rm bad}} \in \tO(1)$ stems from the bad event.

An immediate consequence of Theorem~\ref{Theorem: single instance regret} and \eqref{Eq: different prespective} is a bound on the Bayesian regret \eqref{Eq: Bayesian regret} of any $\QB$ algorithm,
\begin{align}
\label{Eq: consequence thrm 1}
    &\underset{\substack{\text{cost of not knowing both the} \\ 
    \text{realization of $\theta$ and the prior}}}
    {\underbrace{\E{\RQB{\app{\mu}}{\asigma}{T}}}}\\
    \nonumber
    &\enspace \; \leq
    \left(1 + k_1\right)
    \E{\RQB
    {\mu_{*,\tau+1}}
    {\B{\Sigma}_{*,\tau+1}}
    {T - \tau}}
    +\tO(\tau).
\end{align}
Note that for $\tau \in \tO\left(d\right)$, $\norm{\app{\mu}-\mu_{*}} \in \tO\left(1\right)$ and ${\big\lVert\asigma -\ssigma\big\rVert}_{\mathrm{op}} \in \tO\left(\nicefrac{1}{d}\right)$, $\QB$ is a $(1$+$\alpha)$-\emph{approximation} of $\KQB$ for some constant $\alpha >0$ which is determined by the constants in $\tau$, $\norm{\app{\mu}-\mu_{*}}$ and ${\big\lVert\asigma -\ssigma\big\rVert}_{\mathrm{op}}$. See Appendix~\ref{Appendix: Theorem 1 implication} for a concrete example.

Having bounded the regret of a $\QB$ algorithm by the standard Bayesian regret of $\KQB$, we can leverage previous results for Bayesian algorithms. 
For example, proposition 6 and Lemma 7 in \cite{lu2019information}, adjusted to the Gaussian prior in \cite{basu2021no}, bound the prior-dependent Bayesian regret for TS and a Bayesian version of UCB (Upper Confidence Bound) in the case of finite action spaces.
Plugging this bound with $\delta=\nicefrac{1}{T^2}$ into \eqref{Eq: consequence thrm 1} we get,
\begin{align}
\label{Eq: prior dependent Theorem 1}
    \nonumber
    &\E{\RQB{\app{\mu}}{\asigma}{T}}\\
    &\quad \leq
    \left(1 + k_1\right)
    \Bigg[4 \sqrt{\frac{ \bar{\lambda}_{\Msigma{*}}a^2}{\ln \left(1 + \frac{\bar{\lambda}_{\Msigma{*}}a^2}{\sigma^2} \right)}\ln \left(4\abs{ \A} T^2 \right)}\\
    \nonumber
    &\quad \quad \times \sqrt{\frac{1}{2}dT \ln \left(1 + \frac{\bar{\lambda}_{\Msigma{*}} T}{\sigma^2} \right)}
    +\sqrt{2 \bar{\lambda}_{\Msigma{*}}a^2} \Bigg]
    + \tO(\tau).
\end{align}
An interesting implication of Theorem~\ref{Theorem: single instance regret} is for the offline learning setup. 
With the increasing amount of data available, the opportunity arises to form more informative priors, which are guaranteed by the theorem to have the same regret (up to constants) as any $\KQB$ algorithm in a single instance. 
Another implication is for sequential settings, where $N$ instances are sampled from the same distribution one by one. We elaborate on the latter in Section~\ref{Sec: MQB algorithm} and show that the suggested meta-algorithm produces the conditions for the good event to hold with high probability.
\paragraph{Proof sketch} The difficulty in bounding the regret based on the comparison 
between an algorithm that knows the prior ($\KQB$) and another that estimates it $(\QB)$, is twofold. 
First, since the posteriors of both algorithms depend on the actions and the rewards throughout the instance, it is hard to track the distance between the posteriors as the instance progresses.
Second, although regret bounds on TS with a known prior are proved to be tighter as the prior is more informative \citep{russo2016information, dong2018information}, an improved bound does not ensure an actual improvement in the regret of the algorithm.
Therefore, establishing low estimation error at the start or during the instance, does not suffice.

To establish a within-instance regret bound between $\QB$ and $\KQB$, we adapt the idea of mean alignment from \cite{bastani2019meta} and adjust it to cover covariance alignment as well.
This analytic tool is used to cause the two algorithms to mathematically posses an identical posterior at a specific time and thus to behave identical (on average) until the end of the instance.
Specifically, with a two stage technique, we use the randomness of the first $\tau$ exploration steps to align both the means and the covariance matrices at time $\tau+1$.   
Since the two compared algorithms start with different covariance matrices, they can only align if the learners would take \emph{different} actions (see \eqref{Eq: posterior update rule}).
Practically, for every set of actions chosen by $\QB$ with a certain probability, there is a nonzero probability for $\KQB$ to choose the set of actions that would give rise to covariance alignment.
This occurs due to the randomness in the actions selection and due to the action space properties in Assumption~\ref{Assumption: eigenvalues action Covariance matrix}.
The cost of analytically switching between these two probabilities in order to align the covariance matrices is termed \emph{covariance alignment cost} and it is reduced as a function of the distance between the covariance matrices at the beginning of the instance.
The covariance alignment is analyzed in Appendix~\ref{Sec: covariance alignment}.

Yet, aligning the covariance matrices does not imply mean alignment as well, 
but rather leads to an alignment with a surrogate algorithm, that has started the interaction with the true covariance but with an incorrect mean.
Since the mean is a function of the actions taken and the reward noise received, after using the randomness in the actions to align the covariance matrices, we can use the noise terms to align the different means (Appendix~\ref{Sec: mean alignment}) and thus to align the surrogate algorithm to $\KQB$ at time $\tau+1$. See the full proof in Appendix~\ref{Appendix: Single instance regret proof}.
\section{\texorpdfstring{$\mathrm{\textbf{MQB}}_{\B{\tau}}$}{MQB} ALGORITHM}
\label{Sec: MQB algorithm}
QB algorithms are designed to minimize the regret within a single instance, by refining their estimation of $\theta$, while exploiting their knowledge, as the interaction with the instance proceeds. Using the same line of thought, $\MQB$ aims to minimize the regret along multiple instances by learning the meta-prior, while using the improved prior to reduce the per-instance regret.
Since the prior distribution is Gaussian, one may think to form MLE estimators for the mean and covariance prior to the $n_{th}$ instance.
However, it is inapplicable in the linear bandits environment, since the learner has no access to the true realizations of $\left\{\theta_j \right\}_{j=1}^{n-1}$.
A simple approach would be to utilize the inner-instance estimation of the QB algorithm.
This straightforward approach has two problems that the $\MQB$ algorithm solves, using two levels of exploration.

The first problem rises from the adaptive nature of bandit algorithms, which leads to biased instance-estimators, as discussed in \cite{shin2019sample}. 
This in turn, would lead to an inconsistent meta-estimation of the prior.
The solution is within-instance exploration.
At the beginning of each instance the learner performs $\tau$ \emph{exploration steps}, in which she chooses actions uniformly at random to ensure sufficient estimation of $\theta$ in all directions. The information from steps $\tau+1,\ldots,T$ is ignored during the meta-estimation to keep it unbiased.
The inner mechanism of the $\QB$ algorithm remains the same, i.e., all the actions taken during the instance participate in the inner estimation of $\theta$.
The number of exploration steps, defined so as to balance the regret incurred and the quality of the estimators, is set to
\begin{equation}
\label{Eq: tau}
    \tau= \max \left\{d, \frac{8a^2}{\lbar{\lambda}_{\Msigma{\A}}}\ln(d^2N^2T) \right\}.
\end{equation}
The second problem is the limited amount of knowledge on the meta-prior during the first instances.
Using the inaccurate meta-prior in these instances may result in poor performance as compared to an algorithm with frequentist guarantees which by its nature explores sufficiently in arbitrary environments.
Hence, the $\MQB$ algorithm uses a second level of exploration and during the first $N_0$ \emph{exploration instances} gathers information on the environment without exploiting it yet.
The number of exploration instances, derived from several requirements along the regret proof, is discussed in Section~\ref{Sec: Meta regret}.

The full scheme of the $\MQB$ algorithm is presented in Algorithm~\ref{Alg: MQB}.
Next, we elaborate on the main idea behind the prior estimation.
We use the Ordinary Least Squares (OLS) estimator to obtain a meta estimation for $\theta_{j}$ in every instance,
\begin{equation}
\label{Eq: app theta_n}
    \app{\theta}_{j}
    =\left(\sum_{t=1}^{\tau} A_{j,t} A_{j,t}^{\top}\right)^{-1}
    \sum_{t=1}^{\tau}  A_{j,t} x_{j,t}
    =\B{V}^{-1}_{j,\tau} \B{A}_{j,\tau}^{\top} X_{j,\tau}.
\end{equation}
The estimator for the mean before the $n_{th}$ instance, $\app{\mu}_{n}$, uses these estimations from all previous instances, 
\begin{align}
\label{Eq: app mu_n}
    \app{\mu}_{n}
    =\frac{1}{n-1}\sum_{j=1}^{n-1} \app{\theta}_{j}.
\end{align}
The bias-corrected MLE for the covariance before the beginning of the $n_{th}$ instance would be $\frac{1}{n-2} \sum_{j=1}^{n-1}
(\app{\theta}_{j}-\app{\mu}_{n})
(\app{\theta}_{j}-\app{\mu}_{n})^{\top}$.
However, as we show in Appendix~\ref{Sec: Covariance estimation error}, the estimation errors of $\left\{\theta_{j}\right\}_{j=1}^{n-1}$ cause it to converge to $\ssigma + \frac{\sigma^2}{n-1}
\sum_{j=1}^{n-1} \E{\B{V}^{-1}_{j,\tau}}$.
In order to cancel out the added variance, we add a further term,
\begin{equation}
\label{Eq: app Sigma_n}
    \app{\B{\Sigma}}_{n}
    =\frac{1}{n-2} \sum_{j=1}^{n-1}
    \left(\app{\theta}_{j} - \app{\mu}_{n}\right)
    \left(\app{\theta}_{j} - \app{\mu}_{n}\right)^{\top}
    -\B{G}_{\Sigma},
\end{equation}
where $\B{G}_{\Sigma} = \frac{\sigma^2}{n-1} \sum_{j=1}^{n-1} \B{V}^{-1}_{j,\tau}$. 
Although this estimator is unbiased, in practice it can be wider or narrower than the true covariance, and, as explained in Section~\ref{Sec: Single instance regret}, we aim for the former. Hence we use a widened version of the covariance as suggested by \cite{bastani2019meta} and proved in Lemma~\ref{Lemma: widend matrix is PSD}. 
Given the initial estimation $\asigma_n$ and a confidence level ${\big\lVert\asigma_n -\ssigma\big\rVert}_{\mathrm{op}} \leq s$, the widened version is given by $\wsigma_n \triangleq \asigma_n + s \cdot \B{I}$, which ensures that $\wsigma_n  \succeq \ssigma$ with high probability.
In Lemma~\ref{Lemma: M-QB} in Section~\ref{Sec: Meta regret} we show that the confidence level prior to the $n_{th}$ instance dictates,
\begin{align}
\label{Eq: covariance widening}
    &\wsigma_n=
    \asigma_n
    +c_{\mathrm{w}} \cdot \sqrt{\frac{5d + 2\ln\left(dnT\right)}
    {n-1}} \; \B{I},
\end{align}
where $c_{\mathrm{w}}=50 \left(\frac{2\sigma^2}{\lbar{\lambda}_{\Msigma{\A}}d}
    +\bar{\lambda}_{\Msigma{*}} \right).$
To align with the $\MQB$ scheme, we adjust the $\QB$ algorithm to output the actions taken and the rewards received during the first $\tau$ steps.
Any algorithm can be used in Line~\ref{MQB: line:uninformative}, as long as it is adapted to perform $\tau$ exploration steps and to return $\B{A}_{\tau}, X_{\tau}$.
\begin{algorithm}[htbp]
\label{Alg: MQB}
    \SetKwInput{KwIn}{Inputs}
    
    \KwIn{$N$, $T$, $a, \lbar{\lambda}_{\Msigma{\A}}$, $\lbar{\lambda}_{\Msigma{*}}, \bar{\lambda}_{\Msigma{*}}$, $m$, $\sigma$} 

    \textbf{Initialization:} set $\tau$ by \eqref{Eq: tau}
    
    \For(\tcp*[h]{meta exp.~instances}){$ n=1,\ldots,N_0$}{
        $\left(\B{A}_{n,\tau}, X_{n,\tau}\right) \gets$ Run any $\QB$ algorithm with frequentist guarantees  \label{MQB: line:uninformative}
        
        Compute $\app{\theta}_n$ by \eqref{Eq: app theta_n}
    }
    
    \For{$n=N_0+1,\ldots$, N}{
        Update $\app{\mu}_{n}$ by \eqref{Eq: app mu_n}
        
        Update $\asigma_{n}$ by \eqref{Eq: app Sigma_n} and $\wsigma_n$ by \eqref{Eq: covariance widening}
        
        $\left(\B{A}_{n,\tau}, X_{n,\tau}\right) \gets \QB\left(\app{\mu}_{n}, \wsigma_n, \tau , \sigma\right)$
        
        Compute $\app{\theta}_n$ by \eqref{Eq: app theta_n}
        }
\caption{$\MQB$}
\end{algorithm}
\section{\texorpdfstring{$\mathrm{\textbf{MQB}}_{\B{\tau}}$}{MQB} REGRET}
\label{Sec: Meta regret}
The meta algorithm consists of two key phases, as depicted in Figure~\ref{fig: meta algorithm regret}: 
\begin{enumerate}
    \item \emph{Within-instance} phase, where actions are taken based on the estimated meta-prior and on the within-instance updated posterior.
    \item \emph{Between-instance} phase, where the estimated meta-prior is updated based on information from previous instances.
\end{enumerate}
\vspace*{-5mm}
\begin{figure}[htpb]
\includegraphics[width=\linewidth]{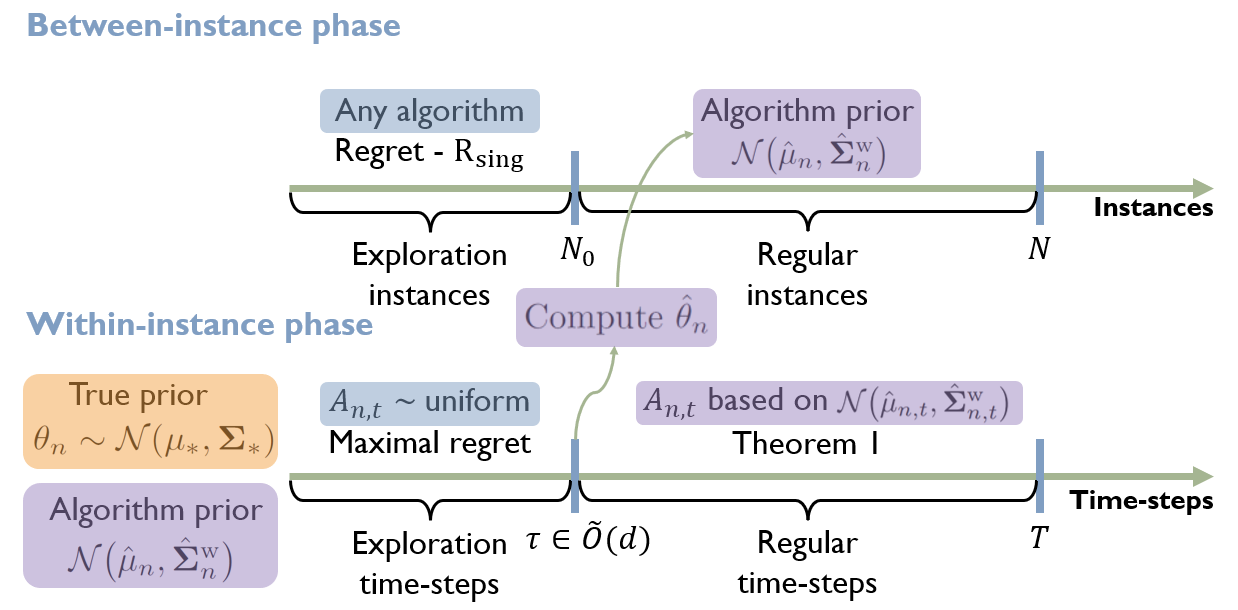}
\caption{\texorpdfstring{$\mathrm{\textbf{MQB}}_{\B{\tau}}$}{MQB} algorithm and regret scheme.
See Section~\ref{Sec: Single instance regret} for the regret analysis in a single instance, and Section~\ref{Sec: MQB algorithm} for the complete algorithm scheme.
$\mathrm{R_{sing}}$ is defined later in the section.}
\label{fig: meta algorithm regret}
\end{figure}
We address the first in~Theorem~\ref{Theorem: single instance regret}, which bounds the per-instance relative
regret given a bound on the deviations between the estimated and the true prior.
The second, addressed in Lemma~\ref{Lemma: M-QB}, explained below, demonstrates that as the number of instances increases, these prior deviations approach zero.
Finally, Theorem~\ref{Thm:FullRegret} combines these two basic components in order to establish a regret bound for $\MQB$ over $N$ instances.
In order to emphasize the instance dependence, we denote in this section several of the arguments with a subscript $n$.

Next, we present Lemma~\ref{Lemma: M-QB}, which provides bounds on the distance between the prior constructed by $\MQB$ and the true prior.
In particular, we show that before the $n_{th}$ instance, $\MQB$ meets the events $\mathcal{E}_{m}, \mathcal{E}_{s}$ defined in~\eqref{Eq: events for main theorem} with $\delta_n \triangleq 1/(n-1)$ and closed-form expressions of $f_{m,n}, f_{s,n} \in \tO(d)$ (see Appendix~\ref{Appendix: Good event definition}).
We denote this adjusted per instance \emph{good event} by $\mathcal{E}_{n\text{($\MQB$)}}$.
The adjusted formalization and the proof of Lemma~\ref{Lemma: M-QB} can be found in Appendix~\ref{Appendix: Good event definition}, based on the mean estimation error (Appendix~\ref{Sec: Mean estimation error}) and the covariance estimation error (Appendix~\ref{Sec: Covariance estimation error}).
\begin{restatable}{lemma}
{lemmaMQB}
\label{Lemma: M-QB}
{($\MQB$ conditions)}
For every instance $n > 10d + 4\ln\left(16dT\right)$, 
$\p{\mathcal{E}_{n\text{($\MQB$)}}} \geq 1-\nicefrac{8}{dnT}$.
\end{restatable}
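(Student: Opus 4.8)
The plan is to bound $\mathcal{E}_{n\text{($\MQB$)}}$ by a union bound over its four constituent events $\mathcal{E}_\theta,\mathcal{E}_v,\mathcal{E}_m,\mathcal{E}_s$, showing each fails with probability at most $2/(dnT)$. For the $\MQB$ instantiation these are the events of~\eqref{Eq: events for main theorem} read with $\delta$ replaced by $\delta_n=1/(n-1)$, with the algorithm's prior being $\N\!\left(\app{\mu}_n,\wsigma_n\right)$, and with $f_{m,n},f_{s,n}\in\tO(d)$ the explicit constants of Appendix~\ref{Appendix: Good event definition}. When bounding $\mathcal{E}_m$ and $\mathcal{E}_s$ I would also absorb into their $2/(dnT)$ budgets the (smaller) probability that some exploration phase of an instance $j\le n$ failed $\lmin{\B{V}_{j,\tau}}\ge\lbar{\lambda}_{\Msigma{\A}}d/2$; this is legitimate since, by the $\mathcal{E}_v$ bound below and a union over the $\le N$ instances, that event has probability $\tO(1/(dNT))$.

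The first three events are direct concentration estimates. For $\mathcal{E}_\theta$, since $\ssigma^{-1/2}(\theta-\mu_*)\sim\N(0,\B{I})$ each coordinate is a standard Gaussian, so a Gaussian tail bound gives $\p{Z^2>2\ln x}\le 1/x$ for $x$ large, and a union over the $d$ coordinates at $x=d^2T/\delta_n$ yields $\p{\mathcal{E}_\theta^{\,c}}\le\delta_n/(dT)\le 2/(dnT)$ (using $n\ge2$). For $\mathcal{E}_v$, $\B{V}_{n,\tau}=\sum_{t=1}^{\tau}A_{n,t}A_{n,t}^{\top}$ is a sum of $\tau$ i.i.d.\ rank-one PSD matrices with $\normop{A_{n,t}A_{n,t}^{\top}}\le a^2$ and $\lmin{\E{\B{V}_{n,\tau}}}\ge\tau\,\lbar{\lambda}_{\Msigma{\A}}$ (Assumption~\ref{Assumption: eigenvalues action Covariance matrix}), so a matrix Chernoff lower-tail bound at relative level $1/2$, combined with the choice $\tau\ge\max\{d,(8a^2/\lbar{\lambda}_{\Msigma{\A}})\ln(d^2N^2T)\}$ of~\eqref{Eq: tau}, gives $\p{\lmin{\B{V}_{n,\tau}}<\lbar{\lambda}_{\Msigma{\A}}d/2}\le d\,e^{-c\,\tau\lbar{\lambda}_{\Msigma{\A}}/a^2}\le 1/(dN^2T)\le 2/(dnT)$ for a universal $c>0$ and $N\ge n$. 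For $\mathcal{E}_m$, write the OLS estimator~\eqref{Eq: app theta_n} as $\app{\theta}_{j}=\theta_j+\B{V}_{j,\tau}^{-1}\B{A}_{j,\tau}^{\top}\Xi_{j,\tau}$, with $\Xi_{j,\tau}$ the vector of the first $\tau$ noise terms of instance $j$, so that $\app{\mu}_n-\mu_*=\frac{1}{n-1}\sum_{j<n}(\theta_j-\mu_*)+\frac{1}{n-1}\sum_{j<n}\B{V}_{j,\tau}^{-1}\B{A}_{j,\tau}^{\top}\Xi_{j,\tau}$: the first average is of i.i.d.\ $\N(0,\ssigma)$ vectors and has norm $\tO(\sqrt{d\bar{\lambda}_{\Msigma{*}}/(n-1)})$ w.h.p., while the second, conditioned on the actions and on the good exploration events (hence $\normop{\B{V}_{j,\tau}^{-1}}\le 2/(\lbar{\lambda}_{\Msigma{\A}}d)$), is a Gaussian vector with operator-norm-bounded covariance and thus of norm $\tO(\sqrt{d/(n-1)})$ w.h.p. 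Together these match $\sqrt{f_{m,n}\delta_n}$ with $f_{m,n}\in\tO(d)$, which is the content of Appendix~\ref{Sec: Mean estimation error}.

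The event $\mathcal{E}_s$ is where the real work lies, and I expect the \emph{covariance concentration} to be the main obstacle. One must establish both $\normop{\wsigma_n-\ssigma}\le\sqrt{f_{s,n}\delta_n}$ and $\wsigma_n\succeq\ssigma$, which reduces to controlling $\normop{\asigma_n-\ssigma}$ for the corrected sample covariance~\eqref{Eq: app Sigma_n}. Two coupled effects arise: (i) the $\app{\theta}_{j}$ are not exactly Gaussian, since each equals $\theta_j\sim\N(\mu_*,\ssigma)$ plus the conditionally-Gaussian noise $\B{V}_{j,\tau}^{-1}\B{A}_{j,\tau}^{\top}\Xi_{j,\tau}$ whose covariance $\sigma^2\B{V}_{j,\tau}^{-1}$ is itself random, so sample-covariance matrix concentration must be applied after conditioning on the good exploration phases (which bound $\normop{\B{V}_{j,\tau}^{-1}}$); and (ii) the spurious term $\frac{\sigma^2}{n-1}\sum_{j<n}\E{\B{V}_{j,\tau}^{-1}}$ to which the uncorrected estimator converges must be cancelled by $\B{G}_{\Sigma}=\frac{\sigma^2}{n-1}\sum_{j<n}\B{V}_{j,\tau}^{-1}$, which calls for a further matrix concentration of $\B{G}_{\Sigma}$ about its mean. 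Assembling these (the content of Appendix~\ref{Sec: Covariance estimation error}) gives $\normop{\asigma_n-\ssigma}\in\tO(\sqrt{d/(n-1)})$ w.h.p.; then the widening $\wsigma_n=\asigma_n+c_{\mathrm{w}}\sqrt{(5d+2\ln(dnT))/(n-1)}\,\B{I}$ from~\eqref{Eq: covariance widening}, with $c_{\mathrm{w}}$ large enough to dominate that deviation (Lemma~\ref{Lemma: widend matrix is PSD}), simultaneously forces $\wsigma_n\succeq\ssigma$ w.h.p.\ and keeps $\normop{\wsigma_n-\ssigma}$ of the same $\tO(\sqrt{d/(n-1)})$ order, i.e.\ at most $\sqrt{f_{s,n}\delta_n}$ with $f_{s,n}\in\tO(d)$. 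The hypothesis $n>10d+4\ln(16dT)$ is precisely what puts the covariance concentration into the regime where the sample size exceeds the effective dimension and makes the widening radius smaller than a fixed constant, so that $\wsigma_n$ is a legitimate PSD prior covariance and all the $\tO$-bounds hold. A final union bound over $\mathcal{E}_\theta^{\,c},\mathcal{E}_v^{\,c},\mathcal{E}_m^{\,c},\mathcal{E}_s^{\,c}$ then yields $\p{\mathcal{E}_{n\text{($\MQB$)}}^{\,c}}\le 8/(dnT)$.
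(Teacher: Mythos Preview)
Your approach is essentially the paper's: a union bound over the constituent events, with matrix Chernoff for the exploration eigenvalue event, sub-Gaussian norm concentration for the mean, and sample-covariance concentration (plus the bias correction $\B{G}_{\Sigma}$ and the widening) for the covariance; the hypothesis $n>10d+4\ln(16dT)$ is used, via the $x\ge a_1\ln x+b_1$ lemma, to ensure $n>5d+2\ln(dnT)$ so that the covariance bound sits in its $\sqrt{\cdot}$ regime.

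Two small mismatches with the paper are worth noting. First, in the paper's definition $\mathcal{E}_{n\text{($\MQB$)}}=\mathcal{E}_{v\text{($\MQB$)}}\cap\mathcal{E}_{m\text{($\MQB$)}}\cap\mathcal{E}_{s\text{($\MQB$)}}$ does \emph{not} include $\mathcal{E}_\theta$ (that event is handled separately inside Theorem~\ref{Theorem: single instance regret}), and $\mathcal{E}_{v\text{($\MQB$)}}$ is already the intersection of all $\mathcal{E}_{v_j}$, $j\le n$, bounded directly by $1/(dNT)$---so there is no need to ``absorb'' the past exploration failures into the $\mathcal{E}_m,\mathcal{E}_s$ budgets. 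Second, the probability budget is not $2/(dnT)$ per event: the paper spends $1/(dNT)$ on $\mathcal{E}_{v\text{($\MQB$)}}$, $1/(dnT)$ on $\mathcal{E}_{m\text{($\MQB$)}}$, and $6/(dnT)$ on $\mathcal{E}_{s\text{($\MQB$)}}$ (the latter from a union over three matrix concentration terms, each at level $2\eta_n$), which sums to the stated $8/(dnT)$. Your uniform $2/(dnT)$ allocation would not cover the covariance event as actually proved.
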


The expressions of $f_{m,n},f_{s,n},\delta_n$ and the expression of $\tau$ in \eqref{Eq: tau} define $M$ and $k_1$ in Theorem~\ref{Theorem: single instance regret} as a function of $n$, i.e. $M_n, k_{1,n}$.
We define the number of exploration instances for $\MQB$ as $N_0 \triangleq \ceil{M_{N+1}}$, which ensures for every $n > N_0$ that $\delta_n < \nicefrac{1}{M_n}$.
Having established the two components described at the beginning of this section, we can bound the regret incurred by the incorrect prior of $\MQB$ using the following theorem.
The single instance regret of the algorithm used during the first $N_0$ instances is denoted by $\mathrm{R_{sing}}$.
\begin{restatable}{theorem}
{fullMQBregret}
\label{Thm:FullRegret}
For $N_0 \leq N$, the $\MQB$ $N$-instance relative regret is bounded by,
\begin{align*}
    &\sum_{n=1}^{N}
    \E{\RQBKn{\app{\mu}_{n}}{\wsigma_{n}}{T}}\\ 
    & \, \leq
    \underset{\substack{\mathrm{regular}\\\mathrm{instances}}}
    {\underbrace{k_2 \sqrt{N}
    \E{ \RQB
    {\mu_{*,\tau+1}}
    {\B{\Sigma}_{*,\tau+1}}
    {T-\tau}}}}
    + \underset{\substack{\mathrm{exploration}\\\mathrm{instances}}}
    {\underbrace{
    \vphantom{k_2 \sqrt{N}
    \E{ \RQB
    {\mu_{*,\tau+1}}
    {\B{\Sigma}_{*,\tau+1}}
    {T-\tau}}}
    N_0 \mathrm{R_{sing}}},}
\end{align*}
\end{restatable}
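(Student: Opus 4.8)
The plan is to split the cumulative relative regret $\sum_{n=1}^{N}\E{\RQBKn{\app{\mu}_{n}}{\wsigma_{n}}{T}}$ into the $N_{0}$ \emph{exploration} instances and the $N-N_{0}$ \emph{regular} instances (this split only uses the assumption $N_{0}\le N$), and to bound each group with the two ingredients already in hand: Theorem~\ref{Theorem: single instance regret} for the within-instance regret and Lemma~\ref{Lemma: M-QB} for the prior deviations.

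For an exploration instance $n\le N_{0}$ (where the prior in that notation is the one used by the frequentist sub-algorithm of Line~\ref{MQB: line:uninformative}), I use the decomposition in~\eqref{Eq: KQB regret}: the relative regret of instance $n$ equals the Bayesian regret of the $\QB$ algorithm that is actually run minus the Bayesian regret of $\KQB$. The latter is non-negative, since at every step the oracle plays the expected-reward-maximizing action; hence each exploration instance contributes at most $\mathrm{R_{sing}}$ to the cumulative relative regret, and summing over the $N_{0}$ exploration instances produces the $N_{0}\,\mathrm{R_{sing}}$ term.

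For a regular instance $n>N_{0}$ I apply Theorem~\ref{Theorem: single instance regret} with the $\MQB$ prior $(\app{\mu}_{n},\wsigma_{n})$ and the per-instance level $\delta_{n}\triangleq 1/(n-1)$. Two hypotheses must be verified. (i)~\emph{The good event has the required probability.} Lemma~\ref{Lemma: M-QB} shows that $\mathcal{E}_{n\text{($\MQB$)}}$ holds with probability at least $1-8/(dnT)$ whenever $n>10d+4\ln(16dT)$; since $8/(dnT)\le 9/((n-1)dT)=9\delta_{n}/(dT)$ for all $n\ge 1$, this meets the $1-9\delta_{n}/(dT)$ requirement of Theorem~\ref{Theorem: single instance regret}, and one checks from the definition of $M$ that $N_{0}=\ceil{M_{N+1}}>10d+4\ln(16dT)$, so Lemma~\ref{Lemma: M-QB}'s threshold is cleared for every $n>N_{0}$. (ii)~\emph{$\delta_{n}\le 1/M_{n}$.} Substituting the closed forms $f_{m,n},f_{s,n}\in\tO(d)$, $\delta_{n}=1/(n-1)$ and $\tau$ from~\eqref{Eq: tau} into the definitions of $M$ and $k_{1}$ yields instance-dependent constants $M_{n},k_{1,n}$; since $M_{n}$ grows only logarithmically in $n$ and is non-decreasing, every $n>N_{0}$ satisfies $\delta_{n}=1/(n-1)<1/M_{N+1}\le 1/M_{n}$. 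With both hypotheses in force, Theorem~\ref{Theorem: single instance regret} gives, for each $n>N_{0}$,
\begin{equation*}
\E{\RQBKn{\app{\mu}_{n}}{\wsigma_{n}}{T}}\le k_{1,n}\,\E{\RQB{\mu_{*,\tau+1}}{\B{\Sigma}_{*,\tau+1}}{T-\tau}}+\frac{c_{\text{\rm bad}}\,\delta_{n}}{\sqrt{d}}.
\end{equation*}

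It then remains to sum these bounds over $n=N_{0}+1,\ldots,N$ and control the two resulting series, using that $\E{\RQB{\mu_{*,\tau+1}}{\B{\Sigma}_{*,\tau+1}}{T-\tau}}$ does not depend on $n$. From $k_{1,n}\in\tO(\sqrt{f_{m,n}\delta_{n}}+\tau\sqrt{f_{s,n}\delta_{n}})=\tO(\tau\sqrt{d/(n-1)})$ we get $\sum_{n=N_{0}+1}^{N}k_{1,n}\in\tO(\tau\sqrt{d})\cdot\sum_{n}(n-1)^{-1/2}\subseteq\tO(\tau\sqrt{dN})$, which we take as $k_{2}\sqrt{N}$; and $\sum_{n=N_{0}+1}^{N}\delta_{n}\in\tO(\ln N)$, so the accumulated bad-event contribution is $\tO(\ln N/\sqrt{d})$, which is of lower order than $k_{2}\sqrt{N}\,\E{\RQB{\mu_{*,\tau+1}}{\B{\Sigma}_{*,\tau+1}}{T-\tau}}$ (the per-instance cost is at least a positive constant for a non-degenerate instance) and can be folded into $k_{2}$. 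Adding the exploration and regular contributions gives the claimed bound. The step I expect to be the main obstacle is precisely this bookkeeping, together with (ii): one must track how chaining the closed forms of Lemma~\ref{Lemma: M-QB} through Theorem~\ref{Theorem: single instance regret} makes $M_{n}$ and $k_{1,n}$ depend on $n$, check that the $1/\sqrt{n}$ decay of $k_{1,n}$ is exactly what turns $\sum_{n}k_{1,n}$ into $\tO(\sqrt{N})$ rather than $\tO(N)$, and verify that the single choice $N_{0}=\ceil{M_{N+1}}$ simultaneously keeps the exploration phase short, guarantees $\delta_{n}<1/M_{n}$ on all regular instances, and clears Lemma~\ref{Lemma: M-QB}'s threshold.
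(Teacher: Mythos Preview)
Your proposal is correct and follows essentially the same route as the paper: split into exploration and regular instances, bound the former by $N_{0}\,\mathrm{R_{sing}}$ via the decomposition in~\eqref{Eq: KQB regret}, and for each regular instance $n>N_{0}$ invoke Theorem~\ref{Theorem: single instance regret} with $\delta_{n}=1/(n-1)$ after checking its hypotheses through Lemma~\ref{Lemma: M-QB} and the choice $N_{0}=\ceil{M_{N+1}}$, then sum using $\sum_{n>N_{0}}(n-1)^{-1/2}\le 2\sqrt{N}$. The paper's proof in Appendix~\ref{Appendix: Meta algorithm regret} is exactly this argument, written slightly more tersely (it defines $\tilde{k}_{2}(n)=2k_{1,n}/\sqrt{\delta_{n}}$ so that $k_{1,n}=\tilde{k}_{2}(n)/(2\sqrt{n-1})$ and then sets $k_{2}=\tilde{k}_{2}(N)$); it also silently absorbs the accumulated bad-event contribution $\sum_{n}c_{\text{bad}}\delta_{n}/\sqrt{d}\in\tO(\ln N)$, just as you propose to fold it into $k_{2}$.
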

\vspace*{-2mm}
where $k_2 \in \tO\left(d^{3/2}\right), \; N_0 \in \tO\left(d^{3}\right).$
The definitions of $k_2$ and $N_0$ are in \eqref{Eq: constants for Theorem 2} and the proof is in \eqref{Eq: Theorem MQB regret - proof} in Appendix~\ref{Appendix: Meta algorithm regret}.

During the first $N_0$ instances, $\MQB$ suffers as much regret as the frequentist algorithm it chooses, for example when using the IDS algorithm of \cite{kirschner2018information}, $\mathrm{R_{sing}} \in \tO(d \sqrt{T})$.

An immediate consequence of Theorem~\ref{Thm:FullRegret} is a  bound on the Bayesian regret \eqref{Eq: Bayesian regret} of the $\MQB$ algorithm,
\begin{align}
\label{Eq: thm2 implication}
    &\sum_{n=1}^{N}
    \E{\RQBn{\app{\mu}_{n}}{\wsigma_{n}}{T}}\\
    \nonumber
    &\quad
    \leq \left(N + k_2 \sqrt{N}\right) 
    \E{ \RQB
    {\mu_{*,\tau+1}}
    {\B{\Sigma}_{*,\tau+1}}
    {T-\tau}}\\
    \nonumber
    &\quad \quad \quad \quad
    +\tO 
    \left(N_0 \mathrm{R_{sing}} + Nd\right).
\end{align}
The multiplicative factor $\left(N + k_2 \sqrt{N}\right)$ is a sum of two parts; $N$, that stems from the inherent regret of the $\KQB$ algorithm and cannot be avoided, and $k_2 \sqrt{N}$ which represents the price of the `prior alignment'.
The second term on the right-hand-side of the inequality is the `cost' of the two exploration levels. 
Both the `prior alignment' and the exploration costs become vanishingly small WRT the inherent regret as $N$ and $T$ increase, implying negligible cost for `not knowing the prior'.

Using the same prior-dependent bound as in \eqref{Eq: prior dependent Theorem 1}, now with $\delta=\nicefrac{1}{(NT)^2}$, \eqref{Eq: thm2 implication} can be further extended to a prior-dependent bound,
\begin{align}
\label{Eq: prior dependent Theorem MQB regret}
    \nonumber
    &\sum_{n=1}^{N}
    \E{\RQBn{\app{\mu}_{n}}{\wsigma_{n}}{T}}\\
    &\quad \leq 
    \left(N+k_2 \sqrt{N}\right)
    \vast[\sqrt{\frac{8 \bar{\lambda}_{\Msigma{*}}a^2
    \ln \left(4\abs{ \A} (NT)^2 \right)}
    {\ln \left(1 + \frac{\bar{\lambda}_{\Msigma{*}}a^2}{\sigma^2} \right)}}\\
    \nonumber
    &\quad \quad \times
    \sqrt{\ln \left(1 + \frac{\bar{\lambda}_{\Msigma{*}} T}{\sigma^2} \right) dT}\vast]
    +\tO 
    \left(N_0 \mathrm{R_{exp}} + Nd\right).
\end{align}
Note that a simple policy that runs all the tasks separately will incur regret of $\tilde{O}(Nd\sqrt{T})$ regardless of the ``informativeness'' of the correct prior.
On the other hand, for the $\MQB$ algorithm with TS as a sub-routine, as the prior is more informative, the regret is lower, and in the extreme case ($\lambda_{\mathrm{max}}(\mathbf{\Sigma}_*) \rightarrow 0$ and $N$ becomes large), only the inner exploration cost remains, i.e., $\tilde{O}(Nd)$.
\begin{table*}[htpb]\renewcommand{\arraystretch}{1.4}
  \caption{Comparison of worst-case relative regret bounds for state-of-the-art meta TS algorithms}
  \label{table:compare}
  {\small
  \begin{minipage}{\linewidth}
    \renewcommand\footnoterule{}
    \renewcommand*{\thempfootnote}{\roman{mpfootnote}}
  \begin{tabular}{cccccc}
    \toprule
    & \makecell{\textbf{SETTING}} & \makecell{\textbf{ACTIONS} \\ \textbf{ASSUMPTIONS}} & \makecell{\textbf{COVARIANCE}\\\textbf{ASSUMPTIONS}} & \makecell{\textbf{MEAN} \\ \textbf{ASSUMPTIONS}} &  \makecell{ \textbf{RELATIVE} \\\textbf{REGRET}} \\ \toprule
    \makecell{Kveton et al.\\ (2021)}
    & \makecell{ K-arms \\ MAB} &  \makecell{---irrelevant---} &  \makecell{Known \\ $c \cdot I$} & \makecell{Bayesian, known \\ hyper-prior}  & \makecell{$\tO(K \sqrt{N} T^2 )$} \\ \midrule
    \makecell{Bastani et al.\\ (2021)}
    & \makecell{Dynamic \\ pricing\footnote{Adapting the analysis to a linear bandits setting results in a reduction of $\tO(\sqrt{d})$ from the  regret.}} &
    \multirow{3}{*}{\makecell{known bounds\\ on actions \\and eigenvalues\\ (Assumption~\ref{Assumption: eigenvalues action Covariance matrix}\footnote{In our work there exists an extra requirement on the distribution `monotonicity'.}) }}&
    \multirow{3}{*}{\makecell{ Unknown, \\ known bounds \\ on eigenvalues \\
    (Assumption~\ref{Assumption: eigenvalues prior Covariance matrix})}} &
    \multirow{3}{*}{\makecell{Unknown, \\ known bound \\\\ (Assumption~\ref{Assumption: prior mean is bounded})}} &
    \makecell{$\tO(d^4\sqrt{N} T^{3/2})$, \\ $N_0 \in \tO (d^4T^2)$} \\ \cmidrule{1-2} \cmidrule{6-6}
    \makecell{Ours, \\ $\MQB$\footnote{With an exploration algorithm for which $\mathrm{R_{sing}}\in \tO(d\sqrt{T})$, for example IDS \citep{kirschner2018information}.}} & \makecell{Linear \\ bandits} &  &   &  & \makecell{$\tO(d^{5/2}\sqrt{N T})$, \\ $N_0 \in \tO (d^3)$} \\
    \bottomrule
  \end{tabular}
  \end{minipage}}
\end{table*}\renewcommand{\arraystretch}{1}

\section{RELATED WORK}\label{sec:RelatedWork}
While a significant amount of empirical and theoretical work has been devoted to metalearning in the domain of supervised learning (see recent review in \cite{hospedales2021meta}), including methods based on prior update \citep{pentina2014pac,amit2018meta}, there has been far less theoretical work on this topic in sequential decision-making problems (for a recent survey of algorithmic issues, see \cite{ortega2019meta}). 

We mention several works that deal with metalearning of stochastic bandits. 
\cite{cella2020meta} consider linear bandits tasks drawn from a more general prior distribution, but assume a \emph{known} variance.
They establish \emph{prior-dependent} regret bound for their proposed regularized optimism-based algorithm, similar to  \eqref{Eq: prior dependent Theorem 1}.
However, our result is a consequence of the tighter bound in Theorem~\ref{Theorem: single instance regret} that holds for every $\QB$ algorithm relatively to its best scenario when the prior is known.
Two recent papers that answer a question similar to Theorem~\ref{Theorem: single instance regret} are \cite{kveton2021meta} and \cite{bastani2019meta}, both suggest TS based meta-algorithms. 
The main difference between the approaches is the analysis technique, leading to the gap in the regret bounds, summarized in Table~\ref{table:compare}.

\cite{kveton2021meta} focus on a fully Bayesian multi-armed bandits (MAB) setting, where tasks are drawn from a Gaussian prior.
The prior is parameterized by a \emph{known} scalar covariance and an unknown mean, that is itself drawn from a \emph{known} hyper-prior.
The authors derive a regret bound which depends on $T$ as $\tO(T^2)$.
Our result preserves their linear dependence in the initial mean deviation, while keeping the same time dependence as the algorithm that knows the prior.
When using TS, this leads to a worst-case regret whose $T$ dependence is $\tO(\sqrt{T})$.
Note that in the setting of known covariance, it is possible to use our proof scheme and still achieve the same regret guarantee of $\tO(\sqrt{T})$, even if we drop the somewhat restrictive action assumption (Assumption~\ref{Assumption: eigenvalues action Covariance matrix}).
\cite{bastani2019meta} consider contextual linear bandits in a dynamic pricing setting.
Their $\tO(d^4 \sqrt{N}T^{3/2})$ regret bound is effective after $N_0 \in \tO(d^4T^2)$ instances, while we obtain $\tO(d^{5/2}\sqrt{NT})$ regret, effective after $N_0 \in \tO(d^3)$ instances in which the learner suffers regret of $\mathrm{R_{sing}}\in \tO(d\sqrt{T})$.

Finally, three very recent papers warrant mention.
\cite{basu2021no} assume a fully Bayesian framework where the covariance is \emph{known} and the mean is sampled from a \emph{known} Gaussian distribution.
These assumptions allow the authors to elegantly expand the information theory analysis previously used in the single instance setup \citep{lu2019information} to the new framework of multiple instances. 
However, relaxing the assumption of a known covariance within their Bayesian setting complicates their analysis significantly and was not pursued in their paper. 
They establish a \emph{prior-dependent} regret bound whose worst-case dependence on $T$ is $\tO(\sqrt{T})$.
\cite{simchowitz2021bayesian} bound the single instance misspecification error for a wide class of priors and settings and achieve an upper-bound of $\tO(\varepsilon T^2)$, where $\varepsilon$ is the initial total-variation prior estimation error, while our bound from Theorem~\ref{Theorem: single instance regret} is $\tO(\sqrt{T})$.
In addition, they derive a lower bound of $\tilde{\Omega}(\varepsilon T^2)$ for MAB with $T \ll |\mathcal{A}|$ 
($|\mathcal{A}|$ is the number of actions).
To the best of our knowledge, this is the only lower bound in the literature, and it is not applicable for most settings, including ours.
For multiple instances, they derive a bound only for the MAB setting.
\cite{wan2021metadata} studies a generalized version of a meta MAB environment, in which they allow the distribution to depend on task-specific features.
Their algorithm uses TS in a Bayesian hierarchical model.

We briefly highlight differences in the proof techniques. 
\cite{kveton2021meta} performs \emph{history alignment}, focusing
on the probability that the two algorithms have the same history.
The alignment process separates each time-step into two events.
\emph{(i)}~Both algorithms perform the same action and receive the same reward, hence have zero regret WRT each other. 
\emph{(ii)}~The algorithms perform different actions that violate the alignment, and therefore suffer a worst case regret of $\tO(T)$ over the rest of the instance.
Summation over the time-steps leads to regret of $\tO(T^2)$.
\cite{bastani2019meta} first performs $\tau$ exploration steps, in which the two algorithms choose the \emph{same} actions but receive different rewards due to noise, thus enabling the mean alignment.
From this point, the proof continues using tools from importance sampling \citep{precup2000eligibility}.
We believe this technique has a shortcoming.
While aligning the means in the first $\tau$ steps facilitates the analysis at time $\tau + 1$, the resulting posterior updates of the means do not render them equal in subsequent steps, even if the two algorithms choose the same actions, since the covariance matrices differ (see \eqref{Eq: posterior update rule}). 
Our work aligns both the means and covariance matrices.
This line of proof establishes at a specific time a full prior alignment at a single cost that scales with the distances between the priors,
while the two other techniques are applied separately for each step, thus their per-step cost is multiplied by the horizon.
These differences lead to a significant gap in the upper bounds.

\section{EXPERIMENTS}\label{sec:Experimtnes}
We demonstrate the effectiveness of $\MQB$ with TS as a subroutine ($\mathrm{MTS}_{\tau}$) in a synthetic environment as in \cite{kveton2021meta, simchowitz2021bayesian}, comparing it to several baselines. 
$(i)$ TS algorithm that does not know the prior and uses a zero vector as $\mu$ and a diagonal covariance matrix $\B{\Sigma}$ with $\lmax{\ssigma}$ in its diagonal (UKTS);
$(ii)$ TS algorithm that \textit{knows} the correct mean and uses the above covariance $\B{\Sigma}$ (KMTS);
$(iii)$ TS algorithm that \textit{knows} the correct prior (KTS).
None of the above perform any forced exploration.
Other metalearning algorithms in the literature, which do not assume known covariance, mostly differ from our work in their settings and analysis. Adapting the algorithms to our setting with empirical adjustments results in an algorithm similar to ours. Therefore we do not use them as baselines.

We compare three versions of the algorithm. 
The first, \mbox{Th-$\mathrm{MTS}_{\tau}$}, uses only the first $\tau$ steps in each instance to form the meta estimator as suggested by theory;
the second, All-$\mathrm{MTS}_{\tau}$, still performs the $\tau$ exploration steps which ensures an invertible Gram matrix $\B{V}_{j,\tau}$, however it uses the information gathered from all time-steps for the meta estimation;
the third, All-MTS, is similar to All-$\mathrm{MTS}_{\tau}$, but does not perform inner-instance exploration (exploration is only used towards the end of an instance in case that the Gram matrix $\B{V}_{j,t}$ remains singular).
Since in realistic environments the learner is often unaware of $T$ and $N$, $\tau$ was adjusted to be the first time in each instance in which $\lmin{\frac{1}{\sigma^2} \B{V}_{j,t}} \geq 0.03$. 
We also set $N_0$ to be $d^3$ instead of the exact definition of $N_0 \in \tO(d^3)$
and the covariance widening constant $c_\mathrm{w}$ to be $10$ for \mbox{Th-$\mathrm{MTS}_{\tau}$} and $1$ for the versions that use all samples. 

We use a linear bandits framework with $d=5$ and $N=10{\small,}000$ instances all drawn from a Gaussian distribution $\N \left(\mu_{*}, \ssigma\right)$, where $\mu_{*}=\left[ 2, 2, 2, 2, 2\right]$ and $\ssigma$ is a non-diagonal covariance matrix, with ones along the diagonal and 0.8 elsewhere.
The horizon is $T=200$ and in each time-step, $20$ actions are available to the learners, all sampled from a uniform distribution over an $a=0.25$ radius ball.
The reward observed by the learners is corrupted by a standard Gaussian noise $\N(0, 1)$.
\begin{figure}[htpb]
\includegraphics[width=\linewidth]{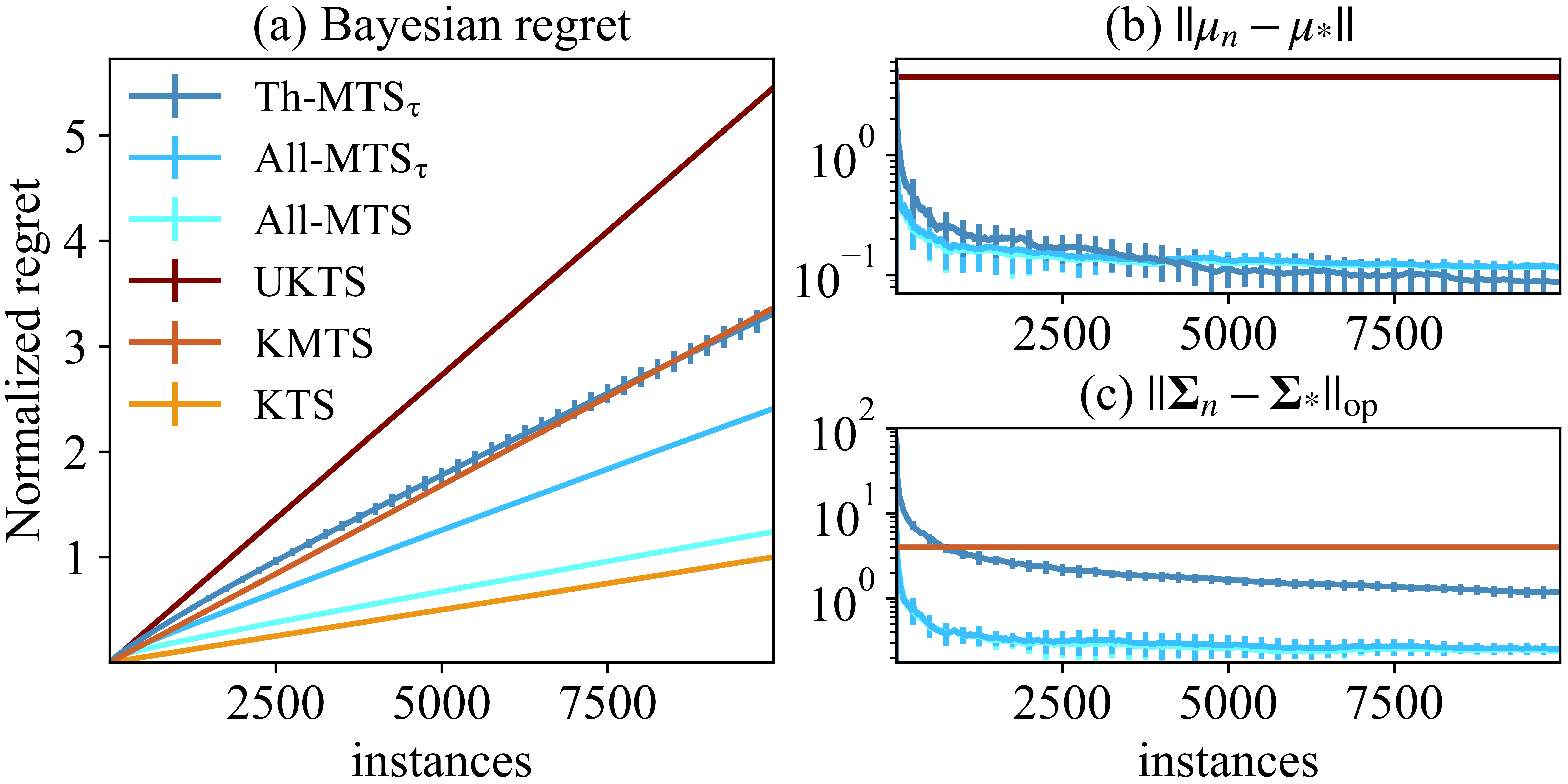}
\caption{Comparing $\MQB$ using TS as a sub-routine to several baselines, averaged over 10 runs with error bars equivalent to one std (hardly noticeable in the scale of the plots). The Bayesian regret is normalized by the KTS regret, such that its maximal regret is equal to 1. The prior convergence graphs ((b) and (c)) are in logarithmic scale.}
\label{fig: basic results}
\end{figure}

As can be seen in Figure~\ref{fig: basic results}(a), all versions of the $\mathrm{MTS}_{\tau}$ algorithm achieve better results than UKTS, indicating the importance of prior learning.
Meta algorithms that assume a known covariance \citep{kveton2021meta, basu2021no} may achieve good results WRT KTS when their assumed covariance is correct. However in realistic environments when the covariance is unknown, KMTS represents their best scenario of estimating the mean alone.
The results of the $\mathrm{MTS}_{\tau}$ versions that reach and even outperform this oracle that knows the correct mean, demonstrate the significance of covariance learning.
As can be seen in Figure~\ref{fig: basic results}(b) the mean of Th-$\mathrm{MTS}_{\tau}$ approaches the true prior mean, in contrast to the other two versions that used all the samples, and incur the known bias of adaptive algorithms \citep{shin2019sample}. 
However, due to scarcity of samples, the covariance convergence is slower (Figure~\ref{fig: basic results}(c)) and results in higher regret compared to All-$\mathrm{MTS}_{\tau}$. 
This, and the additional `cost' of exploration, as demonstrated by the gap between All-$\mathrm{MTS}_{\tau}$ and All-$\mathrm{MTS}$, suggest that empirical adjustments are needed for purely theoretically justified algorithms, perhaps using advances in bias reducing techniques, e.g., \citep{deshpande2018accurate}.

\section{CONCLUSIONS}
\label{Sec: conclusions}
We presented algorithms and expected regret bounds for stochastic linear bandits where the expected rewards originate from a vector $\theta$, sampled from a Gaussian distribution with unknown mean and covariance. 
For $\QB$ algorithms with a good estimation of the prior, we derived single instance regret bounds, which are a multiplicative constant away from the regret of the algorithm that uses the true prior. 
For TS and IDS in the metalearning setup we established a relative regret of  $\tO(d^{5/2}\sqrt{NT})$ when 
using frequentist IDS during the exploration instances, a multiplicative improvement of $\tO(d^{3/2} T)$ from previous results of $\tO(d^4\sqrt{N} T^{3/2})$. 
Two limitations of our approach are the somewhat restrictive Assumption~\ref{Assumption: eigenvalues action Covariance matrix} and the need to compare with algorithms whose first $\tau$ within-instance steps are purely exploratory. We believe that Assumption~\ref{Assumption: eigenvalues action Covariance matrix} can be made more flexible without harming performance. Removing the initial $\tau$ exploratory steps, in a theoretically justifiable way, is left as an open problem.

\subsubsection*{Acknowledgements}
We are grateful to Nadav Merlis for helpful discussions and insights. The work is partially supported by the Ollendorff Center of the Viterbi Faculty of Electrical and Computer Engineering at the Technion, and by the Skillman chair in biomedical sciences.

\newpage
\bibliographystyle{plainnat}
\bibliography{bibliograhy.bib}

\begin{thebibliography}{39}
\providecommand{\natexlab}[1]{#1}
\providecommand{\url}[1]{\texttt{#1}}
\expandafter\ifx\csname urlstyle\endcsname\relax
  \providecommand{\doi}[1]{doi: #1}\else
  \providecommand{\doi}{doi: \begingroup \urlstyle{rm}\Url}\fi

\bibitem[Abeille and Lazaric(2017)]{abeille2017linear}
Marc Abeille and Alessandro Lazaric.
\newblock Linear thompson sampling revisited.
\newblock In \emph{Artificial Intelligence and Statistics}, 2017.

\bibitem[Agrawal and Goyal(2013)]{agrawal2012thompson}
Shipra Agrawal and Navin Goyal.
\newblock Thompson sampling for contextual bandits with linear payoffs.
\newblock In \emph{International Conference on Machine Learning}, 2013.

\bibitem[Amit and Meir(2018)]{amit2018meta}
Ron Amit and Ron Meir.
\newblock Meta-learning by adjusting priors based on extended pac-bayes theory.
\newblock In \emph{International Conference on Machine Learning}, 2018.

\bibitem[Bastani et~al.(2021)Bastani, Simchi-Levi, and Zhu]{bastani2019meta}
Hamsa Bastani, David Simchi-Levi, and Ruihao Zhu.
\newblock Meta dynamic pricing: Transfer learning across experiments.
\newblock \emph{Management Science}, 2021.

\bibitem[Basu et~al.(2021)Basu, Kveton, Zaheer, and Szepesv{\'a}ri]{basu2021no}
Soumya Basu, Branislav Kveton, Manzil Zaheer, and Csaba Szepesv{\'a}ri.
\newblock No regrets for learning the prior in bandits.
\newblock \emph{Advances in Neural Information Processing Systems}, 2021.

\bibitem[Bhatia(1997)]{bhatia1997matrix}
Rajendra Bhatia.
\newblock \emph{Matrix analysis}.
\newblock Springer, 1997.

\bibitem[Bubeck and Liu(2013)]{bubeck2013prior}
S{\'e}bastien Bubeck and Che-Yu Liu.
\newblock Prior-free and prior-dependent regret bounds for thompson sampling.
\newblock \emph{Advances in Neural Information Processing Systems}, 2013.

\bibitem[Cella et~al.(2020)Cella, Lazaric, and Pontil]{cella2020meta}
Leonardo Cella, Alessandro Lazaric, and Massimiliano Pontil.
\newblock Meta-learning with stochastic linear bandits.
\newblock In \emph{International Conference on Machine Learning}, 2020.

\bibitem[Chapelle and Li(2011)]{chapelle2011empirical}
Olivier Chapelle and Lihong Li.
\newblock An empirical evaluation of thompson sampling.
\newblock \emph{Advances in Neural Information Processing Systems}, 2011.

\bibitem[Deshpande et~al.(2018)Deshpande, Mackey, Syrgkanis, and
  Taddy]{deshpande2018accurate}
Yash Deshpande, Lester Mackey, Vasilis Syrgkanis, and Matt Taddy.
\newblock Accurate inference for adaptive linear models.
\newblock In \emph{International Conference on Machine Learning}, 2018.

\bibitem[Dong and Van~Roy(2018)]{dong2018information}
Shi Dong and Benjamin Van~Roy.
\newblock An information-theoretic analysis for thompson sampling with many
  actions.
\newblock \emph{Advances in Neural Information Processing Systems}, 2018.

\bibitem[Feller(1968)]{feller1968introduction}
William Feller.
\newblock \emph{An Introduction to Probability Theory and its Applications}.
\newblock Wiley, New York, 1968.

\bibitem[Honda and Takemura(2014)]{honda2014optimality}
Junya Honda and Akimichi Takemura.
\newblock Optimality of thompson sampling for gaussian bandits depends on
  priors.
\newblock In \emph{Artificial Intelligence and Statistics}, 2014.

\bibitem[Horn and Johnson(2012)]{horn2012matrix}
Roger~A Horn and Charles~R Johnson.
\newblock \emph{Matrix analysis}.
\newblock Cambridge university press, 2012.

\bibitem[Hospedales et~al.(2021)Hospedales, Antoniou, Micaelli, and
  Storkey]{hospedales2021meta}
Timothy~M Hospedales, Antreas Antoniou, Paul Micaelli, and Amos~J Storkey.
\newblock Meta-learning in neural networks: A survey.
\newblock \emph{IEEE Transactions on Pattern Analysis and Machine
  Intelligence}, 2021.

\bibitem[Hsu et~al.(2012)Hsu, Kakade, Zhang, et~al.]{hsu2012tail}
Daniel Hsu, Sham Kakade, Tong Zhang, et~al.
\newblock A tail inequality for quadratic forms of subgaussian random vectors.
\newblock \emph{Electronic Communications in Probability}, 17, 2012.

\bibitem[Kirschner and Krause(2018)]{kirschner2018information}
Johannes Kirschner and Andreas Krause.
\newblock Information directed sampling and bandits with heteroscedastic noise.
\newblock In \emph{Conference On Learning Theory}, 2018.

\bibitem[Kveton et~al.(2021)Kveton, Konobeev, Zaheer, Hsu, Mladenov, Boutilier,
  and Szepesvari]{kveton2021meta}
Branislav Kveton, Mikhail Konobeev, Manzil Zaheer, Chih-wei Hsu, Martin
  Mladenov, Craig Boutilier, and Csaba Szepesvari.
\newblock Meta-thompson sampling.
\newblock In \emph{International Conference on Machine Learning}, 2021.

\bibitem[Lattimore and Szepesv{\'a}ri(2020)]{lattimore2020bandit}
Tor Lattimore and Csaba Szepesv{\'a}ri.
\newblock \emph{Bandit algorithms}.
\newblock Cambridge University Press, 2020.

\bibitem[Liu and Li(2016)]{liu2016prior}
Che-Yu Liu and Lihong Li.
\newblock On the prior sensitivity of thompson sampling.
\newblock In \emph{International Conference on Algorithmic Learning Theory
  (ALT)}, 2016.

\bibitem[Lu and Van~Roy(2019)]{lu2019information}
Xiuyuan Lu and Benjamin Van~Roy.
\newblock Information-theoretic confidence bounds for reinforcement learning.
\newblock \emph{Advances in Neural Information Processing Systems}, 2019.

\bibitem[Magnus and Neudecker(1979)]{magnus1979commutation}
Jan~R Magnus and Heinz Neudecker.
\newblock The commutation matrix: some properties and applications.
\newblock \emph{The Annals of Statistics}, pages 381--394, 1979.

\bibitem[Magnus and Neudecker(2019)]{magnus2019matrix}
Jan~R Magnus and Heinz Neudecker.
\newblock \emph{Matrix differential calculus with applications in statistics
  and econometrics}.
\newblock John Wiley \& Sons, 2019.

\bibitem[Murphy(2012)]{murphy2012machine}
Kevin~P Murphy.
\newblock \emph{Machine learning: a probabilistic perspective}.
\newblock MIT press, 2012.

\bibitem[Ortega et~al.(2019)Ortega, Wang, Rowland, Genewein, Kurth-Nelson,
  Pascanu, Heess, Veness, Pritzel, Sprechmann, et~al.]{ortega2019meta}
Pedro~A Ortega, Jane~X Wang, Mark Rowland, Tim Genewein, Zeb Kurth-Nelson,
  Razvan Pascanu, Nicolas Heess, Joel Veness, Alex Pritzel, Pablo Sprechmann,
  et~al.
\newblock Meta-learning of sequential strategies.
\newblock \emph{arXiv preprint arXiv:1905.03030}, 2019.

\bibitem[Pentina and Lampert(2014)]{pentina2014pac}
Anastasia Pentina and Christoph Lampert.
\newblock A pac-bayesian bound for lifelong learning.
\newblock In \emph{International Conference on Machine Learning}, 2014.

\bibitem[Precup et~al.(2000)Precup, Sutton, and Singh]{precup2000eligibility}
Doina Precup, Richard~S Sutton, and Satinder Singh.
\newblock Eligibility traces for off-policy policy evaluation.
\newblock In \emph{International Conference on Machine Learning}, 2000.

\bibitem[Russo and Van~Roy(2014)]{russo2014learning}
Daniel Russo and Benjamin Van~Roy.
\newblock Learning to optimize via posterior sampling.
\newblock \emph{Mathematics of Operations Research}, 39\penalty0 (4):\penalty0
  1221--1243, 2014.

\bibitem[Russo and Van~Roy(2016)]{russo2016information}
Daniel Russo and Benjamin Van~Roy.
\newblock An information-theoretic analysis of thompson sampling.
\newblock \emph{The Journal of Machine Learning Research}, 17\penalty0
  (1):\penalty0 2442--2471, 2016.

\bibitem[Russo and Van~Roy(2018)]{russo2014ids}
Daniel Russo and Benjamin Van~Roy.
\newblock Learning to optimize via information-directed sampling.
\newblock \emph{Operations Research}, 66\penalty0 (1):\penalty0 230--252, 2018.

\bibitem[Shalev-Shwartz and Ben-David(2014)]{shalev-shwartz_ben-david_2014}
Shai Shalev-Shwartz and Shai Ben-David.
\newblock \emph{Understanding Machine Learning: From Theory to Algorithms}.
\newblock Cambridge University Press, 2014.

\bibitem[Shin et~al.(2019)Shin, Ramdas, and Rinaldo]{shin2019sample}
Jaehyeok Shin, Aaditya Ramdas, and Alessandro Rinaldo.
\newblock Are sample means in multi-armed bandits positively or negatively
  biased?
\newblock \emph{Advances in Neural Information Processing Systems}, 2019.

\bibitem[Simchowitz et~al.(2021)Simchowitz, Tosh, Krishnamurthy, Hsu, Lykouris,
  Dud{\'\i}k, and Schapire]{simchowitz2021bayesian}
Max Simchowitz, Christopher Tosh, Akshay Krishnamurthy, Daniel Hsu, Thodoris
  Lykouris, Miroslav Dud{\'\i}k, and Robert~E Schapire.
\newblock Bayesian decision-making under misspecified priors with applications
  to meta-learning.
\newblock \emph{Advances in Neural Information Processing Systems}, 2021.

\bibitem[Thompson(1933)]{Thomps33}
W.~Thompson.
\newblock On the likelihood that one unknown probability exceeds another in
  view of the evidence of two samples.
\newblock \emph{Biometrika}, 1933.

\bibitem[Tropp(2012)]{tropp2012user}
Joel~A Tropp.
\newblock User-friendly tail bounds for sums of random matrices.
\newblock \emph{Foundations of computational mathematics}, 12\penalty0
  (4):\penalty0 389--434, 2012.

\bibitem[Tropp(2015)]{tropp2015introduction}
Joel~A Tropp.
\newblock An introduction to matrix concentration inequalities.
\newblock \emph{Foundations and Trends in Machine Learning}, 8\penalty0
  (1-2):\penalty0 1--230, 2015.

\bibitem[Wainwright(2019)]{wainwright2019high}
Martin~J Wainwright.
\newblock \emph{High-dimensional statistics: A non-asymptotic viewpoint}.
\newblock Cambridge University Press, 2019.

\bibitem[Wan et~al.(2021)Wan, Ge, and Song]{wan2021metadata}
Runzhe Wan, Lin Ge, and Rui Song.
\newblock Metadata-based multi-task bandits with bayesian hierarchical models.
\newblock \emph{Advances in Neural Information Processing Systems}, 2021.

\bibitem[Zhan(2005)]{zhan2005determinantal}
Shilin Zhan.
\newblock On the determinantal inequalities.
\newblock \emph{Journal of Inequalities in Pure and Applied Mathematics},
  6\penalty0 (4):\penalty0 105, 2005.

\end{thebibliography}

\clearpage
\appendix
\thispagestyle{empty}
\onecolumn \makesupplementtitle
\section{SUMMARY OF NOTATION}
Some additional notations we use in the appendix are the trace and the determinant of a matrix $\B{A}$, $\Tr{\B{A}}$ and $\Det{\B{A}}$, respectively. 
We use $\norm{X}_{\B{A}}=\sqrt{X^{\top} \B{A} X}$ for the weighted norm matrix of a PD matrix $\B{A}$.
We define the vector $\Xi_{n,t}=[\xi_{n,1},\ldots,\xi_{n,t}]$
and often use $\int_{E}$ with an abuse of notation to indicate that we are integrating over all the terms that maintain the event $E$.
Next, we summarize the notation used along the paper.
\begin{description}[leftmargin=!,labelwidth=\widthof{\bfseries $\B{A} \succeq \B{B}$}]
    \item[$\A_{n,t}$]  The actions available to the learner at the $n_{th}$ instance at time-step t.
    \item[$a$]  Bound on the actions available to the learner.
    \item[$\Msigma{\A}$]  The action covariance matrix.
    \item[$A^*_{n,t}$]  The optimal action at the $n_{th}$ instance at time-step t.
    \item[$A^{\mathrm{K}}_{n,t}$]  The action taken by $\KQB$ at the $n_{th}$ instance at time-step t.
    \item[$A_{n,t}$]  The action taken by the $\QB$ learner at the $n_{th}$ instance at time-step t.
    \item[$\B{A}_{n,t}$]  A matrix formed by concatenating the vectors $\left\{A_{n,s}^{\top} \right\}_{s=1}^{t}$ in it's rows.
    \item[$\B{V}_{n,t}$]  The Gram matrix $\sum_{t=1}^{T} A_{n,s}A_{n,s}^{\top}
    =\B{A}_{n,t}^{\top}\B{A}_{n,t}$.
    \item[$x_{n,t}$]  The reward at the $n_{th}$ instance at time-step t.
    \item[$X_{n,t}$]  A vector containing all the rewards at the $n_{th}$ instance up to time-step t.
    \item[$\xi_{n,t}$]  The reward noise at the $n_{th}$ instance at time-step t, sampled from $\N(0, \sigma^2)$.
    \item[$\Xi_{n,t}$]  Vector containing all the reward noises at the $n_{th}$ instance up to time-step t.
    \item[$S_{n,t}$]  Summation of the action-noise terms $\sum_{t=1}^{T} A_{n,s} \xi_{n,s} = \B{A}_{n,s}^{\top} \Xi_{n,s}$.
    \item[$\tau$]  The number of exploration time-steps taken in each instance.
    \item[$\theta_n$]  The realization of $\theta$ at the $n_{th}$ instance.
    \item[$\app{\theta}_n$]  Meta approximation of $\theta_n$ using the rewards from the first $\tau$ time-steps.
    \item[$\rho_{n}$]  
    The inner $n_{th}$ instance error, $\app{\theta}_{n} - \theta_{n}=\B{V}_{n,\tau}^{-1} \sum_{s=1}^{\tau} A_{n,s} \xi_{n,s}$.
    \item[$\mu_{*}$]  The true unknown prior mean.
    \item[$m$] Bound on the prior mean, $\norm{\mu_*} \leq m$.
    \item[$\app{\mu}_n$]  The prior mean of the learner for the $n_{th}$ instance.
    \item[$\Delta_n$]  The difference between the realization of the instance and its mean. Can be viewed as it were sampled from $\N \left(0,\ssigma \right)$.
    \item[$\ssigma$]  The true unknown prior covariance.
    \item[$\asigma_{n}$]  The estimated prior covariance for the $n_{th}$ instance
    \item[$\wsigma_n$]  The widend prior covariance for the $n_{th}$ instance.
    \item[$\B{B}$]  Represents the error between the covariance matrices that should be canceled during the covariance alignment phase.
    \item[$\B{A}^{1/2}$]
    The unique square root of a PSD matrix $\B{A}$.
    \item[\texorpdfstring{$\B{A}[\B{B}]$}{}]  Represents the matrix $\B{A}$ as a function of the matrix $\B{B}$, used for vectors and scalars as well.
    \item[$\B{A} \succeq \B{B}$] 
    Represents that $\B{A}-\B{B}$ is PSD.
\end{description}
\begin{description}[leftmargin=!,labelwidth=\widthof{\bfseries $\lmin{\B{A}}, \lmax{\B{A}}$}]
    \item[$\norm{X}_{p}$]  The $l_{p}$ norm.
    \item[$\norm{X}$]  The $l_{2}$ norm.
    \item[$\norm{X}_{\B{A}}$] 
    The weighted norm matrix for a PD matrix $\B{A}$, $\sqrt{X^{\top} \B{A} X}$.
    \item[$\normop{\B{A}}$]  The $l_2$-operator norm.
    \item[$\lambda_{j}\left(\B{A}\right), \sigma_{j}\left(\B{A}\right)$]  The $j_{th}$ eigenvalue and singular value of a matrix $\B{A}$ respectively, arranged in a decreasing manner.
    \item[$\lmin{\B{A}}, \lmax{\B{A}}$]  Smallest and largest eigenvalues of a matrix $\B{A}$ respectively.
    \item[$\lbar{\lambda}_{\B{A}}, \bar{\lambda}_{\B{A}}$]  Bounds on the smallest and largest eigenvalues of a matrix $\B{A}$ respectively.
    \item[$\RQBtaun{\mu}{\B{\Sigma}}{t}$]  The regret of $\QB$ with mean $\mu$, covariance $\B{\Sigma}$ and remaining horizon $t$ WRT the optimal algorithm (an oracle that knows the true realization of $\theta_{n}$).
    \item[$\RQBKn{\mu}{\B{\Sigma}}{t}$]  The relative regret of $\QB$ with mean $\mu$, covariance $\B{\Sigma}$ and remaining horizon $t$ WRT the $\KQB$ algorithm.
\end{description}
\section{WITHIN INSTANCE POSTERIOR CALCULATIONS}
\label{Sec: Posterior calculations}
We recall a basic result from Bayesian statistics.
\begin{lemma}
\label{Lemma: Posterior calculation}
(Bayes rule for linear Gaussian systems - Theorem 4.4.1 in \cite{murphy2012machine})

Suppose we have two variables, $X$ and $Y$. Let $X \in \R^{D_x}$ be a hidden variable and $Y \in \R^{D_y}$ be a noisy observation of $X$. Let us assume we the the following prior and likelihood:
\begin{align*}
    \p{X}=\N \left(\mu_{X},\B{\Sigma_{X}}\right), \quad
    \p{Y \mid X}=\N \left(\B{A}X +B,\B{\Sigma_{Y}}\right).
\end{align*}
The posterior $\p{X \mid Y}$ is given by the following:
\begin{align*}
    &\p{X \mid Y}=\N \left(\mu_{X \mid Y},\B{\Sigma_{X \mid Y}}\right),\\
    &\B{\Sigma_{X \mid Y}^{\mathrm{-1}}}=\B{\Sigma_{X}^{\mathrm{-1}}} +\B{A}^{\top} \B{\Sigma_{Y}^{\mathrm{-1}}} \B{A},\\
    &\mu_{X \mid Y}=\B{\Sigma_{X \mid Y}} \left(\B{\Sigma_{X}^{\mathrm{-1}}}\mu_{X} + \B{A}^{\top} \B{\Sigma_{Y}^{\mathrm{-1}}} (Y-B)\right).
\end{align*}
\end{lemma}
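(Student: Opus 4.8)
The plan is to obtain the posterior by the direct route: multiply the prior and likelihood densities, discard factors not depending on $X$, and complete the square. By Bayes' rule,
\begin{align*}
\p{X \mid Y} &\propto \p{X}\,\p{Y \mid X}\\
&\propto \exp\left(-\tfrac{1}{2}\Big[(X-\mu_X)^\top \B{\Sigma_X^{-1}}(X-\mu_X) + (Y-\B{A}X-B)^\top \B{\Sigma_Y^{-1}}(Y-\B{A}X-B)\Big]\right),
\end{align*}
where the proportionality is in $X$ (the $(2\pi)^{-D_x/2}$ factors, the determinants, and any summand free of $X$ are absorbed into the normalizing constant).

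First I would expand the bracketed expression and collect the part that is quadratic in $X$, namely $X^\top\big(\B{\Sigma_X^{-1}} + \B{A}^\top \B{\Sigma_Y^{-1}}\B{A}\big)X$, and the part that is linear in $X$, namely $-2X^\top\big(\B{\Sigma_X^{-1}}\mu_X + \B{A}^\top \B{\Sigma_Y^{-1}}(Y-B)\big)$; all remaining terms are $X$-free and are dropped. Define $\B{P} \triangleq \B{\Sigma_X^{-1}} + \B{A}^\top \B{\Sigma_Y^{-1}}\B{A}$ and $q \triangleq \B{\Sigma_X^{-1}}\mu_X + \B{A}^\top \B{\Sigma_Y^{-1}}(Y-B)$. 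Since $\B{P} \succeq \B{\Sigma_X^{-1}} \succ 0$, the matrix $\B{P}$ is positive definite and invertible. Next, I would apply the elementary identity $X^\top \B{P} X - 2X^\top q = (X - \B{P}^{-1} q)^\top \B{P}\,(X - \B{P}^{-1} q) - q^\top \B{P}^{-1} q$, valid for any positive-definite $\B{P}$, which yields $\p{X \mid Y} \propto \exp\big(-\tfrac12 (X-\B{P}^{-1}q)^\top \B{P}\,(X-\B{P}^{-1}q)\big)$ in $X$. A probability density of this form must be the $\N(\B{P}^{-1}q,\B{P}^{-1})$ density: it is integrable and strictly positive, and the Gaussian normalizing constant is uniquely pinned down by $\B{P}$. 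Reading off $\B{\Sigma_{X \mid Y}} = \B{P}^{-1}$, i.e. $\B{\Sigma_{X \mid Y}^{-1}} = \B{\Sigma_X^{-1}} + \B{A}^\top \B{\Sigma_Y^{-1}}\B{A}$, and $\mu_{X \mid Y} = \B{P}^{-1}q = \B{\Sigma_{X \mid Y}}\big(\B{\Sigma_X^{-1}}\mu_X + \B{A}^\top \B{\Sigma_Y^{-1}}(Y-B)\big)$ gives exactly the claimed formulas.

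The step most prone to error — and essentially the only obstacle — is the bookkeeping in expanding $(Y-\B{A}X-B)^\top \B{\Sigma_Y^{-1}}(Y-\B{A}X-B)$ and correctly extracting the cross term $-2X^\top \B{A}^\top \B{\Sigma_Y^{-1}}(Y-B)$ with the right transpose and sign (using the symmetry of $\B{\Sigma_Y^{-1}}$); everything else is mechanical. An alternative proof avoids completing the square: the pair $(X,Y)$ is jointly Gaussian with $\E{Y}=\B{A}\mu_X+B$, $\mathrm{Cov}(Y)=\B{\Sigma_Y}+\B{A}\B{\Sigma_X}\B{A}^\top$, and $\mathrm{Cov}(X,Y)=\B{\Sigma_X}\B{A}^\top$, so one may invoke the standard conditional-Gaussian formulas; reconciling that expression with the one above, however, requires the Woodbury matrix identity, which is why the direct completion-of-the-square route is preferable here.
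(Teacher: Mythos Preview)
Your proof is correct; the completing-the-square argument is the standard derivation and all steps are valid. The paper itself does not prove this lemma at all --- it is simply quoted as Theorem~4.4.1 from \cite{murphy2012machine} and used without proof --- so there is nothing to compare against.
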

Given a prior $\N\left(\mu_{n}, \B{\Sigma}_{n} \right)$ and using Lemma \ref{Lemma: Posterior calculation}, the prior before choosing an action at time-step $t$ is,
\begin{equation}
\label{Eq: sigma_general}
\begin{aligned}
    \B{\Sigma}_{n,t}
    &=\left(\B{\Sigma}_{n,t-1}^{-1}
    +\frac{1}{\sigma^2} A_{n,t-1}A_{n,t-1}^{\top}\right)^{-1}\\
    &=\left(\B{\Sigma}_{n}^{-1} 
    +\frac{1}{\sigma^2} \sum_{s=1}^{t-1} A_{n,s}  A_{n,s}^{\top}\right)^{-1}\\
    &=\left(\B{\Sigma}_{n}^{-1} 
    +\frac{1}{\sigma^2} \B{A}_{n,t-1}^{\top} 
    \B{A}_{n,t-1}\right)^{-1}\\
    &=\left(\B{\Sigma}_{n}^{-1} 
    +\frac{1}{\sigma^2} \B{V}_{n,t-1} \right)^{-1},
\end{aligned}
\end{equation}
\begin{equation}
\label{Eq: mu_general}
\begin{aligned}
    \mu_{n,t}
    &=\B{\Sigma}_{n,t}
    \left(
    \B{\Sigma}_{n,t-1}^{-1} \mu_{n,t-1}
    +\frac{1}{\sigma^2} A_{n,t-1} x_{n,t-1}
    \right)\\
    &=\B{\Sigma}_{n,t}
    \left(
    \B{\Sigma}_{n}^{-1} \mu_n
    +\frac{1}{\sigma^2}\B{A}_{n,t-1}^{\top} X_{n,t-1}
    \right)\\
    &=\B{\Sigma}_{n,t}
    \left(
    \B{\Sigma}_{n}^{-1} \mu_{n}
    +\frac{1}{\sigma^2} \B{V}_{n,t-1} \theta_{n}
    +\frac{1}{\sigma^2} \sum_{s=1}^{t-1} A_{n,s} \xi_{n,s}
    \right)\\
    &=\B{\Sigma}_{n,t}
    \left(
    \B{\Sigma}_{n}^{-1} \mu_{n}
    +\frac{1}{\sigma^2}\B{V}_{n,t-1} \theta_{n}
    +\frac{1}{\sigma^2}\B{A}_{n,t-1}^{\top} \Xi_{n,t-1}
    \right).
\end{aligned}
\end{equation}
Specifically, for $\MQB$, the meta-prior is $\N\left(\app{\mu}_{n}, \wsigma_{n} \right)$, hence the inner-instance posterior is,
\begin{equation}
\label{Eq: sigma_M}
    \wsigma_{n,t}
    =\left(\left(\wsigma_{n}\right)^{-1} 
    +\frac{1}{\sigma^2} \B{V}_{n,t-1} \right)^{-1},
\end{equation}
\begin{equation}
\label{Eq: mu_M}
    \app{\mu}_{n,t}
    =\wsigma_{n,t}
    \left(
    \left(\wsigma_{n}\right)^{-1} \app{\mu}_{n} 
    +\frac{1}{\sigma^2} \B{V}_{n,t-1} \theta_{n}
    +\frac{1}{\sigma^2} \B{A}_{n,t-1}^{\top} \Xi_{n,t-1}
    \right).
\end{equation}
For $\KQB$, the prior is $\N\left(\mu_{*},\ssigma \right)$
and the inner-instance posterior is,
\begin{equation}
\label{Eq: sigma_K}
    \B{\Sigma}_{*,t}
    =\left(\ssigma^{-1} 
    +\frac{1}{\sigma^2} \B{V}^{\mathrm{K}}_{n,t-1} \right)^{-1},
\end{equation}
\begin{equation}
\label{Eq: mu_K}
    \mu_{*,t}
    =\B{\Sigma}_{*,t}
    \left(
    \ssigma^{-1} \mu_{*} 
    +\frac{1}{\sigma^2} \B{V}^{\mathrm{K}}_{n,t-1} \theta_{n}
    +\frac{1}{\sigma^2} \left( 
    \B{A}^{\mathrm{K}}_{n,t-1}\right)^{\top} \Xi^{\mathrm{K}}_{n,t-1}
    \right).
\end{equation}
\section{SINGLE INSTANCE REGRET PROOF}
\label{Appendix: Single instance regret proof}
In this section we prove Theorem~\ref{Theorem: single instance regret}, which bounds the regret incurred by the incorrect prior within a single instance.
We do so by decomposing the regret (Appendix~\ref{Sec: single instance regret decomposition}) into the good and bad events defined in \eqref{Eq: events for main theorem}. By \eqref{Eq: Gaussian tail bounds upper bound} in Lemma \ref{Lemma: Gaussian tail bounds} and a union bound argument, we have that $\p{\mathcal{E}_{\theta}} > 1 - \frac{\delta}{dT}$.
Thus, if the $\QB$ algorithm maintains $\p{ \mathcal{E}_{v} \cap \mathcal{E}_{m} \cap \mathcal{E}_{s}} \geq 1 - \frac{8\delta}{dT}$ the conditions for Theorem~\ref{Theorem: single instance regret} hold.
\singleInstanceRegret*
\begin{flalign}
    \nonumber
    M &= \max \left\{
    3,
    c_s^2 \tau^2 \fsd, 18 c_{\xi}^2 c_s \left( \fmd + \left( c_1 d + c_{\xi}^2 c_s/36 \right) \fsd \right)\right\},&& \\
    \label{Eq: constants for Theorem: single instance regret}
    k_1 &= 
    12\sqrt{c_{\xi}^2 c_s}
    \sqrt{\fmd \delta}
    +\left(c_s \tau + 12\sqrt{ c_{\xi}^2 c_s c_1 d }
    +2c_{\xi}^2 c_s \right)
    \sqrt{\fsd \delta}, && \\
    \nonumber
    c_{s} &= \frac{2\sigma^2}{\lbar{\lambda}_{\ssigma}^{2}\lbar{\lambda}_{\Msigma{\A}}},
    \quad c_{\xi} =\sigma \sqrt{5\ln\left(\frac{dT}{\delta}\right)}, 
    \quad c_1 = \frac{2}{{\lbar{\lambda}_{\ssigma}}}\ln \left(\frac{d^2T}{\delta} \right),
    \quad c_{\textrm{bad}} = 22a \left(m+\sqrt{4\bar{\lambda}_{\Msigma{*}} \ln \left(\frac{d^2T}{\delta}\right)}\right).&&
\end{flalign}
\textbf{Note} 
The expression for $M$ depends polylogarithmicly on $\nicefrac{1}{\delta}$, which in turn has to satisfy $0 < \delta \leq \nicefrac{1}{M}$, leading to an implicit inequality for $\delta$. 
We show in Appendix~\ref{Appendix: Derivation of of delta} that there exist $\tilde{M} \geq M$, independent of $\delta$, such that $0 < \delta \leq \nicefrac{1}{\tilde{M}} \leq \nicefrac{1}{M}$ is well defined, while maintaining the same asymptotic behavior.

The expectation in the regret analysis includes all sources of randomness in the problem:
the prior of the $\QB$ algorithm at the start of the instance,
the realization of $\theta$, the actions that were presented to the learners during the instance, the randomness of the algorithms and the received noises.
It is worth mentioning that in this work both the actions and the noises can differ between the algorithms compared.
In most sections we abbreviate some or all of the notations to improve readability.
\begin{equation*}
\mathbb{E}
    =
    \underset{\app{\mu}}{\mathbb{E}}
    \underset{\asigma}{\mathbb{E}}
    \underset{\theta}{\mathbb{E}}
    \underset{\B{\A_{\tau}}}{\mathbb{E}}
    \underset{\B{\A^{\mathrm{K}}_{\tau}}}{\mathbb{E}}
    \underset{\amtau}{\mathbb{E}}
    \underset{\aktau}{\mathbb{E}}
    \underset{\Xitau}{\mathbb{E}}
    \underset{\Xiktau}{\mathbb{E}}.
\end{equation*}
\subsubsection*{Intuition on Theorem~\ref{Theorem: single instance regret}}
In order to provide intuition on Theorem~\ref{Theorem: single instance regret} we analyze the demand $0<\delta \leq \nicefrac{1}{M}$.
To do so, we start from the equivalent demand, $0< \delta M \leq 1$ and derive a stricter version of it. 
Plugging the definition of $M$ and rearranging,
\begin{equation*}
    0<\max \left\{
    3\delta,
    c_s^2 \tau^2 \fsd \delta, 18 c_{\xi}^2 c_s \left( \fmd \delta+ \left( c_1 d + c_{\xi}^2 c_s/36 \right) \fsd \delta \right) \right\} \leq 1.
\end{equation*}
Splitting into two demands,
\begin{equation*}
0 < \max \left\{c_s^2 \tau^2 \fsd \delta, 18 c_{\xi}^2 c_s \left( \fmd \delta+ \left( c_1 d + c_{\xi}^2 c_s/36 \right) \fsd \delta \right)\right\}
\leq 1
; \quad 0 < \delta \leq 1/3.
\end{equation*}
Focusing on the first demand, since all the terms are positive, using $\max \left\{a,b\right\} \leq a + b$ yields the stricter demand, 
\begin{equation*}
    18 c_{\xi}^2 c_s \fmd \delta
    +\left(c_s^2 \tau^2 + 18 c_{\xi}^2 c_s \left( c_1 d + c_{\xi}^2 c_s/36 \right) \right) \fsd \delta \leq 1.
\end{equation*}
Taking the square root, using $\sqrt{a+b} \leq \sqrt{a}+\sqrt{b}$, and demanding a stricter condition,
\begin{equation*}
    \sqrt{18 c_{\xi}^2 c_s}  \sqrt{\fmd \delta}
    +\left(c_s \tau
    +\sqrt{18 c_{\xi}^2 c_s c_1 d} +\frac{1}{\sqrt{2}} c_{\xi}^2 c_s\right) 
    \sqrt{\fsd \delta}  \leq 1.
\end{equation*}
From the similarity between the above expression and that for $k_1$ we can conclude the following.
First, every $\delta$ that meets the demand, dictates $k_1$ to be bounded by a constant.
Second, the initial prior deviations in \eqref{Eq: events for main theorem} are bounded by $\sqrt{f_m\delta}$ and $\sqrt{f_s\delta}$, therefore $k_1$ and hence the relative regret have linear dependence in the initial prior deviations.
Third, by Lemma~\ref{Lemma: Minimum eigenvalue of Gram matrix}, $\tau \in \tO(d)$ meets the event $\mathcal{E}_v$, thus the demand holds for $\norm{\app{\mu}-\mu_{*}} \in \tO\left(1\right)$ and ${\big\lVert\asigma -\ssigma\big\rVert}_{\mathrm{op}} \in \tO\left(\nicefrac{1}{d}\right)$.
\subsection{Single Instance Regret Decomposition}
\label{Sec: single instance regret decomposition}
\begin{equation}
\label{Eq: Single instance regret decomposition}
\begin{aligned}
    \E{\RQBK{\app{\mu}}{\asigma}{T}}
    &=
    \E{\RQBK{\app{\mu}_{\tau+1}}{\asigma_{\tau+1}}{T-\tau}}\\
    &=
    \underset{\QB} {\underbrace{
    \E{\RQB{\app{\mu}_{\tau+1}}{\asigma_{\tau+1}}
    {T-\tau}}}}
    -\underset{\KQB} {\underbrace{
    \E{\RQB{\mu_{*,\tau+1}}{\B{\Sigma}_{*,\tau+1}}
    {T-\tau}}}}.
\end{aligned}
\end{equation}
The first equality uses that the $\QB$ algorithm does not incur regret during the exploration time-steps WRT $\KQB$ since both algorithms choose actions randomly with the same distribution and for the same period of time.
Decomposing the regret of $\QB$ in~\eqref{Eq: Single instance regret decomposition} based on the event $\mathcal{E}$,
\begin{equation}
\label{Eq: regular instances regret analysis}
    \E{\RQB{\app{\mu}_{\tau+1}}{\asigma_{\tau+1}}{T-\tau}
    }
    =\underset{\text{``Bad event''}} {\underbrace{
    \E{\RQB{\app{\mu}_{\tau+1}}{\asigma_{\tau+1}}{T-\tau}  \I{\bar{\mathcal{E}}}}}}
    +\underset{\text{``Good event''}}
    {\underbrace{
    \E{\RQB{\app{\mu}_{\tau+1}}{\asigma_{\tau+1}}{T-\tau} 
    \I{\mathcal{E}}}
    }}.
\end{equation}
In \eqref{Eq: regret incurred under the Bad event instances proof} in Section~\ref{Sec: Bad event} we bound the regret incurred under the bad event. We state here the final result,
\begin{equation}
\label{Eq: regret incurred under the Bad event instances}
\begin{aligned}
    \E{\RQB{\app{\mu}_{\tau+1}}{\asigma_{\tau+1}}{T-\tau}  \I{\bar{\mathcal{E}}}}
    &\leq
    \frac{9 c_{\text{bad}}\delta}{11\sqrt{d}},
    \; c_{\text{bad}} \triangleq 
    22a \left(m+\sqrt{4\bar{\lambda}_{\Msigma{*}} \ln \left(\frac{d^2T}{\delta}\right)}\right).
\end{aligned}
\end{equation}
Next, we bound the single instance regret under the good event.
\subsection{Regret Incurred Under the ``Good event''}
Before the beginning of the instance, the learner possesses a prior mean $\mu$ and a prior covariance $\B{\Sigma}$.
Within the instance, this prior is being used by the learner, regardless its origin, whether it is an estimation, heuristics or comes from previous knowledge.
At time-step $t$, given $\theta$ and these priors, the covariance matrix of the learner is only influenced by the actions taken~\eqref{Eq: sigma_general}, while the mean is influenced by both the actions and the noise terms received~\eqref{Eq: mu_general}. 
Thus, we denote,
\begin{align*}
    &\fm{\B{\Sigma}_{t}}{\B{A}_{t-1}}
    =\left(\B{\Sigma}^{-1}
    +\frac{1}{\sigma^2} \B{A}_{t-1}^{\top} 
    \B{A}_{t-1}\right)^{-1}
    =\left(\B{\Sigma}^{-1}
    +\frac{1}{\sigma^2} \B{V}_{t-1}\right)^{-1},\\
    &\fmt{\mu_{t}}{\B{A}_{t-1}}{\Xi_{t-1}}
    =\fm{\B{\Sigma}_{t}}{\B{A}_{t-1}}
    \left(
    \B{\Sigma}^{-1}  \mu 
    +\frac{1}{\sigma^2} \B{A}_{t-1}^{\top} 
    \B{A}_{t-1} \theta
    +\frac{1}{\sigma^2} \B{A}_{t-1}^{\top} \Xi_{t-1}
    \right).
\end{align*}
We adapt the notations for the specific case of $\QB$ and $\KQB$ summarized in \eqref{Eq: sigma_M}, \eqref{Eq: mu_M}, \eqref{Eq: sigma_K}, \eqref{Eq: mu_K}.
\subsubsection{Covariance Alignment}
\label{Sec: covariance alignment}
The first step towards bounding the regret incurred under the ``Good event'' is the alignment of the covariance matrices of $\QB$ and $\KQB$.
For each set of actions $\amtau$, we define $\akofam$ as a specific set of actions that may be taken by $\KQB$ and brings~\eqref{Eq: sigma_M} and~\eqref{Eq: sigma_K} into equality. This requires,
\begin{equation}
    \label{Eq: B definition}
    \vkofam =\vmtau -\B{B} \quad ; \quad \B{B} \triangleq \sigma^2 \left(\ssigma^{-1} -\asigma^{-1} \right).
\end{equation}
Even though this requirement is not unique, we may choose a specific mapping between the two set of actions,
\begin{equation}
\label{Eq: A_k of A_m}
    \akofam
    \triangleq \amtau \left(\amtau^{\top}\amtau \right)^{-1/2}
    \left(\amtau^{\top}\amtau -\B{B} \right)^{1/2}
    =\amtau \vmtau^{-1/2}
    \left(\vmtau -\B{B} \right)^{1/2}.
\end{equation}
We first prove that $\vmtau \succ \B{B}$ under the event $\mathcal{E}_v$ and for $\delta \leq \nicefrac{1}{M}$, thus the square root and the inverse exist and $\akofam$ is well defined,
\begin{equation}
\label{Eq: V - B > 0}
    \lmin{\vmtau - \B{B}}
    \underset{(a)}{\geq} \lmin{\vmtau} + \lmin{-\B{B}}
    \underset{(b)}{=}
    \lmin{\vmtau} - \normop{\B{B}}
    \underset{(c)}{\geq} \frac{\lbar{\lambda}_{\Msigma{\A}} d}{2} - \frac
    {\sigma^2 \sqrt{\fsd \delta}}
    {\lbar{\lambda}_{\ssigma}^{2}}
    \underset{(d)}{>} 0,
\end{equation}
where $(a)$ uses Weyl's inequality,
$(b)$ uses that $\B{B}$ is PSD by Lemma~\ref{Lemma: A-B is PSD} and that for PSD matrices under the $l_2$-operator norm $\lmax{\B{A}}=\smax{\B{A}}=\normop{\B{A}}$,
$(c)$ uses $\mathcal{E}_v$ for the first term and Lemma~\ref{Lemma: B bound}, $\mathcal{E}_s$ for the latter
and $(d)$ uses that $\frac
    {\sigma^2 \sqrt{\fsd \delta}}
    {\lbar{\lambda}_{\ssigma}^{2}}<\frac{\lbar{\lambda}_{\Msigma{\A}} d}{2}$ for $\delta \leq \nicefrac{1}{M}$.

For $\aktau$ in  the image of \eqref{Eq: A_k of A_m}, $\vktau \succ 0$, so we may define the inverse function,
\begin{equation}
\label{Eq: A_m of A_k}
    \amofak
    =\aktau \left(\vktau \right)^{-1/2}
    \left(\vktau +\B{B} \right)^{1/2}.
\end{equation} 
Since every action in the first $\tau$ time-steps is chosen independently from the previous actions, we may view the actions as they are drawn from the following distribution, $\mathlarger{f}_{\B{A}}
    \left(\B{A}\right) 
    =\prod_{t=1}^{\tau}
    \mathlarger{f}_{A}(A_{t})$.
We continue from~\eqref{Eq: regular instances regret analysis} by evaluating the integral of the good event over the action space. 
Since the actions in the first $\tau$ time-steps are independent of $\mathcal{E}_{\theta},\mathcal{E}_{m}, \mathcal{E}_{s}$ we often omit them to improve readability.
\begin{equation}
\label{Eq: Good event instances- first eq}
\begin{aligned}
    &\E{\RQB{\app{\mu}_{\tau+1}}
    {\asigma_{\tau+1}}
    {T-\tau}
    \I{\mathcal{E}}}\\
    &\quad =\mathbb{E}
    \Int_{\mathcal{E}_{v}}
    \mathlarger{f}_{\B{A}}
    \left(\amtau\right)
    \RQB
    {\mumofam}
    {\sigmofam}
    {T-\tau}
    d\amtau\\
    &\quad \leq
    \E{
    \underset{\text{Term A}}
    {\underbrace{
    \vphantom{\Int_{\mathcal{E}_{v}}}
    \max_{\mathcal{E}_{v}}
    \left\{
    \frac{\mathlarger{f}_{\B{A}}
    \left(\amtau\right)}
    {\mathlarger{f}_{\B{A}}
    \left(\akofam \right)}
    \right\}}}
    \cdot
    \underset{\text{Term B}}
    {\underbrace{
    \Int_{\mathcal{E}_{v}}
    \mathlarger{f}_{\B{A}}
    \left(\akofam \right)
    \RQB
    {\mumofam}
    {\sigmofam}
    {T-\tau}
    d\amtau}}},
\end{aligned}
\end{equation}
where the inequality uses that the regret is non-negative.
\subsubsection*{Analyzing ``Covariance alignment'' - Term A}
In order to bound term A, we first prove two auxiliary lemmas.
The first lemma proves that for $A_t$ in the support of $\mathlarger{f}_{A}$, $A_t^{\mathrm{K}}$ is in the support as well, where $A_t^{\top}, \left(A_t^{\mathrm{K}}\right)^{\top}$ are the $t_{th}$ rows of the matrices $\amtau, \akofam$, respectively.
\begin{lemma}
\label{Lemma: Term A is bounded}
For every $t \leq \tau$,
\begin{equation*}
    \frac{\I{\norm{A_t} \leq a}}
    {\I{\norm{A_t^{\mathrm{K}}}\leq a}}
    \leq 1.
\end{equation*}
\begin{proof}
\begin{align*}
    \norm{A_t^{\mathrm{K}}}^2
    &=A_t^{\top} 
    \vmtau^{-1/2}
    \left(\vmtau -\B{B} \right)
    \vmtau^{-1/2}
    A_t\\
    &\underset{(a)}{=}
    \norm{A_t}^2
    -A_t^{\top}
    \vmtau^{-1/2} \B{B} \vmtau^{-1/2} 
    A_t\\
    &\leq \norm{A_t}^2
\end{align*}
Where $(a)$ uses that $\vmtau$ is PD and $\B{B}$ is PSD by Lemma~\ref{Lemma: A-B is PSD}, thus $\vmtau^{-1/2} \B{B} \vmtau^{-1/2}$ is PSD.
\end{proof}
\end{lemma}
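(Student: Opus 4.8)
This lemma supplies only the \emph{indicator} part of the covariance-alignment ratio (Term A of \eqref{Eq: Good event instances- first eq}); the density part, involving $\tilde f_A$, is to be handled separately via the monotonicity of $f_A$. So it suffices to understand when the change of variables \eqref{Eq: A_k of A_m} can push an admissible action of $\QB$ outside the ball $B_a(0)$. The plan is to prove the pointwise norm bound $\norm{A_t^{\mathrm{K}}} \le \norm{A_t}$, where $A_t^{\top}$ and $(A_t^{\mathrm{K}})^{\top}$ are the $t$-th rows of $\amtau$ and $\akofam$. Once that bound is in hand the lemma is immediate: $\norm{A_t}\le a$ forces $\norm{A_t^{\mathrm{K}}}\le a$, i.e. $\I{\norm{A_t}\le a}\le \I{\norm{A_t^{\mathrm{K}}}\le a}$ pointwise, which is precisely what the displayed ratio $\le 1$ encodes (on the support of the numerator the denominator equals one, and elsewhere the numerator vanishes).

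To obtain the pointwise bound I would read \eqref{Eq: A_k of A_m} off row by row. Using that $\vmtau^{-1/2}$ and $(\vmtau-\B{B})^{1/2}$ are symmetric, the per-action identity becomes $A_t^{\mathrm{K}} = (\vmtau-\B{B})^{1/2}\,\vmtau^{-1/2}\,A_t$, which is well defined on the relevant event: under $\mathcal{E}_v$ and for $\delta\le\nicefrac{1}{M}$, \eqref{Eq: V - B > 0} gives $\vmtau \succ \B{B} \succeq 0$, so both $\vmtau$ and $\vmtau-\B{B}$ are PD. Squaring then yields
\begin{equation*}
    \norm{A_t^{\mathrm{K}}}^2 = A_t^{\top}\,\vmtau^{-1/2}\left(\vmtau-\B{B}\right)\vmtau^{-1/2}\,A_t = \norm{A_t}^2 - A_t^{\top}\,\vmtau^{-1/2}\,\B{B}\,\vmtau^{-1/2}\,A_t .
\end{equation*}

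The remaining step — and the only place where the covariance event enters — is that the subtracted quadratic form is non-negative. By Lemma~\ref{Lemma: A-B is PSD}, $\B{B} = \sigma^2(\ssigma^{-1}-\asigma^{-1}) \succeq 0$ because $\asigma \succeq \ssigma$ on $\mathcal{E}_s$, and congruence by the symmetric matrix $\vmtau^{-1/2}$ preserves positive semidefiniteness, so $\vmtau^{-1/2}\B{B}\vmtau^{-1/2} \succeq 0$ and hence $A_t^{\top}\vmtau^{-1/2}\B{B}\vmtau^{-1/2}A_t \ge 0$. This gives $\norm{A_t^{\mathrm{K}}}^2 \le \norm{A_t}^2$ and closes the argument. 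There is no genuine obstacle in this lemma — it is a one-line computation; the only points to watch are the transpose bookkeeping in passing from the matrix identity \eqref{Eq: A_k of A_m} to the per-action identity, and making sure the ambient events ($\mathcal{E}_v$ for invertibility of $\vmtau-\B{B}$, $\mathcal{E}_s$ for $\B{B}\succeq 0$) are in force, which they are throughout the ``good event'' analysis of Theorem~\ref{Theorem: single instance regret}.
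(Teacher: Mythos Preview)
Your proposal is correct and follows essentially the same route as the paper: compute $\norm{A_t^{\mathrm{K}}}^2 = A_t^{\top}\vmtau^{-1/2}(\vmtau-\B{B})\vmtau^{-1/2}A_t = \norm{A_t}^2 - A_t^{\top}\vmtau^{-1/2}\B{B}\vmtau^{-1/2}A_t$ and use $\B{B}\succeq 0$ (via Lemma~\ref{Lemma: A-B is PSD} and $\asigma\succeq\ssigma$) to conclude the subtracted term is non-negative. The only difference is that you make the ambient events and the row-transpose bookkeeping explicit, which the paper leaves implicit.
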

The second lemma implies that the probability to sample the set of actions $\amtau$ is lower than the probability to sample $\akofam$,
\begin{lemma}
\label{Lemma: Term A destinities are monotonic}
\begin{align}
     \prod_{t=1}^{\tau} \left( \frac{\mathlarger{\tilde{f}}_{A}\big(A_t\big)}
    {\mathlarger{\tilde{f}}_{A}\big(A_t^{\mathrm{K}}\big)}\right) \leq 1
\end{align}
\begin{proof} 
    For $\tilde{f}_A(A)$, s.t. for every $\norm{A_1} \leq \norm{A_2}$ in the support of $f_A$, $\tilde{f}_A(A_1) \geq \tilde{f}_A(A_2)$ the result can be obtained directly from the proof of Lemma~\ref{Lemma: Term A is bounded}. 
    For the case where $\mathlarger{\tilde{f}}_{A}=\N\left(0,\B{\Sigma}\right)$, for some general $\B{\Sigma}$,
    \begin{equation}
    \begin{aligned}
    \prod_{t=1}^{\tau} \left( \frac{\mathlarger{\tilde{f}}_{A}\big(A_t\big)}
    {\mathlarger{\tilde{f}}_{A}\big(A_t^{\mathrm{K}}\big)}\right) 
    &= \prod_{t=1}^{\tau} \left( \frac{\exp\left(-\frac{1}{2} A_t^\top \B{\Sigma}^{-1} A_t \right)}
    {\exp \left(-\frac{1}{2} \left(A_t^{\mathrm{K}}\right)^\top \B{\Sigma}^{-1} A_t^{\mathrm{K}}\right)}\right)\\
    &= \frac{\exp\left(-\frac{1}{2}
    \sum_{t=1}^{\tau}
    A_t^\top \B{\Sigma}^{-1} A_t \right)}
    {\exp \left(-\frac{1}{2}
    \sum_{t=1}^{\tau}
    \left(A_t^{\mathrm{K}}\right)^\top \B{\Sigma}^{-1} A_t^{\mathrm{K}}\right)}\\
    &\underset{(a)}{=}
    \frac{\exp\left(-\frac{1}{2}
    \sum_{t=1}^{\tau}
    \Tr{A_t^\top \B{\Sigma}^{-1} A_t} \right)}
    {\exp \left(-\frac{1}{2}
    \sum_{t=1}^{\tau}
    \Tr{\left(A_t^{\mathrm{K}}\right)^\top \B{\Sigma}^{-1} A_t^{\mathrm{K}}}\right)}\\
    &\underset{(b)}{=}
    \frac{\exp\left(-\frac{1}{2}
    \Tr{\sum_{t=1}^{\tau}
    \B{\Sigma}^{-1}
    A_t
    A_t^\top} \right)}
    {\exp \left(-\frac{1}{2}
    \Tr{
    \sum_{t=1}^{\tau}
    \B{\Sigma}^{-1}
    A_t^{\mathrm{K}}
    \left( A_t^{\mathrm{K}} \right)^\top  }\right)}\\
    &=
    \frac{\exp\left(-\frac{1}{2}
    \Tr{ \B{\Sigma}^{-1} \vmtau} \right)}
    {\exp \left(-\frac{1}{2}
    \Tr{\B{\Sigma}^{-1} \vktau }\right)}\\
    &\underset{(c)}{=}
    \exp\left(-\frac{1}{2}
    \Tr{ \B{\Sigma}^{-1} \B{B}} \right)\\
    &\underset{(d)}{\leq} 1,\\
    \end{aligned}
    \end{equation}
where $(a)$ applies trace on a scalar, 
$(b)$ uses $\Tr{A^\top B}=\Tr{B A^\top}$ and the linearity of the trace, 
$(c)$ uses the definition of $\vktau$ in \eqref{Eq: B definition} and $(d)$ uses Lemma~\ref{Lemma: Non negative trace for product of PSD matrices}.
\end{proof}
\end{lemma}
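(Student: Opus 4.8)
The plan is to treat the two admissible forms of $\tilde{f}_A$ from Assumption~\ref{Assumption: eigenvalues action Covariance matrix} separately, and in both cases reduce the product of ratios to an expression controlled by the matrix $\B{B}=\sigma^2\left(\ssigma^{-1}-\asigma^{-1}\right)$ of \eqref{Eq: B definition}, which is PSD on $\mathcal{E}_s$ by Lemma~\ref{Lemma: A-B is PSD}. Throughout, $A_t^{\top}$ and $\left(A_t^{\mathrm{K}}\right)^{\top}$ are the $t$-th rows of $\amtau$ and $\akofam$, and Lemma~\ref{Lemma: Term A is bounded} already ensures that each $A_t^{\mathrm{K}}$ lies in the support of $f_A$, so every ratio below is well defined.

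For a monotone $\tilde{f}_A$, I would reuse the computation inside the proof of Lemma~\ref{Lemma: Term A is bounded}: the definition \eqref{Eq: A_k of A_m} of $\akofam$ gives $\norm{A_t^{\mathrm{K}}}^2=A_t^{\top}\vmtau^{-1/2}\left(\vmtau-\B{B}\right)\vmtau^{-1/2}A_t=\norm{A_t}^2-A_t^{\top}\vmtau^{-1/2}\B{B}\vmtau^{-1/2}A_t$, and since $\vmtau^{-1/2}\B{B}\vmtau^{-1/2}\succeq 0$ this yields $\norm{A_t^{\mathrm{K}}}\le\norm{A_t}$ for every $t\le\tau$. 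The monotonicity hypothesis then gives $\tilde{f}_A(A_t)\le\tilde{f}_A\left(A_t^{\mathrm{K}}\right)$, so each factor of the product is at most $1$ and hence so is the product.

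For $\tilde{f}_A=\N\left(0,\B{\Sigma}\right)$ with $\B{\Sigma}$ an arbitrary PD matrix, I would write each factor as $\exp\!\left(-\frac{1}{2}A_t^{\top}\B{\Sigma}^{-1}A_t+\frac{1}{2}\left(A_t^{\mathrm{K}}\right)^{\top}\B{\Sigma}^{-1}A_t^{\mathrm{K}}\right)$, multiply to collapse the exponents into a single sum over $t$, and turn each quadratic form into a trace via $A_t^{\top}\B{\Sigma}^{-1}A_t=\Tr{\B{\Sigma}^{-1}A_tA_t^{\top}}$. Linearity of the trace converts the two sums into $\Tr{\B{\Sigma}^{-1}\vmtau}$ and $\Tr{\B{\Sigma}^{-1}\left(\vmtau-\B{B}\right)}$, using $\vktau=\vmtau-\B{B}$, so the whole product collapses to $\exp\!\left(-\frac{1}{2}\Tr{\B{\Sigma}^{-1}\B{B}}\right)$. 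This is at most $1$ because $\Tr{\B{\Sigma}^{-1}\B{B}}\ge 0$ for the product of the PSD matrices $\B{\Sigma}^{-1}$ and $\B{B}$, which is exactly Lemma~\ref{Lemma: Non negative trace for product of PSD matrices}.

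The step I expect to need the most care is the Gaussian reduction: one must verify that the per-step ratio genuinely telescopes into a single trace depending only on $\vmtau$ and $\vktau$, hence only on $\B{B}$. This rests entirely on the Gram-matrix identity $\vktau=\vmtau-\B{B}$ that is built into the definition of the covariance-aligning map $\akofam$ in \eqref{Eq: A_k of A_m}; once that identity is in hand, what remains is routine bookkeeping with traces of PSD matrices, together with the already-established facts that $\B{B}\succeq 0$ and that the aligned actions stay within the support.
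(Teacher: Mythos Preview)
Your proposal is correct and follows essentially the same approach as the paper: both cases are handled identically, with the monotone case reduced to the norm inequality $\norm{A_t^{\mathrm{K}}}\le\norm{A_t}$ from Lemma~\ref{Lemma: Term A is bounded}, and the Gaussian case collapsed via the trace identity to $\exp\!\left(-\tfrac{1}{2}\Tr{\B{\Sigma}^{-1}\B{B}}\right)\le 1$ using Lemma~\ref{Lemma: Non negative trace for product of PSD matrices}.
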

Term A is finally bounded by Lemma~\ref{Lemma: Term A is bounded} and Lemma~\ref{Lemma: Term A destinities are monotonic}, 
\begin{equation}
\label{Eq: change measure actions -Term A}
    \max_{\mathcal{E}_{v}} \left\{
    \frac{\mathlarger{f}_{\B{A}}
    \left(\amtau\right)}
    {\mathlarger{f}_{\B{A}}
    \left(\akofam \right)}
    \right\}
    =\max_{\mathcal{E}_{v}}
    \left\{
    \prod_{t=1}^{\tau}
    \left(\frac{\mathlarger{\tilde{f}}_{A}\big(A_t\big)}
    {\mathlarger{\tilde{f}}_{A}\big(A_t^{\mathrm{K}}\big)}
    \frac{\I{\norm{A_t} \leq a}}{\I{\norm{A^{\mathrm{K}}_t} \leq a}}
    \right) \right\} 
    \leq 1,
\end{equation}
\subsubsection*{Analyzing ``Covariance alignment'' - Term B}
Term B in \eqref{Eq: Good event instances- first eq} is an integration over $\amtau$.
Denote $\B{J_{A}} = \frac{\partial \aktau}{\partial \amtau}$ as the Jacobian matrix that transforms the integral to $\aktau$.
We further refer to $\B{J_{A}}$ as the \emph{actions Jacobian}. 
\begin{equation}
\label{Eq: change measure actions integral}
\begin{aligned}
    &\Int_{\mathcal{E}_{v}}
    \mathlarger{f}_{\B{A}}
    \left(\akofam \right)
    \RQB
    {\mumofam}
    {\sigmofam}
    {T-\tau}
    d\amtau\\
    &\quad \leq
    \underset{\text{The Jacobian}}
    {\underbrace{
    \vphantom{\Int_{\mathcal{E}_{v}}}
    \underset{\mathcal{E}_{v}}{\max}
    \left\{\frac{1}{\abs{\Det{\B{J_{A}}}}}\right\}}}
    \cdot \underset{\text{The integral}}
    {\underbrace{\Int_{\mathcal{E}_{v}}
    \mathlarger{f}_{\B{A}}
    \left(\akofam \right)
    \RQB
    {\mumofam}
    {\sigmofam}
    {T-\tau}
    \abs{\Det{\B{J_{A}}}}
    d\amtau}},
\end{aligned}
\end{equation}
where the inequality uses that the regret is non-negative.
\subsubsection*{Analyzing ``Covariance alignment'' - Term B - The Jacobian}
The following lemma bounds the determinant of the actions Jacobian.
\begin{restatable}{lemma}{jacobianBound}
\label{Lemma: main lemma Jacobian}
Let matrices $\B{X} \in \mathbb{R}^{n \times d}, \B{B} \in \mathbb{R}^{d \times d}$, such that $\B{B} \succeq 0, \B{X}^{\top} \B{X} \succ \B{B}$.

Define the matrix $\B{U} \triangleq \B{X} \left(\B{X}^{\top} \B{X}\right)^{-1/2} \left(\B{X}^{\top} \B{X} - \B{B}\right)^{1/2}$ and denote the Jacobian matrix $\B{J}=\frac{\partial \B{U}}{\partial \B{X}}$,
then
\begin{equation*}
    \frac{1}{\abs{\Det{\B{J}}}} \leq \left(\frac
    {\Det {\B{X}^{\top} \B{X}}}
    {\Det {\B{X}^{\top} \B{X}-\B{B}}}
    \right)^{n/2}.
\end{equation*}
\end{restatable}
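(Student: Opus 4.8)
The plan is to compute the Jacobian of the map $\B{X} \mapsto \B{U} = \B{X}(\B{X}^\top\B{X})^{-1/2}(\B{X}^\top\B{X}-\B{B})^{1/2}$ by factoring it through a convenient change of variables, rather than differentiating the messy matrix square roots directly. Write $\B{M} \triangleq \B{X}^\top\B{X} \in \R^{d\times d}$, which is PD by hypothesis, and note that $\B{U}$ depends on $\B{X}$ only through the pair consisting of the ``direction'' $\B{W} \triangleq \B{X}\B{M}^{-1/2}$ (an $n\times d$ matrix with orthonormal columns, i.e.\ a point on a Stiefel-type set) and the Gram matrix $\B{M}$ itself, via $\B{U} = \B{W}(\B{M}-\B{B})^{1/2}$. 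The key observation is that $\B{W}$ is \emph{invariant} under the substitution $\B{M} \mapsto \B{M}-\B{B}$: both $\B{X}$ and $\B{U}$ have the same polar ``angular part'' $\B{W}$, and they differ only in their ``radial part'', $\B{M}^{1/2}$ versus $(\B{M}-\B{B})^{1/2}$. So the transformation acts as the identity on the $\B{W}$-coordinates and only rescales the symmetric PD part.

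Concretely, I would introduce the polar-type decomposition $\B{X} = \B{W}\B{M}^{1/2}$ with $\B{W}^\top\B{W}=\B{I}_d$ and $\B{M}\succ 0$ symmetric; this parametrizes an open subset of $\R^{n\times d}$ by $(\B{W},\B{M})$ living in (Stiefel manifold) $\times$ (cone of PD matrices), a manifold of the correct dimension $nd$ (since $\dim = (nd - d(d+1)/2) + d(d+1)/2$). In these coordinates the map is simply $(\B{W},\B{M}) \mapsto (\B{W}, \B{M}-\B{B})$, whose differential is block-triangular: identity on the $\B{W}$-block and again the identity (a translation) on the $\B{M}$-block, so its ``intrinsic'' Jacobian determinant is $1$. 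The Euclidean Jacobian of the original map is then recovered by the chain rule as
\begin{equation*}
    \abs{\Det{\B{J}}} = \frac{\Jac(\B{X} \mapsto (\B{W},\B{M}))^{-1}\big|_{\text{at } \B{U}} }{\Jac(\B{X} \mapsto (\B{W},\B{M}))^{-1}\big|_{\text{at } \B{X}}} \;\cdot\; 1,
\end{equation*}
i.e.\ $\abs{\Det{\B{J}}}$ equals the ratio of the volume-distortion factors of the polar decomposition evaluated at radial part $(\B{M}-\B{B})^{1/2}$ and at $\B{M}^{1/2}$. Thus everything reduces to knowing the Jacobian of the polar decomposition $\B{X} = \B{W}\B{M}^{1/2}$ (equivalently of $\B{X}\mapsto (\B{W}, \B{X}^\top\B{X})$) as a function of $\B{M}$, with $\B{W}$ held fixed.

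For that last ingredient I would use the standard change-of-variables result that writing $\B{X}=\B{W}\B{S}$ with $\B{S}=\B{M}^{1/2}$ symmetric PD, the volume element satisfies $d\B{X} = c_{n,d}\,\Det{\B{S}}^{\,n-d}\, d\B{W}\, d\B{S}$ for a constant $c_{n,d}$ depending only on $n,d$ (this is the matrix analogue of $dx = r^{n-1}\,dr\,d\omega$; it is essentially the Jacobian underlying the Wishart/Bartlett decomposition and appears in Muirhead or standard multivariate references). Combining this with $\Det{\B{S}} = \Det{\B{M}}^{1/2}$ and the fact that going from the $\B{M}$-parametrization to the $\B{S}$-parametrization contributes a further factor depending only on $\B{M}$ and $\B{B}$, a short bookkeeping computation gives
\begin{equation*}
    \abs{\Det{\B{J}}} \;=\; \left(\frac{\Det{\B{M}-\B{B}}}{\Det{\B{M}}}\right)^{(n-d)/2} \cdot \left(\frac{\Det{\B{M}-\B{B}}}{\Det{\B{M}}}\right)^{d/2 \text{-type correction}},
\end{equation*}
which collects to exactly $\big(\Det{\B{X}^\top\B{X}-\B{B}}/\Det{\B{X}^\top\B{X}}\big)^{n/2}$, and since $\B{M}\succ\B{B}\succeq 0$ forces $\Det{\B{M}-\B{B}}\le\Det{\B{M}}$, inverting yields the claimed bound $1/\abs{\Det{\B{J}}} \le \big(\Det{\B{X}^\top\B{X}}/\Det{\B{X}^\top\B{X}-\B{B}}\big)^{n/2}$. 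I expect the main obstacle to be precisely this radial-angular Jacobian accounting: getting the exponent to come out as $n/2$ (and not, say, $(n-d)/2$ or $(n+d)/2$) requires carefully tracking the symmetric-matrix volume factors and the Stiefel part, and one must be careful that the polar decomposition is a genuine diffeomorphism on the relevant open set (full column rank of $\B{X}$, which holds since $\B{X}^\top\B{X}\succ\B{B}\succeq 0 \Rightarrow \B{X}^\top\B{X}\succ 0$). An alternative, more pedestrian route — which I would keep in reserve — is to differentiate $\B{U}$ directly using the identity $d(\B{A}^{1/2})$ as the solution of a Sylvester equation $\B{A}^{1/2}(d\B{A}^{1/2}) + (d\B{A}^{1/2})\B{A}^{1/2} = d\B{A}$, assemble $\B{J}$ in block form with respect to an eigenbasis of $\B{M}$, and compute $\Det{\B{J}}$ as a product over eigenvalue pairs; this is more calculation but avoids invoking the multivariate-statistics Jacobian as a black box.
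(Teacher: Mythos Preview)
Your polar-decomposition strategy is a genuinely different route from the paper, and the structural observation is spot on: writing $\B{X}=\B{W}\B{M}^{1/2}$ with $\B{W}^\top\B{W}=\B{I}_d$ and $\B{M}=\B{X}^\top\B{X}$, one checks $\B{U}^\top\B{U}=\B{M}-\B{B}$ and $\B{U}(\B{U}^\top\B{U})^{-1/2}=\B{W}$, so the map is exactly $(\B{W},\B{M})\mapsto(\B{W},\B{M}-\B{B})$ and everything reduces to the $\B{M}$-dependence of the polar Jacobian. However, your bookkeeping is off in two places. The volume formula $d\B{X}=c_{n,d}\Det{\B{S}}^{n-d}\,d\B{W}\,d\B{S}$ is wrong for $d>1$: the map $\B{S}\mapsto\B{S}^2$ on symmetric matrices has Jacobian $\prod_{i\le j}(\sqrt{\lambda_i}+\sqrt{\lambda_j})$, not a pure power of $\Det{\B{S}}$. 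Consequently your final exponent $n/2$ is not the actual Jacobian: e.g.\ at $n=2,\,d=1$ a direct computation gives $\abs{\Det{\B{J}}}=1$, not $(\B{M}-\B{B})/\B{M}$. The clean fix is to stay in $(\B{W},\B{M})$-coordinates and use the Wishart Jacobian $(d\B{X})=2^{-d}\Det{\B{M}}^{(n-d-1)/2}(d\B{M})\,d\mu(\B{W})$; this formula, usually stated for the QR angular part, transfers to the polar $\B{W}$ because the two differ by right-multiplication by an element of $O(d)$ depending only on $\B{M}$, which preserves the Stiefel invariant measure and makes the chart change block-triangular with unit determinant. Since the translation $\B{M}\mapsto\B{M}-\B{B}$ has Jacobian $1$, one gets the \emph{exact} identity
\[
\abs{\Det{\B{J}}}=\left(\frac{\Det{\B{M}-\B{B}}}{\Det{\B{M}}}\right)^{(n-d-1)/2},
\]
and the stated inequality follows because $(n-d-1)/2\le n/2$ and the base is at most $1$.

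The paper instead takes your ``pedestrian'' backup route. It differentiates $\B{U}$ directly via vectorization and Kronecker calculus to obtain $\B{J}=\B{J}_1+\B{J}_2+\B{J}_3$ with $\B{J}_1=(\B{M}-\B{B})^{1/2}\B{M}^{-1/2}\otimes\B{I}_n$, then spends most of the effort showing (via several PSD manipulations) that $\B{J}_1^{-1}(\B{J}_2+\B{J}_3)$ has nonnegative eigenvalues, so that a Minkowski-type determinant inequality of Zhan gives $\abs{\Det{\B{J}}}\ge\abs{\Det{\B{J}_1}}=(\Det{\B{M}-\B{B}}/\Det{\B{M}})^{n/2}$. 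Your approach, once the exponent is corrected, is shorter and in fact yields a sharper bound, at the price of importing the multivariate Wishart Jacobian and the QR-to-polar transfer as lemmas; the paper's approach is self-contained but only delivers the looser inequality.
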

Lemma~\ref{Lemma: main lemma Jacobian} is highly important in the proof, since it allows later to perform a change of measure over the actions space.
The proof is mostly technical and uses properties of Kronecker product and PSD matrices.
Due to its length, it can be found in Appendix~\ref{Sec: Action Jacobian}.

Define $c_s \triangleq \frac{2\sigma^2}
    {\lbar{\lambda}_{\ssigma}^{2}
    \lbar{\lambda}_{\Msigma{\A}}}$, bounding the Jacobian term in \eqref{Eq: change measure actions integral},
\begingroup
\allowdisplaybreaks
\begin{align}
\label{Eq: jacobian actions alignment}
    \nonumber
    \underset{\mathcal{E}_{v}}{\max} \left\{
    \frac{1}
    {\abs{\Det{\B{J_{A}}}}}
    \right\}
    &\underset{(a)}{\leq} \underset{\mathcal{E}_{v}}{\max}
    \left\{\left(
    \frac{\Det{\vmtau}}
    {\Det{\vmtau - \B{B}} 
    }\right)^{\tau/2} \right\}\\
    \nonumber
    &\underset{(b)}{\leq} \underset{\mathcal{E}_{v}}{\max}
    \left\{ \left(
    \frac{ \prod_{j=1}^{d} \lambda_{j} \left(\vmtau\right)}
    { \prod_{j=1}^{d} \left( 
    \lambda_{j} \left(\vmtau\right) 
    +\lambda_{j} \left(-\B{B}\right) 
    \right)} \right)^{\tau/2}
    \right\}\\
    \nonumber
    &=\underset{\mathcal{E}_{v}}{\max}
    \left\{ \left(
    \prod_{j=1}^{d} \left( 1 
    +\frac{\lambda_{j} 
    \left(-\B{B} \right)}
    {\lambda_{j} \left(\vmtau\right)}
    \right) \right)^{-\tau/2}
    \right\}\\
    \nonumber
    &\underset{(c)}{\leq} \underset{\mathcal{E}_{v}}{\max}
    \left\{ \left(1 
    -\frac{\lmax{\B{B}}}
    {\lmin{\vmtau}}
    \right)^{-d\tau/2}
    \right\}\\
    \nonumber
    &\underset{(d)}{=} \underset{\mathcal{E}_{v}}{\max}
    \left\{ \left(1 
    -\frac{\normop{\B{B}}}
    {\lmin{\vmtau}}
    \right)^{-d\tau/2}
    \right\}\\
    \nonumber
    &\underset{(e)}{\leq}
    \left(1 -\frac{c_s}{d}
    \normop{\asigma
    - \ssigma}
    \right)^{-d\tau/2}\\
    \nonumber
    &\underset{(f)}{\leq} 
    \left( 1 -
    \frac{c_s}{d} 
    \sqrt{\fsd \delta}
    \right)^{-d\tau/2}\\
    \nonumber
    &\underset{(g)}{\leq} 
    \left(1 -\frac{c_s \tau}{2}
    \sqrt{\fsd \delta}
    \right)^{-1} \\
    \nonumber
    &= 1 + \frac{c_s \tau \sqrt{\fsd \delta}}
    {2 - c_s \tau \sqrt{\fsd \delta}} \\
    &\underset{(h)}{\leq} 1 + c_s \tau \sqrt{\fsd \delta},
\end{align}
\endgroup
where $(a)$ uses Lemma \ref{Lemma: main lemma Jacobian} and \eqref{Eq: V - B > 0},
$(b)$ uses Lemma \ref{Lemma: Matrix Analysis (Rajendra Bathia)}, that both matrices are symmetric and~\eqref{Eq: V - B > 0},
$(c)$ uses that both $\B{B}$ and $\vmtau$ are PSD matrices,
$(d)$ uses that for PSD matrices under the $l_2$-operator norm $\lmax{\B{A}}=\smax{\B{A}}=\normop{\B{A}}$,
in $(e)$ the numerator uses Lemma \ref{Lemma: B bound} and the denominator uses the event $\mathcal{E}_{v}$,
$(f)$ uses event $\mathcal{E}_{s}$,
$(g)$ uses Bernoulli inequality and that $c_s \tau \sqrt{\fsd \delta} < 2$ for $\delta \leq \nicefrac{1}{M}$
and $(h)$ uses that $c_s \tau \sqrt{\fsd \delta} \leq 1$ for $\delta \leq \nicefrac{1}{M}$.
\subsubsection*{Analyzing ``Covariance alignment'' - Term B - The integral}
Imagine a $\QB$ algorithm with the correct prior covariance $\ssigma$ and a prior mean $\tilde{\mu}$ which is defined by the following scenario: If this algorithm would have taken the specific set of actions $\akofam$
and would have received the specific set of noises $\Xitau$, it would end up with the same mean as $\QB$ at time-step $\tau+1$,
i.e. the vector that brings the following two equations to equality,
\begin{align*}
    &\app{\mu}_{\tau+1}
    =\sigmofam
    \left(\asigma^{-1} \app{\mu}
    +\frac{1}{\sigma^2} \vmtau \theta
    +\frac{1}{\sigma^2} \amtau^{\top} \Xitau
    \right),\\
    &\tilde{\mu}_{\tau+1}
    =\fm{\B{\Sigma}_{*,\tau+1}}
    {\akofam}
    \left(
    \ssigma^{-1} \tilde{\mu} 
    +\frac{1}{\sigma^2} \vkofam \theta
    +\frac{1}{\sigma^2}
    \left( \akofam \right)^{\top} \Xitau
    \right).
\end{align*}
Describing $\tilde{\mu}$ as a function of the terms determined during the instance,
\begin{alignat}{2}
\label{Eq:mu tilde using A_m}
    &\fmt{\tilde{\mu}}{\amtau}{\Xitau}
    &&=\ssigma
    \left(\asigma^{-1} \app{\mu} 
    +\frac{1}{\sigma^2} \B{B} \theta 
    +\frac{1}{\sigma^2} \left(\amtau -\akofam\right)^{\top}  \Xitau \right),\\
\label{Eq:mu tilde using A_k}
    &\fmt{\tilde{\mu}}{\aktau}{\Xitau}
    &&=\ssigma
    \left(\asigma^{-1} \app{\mu} 
    +\frac{1}{\sigma^2} \B{B} \theta
    +\frac{1}{\sigma^2} \left(\amofak
    -\aktau 
    \right)^{\top}  \Xitau \right).
\end{alignat}
Analyzing the integral in \eqref{Eq: change measure actions integral},
\begin{equation}
\label{Eq: Integral actions alignment}
\begin{aligned}
    &\Int_{\mathcal{E}_{v}}
    \mathlarger{f}_{\B{A}}
    \left(\akofam \right)
    \RQB
    {\mumofam}
    {\sigmofam}
    {T-\tau}
    \abs{\Det{\B{J}_A}}
    d\amtau\\
    &\underset{(a)}{\leq}
    \Int_{\vmtau \succ \B{B}}
    \mathlarger{f}_{\B{A}}
    \left(\akofam \right)
    \RQB
    {\mumofam}
    {\sigmofam}
    {T-\tau}
    \abs{\Det{\B{J}_A}}
    d\amtau\\
    &\underset{(b)}{=}
    \Int_{\vmtau \succ \B{B}}
    \mathlarger{f}_{\B{A}}
    \left(\akofam \right)
    \RQB
    {\fmt{\tilde{\mu}_{\tau+1}}
    {\akofam}{\Xitau}}
    {\fm{\B{\Sigma}_{*,\tau+1}}
    {\akofam}}
    {T-\tau}
    \abs{\Det{\B{J}_A}}
    d\amtau\\
    &\underset{(c)}{=}
    \Int_{\vktau \succ 0}
    \mathlarger{f}_{\B{A}}
    \left(\aktau\right)
    \RQB
    {\fmt{\tilde{\mu}_{\tau+1}}
    {\aktau}{\Xitau}}
    {\fm{\B{\Sigma}_{*,\tau+1}}
    {\aktau}}
    {T-\tau}
    d\aktau\\
    &=
    \underset{\aktau}
    {\mathbb{E}} \left[
    \RQB
    {\fmt{\tilde{\mu}_{\tau+1}}
    {\aktau}{\Xitau}}
    {\B{\Sigma}_{*,\tau+1}}{T-\tau}
    \I{\vktau
    \succ 0 } \right],
\end{aligned}
\end{equation}
where $(a)$ uses~\eqref{Eq: V - B > 0} and that the regret is non-negative, 
$(b)$ uses the definition of $\akofam$
and $(c)$ uses a change of measure.

Plugging \eqref{Eq: change measure actions -Term A},
\eqref{Eq: change measure actions integral},
\eqref{Eq: jacobian actions alignment} 
and \eqref{Eq: Integral actions alignment} back to \eqref{Eq: Good event instances- first eq}, 
the regret incurred under the ``Good event'',
\begin{equation}
\label{Eq: Good event instances- second eq}
    \E{\RQB{\app{\mu}_{\tau+1}}
    {\asigma_{\tau+1}}
    {T-\tau}
    \I{\mathcal{E}}}
    \leq
    \underset{\substack{\text{Covariance} \\ \text{alignment cost}}}
    {\underbrace{
    \left(1 + c_s \tau \sqrt{\fsd \delta} \right)}}
    {\mathbb{E}}
    \underset{\aktau}
    {\mathbb{E}} \left[
    \RQB
    {\fmt{\tilde{\mu}_{\tau+1}}
    {\aktau}{\Xitau}}
    {\B{\Sigma}_{*,\tau+1}}{T-\tau}
    \I{\vktau
    \succ 0} \right].
\end{equation}
\subsubsection{Mean Alignment}
\label{Sec: mean alignment}
After the covariance alignment, in order to bound $\QB$ by $\KQB$, we still need to align the mean. 
At this stage of the proof, the actions at the exploration time-steps have already been determined, yet we still have a degree of freedom in the randomness of the noise terms. 
In the Bayesian update rule for the mean~\eqref{Eq: mu_general} the noise terms appear only in the expression $\B{ A_{\tau}}^{\top} \Xi_{\tau}$.
Since in this stage of the proof, both algorithms use the set of actions $\aktau$, we denote $\smtau \triangleq \left(\aktau\right)^{\top} \Xitau$ and 
$\sktau \triangleq  \left( \aktau\right)^{\top} \Xiktau$. 
To comply with this definition we further denote $\mu_{\tau+1} \left[ S_{\tau}\right]$ instead of $\mu_{\tau+1} \left[\aktau, \Xi_{\tau}\right]$.
During the exploration time-steps, the actions are chosen independently of the rewards achieved, hence, independent of the noise terms and of $\theta$.
Given these actions, $\smtau$ is a Gaussian vector, i.e.
\begin{equation*}
    \smtau \mid \aktau
    \sim \N \left(0, \left( \aktau\right)^{\top} \E{\Xitau \Xitau^{\top}} \aktau\right)
    =\N \left(0,\sigma^2 \vktau\right).
\end{equation*}
We denote by $\skofsm$ the vector $\sktau$ that brings the following equations to equality,
\begin{align*}
    \fms{\tilde{\mu}_{\tau+1}}
    {\smtau}
    &=\B{\Sigma}_{*,\tau+1}
    \left(
    \ssigma^{-1} \tilde{\mu} 
    +\frac{1}{\sigma^2} \vktau \theta
    +\frac{1}{\sigma^2} \smtau \right),\\
    \fms{\mu_{*,\tau+1}}{\sktau}
    &=\B{\Sigma}_{*,\tau+1}
    \left(
    \ssigma^{-1} \mu_{*} 
    +\frac{1}{\sigma^2} \vktau \theta
    +\frac{1}{\sigma^2} \sktau \right).
\end{align*}
Define,
\begin{equation}
\label{Eq: G}
    G \triangleq 
    \sigma^2 \asigma^{-1} 
    \left(\app{\mu} -\mu_{*}\right) +\B{B}
    \left(\theta -\mu_{*}\right).
\end{equation}
We get,
\begin{equation}
\label{Eq: S_k as function of S_m}
\begin{aligned}
    \skofsm
    &=\smtau
    +\sigma^2 \ssigma^{-1} \left(\tilde{\mu} -\mu_{*} \right)\\
    &\underset{(a)}{=}
    \smtau
    +\sigma^2 \left(\asigma^{-1} \app{\mu} 
    +\frac{1}{\sigma^2} \B{B} \theta 
    +\frac{1}{\sigma^2} \left(\amofak
    -\aktau 
    \right)^{\top}  \Xitau
    -\ssigma^{-1}\mu_{*} \right)\\
    &=\smtau
    +\sigma^2 \asigma^{-1} \app{\mu}
    +\B{B} \theta  
    +\amofak^{\top}  \Xitau
    -\smtau 
    -\sigma^2 \ssigma^{-1}\mu_{*}\\
    &=\sigma^2 \asigma^{-1} 
    \left(\app{\mu} -\mu_{*} \right)
    +\B{B} \theta  
    +\amofak^{\top}  \Xitau
    +\sigma^2 \left(\asigma^{-1}-\ssigma^{-1}\right)
    \mu_{*}\\
    &=G
    +\amofak^{\top}  \Xitau\\
    &\underset{(b)}{=}
    G
    +\left(\vktau +\B{B}\right)^{1/2} 
    \left(\vktau \right)^{-1/2}
    \smtau,
\end{aligned}
\end{equation}
where $(a)$ uses the definition of $\tilde{\mu}$ from \eqref{Eq:mu tilde using A_k} and $(b)$ uses the definition of $\amofak$ from \eqref{Eq: A_m of A_k}. 

\subsubsection*{Mean Alignment - Regret Decomposition}
Denote $c_{\xi} \triangleq \sigma \sqrt{5\ln\left(\frac{dT}{\delta}\right)}$ and the event,
\begin{equation}
\label{Eq: E_n}
    \mathcal{E}_{\xi} \triangleq
    \left\{\norm{\smtau}
    _{\left(\vktau\right)^{-1}}
    \leq 
    c_{\xi} \sqrt{d}
    \right\}.
\end{equation}
By Lemma \ref{Lemma: Concentration bound SubGaussian vector} with $X=\Xitau$ and $\B{A}=\left(\vktau\right)^{-1/2} \aktau$ we have that,
\begin{equation}
\label{Eq: probability E_n}
    \p{\mathcal{E}_{\xi}\;\middle|\; \aktau} > 1 - \frac{\delta}{dT}.
\end{equation}
Continue from equation~\eqref{Eq: Good event instances- second eq},
\begin{equation}
\label{Eq: good event regret}
\begin{aligned}
    &\mathbb{E}
    \underset{\Xitau}{\mathbb{E}}
    \underset{\aktau}
    {\mathbb{E}}
    \RQB
    {\fms{\tilde{\mu}_{\tau+1}}
    {\smtau}}
    {\B{\Sigma}_{*,\tau+1}}
    {T-\tau}
    \cdot 
    \I{\vktau
    \succ 0 }\\
    &\quad \quad \quad 
    =\mathbb{E}
    \underset{\aktau}
    {\mathbb{E}}
    \underset{\text{small noise terms}}
    {\underbrace{
    \Int_{\mathcal{E}_{\xi}}\frac{e^{-\frac{1}{2}\norm{\smtau}
    _{\left(\vktau\right)^{-1}}^2}}
    {\sqrt{(2 \pi)^{\tau} \Det{\vktau}}}
    \cdot \RQB
    {\fms{\tilde{\mu}_{\tau+1}}
    {\smtau}}
    {\B{\Sigma}_{*,\tau+1}}
    {T-\tau}
    d\smtau}}
    \cdot 
    \I{\vktau
    \succ 0 }\\
    &\quad \quad \quad \quad \quad
    \quad \quad \quad \quad \quad
    \; \
    +\underset{\text{large noise terms}}
    {\underbrace{
    \mathbb{E}
    \underset{\aktau}
    {\mathbb{E}}
    \underset{\B{\smtau}}
    {\mathbb{E}}
    \RQB
    {\fms{\tilde{\mu}_{\tau+1}}
    {\smtau}}
    {\B{\Sigma}_{*,\tau+1}}
    {T-\tau} 
    \I{\bar{\mathcal{E}}_{\xi} }}}
    \cdot 
    \I{\vktau
    \succ 0 }.
\end{aligned}
\end{equation}
The second term complements the first, but uses an expectation notation.
It is bounded by,
\begin{equation}
\label{Eq: large noise terms}
\begin{aligned}
    \E{\RQB
    {\fms{\tilde{\mu}_{\tau+1}}
    {\smtau}}
    {\B{\Sigma}_{*,\tau+1}}
    {T-\tau} 
    \I{\bar{\mathcal{E}}_{\xi} }}
    &\underset{(a)}{\leq}
    \E{\underset{\Xitau}{\max}
    \left\{\RQB
    {\fms{\tilde{\mu}_{\tau+1}}
    {\smtau}}
    {\B{\Sigma}_{*,\tau+1}}
    {T-\tau} \right\}
    \;\middle|\; \mathcal{E}_{\theta} }
    \cdot \p{\bar{\mathcal{E}}_{\xi} \mid \aktau}\\
    &\underset{(b)}{\leq} 
    \frac{\delta}{dT}
    \E{\underset{\Xitau}{\max}
    \left\{\RQB
    {\fms{\tilde{\mu}_{\tau+1}}
    {\smtau}}
    {\B{\Sigma}_{*,\tau+1}}
    {T-\tau} \right\}
    \;\middle|\; \mathcal{E}_{\theta}}\\
    &\underset{(c)}{\leq}     \frac{c_{\text{bad}}\delta}
    {11\sqrt{d}},
\end{aligned}
\end{equation}
where $(a)$ uses that the regret is non-negative
and $(b)$ uses \eqref{Eq: probability E_n} and $(c)$ uses the same derivations as Lemma \ref{Lemma: Expected maximal regret incurred during a single instance}.
\subsubsection*{Analyzing ``Mean alignment'' - small noise terms}
We bound the small noise term by a change of measure of the integral. Given $\aktau$,
\begin{equation*}
    \B{J}_S
    =\frac{\partial \sktau}{\partial \smtau}
    =\left(\vktau +\B{B}\right)^{1/2} 
    \left(\vktau \right)^{-1/2}.
\end{equation*}
Hence,
\begin{equation}
\label{Eq: Det j_s}
    \abs{\Det{\B{J}_S}}
    =\abs{\frac{
    \left(\Det{\vktau +\B{B}}\right)^{1/2}}
    {\left(\Det{\vktau}\right)^{1/2}}}
    =\left(\frac{\Det{\vktau +\B{B}}}
    {\Det{\vktau}}\right)^{1/2}
    \geq 1,
\end{equation}
where the last equality uses determinant laws and that both matrices are PD and the inequality uses $\B{B} \succeq 0$.
\begin{equation}
\begin{aligned}
\label{Eq: decompose small noise into A and B}
    &\Int_{\mathcal{E}_{\xi}}
    {\frac{e^{-\frac{1}{2}\norm{\smtau}
    _{\left(\vktau\right)^{-1}}^2}}
    {\sqrt{(2 \pi)^{\tau} \Det{\vktau}}}
    \cdot \RQB
    {\fms{\tilde{\mu}_{\tau+1}}
    {\smtau}}
    {\B{\Sigma}_{*,\tau+1}}
    {T-\tau}
    d\smtau}\\
    &\quad \leq
    \underset{\text{Term A}}
    {\underbrace{
    \vphantom{\Int_{\mathcal{E}_{\xi}}}
    \underset{\mathcal{E}_{\xi}}{\max} \left\{\frac{e^{-\frac{1}{2}\norm{\smtau}
    _{\left(\vktau\right)^{-1}}^2}}
    {e^{-\frac{1}{2}\norm{\skofsm}
    _{\left(\vktau\right)^{-1}}^2}}
    \right\}}} \cdot
    \underset{\text{Term B}}
    {\underbrace{
    \Int_{\mathcal{E}_{\xi}}
    \frac{e^{-\frac{1}{2}\norm{\skofsm}
    _{\left(\vktau\right)^{-1}}^2}}
    {\sqrt{(2 \pi)^{\tau} \Det{\vktau}}}
    \RQB
    {\fms{\tilde{\mu}_{\tau+1}}
    {\smtau}}
    {\B{\Sigma}_{*,\tau+1}}
    {T-\tau}
    \abs{\Det{\B{J}_S}}
    d\smtau}},
\end{aligned}
\end{equation}
where the inequality uses that the regret is non-negative and \eqref{Eq: Det j_s}.

\subsubsection*{Analyzing ``Mean alignment'' - small noise terms - Term A}
\begin{equation}
\label{Eq: small noise terms a}
\begin{aligned}
    \underset{\mathcal{E}_{\xi}}{\max} \left\{\frac{e^{-\frac{1}{2}\norm{\smtau}
    _{\left(\vktau\right)^{-1}}^2}}
    {e^{-\frac{1}{2}\norm{\skofsm}
    _{\left(\vktau\right)^{-1}}^2}} \right\}
    &=\underset{\mathcal{E}_{\xi}}{\max}
    \left\{\exp \left(\frac{1}{2}
    \norm{\skofsm}
    _{\left(\vktau\right)^{-1}}^2
    -\frac{1}{2} \norm{\smtau}
    _{\left(\vktau\right)^{-1}}^2 \right) \right\}\\
    &\underset{(a)}{=}
    \underset{\mathcal{E}_{\xi}}{\max}
    \left\{\exp \left(\frac{1}{2}
    \norm{G
    +\left(\vktau +\B{B}\right)^{1/2} 
    \left(\vktau \right)^{-1/2}
    \smtau}
    _{\left(\vktau\right)^{-1}}^2
    -\frac{1}{2}\norm{\smtau}
    _{\left(\vktau\right)^{-1}}^2 \right) \right\}\\
    &\underset{(b)}{\leq}
    \underset{\mathcal{E}_{\xi}}{\max}
    \Bigg\{\exp \Bigg(\frac{1}{2}
    \norm{G}
    _{\left(\vktau\right)^{-1}}^2
    +\norm{G}
    _{\left(\vktau\right)^{-1}}
    \norm{\left(\vktau +\B{B}\right)^{1/2} 
    \left(\vktau \right)^{-1/2}
    \smtau}
    _{\left(\vktau\right)^{-1}}\\
    &\quad \quad \quad \quad \quad
    \quad \quad \quad
    +\frac{1}{2}\norm{\left(\vktau +\B{B}\right)^{1/2} 
    \left(\vktau \right)^{-1/2}
    \smtau}
    _{\left(\vktau\right)^{-1}}^2
    -\frac{1}{2}\norm{\smtau}
    _{\left(\vktau\right)^{-1}}^2
    \Bigg) \Bigg\},
\end{aligned}
\end{equation}
where $(a)$ uses the definition of $\skofsm$ in \eqref{Eq: S_k as function of S_m} and $(b)$ uses Cauchy–Schwarz inequality.
    
Defining $c_1 \triangleq \frac{2}{{\lbar{\lambda}_{\ssigma}}}
    \ln \left(\frac{d^2T}{\delta} \right)$ and analyzing the first term of the exponent in \eqref{Eq: small noise terms a},
\begin{equation}
\label{Eq: small noise terms a aux1}
\begin{aligned}
    \norm{G}
    _{\left(\vktau\right)^{-1}}^2
    &\underset{(a)}{=}
    \norm{\sigma^2 \asigma^{-1}
    \left(
    \left(\app{\mu} -\mu_{*}\right) 
    +\left(\asigma -\ssigma\right)\ssigma^{-1} \left(\theta -\mu_{*} \right) \right)}
    _{\left(\vktau\right)^{-1}}^2\\
    &\underset{(b)}{\leq} 
    \frac{2\sigma^4}
    {\lmin{\vktau}}
    \normop{\asigma^{-1}}^2
    \left(\norm{\app{\mu} -\mu_{*}}^2
    +\norm{\left(\asigma -\ssigma\right)\ssigma^{-1}
    \left(\theta -\mu_{*}\right)}^2 \right)\\
    &\underset{(c)}{\leq} 
    \frac{2c_s}{d}
    \left(\norm{\app{\mu} -\mu_{*}}^2
    +\normop{\asigma -\ssigma}^2
    \normop{\ssigma^{-1/2}}^2
    \norm{\ssigma^{-1/2} \left(\theta -\mu_{*}\right)}^2 \right)\\
    &\underset{(d)}{\leq}
    \frac{2c_s  \delta \left(\fmd +
    c_1 d \fsd \right)}
    {d},
\end{aligned}
\end{equation}
where $(a)$ uses the definition of $G$ in~\eqref{Eq: G} and $\B{B}$ in~\eqref{Eq: B definition}, $(b)$ uses Lemma~\ref{Lemma: maximal singularvalue inequality}, $\norm{\B{A}Y} \leq \normop{\B{A}}\norm{Y}$, triangle inequality and $(a+b)^2 \leq 2(a^2+b^2)$,
$(c)$ uses the event $\mathcal{E}_{v}$, that for PSD matrices $\normop{\B{A}}=\lmax{\B{A}}$, $\asigma \succeq \ssigma$, Assumption \ref{Assumption: eigenvalues prior Covariance matrix}, $\norm{\B{A}Y} \leq \normop{\B{A}}\norm{Y}$ and the sub-multiplicative norm property,
$(d)$ uses the triangle inequality, $(a+b)^2 \leq 2(a^2+b^2)$,
the events $\mathcal{E}_{\theta},\mathcal{E}_{m},\mathcal{E}_{s}$,
that for PSD matrices $\normop{\B{A}}=\lmax{\B{A}}$, $\asigma \succeq \ssigma$ and Assumption \ref{Assumption: eigenvalues prior Covariance matrix}.

Analyzing the third term in the exponent in \eqref{Eq: small noise terms a},
\begin{equation}
\label{Eq: small noise terms a aux2}
\begin{aligned}
    \norm{\left(\vktau +\B{B}\right)^{1/2}
    \left(\vktau \right)^{-1/2}
    \smtau}^2
    _{\left(\vktau \right)^{-1}}
    &=\smtau^{\top}
    \left(\vktau \right)^{-1/2}
    \left(\vktau +\B{B}\right)^{1/2} 
    \left(\vktau \right)^{-1}
    \left(\vktau +\B{B}\right)^{1/2}
    \left(\vktau \right)^{-1/2}
    \smtau\\
    &\underset{(a)}{\leq} \lmax{
    \left(\vktau +\B{B}\right)^{1/2} 
    \left(\vktau \right)^{-1}
    \left(\vktau +\B{B}\right)^{1/2}}
    \norm{\smtau}
    _{\left(\vktau \right)^{-1}}^2\\
    &\underset{(b)}{=}\lmax{
    \left(\vktau \right)^{-1/2}
    \left(\vktau +\B{B}\right)
    \left(\vktau \right)^{-1/2}}
    \norm{\smtau}
    _{\left(\vktau \right)^{-1}}^2\\
    &=\lmax{
    \B{I}
    +\left(\vktau \right)^{-1/2}
    \B{B}
    \left(\vktau \right)^{-1/2}}
    \norm{\smtau}
    _{\left(\vktau \right)^{-1}}^2\\
    &=\left(1+\lmax{
    \left(\vktau \right)^{-1/2}
    \B{B}
    \left(\vktau \right)^{-1/2}}\right)
    \norm{\smtau}
    _{\left(\vktau \right)^{-1}}^2\\
    &\underset{(c)}{\leq}
    \left(1+
    \frac{\lmax{\B{B}}}
    {\lmin{\vktau}}
    \right)
    \norm{\smtau}
    _{\left(\vktau \right)^{-1}}^2\\
    &\underset{(d)}{\leq}
    \left(1+
    \frac{c_s}{d}
    \normop{\asigma - \ssigma}
    \right)
    \norm{\smtau}
    _{\left(\vktau \right)^{-1}}^2\\
    &\underset{(e)}{\leq}
    \left(1+
    \frac{c_s}{d}
    \sqrt{\fsd \delta}
    \right)
    \norm{\smtau}
    _{\left(\vktau \right)^{-1}}^2,
\end{aligned}
\end{equation}
where $(a)$ uses the same derivation as in Lemma \ref{Lemma: maximal singularvalue inequality},
$(b)$ uses Lemma \ref{Lemma: Same eigenvalues for product of PSD matrices},
$(c)$ uses Lemma \ref{Lemma: Same eigenvalues for product of PSD matrices} and sub-multiplicative norm properties,
$(d)$ uses Lemma \ref{Lemma: B bound} and event $\mathcal{E}_{v}$
and $(e)$ uses event $\mathcal{E}_{s}$.

Plugging \eqref{Eq: small noise terms a aux1}, \eqref{Eq: small noise terms a aux2} to \eqref{Eq: small noise terms a}
\begin{align}
\label{Eq: small noise terms a result}
    \nonumber\underset{\mathcal{E}_{\xi}}{\max} \left\{\frac{e^{-\frac{1}{2}\norm{\smtau}
    _{\left(\vktau\right)^{-1}}^2}}
    {e^{-\frac{1}{2}\norm{\skofsm}
    _{\left(\vktau\right)^{-1}}^2}} \right\}
    &\underset{(a)}{\leq}
    \exp \left(\frac{c_s \delta
    \left(\fmd + c_1 d \fsd\right)}{d}
    +2 c_{\xi} \sqrt{c_s \delta
    \left(\fmd + c_1 d \fsd\right)}
    +\frac{c_{\xi}^2 c_s \sqrt{\fsd \delta}}{2}
    \right)\\
    \nonumber
    &\underset{(b)}{\leq}
    \exp \left(
    3 \sqrt{c_{\xi}^2 c_s \delta
    \left(\fmd + c_1 d \fsd\right)}
    +\frac{c_{\xi}^2 c_s \sqrt{\fsd \delta}}{2}
    \right)\\
    &\underset{(c)}{\leq} 1 +
    2\left(
    3 \sqrt{c_{\xi}^2 c_s \delta
    \left(\fmd + c_1 d \fsd\right)}
    +\frac{c_{\xi}^2 c_s \sqrt{\fsd \delta}}{2} \right)\\
    \nonumber
    &\underset{(d)}{\leq}
    1 +
    6 \sqrt{c_{\xi}^2 c_s}
    \sqrt{\fmd \delta}
    +6\left(\sqrt{c_{\xi}^2 c_s c_1 d }
    +c_{\xi}^2 c_s\right)
    \sqrt{\fsd \delta},
\end{align}
where $(a)$ uses that $1+
    \frac{c_s}{d}
    \sqrt{\fsd \delta}
    \leq 2$ for $\delta \leq \nicefrac{1}{M}$
and \eqref{Eq: E_n},
$(b)$ uses that $\sqrt{c_s \delta
    \left(\fmd + c_1 d \fsd\right)} \leq c_{\xi} d$ for $\delta \leq \nicefrac{1}{M}$,
$(c)$ uses Lemma \ref{Lemma: exponent bound} and
$\left(3 \sqrt{c_{\xi}^2 c_s
    \left(\fmd + c_1 d \fsd\right)}
    +\frac{1}{2} c_{\xi}^2 c_s \sqrt{\fsd} \right) \sqrt{\delta} \leq 1$ for $\delta \leq \nicefrac{1}{M}$
and $(d)$ uses $\sqrt{a+b} \leq \sqrt{a}+ \sqrt{b}$.

\subsubsection*{Analyzing ``Mean alignment'' - small noise terms - Term B}
\begin{equation}
\label{Eq: small noise terms b}
\begin{aligned}
    &\Int_{\mathcal{E}_{\xi}}
    \frac{e^{-\frac{1}{2}\norm{\skofsm}
    _{\left(\vktau\right)^{-1}}^2}}
    {\sqrt{(2 \pi)^{\tau} \Det{\vktau}}}
    \RQB
    {\fms{\tilde{\mu}_{\tau+1}}
    {\smtau}}
    {\B{\Sigma}_{*,\tau+1}}
    {T-\tau}
    \abs{\Det{\B{J}_S}}
    d\smtau\\
    &\quad \underset{(a)}{\leq}
    \Int \frac{e^{-\frac{1}{2}\norm{\skofsm}
    _{\left(\vktau\right)^{-1}}^2}}
    {\sqrt{(2 \pi)^{\tau} \Det{\vktau}}}
    \RQB
    {\fms{\tilde{\mu}_{\tau+1}}
    {\smtau}}
    {\B{\Sigma}_{*,\tau+1}}
    {T-\tau}
    \abs{\Det{\B{J}_S}}
    d\smtau\\
    &\quad \underset{(b)}{=}
    \Int \frac{e^{-\frac{1}{2}\norm{\skofsm}
    _{\left(\vktau\right)^{-1}}^2}}
    {\sqrt{(2 \pi)^{\tau} \Det{\vktau}}}
    \RQB
    {\fms{\mu_{*,\tau+1}}{\skofsm}}
    {\B{\Sigma}_{*,\tau+1}}
    {T-\tau}
    \abs{\Det{\B{J}_S}}
    d\smtau\\
    &\quad \underset{(c)}{=}
    \E{\RQB
    {\mu_{*,\tau+1}}
    {\B{\Sigma}_{*,\tau+1}}
    {T-\tau}},
\end{aligned}
\end{equation}
where $(a)$ uses that the regret is non-negative, $(b)$ uses the definition of $\skofsm$ 
and $(c)$ uses change of measure.

Denote $k_1 \triangleq 12 \sqrt{c_{\xi}^2 c_s}
    \sqrt{\fmd \delta} + \left(c_s \tau + 12\sqrt{c_{\xi}^2 c_s c_1 d }
    +2c_{\xi}^2 c_s \right)
    \sqrt{\fsd \delta}$ and
plugging equations \eqref{Eq: small noise terms a result}, \eqref{Eq: small noise terms b}, \eqref{Eq: decompose small noise into A and B} and \eqref{Eq: large noise terms} back to \eqref{Eq: good event regret} and \eqref{Eq: Good event instances- second eq} we get that the regret incurred under the ``Good event'',
\begin{equation}
\label{Eq: good event regret final}
\begin{aligned}
    &\E{\RQB{\app{\mu}_{\tau+1}}
    {\asigma_{\tau+1}}
    {T-\tau}
    \I{\mathcal{E}}}\\
    &\leq 
   \left(1 +c_s \tau \sqrt{\fsd \delta} \right)
    \left[
    \left(1 +
    6 \sqrt{c_{\xi}^2 c_s}
    \sqrt{\fmd \delta}
    +\left(6 \sqrt{c_{\xi}^2 c_s c_1 d }
    +c_{\xi}^2 c_s\right)
    \sqrt{\fsd \delta} \right)
    \E{ \RQB
    {\mu_{*,\tau+1}}
    {\B{\Sigma}_{*,\tau+1}}
    {T-\tau}}
    +\frac{c_{\text{bad}}\delta}{11\sqrt{d}} \right]\\
    &\leq \left(1 
    +k_1  
    \right) \E{ \RQB
    {\mu_{*,\tau+1}}
    {\B{\Sigma}_{*,\tau+1}}
    {T-\tau}}
    +\frac{2c_{\text{bad}}\delta}{11\sqrt{d}},
\end{aligned}
\end{equation}
where the last inequality uses that $c_s \tau \sqrt{\fsd \delta} \leq 1$ for $\delta \leq \nicefrac{1}{M}$.

Plugging back the regret from the bad event \eqref{Eq: regret incurred under the Bad event instances} and the good event \eqref{Eq: good event regret final} to \eqref{Eq: regular instances regret analysis} and then to \eqref{Eq: Single instance regret decomposition} the proof of Theorem \ref{Theorem: single instance regret} follows.
\subsection{Derivation of \texorpdfstring{$M$}{M}}
\label{Appendix: Derivation of of delta}
Along the proof of Theorem~\ref{Theorem: single instance regret} there were several upper bounds on $\delta$ that hold for $\delta \leq \nicefrac{1}{M}$.
Next, we summarize them to derive the exact expression for $M$.
\begin{flalign}
    &
    \nonumber
    \text{From \eqref{Eq: V - B > 0}, \eqref{Eq: small noise terms a result} (transition $(a)$): }
    \frac
    {\sigma^2 \sqrt{\fsd \delta}}
    {\lbar{\lambda}_{\ssigma}^{2}}<\frac{\lbar{\lambda}_{\Msigma{\A}} d}{2}
    \iff \delta < \left(\frac{\lbar{\lambda}_{\ssigma}^{2}\lbar{\lambda}_{\Msigma{\A}}  d}{2\sigma^2 \sqrt{\fsd}}\right)^2
    \iff \delta \underset{(\text{\romannumeral 1})}{<} \frac{d^2}{c_s^2 \fsd},\\
    &\nonumber
    \text{From \eqref{Eq: jacobian actions alignment}, \eqref{Eq: good event regret final} : }
    c_s \tau \sqrt{\fsd \delta} < 1
    \iff \delta \underset{(\text{\romannumeral 2})}{<} \frac{1}{c_s^2 \tau^2 \fsd},
    \\
    &
    \nonumber
    \text{From \eqref{Eq: small noise terms a result} (transition $(b)$): }
    \sqrt{c_s \delta
    \left(\fmd + c_1 d \fsd\right)} \leq c_{\xi} d
    \iff \delta \underset{(\text{\romannumeral 3})}{\leq}  \frac{c_{\xi}^2 d^2}{c_s
    \left(\fmd + c_1 d \fsd\right)}
    \\
    &\nonumber
    \text{From \eqref{Eq: small noise terms a result} (transition $(c)$): }  \left(3\sqrt{c_{\xi}^2 c_s
    \left(\fmd + c_1 d \fsd\right)}
    +\frac{1}{2} c_{\xi}^2 c_s \sqrt{\fsd} \right) \sqrt{\delta} \leq 1\\
    &\label{Eq: first delta demand}
    \iff \delta \underset{(\text{\romannumeral 4})}{\leq} \frac{1}
    {\left(3\sqrt{c_{\xi}^2 c_s
    \left(\fmd + c_1 d \fsd\right)}
    +\frac{1}{2} c_{\xi}^2 c_s \sqrt{\fsd}\right)^2}.
\end{flalign}
Note that inequalities (\romannumeral 1), (\romannumeral 3) in \eqref{Eq: first delta demand} are directly implied by inequalities (\romannumeral 2), (\romannumeral 4) respectively.
To ensure inequality (\romannumeral 4) we tighten its Right-Hand-Side (RHS),
 \begin{equation}
 \label{Eq: second delta demand}
    \begin{aligned}
     \left(3\sqrt{c_{\xi}^2 c_s \left(\fmd + c_1 d \fsd\right)} 
        +\frac{1}{2} c_{\xi}^2 c_s \sqrt{\fsd} \right)^2 
        & \underset{(\text{\romannumeral 5})}{\leq}  2 \left(9 c_{\xi}^2 c_s \left(\fmd + c_1 d \fsd\right) + \frac{1}{4} c_{\xi}^4 c_s^2 \fsd \right) \\
        &= 2 c_{\xi}^2 c_s \left(9  \left(\fmd + c_1 d \fsd\right) + \frac{1}{4} c_{\xi}^2 c_s \fsd \right) \\
        &= 18 c_{\xi}^2 c_s \left( \fmd + c_1 d \fsd + \frac{1}{36} c_{\xi}^2 c_s \fsd \right) \\
        &= 18 c_{\xi}^2 c_s \left( \fmd + \left( c_1 d + \frac{c_{\xi}^2 c_s}{36} \right) \fsd \right) \\
    \end{aligned}
    \end{equation}
Using (\romannumeral 2), (\romannumeral 5) and adding $3$ to ensure that $M \geq e$,
\begin{equation*}
    M =  \max \left\{
    3,
    c_s^2 \tau^2 \fsd, 18 c_{\xi}^2 c_s \left( \fmd + \left( c_1 d + c_{\xi}^2 c_s/36 \right) \fsd \right)\right\}.
\end{equation*}
In the remainder of the section, we emphasize the dependence of $M$ on $\delta$ by $M(\delta)$, even though it is only polylogarithmic.
Specifically, since the constants and $\fmd, \fsd, \tau$ have at most logarithmic dependence, the expression for $M$ is a $p$-degree polynomial of $\ln \left(\nicefrac{1}{\delta} \right)$, for $p \leq 3$.
By definition, $M\left(\delta\right) > e$, thus $\delta < \nicefrac{1}{e}$ and $\ln \left(\nicefrac{1}{\delta} \right) > 1$,
hence we can upper bound the expression by,
\begin{equation*}
    M(\delta) \leq a\ln^p{\left( \frac{1}{\delta} \right)} + b, \text{ for } p \leq 3,
\end{equation*}
for the minimal $a,b$ that upper bound $M\left(\delta\right)$ and satisfy $a \geq \nicefrac{1}{p^p}$, $b>0$.
Next, we define $\tilde{M}$, which is not a function of $\delta$, yet preserving the same asymptotic dependence, such that for $\delta \leq \nicefrac{1}{\tilde{M}}$, it follows that $\delta \leq \frac{1}{a\ln^p{\left( \frac{1}{\delta} \right)} + b} \leq \frac{1}{M\left(\delta\right)}$.
\begin{equation*}
    \delta^{-1} \geq \tilde{M}
    \triangleq \left( 4pa^{1/p}\ln{\left(2pa^{1/p}\right)} + 2b^{1/p}\right)^{p} 
    \iff \delta^{-1/p} \geq 4pa^{1/p}\ln{\left(2pa^{1/p}\right)} + 2b^{1/p}.
\end{equation*}
Thus using Lemma \ref{Lemma: Log bound} with $x=\delta^{-1/p}$, $a_1=pa^{\nicefrac{1}{p}}$, $b_1=b^{1/p}$,
\begin{equation*}
    \delta^{-1/p}
    \geq pa^{1/p}\ln{\left(\delta^{-1/p}\right)} + b^{1/p}
    = a^{1/p}\ln{\left(\frac{1}{\delta}\right)} + b^{1/p}
    \geq \left( a\ln^p{\left(\frac{1}{\delta}\right)} + b \right)^{1/p},
\end{equation*}
where the last inequality uses that $\left(x+y\right)^{\nicefrac{1}{p}} \leq x^{\nicefrac{1}{p}} +y^{\nicefrac{1}{p}}$ for $p \in \mathbb{N}^+$ and $x,y>0$. Since both sides are positive, we can raise both sides to the power of $-p$ and get,
\begin{equation*}
    \delta 
    \leq \frac{1}{a\ln^p{\left(\frac{1}{\delta}\right)} + b}
    \leq \frac{1}{M\left(\delta\right)}.
\end{equation*}
\subsection{Action Jacobian}
\label{Sec: Action Jacobian}
In this section we prove Lemma~\ref{Lemma: main lemma Jacobian}. The proof uses some technical matrix relations appearing in Section \ref{app:BoundJacobianDet}.
\jacobianBound*
\begin{proof}
\leavevmode

\subsubsection{Jacobian Derivation} 
Using differentials,
\begin{equation}
\label{Eq: differentials}
\begin{aligned}
    d\B{U}&=d\B{X} \left(\B{X}^{\top} \B{X}\right)^{-1/2} \left(\B{X}^{\top} \B{X} -\B{B}\right)^{1/2}\\
    &+\B{X} d\left(\B{X}^{\top} \B{X}\right)^{-1/2} \left(\B{X}^{\top} \B{X} -\B{B}\right)^{1/2}\\
    &+\B{X} \left(\B{X}^{\top} \B{X}\right)^{-1/2} d\left(\B{X}^{\top} \B{X} -\B{B}\right)^{1/2}.
\end{aligned}
\end{equation}

Applying vectorization (Lemma \ref{Lemma: Vecotrization}) on \eqref{Eq: differentials}, where $\otimes$ is the \emph{Kronecker product} and $\oplus$ is the \emph{Kronecker sum},
\begin{equation}
\label{Eq: vectorization of du}
\begin{aligned}
    \mathrm{vec}(d\B{U})&= 
    \underset{\triangleq \B{J_{1}}}{\underbrace{
    \left(\left(\B{X}^{\top} \B{X} -\B{B}\right)^{1/2}
    \left(\B{X}^{\top} \B{X}\right)^{-1/2}
    \otimes \B{I}_{n}\right)}} \ \mathrm{vec}(d\B{X})\\
    &+\underset{\triangleq \B{J}_{2a}}{\underbrace{\left(\left(\B{X}^{\top} \B{X} -\B{B}\right)^{1/2} \otimes \B{X}\right)}} \
    \underset{\text{term  vec}_2}{\underbrace{
    \mathrm{vec}\left(d\left(\B{X}^{\top} \B{X}\right)^{-1/2}\right)}}\\
    &+\underset{\triangleq \B{J}_{3a}}{\underbrace{
    \left(\B{I}_{d} \otimes \B{X} \left(\B{X}^{\top} \B{X}\right)^{-1/2} \right)}}\
    \underset{\text{term vec}_3}{\underbrace{
    \mathrm{vec} \left(d\left(\B{X}^{\top} \B{X} -\B{B}\right)^{1/2} \right)}}.
\end{aligned}
\end{equation}
Define 
$\B{N}_d \triangleq \frac{1}{2}\left(\B{I}_{d^2} + \B{K}_d\right)$ as in page 55 in \cite{magnus2019matrix}, for the commutation matrix $\B{K}_d$ and analyzing term vec$_2$,
\begin{equation}
\label{Eq: Term A}
\begin{aligned}
    &\mathrm{vec}\left(d\left(\B{X}^{\top} \B{X}\right)^{-1/2}\right)\\
    &\underset{(a)}{=}
    -\left(\left(\B{X}^{\top} \B{X}\right)^{-1/2} \otimes \left(\B{X}^{\top} \B{X}\right)^{-1/2}\right)
    \mathrm{vec}\left(d\left(\B{X}^{\top} \B{X}\right)^{1/2}\right)\\
    &\underset{(b)}{=}-\left(\left(\B{X}^{\top} \B{X}\right)^{-1/2} \otimes \left(\B{X}^{\top} \B{X}\right)^{-1/2}\right)
    \left(\left(\B{X}^{\top} \B{X}\right)^{1/2}
    \oplus \left(\B{X}^{\top} \B{X}\right)^{1/2}\right)^{-1}
    \mathrm{vec}\left(d\left(\B{X}^{\top} \B{X}\right)\right)\\
    &\underset{(c)}{=}-\left(\left(\B{X}^{\top} \B{X}\right)^{-1/2} \otimes \left(\B{X}^{\top} \B{X}\right)^{-1/2}\right)
    \left(\left(\B{X}^{\top} \B{X}\right)^{1/2}
    \oplus \left(\B{X}^{\top} \B{X}\right)^{1/2}\right)^{-1}
    \underset{\triangleq \B{J}_{2c}}{\underbrace{
    2\B{N}_d\left(\B{I}_d \otimes \B{X}^{\top}\right)}}
    \mathrm{vec}\left(d\B{X}\right)\\
    &\underset{(d)}{=}-\left(\B{I}_d \otimes \left(\B{X}^{\top} \B{X}\right)^{-1/2}\right)
    \left(\left(\B{X}^{\top} \B{X}\right)^{-1/2} \otimes \B{I}_d\right)
    \left(\left(\B{X}^{\top} \B{X}\right)^{1/2}
    \oplus \left(\B{X}^{\top} \B{X}\right)^{1/2}\right)^{-1}
    \B{J}_{2c} \
    \mathrm{vec}\left(d\B{X}\right)\\
    &\underset{(e)}{=}
    \underset{\triangleq \B{J}_{2b}}{\underbrace{
    -\left(\B{I}_d \otimes \left(\B{X}^{\top} \B{X}\right)^{-1/2}\right)
    \left(\left(\B{X}^{\top} \B{X}\right)
    \otimes \B{I}_d
    +\left(\B{X}^{\top} \B{X}\right)^{1/2} 
    \otimes\left(\B{X}^{\top} \B{X}\right)^{1/2} 
    \right)^{-1}}}
    \B{J}_{2c} \
    \mathrm{vec}\left(d\B{X}\right).
\end{aligned}
\end{equation}
Similarly for term vec$_3$,
\begin{equation}
\label{Eq: Term B}
\begin{aligned}
    \mathrm{vec}\left(d\left(\B{X}^{\top} \B{X} -\B{B}\right)^{1/2}\right)
    &\underset{(b)}{=}
    \left(\left(\B{X}^{\top} \B{X} -\B{B}\right)^{1/2}
    \oplus \left(\B{X}^{\top} \B{X} -\B{B}\right)^{1/2}\right)^{-1}
    \mathrm{vec}\left(d\left(\B{X}^{\top} \B{X} -\B{B}\right)\right)\\
    &\underset{(c)}{=}
    \underset{\triangleq \B{J}_{3b}}{\underbrace{
    \left(\left(\B{X}^{\top} \B{X} -\B{B}\right)^{1/2}
    \oplus \left(\B{X}^{\top} \B{X} -\B{B}\right)^{1/2}\right)^{-1}
    2\B{N}_d\left(\B{I}_d \otimes \B{X}^{\top}\right)}} \
    \mathrm{vec}\left(d\B{X}\right),
\end{aligned}
\end{equation}
where $(a)$ uses table 9.7 in \cite{magnus2019matrix}, $(b)$ uses Lemma \ref{Lemma: square root}, $(c)$ uses table 9.6 in \cite{magnus2019matrix}, $(d)$ uses \eqref{Eq: Mixed product} in Lemma~\ref{Lemma: known properties} and $(e)$ uses \eqref{Eq: Inverse}, \eqref{Eq: Mixed product}, \eqref{Eq: Mixed addition} in Lemma~\ref{Lemma: known properties}.

Plugging back \eqref{Eq: Term A}, \eqref{Eq: Term B} to \eqref{Eq: vectorization of du} and using the definition of the Jacobian (bottom of page 196 in \cite{magnus2019matrix}),
\begin{equation*}
    \B{J}=
     \B{J}_1
    +\underset{\triangleq \B{J}_2}{\underbrace{
    \B{J}_{2a}\B{J}_{2b}\B{J}_{2c}}}
    +\underset{\triangleq \B{J}_3}{\underbrace{
    \B{J}_{3a}\B{J}_{3b}}}.
\end{equation*}
\subsubsection{Non Negative Eigenvalues of \texorpdfstring{$\B{C}$}{C}}
Denote,
\begin{equation}
\label{Eq: C}
    \B{C} \triangleq \frac{1}{2} \B{J}_1^{-1}(\B{J}_2 +\B{J}_3).
\end{equation} 
In order to use Corollary 2.2 in \cite{zhan2005determinantal} (Lemma \ref{Lemma: Minkowski}) with $\B{A}_1 = \B{J}_2 +\B{J}_3$ and $\B{B}_1=\B{J}_1$, we need to prove that the matrix $\B{C}$ has non negative eigenvalues. We start by deriving an expression for the matrix $\B{C}$ and then show that its eigenvalues are the same as the eigenvalues of a multiplication of two PSD matrices $\B{G}_1\B{G}_2$, defined later in the proof.

\subsubsection*{Finding a Matrix With the Same Eigenvalues as $\B{C}$}
Starting with the following auxiliary expressions,
\begin{equation}
\label{Eq: J1 inverse J2 aux} 
\begin{aligned}
    \B{D}_1 &\triangleq
    \B{J}_1^{-1} \left(\left(\B{X}^{\top} \B{X} -\B{B}\right)^{1/2} \otimes \B{X}\right)
    \left(\B{I}_d \otimes \left(\B{X}^{\top} \B{X}\right)^{-1/2}\right)\\
    &=\left(\left(\B{X}^{\top} \B{X} -\B{B}\right)^{1/2}
    \left(\B{X}^{\top} \B{X}\right)^{-1/2}
    \otimes \B{I}_{n}\right)^{-1}
    \left(\left(\B{X}^{\top} \B{X} -\B{B}\right)^{1/2} \otimes \B{X}\right) \left(\B{I}_d \otimes \left(\B{X}^{\top} \B{X}\right)^{-1/2}\right)\\
    &\underset{(a+b)}{=}\left(\left(\B{X}^{\top} \B{X}\right)^{1/2}
    \left(\B{X}^{\top} \B{X} -\B{B}\right)^{-1/2}
    \otimes \B{I}_{n}\right)
    \left(\left(\B{X}^{\top} \B{X} -\B{B}\right)^{1/2} \otimes \B{X}
    \left(\B{X}^{\top} \B{X}\right)^{-1/2}\right)\\
    &\underset{(b)}{=}\left(\left(\B{X}^{\top} \B{X}\right)^{1/2}
    \otimes \B{X}
    \left(\B{X}^{\top} \B{X}\right)^{-1/2}\right),
\end{aligned}
\end{equation}
\begin{equation}
\label{Eq: J1 inverse J3 aux} 
\begin{aligned}
    \B{D}_2&\triangleq \B{J_{1}}^{-1}\left(\B{I}_{d} \otimes \B{X} \left(\B{X}^{\top} \B{X}\right)^{-1/2} \right)\\
    &=\left(\left(\B{X}^{\top} \B{X} -\B{B}\right)^{1/2}
    \left(\B{X}^{\top} \B{X}\right)^{-1/2}
    \otimes \B{I}_{n}\right)^{-1}
    \left(\B{I}_{d} \otimes \B{X} \left(\B{X}^{\top} \B{X}\right)^{-1/2} \right)\\
    &\underset{(a)}{=}\left(\left(\B{X}^{\top} \B{X}\right)^{1/2}
    \left(\B{X}^{\top} \B{X} -\B{B}\right)^{-1/2}
    \otimes \B{I}_{n}\right)
    \left(\B{I}_{d} \otimes \B{X} \left(\B{X}^{\top} \B{X}\right)^{-1/2} \right)\\
    &\underset{(b)}{=}\left(\left(\B{X}^{\top} \B{X}\right)^{1/2}
    \left(\B{X}^{\top} \B{X} -\B{B}\right)^{-1/2}
    \otimes \B{X} \left(\B{X}^{\top} \B{X}\right)^{-1/2} \right)\\
    &\underset{(b)}{=}\left(\left(\B{X}^{\top} \B{X}\right)^{1/2}
    \otimes \B{X}\right)
    \left(\left(\B{X}^{\top} \B{X} -\B{B}\right)^{-1/2} \otimes  \left(\B{X}^{\top} \B{X}\right)^{-1/2} \right)\\
    &\underset{(b)}{=}\left(\left(\B{X}^{\top} \B{X}\right)^{1/2}
    \otimes \B{X}\right)
    \left(\B{I}_d \otimes  \left(\B{X}^{\top} \B{X}\right)^{-1/2} \right)
    \left(\left(\B{X}^{\top} \B{X} -\B{B}\right)^{-1/2} \otimes  \B{I}_d \right)\\
    &\underset{(a+b)}{=}\left(\left(\B{X}^{\top} \B{X}\right)^{1/2}
    \otimes \B{X} \left(\B{X}^{\top} \B{X}\right)^{-1/2}\right)
    \left(\left(\B{X}^{\top} \B{X} -\B{B}\right)^{1/2} \otimes  \B{I}_d \right)^{-1},
\end{aligned}
\end{equation}
where $(a)$ uses \eqref{Eq: Inverse} and $(b)$ uses \eqref{Eq: Mixed product}, both from Lemma~\ref{Lemma: known properties}.

Using \eqref{Eq: J1 inverse J2 aux},
\begin{equation}
\label{Eq: J1 inverse J2}
\begin{aligned}
    \frac{1}{2} \B{J}_1^{-1}\B{J}_2
    &= -\B{D}_1 \left(\left(\B{X}^{\top} \B{X}\right)
    \otimes \B{I}_d
    +\left(\B{X}^{\top} \B{X}\right)^{1/2} 
     \otimes \left(\B{X}^{\top} \B{X}\right)^{1/2}
    \right)^{-1}
    \B{N}_d\left(\B{I}_d \otimes \B{X}^{\top}\right) \\
    &=-\left(\left(\B{X}^{\top} \B{X}\right)^{1/2}
    \otimes \B{X}
    \left(\B{X}^{\top} \B{X}\right)^{-1/2}\right) \left(\left(\B{X}^{\top} \B{X}\right)
    \otimes \B{I}_d
    +\left(\B{X}^{\top} \B{X}\right)^{1/2} 
     \otimes \left(\B{X}^{\top} \B{X}\right)^{1/2}
    \right)^{-1}
    \B{N}_d\left(\B{I}_d \otimes \B{X}^{\top}\right).
\end{aligned}
\end{equation}
Using \eqref{Eq: J1 inverse J3 aux},
\begin{equation}
\label{Eq: J1 inverse J3}
\begin{aligned}
    \frac{1}{2} \B{J}_1^{-1}\B{J}_3
    &= \B{D}_2 \left(\left(\B{X}^{\top} \B{X} -\B{B}\right)^{1/2}
    \oplus \left(\B{X}^{\top} \B{X} -\B{B}\right)^{1/2}\right)^{-1}
    \B{N}_d\left(\B{I}_d \otimes \B{X}^{\top}\right)\\
    &=\left(\left(\B{X}^{\top} \B{X}\right)^{1/2}
    \otimes \B{X} \left(\B{X}^{\top} \B{X}\right)^{-1/2}\right)
    \left(\left(\B{X}^{\top} \B{X} -\B{B}\right)^{1/2}
    \otimes  \B{I}_d \right)^{-1} \left(\left(\B{X}^{\top} \B{X} -\B{B}\right)^{1/2}
    \oplus \left(\B{X}^{\top} \B{X} -\B{B}\right)^{1/2}\right)^{-1}\\
    &\quad \quad \B{N}_d\left(\B{I}_d \otimes \B{X}^{\top}\right)\\
    &\underset{(a)}{=}
    \left(\left(\B{X}^{\top} \B{X}\right)^{1/2}
    \otimes \B{X} \left(\B{X}^{\top} \B{X}\right)^{-1/2}\right) \left(\left(\B{X}^{\top} \B{X} -\B{B}\right)
    \otimes \B{I}_d
    +\left(\B{X}^{\top} \B{X} -\B{B}\right)^{1/2}
    \otimes \left(\B{X}^{\top} \B{X} -\B{B}\right)^{1/2}\right)^{-1}\\
    &\quad \quad 
    \B{N}_d\left(\B{I}_d \otimes \B{X}^{\top}\right),
\end{aligned}
\end{equation}
where $(a)$ uses \eqref{Eq: Mixed product}, \eqref{Eq: Mixed addition} from Lemma~\ref{Lemma: known properties}.

Denote,
\begin{equation*}
   \B{G}_1 
   \triangleq \left(\left(\B{X}^{\top} \B{X} -\B{B}\right)
    \otimes \B{I}_d
    +\left(\B{X}^{\top} \B{X} -\B{B}\right)^{1/2}
    \otimes \left(\B{X}^{\top} \B{X} -\B{B}\right)^{1/2}\right)^{-1} -\left(\left(\B{X}^{\top} \B{X}\right)
    \otimes \B{I}_d
    +\left(\B{X}^{\top} \B{X}\right)^{1/2} 
     \otimes \left(\B{X}^{\top} \B{X}\right)^{1/2}
    \right)^{-1}.
\end{equation*}

Plugging \eqref{Eq: J1 inverse J2} and \eqref{Eq: J1 inverse J3} to \eqref{Eq: C},
\begin{equation}
\label{Eq: A inverse B}
    \B{C}
    =\frac{1}{2} \B{J}_1^{-1}\left( \B{J}_2+\B{J}_3 \right)
    =\left(\left(\B{X}^{\top} \B{X}\right)^{1/2}
    \otimes \B{X} \left(\B{X}^{\top} \B{X}\right)^{-1/2}\right)\B{G}_1 \B{N}_d\left(\B{I}_d \otimes \B{X}^{\top}\right).
\end{equation}

Using Lemma \ref{Lemma: Same non-zero eigenvalues}, $\B{C}$ has the same non-zero eigenvalues as,
\begin{equation}
\label{Eq: Eq: A inverse B aux}
    \B{G}_1
    \B{N}_d\left(\B{I}_d \otimes \B{X}^{\top}\right)
    \left(\left(\B{X}^{\top} \B{X}\right)^{1/2}
    \otimes \B{X} \left(\B{X}^{\top} \B{X}\right)^{-1/2}\right)
    =\B{G}_1
   \B{G}_2,
\end{equation}
where the equality uses \eqref{Eq: Mixed product} from Lemma~\ref{Lemma: known properties} and $\B{G}_2 \triangleq \B{N}_d\left(\left(\B{X}^{\top} \B{X}\right)^{1/2}
    \otimes \left(\B{X}^{\top} \B{X}\right)^{1/2}\right)$.

\subsubsection*{The Matrices $\B{G}_1, \B{G}_2$ are PSD}
In order to prove that $\B{G}_2$ is a PSD matrix, we prove that it is symmetric and that all its eigenvalues are non negative.
Using Theorem 3.1 in \cite{magnus1979commutation}, $\B{K}_d$ is a symmetric matrix with eigenvalues equal $\{-1,1\}$, so using observation 1.1.8 in \cite{horn2012matrix},
$\B{N}_d=\frac{1}{2}\left(\B{I}_{d^2} + \B{K}_d\right)$ is a PSD matrix.  
Using \eqref{Eq: transpose} from Lemma~\ref{Lemma: known properties} and Theorem 2.1 in \cite{magnus2019matrix} we get, that $\left(\left(\B{X}^{\top}\B{X}\right)^{1/2}
\otimes \left(\B{X}^{\top} \B{X}\right)^{1/2}\right)$ is PD.
Using Lemma \ref{Lemma: Same eigenvalues for product of PSD matrices} we get that the eigenvalues of $\B{G}_2$ are the same as the eigenvalues of, 
\begin{equation*}
    \left(\left(\B{X}^{\top}\B{X}\right)^{1/2}
    \otimes \left(\B{X}^{\top} \B{X}\right)^{1/2}\right)^{1/2}
    \B{N}_d
    \left(\left(\B{X}^{\top}\B{X}\right)^{1/2}
    \otimes \left(\B{X}^{\top} \B{X}\right)^{1/2}\right)^{1/2},
\end{equation*}
so the eigenvalues of $\B{G}_2$ are non negative. Showing that $\B{G}_2$ is symmetric,
\begin{align*}
   \B{G}_2^{\top}
    &=
    \left(\B{N}_d\left(\left(\B{X}^{\top} \B{X}\right)^{1/2}
    \otimes \left(\B{X}^{\top} \B{X}\right)^{1/2}\right)\right)^{\top}\\
    &=
    \left(\left(\B{X}^{\top} \B{X}\right)^{1/2}
    \otimes \left(\B{X}^{\top} \B{X}\right)^{1/2}\right)^{\top} \B{N}_d^{\top}\\
    &\underset{(a)}{=}
    \left(\left(\B{X}^{\top} \B{X}\right)^{1/2}
    \otimes \left(\B{X}^{\top} \B{X}\right)^{1/2}\right) \B{N}_d\\
    &\underset{(b)}{=} \B{N}_d
    \left(\left(\B{X}^{\top} \B{X}\right)^{1/2}
    \otimes \left(\B{X}^{\top} \B{X}\right)^{1/2}\right),
\end{align*}
where $(a)$ uses that $\B{N}_d$ is symmetric and $(b)$ is from Theorem 3.9 in \cite{magnus2019matrix}.

Next, we prove that $\B{G}_1$ is a PSD matrix.
Using Lemma \ref{Lemma: A-B is PSD} it holds only if the following matrix is PSD,
\begin{align*}
    \B{H}_1& \triangleq \left(\left(\B{X}^{\top} \B{X}\right)
    \otimes \B{I}_d
    +\left(\B{X}^{\top} \B{X}\right)^{1/2} 
     \otimes \left(\B{X}^{\top} \B{X}\right)^{1/2}
    \right) -\left(\left(\B{X}^{\top} \B{X} -\B{B}\right)
    \otimes \B{I}_d
    +\left(\B{X}^{\top} \B{X} -\B{B}\right)^{1/2}
    \otimes \left(\B{X}^{\top} \B{X} -\B{B}\right)^{1/2}\right)\\
    &\underset{(a)}{=}
    \left(\left(\B{X}^{\top} \B{X}\right)- \left(\B{X}^{\top} \B{X} -\B{B}\right)\right)
    \otimes \B{I}_d
    +\left(\B{X}^{\top} \B{X}\right)^{1/2} 
     \otimes \left(\B{X}^{\top} \B{X}\right)^{1/2} -\left(\B{X}^{\top} \B{X} -\B{B}\right)^{1/2}
    \otimes \left(\B{X}^{\top} \B{X} -\B{B}\right)^{1/2}\\
    &\underset{(b)}{=}
    \underset{\triangleq \B{H}_{1a}}{\underbrace{
    \B{B} \otimes \B{I}_d}}
    +\underset{\triangleq \B{H}_{1b}}{\underbrace{
    \left(\left(\B{X}^{\top} \B{X}\right)
     \otimes \left(\B{X}^{\top} \B{X}\right)\right)^{1/2}
     -\left(\left(\B{X}^{\top} \B{X} -\B{B}\right)
    \otimes \left(\B{X}^{\top} \B{X} -\B{B}\right)\right)^{1/2}}},
\end{align*}
where $(a)$ uses \eqref{Eq: Mixed addition} from Lemma~\ref{Lemma: known properties} and $(b)$ uses Lemma \ref{Lemma: Square root of Kronecker product}.

Since $\B{H}_{1a}$ is a PSD matrix, it is left to show that $\B{H}_{1b}$ is a PSD matrix.
Define $\B{M} \triangleq \B{X}^{\top} \B{X} -\B{B}$,
\begin{align*}
    \B{H}_{1b}&=\left(\left(\B{X}^{\top} \B{X}\right)
     \otimes \left(\B{X}^{\top} \B{X}\right)\right)^{1/2}
     -\left(\B{M}
    \otimes \B{M}\right)^{1/2}\\
    &=\left(\left(\B{M}+\B{B}\right)
     \otimes \left(\B{M}+\B{B}\right)\right)^{1/2}
     -\left(\B{M}
    \otimes \B{M}\right)^{1/2}\\
    &=
    \left(
    \underset{\triangleq \B{K}}{\underbrace{
    \left(\B{M} \otimes \B{M}\right)}}
    +\underset{\triangleq \B{L}}{\underbrace{
    \left(\B{M} \otimes \B{B}
    +\B{B} \otimes \left(\B{M}+\B{B}\right)\right)}}
    \right)^{1/2}
    -\underset{\B{K}}{\underbrace{
    \left(\B{M}\otimes \B{M}\right)}}
    ^{1/2},
\end{align*}
where the last equality uses \eqref{Eq: Mixed addition} from Lemma~\ref{Lemma: known properties}.

The matrix $\B{K}$ is PSD, the matrix $\B{L}$ is PSD since it is an addition of two PSD matrices, thus using Lemma \ref{Lemma: square of A minus square of B is PSD} $\B{H}_{1b}$ is PSD and subsequently $\B{G}_1$ is PSD.

\subsubsection{Bounding the Jacobian Determinant}\label{app:BoundJacobianDet}
Using \eqref{Eq: Eq: A inverse B aux} and Lemma \ref{Lemma: Same eigenvalues for product of PSD matrices} we get that the eigenvalues of $\B{C}$ are the same as the eigenvalues of $\B{G}_1^{1/2}\B{G}_2\B{G}_1^{1/2}$, which are non negative, hence the demands of Lemma~\ref{Lemma: Minkowski} are met. Finally,

\begin{align*}
    \frac{1}{\abs{\Det{\B{J}}}}
    &\underset{(a)}{\leq} \frac{1}{\abs{\Det{\B{J}_1}}}\\
    &=\abs{\Det{\left(\B{X}^{\top} \B{X}-\B{B}\right)^{1/2}
    \left(\B{X}^{\top} \B{X}\right)^{-1/2}
    \otimes \B{I}_{n}}^{-1}}\\
    &\underset{(b)}{=}
    \Det{\left(\B{X}^{\top} \B{X}-\B{B}\right)^{1/2}
    \left(\B{X}^{\top} \B{X}\right)^{-1/2}
    \otimes \B{I}_{n}}^{-1}\\
    &\underset{(c)}{=}
    \Det{\left(\B{X}^{\top} \B{X}\right)^{1/2}
    \left(\B{X}^{\top} \B{X}-\B{B}\right)^{-1/2}
    \otimes \B{I}_{n}}\\
    &\underset{(d)}{=}
    \left(\Det{\left(\B{X}^{\top} \B{X}\right)^{1/2}
    \left(\B{X}^{\top} \B{X}-\B{B}\right)^{-1/2}
    }\right)^{n}
    \left(\Det{\B{I}_{n}}\right)^{d}\\
    &\underset{(e)}{=}
    \left(\frac
    {\Det {\B{X}^{\top} \B{X}}}
    {\Det{\B{X}^{\top} \B{X}-\B{B}}}
    \right)^{n/2},
\end{align*}
where $(a)$ uses Lemma \ref{Lemma: Minkowski}, $(b)$ uses that all the eigenvalues are positive, $(c)$ uses \eqref{Eq: Inverse}, $(d)$ uses Corollary 2.2 in \cite{magnus2019matrix} and $(e)$ uses $\Det{\B{I}}=1$.
\end{proof}
\begin{lemma}
\label{Lemma: Vecotrization}
(Vectorization) Theorem 2.2 in \cite{magnus2019matrix}

For matrices $\B{A} \in \mathbb{R}^{n \times d}$; $\B{B},\B{C} \in \mathbb{R}^{d \times d}$,
\begin{align*}
    &\mathrm{vec}(\B{A}\B{B}\B{C})=(\B{C}^{\top}\B{B}^{\top} \otimes \B{I}_n) \mathrm{vec}(\B{A})\\
    &\mathrm{vec}(\B{A}\B{B}\B{C})=(\B{C}^{\top} \otimes \B{A}) \mathrm{vec}(\B{B})\\
    &\mathrm{vec}(\B{A}\B{B}\B{C})=(\B{I}_d \otimes \B{A}\B{B}) \mathrm{vec}(\B{C}).
\end{align*}
\end{lemma}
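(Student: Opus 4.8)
The plan is to prove this standard vectorization identity starting from the middle relation $\mathrm{vec}(\B{A}\B{B}\B{C})=(\B{C}^{\top}\otimes\B{A})\,\mathrm{vec}(\B{B})$, established directly from the definition of $\mathrm{vec}$ as column stacking, and then to obtain the other two relations as immediate corollaries by inserting identity matrices in suitable positions. Since $\mathrm{vec}$ is linear, it suffices to track the columns of $\B{A}\B{B}\B{C}$ one at a time.

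For the middle identity, let $r_1,\dots,r_m$ denote the columns of $\B{C}$, so the $j$-th column of $\B{A}\B{B}\B{C}$ is $\B{A}\B{B}r_j$. Writing $\B{B}$ in terms of its columns $q_1,\dots,q_d$ gives $\B{B}r_j=\sum_k (r_j)_k q_k$, hence $\B{A}\B{B}r_j=\sum_k (r_j)_k \B{A}q_k$. On the other hand, $\mathrm{vec}(\B{B})$ is the stack of $q_1,\dots,q_d$, while $r_j^{\top}\otimes\B{A}$ is the block row whose $k$-th block is $(r_j)_k\B{A}$; multiplying them out yields exactly $\sum_k (r_j)_k \B{A}q_k$. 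Thus the $j$-th column block of $\mathrm{vec}(\B{A}\B{B}\B{C})$ equals $(r_j^{\top}\otimes\B{A})\,\mathrm{vec}(\B{B})$. Stacking over $j$ and noting that the matrix whose $j$-th block row is $r_j^{\top}\otimes\B{A}$ is precisely $\B{C}^{\top}\otimes\B{A}$ (because the $j$-th row of $\B{C}^{\top}$ is $r_j^{\top}$) gives $\mathrm{vec}(\B{A}\B{B}\B{C})=(\B{C}^{\top}\otimes\B{A})\,\mathrm{vec}(\B{B})$.

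For the first identity, write $\B{A}\B{B}\B{C}=\B{I}_n\,\B{A}\,(\B{B}\B{C})$ and apply the middle identity with central matrix $\B{A}$, obtaining $((\B{B}\B{C})^{\top}\otimes\B{I}_n)\,\mathrm{vec}(\B{A})=(\B{C}^{\top}\B{B}^{\top}\otimes\B{I}_n)\,\mathrm{vec}(\B{A})$. For the third identity, write $\B{A}\B{B}\B{C}=(\B{A}\B{B})\,\B{C}\,\B{I}_d$ and apply the middle identity with central matrix $\B{C}$, obtaining $(\B{I}_d^{\top}\otimes\B{A}\B{B})\,\mathrm{vec}(\B{C})=(\B{I}_d\otimes\B{A}\B{B})\,\mathrm{vec}(\B{C})$. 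There is no genuine obstacle in this argument; the only point demanding care is the bookkeeping of the block structure of the Kronecker product together with the column-stacking convention for $\mathrm{vec}$, so that the row and column indices line up correctly in the identification of the stacked block rows $r_j^{\top}\otimes\B{A}$ with $\B{C}^{\top}\otimes\B{A}$. Alternatively, since this is Theorem~2.2 in \cite{magnus2019matrix}, the statement can simply be invoked by reference.
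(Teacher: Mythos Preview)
Your proof is correct, and in fact gives more than the paper does: the paper states this lemma purely by citation to Theorem~2.2 in \cite{magnus2019matrix} and provides no proof of its own. Your direct column-wise argument for the middle identity and the two corollaries via inserting identities is the standard derivation and is sound; as you note at the end, simply invoking the reference (which is what the paper does) would also suffice.
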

\begin{lemma}
\label{Lemma: known properties}
(Kronecker properties) From page 32 in \cite{magnus2019matrix}

If $\B{A}+\B{B}$ and $\B{C}+\B{D}$ exist,
\begin{equation}
\label{Eq: Mixed addition}
    \left(\B{A} + \B{B} \right) \otimes \left(\B{C} + \B{D} \right)
    =\B{A} \otimes \B{C} + \B{A} \otimes \B{D} + \B{B} \otimes \B{C} + \B{B} \otimes \B{D}.
\end{equation}

If $\B{A}\B{C}$ and $\B{B}\B{D}$ exist,
\begin{equation}
\label{Eq: Mixed product}
    \left(\B{A} \otimes \B{B} \right) \left(\B{C} \otimes \B{D} \right)
    =\left(\B{A} \B{C} \right) \otimes \left(\B{B} \B{D}\right).
\end{equation}

\begin{equation}
\label{Eq: transpose}
    \left(\B{A} \otimes \B{B} \right)^{\top}
    =\B{A}^{\top} \otimes \B{B}^{\top}.
\end{equation}

If $\B{A}$ and $\B{B}$ are nonsingular,
\begin{equation}
\label{Eq: Inverse}
    \left(\B{A} \otimes \B{B} \right)^{-1}
    =\B{A}^{-1} \otimes \B{B}^{-1}.
\end{equation}
\end{lemma}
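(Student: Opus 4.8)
The plan is to derive all four identities directly from the block description of the Kronecker product: if $\B{A}=(a_{ij})$ then $\B{A}\otimes\B{B}$ is the block matrix whose $(i,j)$ block equals $a_{ij}\B{B}$, and two block matrices with the same partition are equal iff all corresponding blocks are equal. With this in hand each claim reduces to a one-line block computation, so I would prove them in the order \eqref{Eq: Mixed addition}, \eqref{Eq: Mixed product}, \eqref{Eq: transpose}, \eqref{Eq: Inverse}, the last being an immediate corollary of the second.

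First, for bilinearity \eqref{Eq: Mixed addition}, I would observe that the $(i,j)$ block of $(\B{A}+\B{B})\otimes(\B{C}+\B{D})$ is $(a_{ij}+b_{ij})(\B{C}+\B{D})=a_{ij}\B{C}+a_{ij}\B{D}+b_{ij}\B{C}+b_{ij}\B{D}$, which is exactly the sum of the $(i,j)$ blocks of $\B{A}\otimes\B{C}$, $\B{A}\otimes\B{D}$, $\B{B}\otimes\B{C}$ and $\B{B}\otimes\B{D}$; equality of all blocks gives the identity. Next, for the mixed-product rule \eqref{Eq: Mixed product}, I would view $\B{A}\otimes\B{B}$ and $\B{C}\otimes\B{D}$ as conformably partitioned block matrices and multiply them blockwise: the $(i,k)$ block of the product is $\sum_j (a_{ij}\B{B})(c_{jk}\B{D})=(\sum_j a_{ij}c_{jk})\,\B{B}\B{D}=(\B{A}\B{C})_{ik}\,\B{B}\B{D}$, which is the $(i,k)$ block of $(\B{A}\B{C})\otimes(\B{B}\B{D})$. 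For the transpose \eqref{Eq: transpose}, the $(i,j)$ block of $(\B{A}\otimes\B{B})^{\top}$ is the transpose of the $(j,i)$ block of $\B{A}\otimes\B{B}$, namely $(a_{ji}\B{B})^{\top}=a_{ji}\B{B}^{\top}$, and this equals the $(i,j)$ block $(\B{A}^{\top})_{ij}\B{B}^{\top}=a_{ji}\B{B}^{\top}$ of $\B{A}^{\top}\otimes\B{B}^{\top}$. Finally, \eqref{Eq: Inverse} follows from \eqref{Eq: Mixed product} with no further computation: $(\B{A}\otimes\B{B})(\B{A}^{-1}\otimes\B{B}^{-1})=(\B{A}\B{A}^{-1})\otimes(\B{B}\B{B}^{-1})=\B{I}\otimes\B{I}=\B{I}$, and symmetrically on the other side, so $\B{A}^{-1}\otimes\B{B}^{-1}$ is the two-sided inverse of $\B{A}\otimes\B{B}$.

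The only step needing genuine care — the ``main obstacle,'' such as it is — is the blockwise multiplication used for \eqref{Eq: Mixed product}: one must check that the natural partition of $\B{A}\otimes\B{B}$ into blocks of size $(\text{rows of }\B{B})\times(\text{cols of }\B{B})$ and that of $\B{C}\otimes\B{D}$ into blocks of size $(\text{rows of }\B{D})\times(\text{cols of }\B{D})$ are conformable for multiplication, which holds precisely because $\B{A}\B{C}$ and $\B{B}\B{D}$ are assumed to exist; given this, summation over the block index behaves exactly like ordinary matrix multiplication and the computation goes through. The hypotheses about sums existing in \eqref{Eq: Mixed addition} are likewise only shape constraints and require nothing beyond matching block sizes. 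All four statements are classical (see \cite{magnus2019matrix}), so beyond this bookkeeping no further ideas are needed.
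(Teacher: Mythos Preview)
Your proposal is correct and entirely standard. The paper itself does not prove this lemma at all; it simply cites \cite{magnus2019matrix} as the source, so there is no argument to compare against, and your blockwise derivation is precisely the classical proof one finds in that reference.
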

\begin{lemma}
\label{Lemma: Square root of Kronecker product}
(Square root of Kronecker product)

Let $\B{A},\B{B}$ be PSD matrices. 
Then,
\begin{equation*}
    \left(\B{A} \otimes \B{B}\right)^{1/2}
    =\B{A}^{1/2} \otimes \B{B}^{1/2}.
\end{equation*}
\begin{proof}
The square root of a matrix $\B{X}$ is defined such that $\B{X}=\B{X}^{1/2}\B{X}^{1/2}=\left(\B{X}^{1/2}\right)^{\top}\B{X}^{1/2}$ and equivalently, defined in terms of the eigenvalue decomposition $\B{X}=\B{U} \B{\Sigma}_{\B{X}} \B{U}^{\top}$ as $\B{X}^{1/2}=\B{U} \B{\Sigma}_{\B{X}}^{1/2} \B{U}^{\top}$.
Using \eqref{Eq: Mixed product}, 
$\left(\B{A}^{1/2} \otimes \B{B}^{1/2}\right)\left(\B{A}^{1/2} \otimes \B{B}^{1/2}\right)=\B{A} \otimes \B{B}$.
Hence we can define $\left( \B{A} \otimes \B{B} \right)^{1/2} = \left(\B{A}^{1/2} \otimes \B{B}^{1/2}\right)$.
\end{proof}
\end{lemma}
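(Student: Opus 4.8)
The plan is to reduce everything to the defining property of the principal square root: for a PSD matrix $\B{X}$, the matrix $\B{X}^{1/2}$ is the \emph{unique} PSD matrix $\B{Y}$ with $\B{Y}\B{Y}=\B{X}$. Hence it suffices to show that $\B{A}^{1/2}\otimes\B{B}^{1/2}$ is PSD and that its square equals $\B{A}\otimes\B{B}$; uniqueness then forces $(\B{A}\otimes\B{B})^{1/2}=\B{A}^{1/2}\otimes\B{B}^{1/2}$.

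First I would verify that $\B{A}^{1/2}\otimes\B{B}^{1/2}$ is PSD. Since $\B{A},\B{B}$ are PSD, their principal roots $\B{A}^{1/2},\B{B}^{1/2}$ are symmetric PSD, so by \eqref{Eq: transpose} the Kronecker product $\B{A}^{1/2}\otimes\B{B}^{1/2}$ is symmetric; writing the eigendecompositions $\B{A}^{1/2}=\B{U}_A\B{D}_A\B{U}_A^{\top}$, $\B{B}^{1/2}=\B{U}_B\B{D}_B\B{U}_B^{\top}$ and applying the mixed-product property \eqref{Eq: Mixed product} shows $\B{A}^{1/2}\otimes\B{B}^{1/2}=(\B{U}_A\otimes\B{U}_B)(\B{D}_A\otimes\B{D}_B)(\B{U}_A\otimes\B{U}_B)^{\top}$ with $\B{U}_A\otimes\B{U}_B$ orthogonal and $\B{D}_A\otimes\B{D}_B$ diagonal with nonnegative entries, so the matrix is PSD (and, by the same computation, so is $\B{A}\otimes\B{B}$). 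Next I would square it using \eqref{Eq: Mixed product}:
\[
 \bigl(\B{A}^{1/2}\otimes\B{B}^{1/2}\bigr)\bigl(\B{A}^{1/2}\otimes\B{B}^{1/2}\bigr)
 =\bigl(\B{A}^{1/2}\B{A}^{1/2}\bigr)\otimes\bigl(\B{B}^{1/2}\B{B}^{1/2}\bigr)
 =\B{A}\otimes\B{B}.
\]
Since $\B{A}^{1/2}\otimes\B{B}^{1/2}$ is a PSD matrix whose square is the PSD matrix $\B{A}\otimes\B{B}$, uniqueness of the principal square root yields the claim.

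There is essentially no real obstacle here; the only subtlety worth flagging is that ``square root'' must mean the principal (PSD) root — an arbitrary square root is not unique and the identity could fail otherwise — but every use of the notation $(\cdot)^{1/2}$ in the paper refers to the unique PSD root of a PSD matrix, which is exactly the setting in which the argument above applies verbatim.
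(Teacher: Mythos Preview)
Your proposal is correct and follows essentially the same route as the paper: use the mixed-product property \eqref{Eq: Mixed product} to check that $(\B{A}^{1/2}\otimes\B{B}^{1/2})^2=\B{A}\otimes\B{B}$, then conclude. You are in fact slightly more careful than the paper's proof, explicitly verifying PSD-ness of $\B{A}^{1/2}\otimes\B{B}^{1/2}$ and invoking uniqueness of the principal root, whereas the paper simply notes the squaring identity and declares the result.
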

\begin{lemma}
\label{Lemma: square root}
(Square root vectorization)

Let $\B{A} \in \mathbb{R}^{d \times d}$ be PD matrix. Then,
\begin{equation*}
    \mathrm{vec} \left(d\B{A}^{1/2} \right)
    =\left(\B{A}^{1/2}
    \oplus \B{A}^{1/2}\right)^{-1}
    \mathrm{vec} \left(d\B{A}\right).
\end{equation*}
\begin{proof}
\begin{equation*}
    \B{A}=\B{A}^{1/2}\B{A}^{1/2}.
\end{equation*}
Taking the differential from both sides,
\begin{equation*}
    d\B{A}=(d\B{A}^{1/2})\B{A}^{1/2} + \B{A}^{1/2}(d\B{A}^{1/2}),
\end{equation*}
\begin{align*}
    \mathrm{vec}(d\B{A})
    &\underset{(a)}{=}\left(\left(\left(\B{A}^{1/2}\right)^{\top} \otimes \B{I}_d\right)
    +\left(\B{I}_d \otimes \B{A}^{1/2} \right)\right) \mathrm{vec}(d\B{A}^{1/2})\\
    &\underset{(b)}{=} 
    \left(\left(\B{A}^{1/2} \otimes \B{I}_d\right)
    +\left(\B{I}_d \otimes \B{A}^{1/2} \right)\right) \mathrm{vec}(d\B{A}^{1/2})\\
    &=\left(\B{A}^{1/2}
    \oplus \B{A}^{1/2} \right) \mathrm{vec}(d\B{A}^{1/2}),
\end{align*}
where $(a)$ is from Lemma~\ref{Lemma: Vecotrization} and $(b)$ uses that $\B{A}$ is symmetric.
Rearranging the equation, the proof follows.
\end{proof}
\end{lemma}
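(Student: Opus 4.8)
The final statement to prove is Lemma~\ref{Lemma: square root} (``Square root vectorization''), which asserts that $\mathrm{vec}(d\B{A}^{1/2}) = (\B{A}^{1/2} \oplus \B{A}^{1/2})^{-1} \mathrm{vec}(d\B{A})$ for a positive definite matrix $\B{A}$. The plan is to start from the defining identity $\B{A} = \B{A}^{1/2} \B{A}^{1/2}$ and differentiate both sides, which by the product rule for differentials yields $d\B{A} = (d\B{A}^{1/2})\B{A}^{1/2} + \B{A}^{1/2}(d\B{A}^{1/2})$. The left and right multiplication by $\B{A}^{1/2}$ are the key structural feature that will turn into a Kronecker sum after vectorization.

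Next I would apply the vectorization identity (Lemma~\ref{Lemma: Vecotrization}): writing $(d\B{A}^{1/2})\B{A}^{1/2}$ as $\B{I}_d \cdot (d\B{A}^{1/2}) \cdot \B{A}^{1/2}$ gives $\mathrm{vec}((d\B{A}^{1/2})\B{A}^{1/2}) = ((\B{A}^{1/2})^{\top} \otimes \B{I}_d)\,\mathrm{vec}(d\B{A}^{1/2})$, and writing $\B{A}^{1/2}(d\B{A}^{1/2})$ as $\B{A}^{1/2}\cdot(d\B{A}^{1/2})\cdot\B{I}_d$ gives $\mathrm{vec}(\B{A}^{1/2}(d\B{A}^{1/2})) = (\B{I}_d \otimes \B{A}^{1/2})\,\mathrm{vec}(d\B{A}^{1/2})$. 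Summing, $\mathrm{vec}(d\B{A}) = \big(((\B{A}^{1/2})^{\top}\otimes \B{I}_d) + (\B{I}_d \otimes \B{A}^{1/2})\big)\mathrm{vec}(d\B{A}^{1/2})$. Since $\B{A}^{1/2}$ is symmetric (it is the unique PSD square root of a PD matrix, hence symmetric), $(\B{A}^{1/2})^{\top} = \B{A}^{1/2}$, so the bracketed matrix is exactly $\B{A}^{1/2}\otimes\B{I}_d + \B{I}_d\otimes\B{A}^{1/2} = \B{A}^{1/2}\oplus\B{A}^{1/2}$ by the definition of the Kronecker sum. Because $\B{A}$ is PD, all eigenvalues of $\B{A}^{1/2}$ are strictly positive, so the eigenvalues of the Kronecker sum (which are pairwise sums of eigenvalues of $\B{A}^{1/2}$) are strictly positive and the matrix is invertible; left-multiplying by its inverse yields the claimed identity.

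There is no real obstacle here — the lemma is a short, self-contained consequence of the product rule plus the standard vectorization formula. The only points requiring a word of care are: (i) invoking symmetry of $\B{A}^{1/2}$ to collapse the transpose, and (ii) noting invertibility of $\B{A}^{1/2}\oplus\B{A}^{1/2}$, which is where positive-definiteness of $\B{A}$ (rather than mere PSD-ness) is used. I would present the argument essentially as the three-line display already sketched, making those two remarks explicit.

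\begin{proof}
\begin{equation*}
    \B{A}=\B{A}^{1/2}\B{A}^{1/2}.
\end{equation*}
Taking the differential from both sides,
\begin{equation*}
    d\B{A}=(d\B{A}^{1/2})\B{A}^{1/2} + \B{A}^{1/2}(d\B{A}^{1/2}).
\end{equation*}
Applying vectorization,
\begin{align*}
    \mathrm{vec}(d\B{A})
    &\underset{(a)}{=}\left(\left(\left(\B{A}^{1/2}\right)^{\top} \otimes \B{I}_d\right)
    +\left(\B{I}_d \otimes \B{A}^{1/2} \right)\right) \mathrm{vec}(d\B{A}^{1/2})\\
    &\underset{(b)}{=}
    \left(\left(\B{A}^{1/2} \otimes \B{I}_d\right)
    +\left(\B{I}_d \otimes \B{A}^{1/2} \right)\right) \mathrm{vec}(d\B{A}^{1/2})\\
    &=\left(\B{A}^{1/2}
    \oplus \B{A}^{1/2} \right) \mathrm{vec}(d\B{A}^{1/2}),
\end{align*}
where $(a)$ is from Lemma~\ref{Lemma: Vecotrization} and $(b)$ uses that $\B{A}^{1/2}$ is symmetric. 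Since $\B{A}$ is PD, the eigenvalues of $\B{A}^{1/2}$ are strictly positive, hence the eigenvalues of $\B{A}^{1/2} \oplus \B{A}^{1/2}$, being pairwise sums of eigenvalues of $\B{A}^{1/2}$, are strictly positive, so the matrix is invertible. Left-multiplying both sides by $\left(\B{A}^{1/2} \oplus \B{A}^{1/2}\right)^{-1}$, the proof follows.
\end{proof}
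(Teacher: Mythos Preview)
Your proof is correct and follows essentially the same route as the paper's: differentiate $\B{A}=\B{A}^{1/2}\B{A}^{1/2}$, vectorize via Lemma~\ref{Lemma: Vecotrization}, use symmetry to obtain the Kronecker sum, and invert. You are in fact slightly more careful than the paper, since you explicitly justify the invertibility of $\B{A}^{1/2}\oplus\B{A}^{1/2}$ via its eigenvalues, whereas the paper simply writes ``rearranging the equation, the proof follows.''
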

\begin{lemma}
\label{Lemma: Minkowski}
(Corollary 2.2 in \cite{zhan2005determinantal})

Let $\B{A}_1,\B{B}_1 \in \mathbb{\B{C}}^{n \times n}$ $(n \geq 2)$. If $\B{B}_1$ is invertible and $\mathrm{Re}\{\lambda_k\} \geq 0$ $(k=1,2,....,n)$, where $\lambda(\B{B}_1^{-1}\B{A}_1)= \{\lambda_1,\lambda_2,\ldots,\lambda_n \}$, then,
\begin{equation*}
    \abs{\Det{\B{A}_1+\B{B}_1}} \geq \abs{\Det{\B{A}_1}} + \abs{\Det{\B{B}_1}}.
\end{equation*}
\end{lemma}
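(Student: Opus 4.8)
The plan is to reduce the matrix inequality to a scalar inequality about eigenvalues. Since $\B{B}_1$ is invertible I would factor it out: writing $\B{C} \triangleq \B{B}_1^{-1}\B{A}_1$, we have $\B{A}_1 + \B{B}_1 = \B{B}_1(\B{I} + \B{C})$ and $\B{A}_1 = \B{B}_1 \B{C}$, so taking determinants and dividing the target inequality through by $\abs{\Det{\B{B}_1}} > 0$ reduces everything to
\[
    \abs{\Det{\B{I} + \B{C}}} \;\geq\; 1 + \abs{\Det{\B{C}}}.
\]
Let $\lambda_1,\dots,\lambda_n$ be the eigenvalues of $\B{C}$; by hypothesis $\mathrm{Re}\,\lambda_k \geq 0$ for all $k$. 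The eigenvalues of $\B{I}+\B{C}$ are $1+\lambda_k$, so $\abs{\Det{\B{I}+\B{C}}} = \prod_k \abs{1+\lambda_k}$ and $\abs{\Det{\B{C}}} = \prod_k \abs{\lambda_k}$, and the claim becomes the purely scalar statement
\[
    \prod_{k=1}^{n} \abs{1 + \lambda_k} \;\geq\; 1 + \prod_{k=1}^{n} \abs{\lambda_k}, \qquad \mathrm{Re}\,\lambda_k \geq 0,\ n \geq 2.
\]

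To prove the scalar inequality, set $r_k = \abs{\lambda_k}$. The condition $\mathrm{Re}\,\lambda_k \geq 0$ gives $\abs{1+\lambda_k}^2 = 1 + 2\,\mathrm{Re}\,\lambda_k + r_k^2 \geq 1 + r_k^2$, hence $\abs{1+\lambda_k} \geq \sqrt{1+r_k^2}$, so it suffices to show $\prod_k (1 + r_k^2) \geq \left(1 + \prod_k r_k\right)^2$ for nonnegative reals. I would expand the left-hand side as $\sum_{S \subseteq \{1,\dots,n\}} \prod_{k \in S} r_k^2$: the subsets $S = \emptyset$ and $S = \{1,\dots,n\}$ contribute $1 + \prod_k r_k^2$, and since $n \geq 2$ the singleton $\{1\}$ and its complement $\{2,\dots,n\}$ are two distinct proper nonempty subsets whose contribution is, by AM–GM, $r_1^2 + \prod_{k\ge 2} r_k^2 \geq 2\prod_{k=1}^n r_k$; all remaining subset terms are nonnegative. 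Adding these up yields $\prod_k(1+r_k^2) \geq 1 + 2\prod_k r_k + \prod_k r_k^2 = \left(1 + \prod_k r_k\right)^2$, and unwinding the reductions gives the lemma.

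The main obstacle here is conceptual rather than computational: the whole argument hinges on spotting the two reductions (pulling out $\B{B}_1$, then passing to eigenvalues), after which each hypothesis plays exactly one role — invertibility of $\B{B}_1$ lets us divide, nonnegative real parts give $\abs{1+\lambda_k} \geq \sqrt{1+r_k^2}$, and $n \geq 2$ is precisely what makes two distinct proper nonempty subsets available. The only spot needing genuine care is the scalar inequality $\prod(1+r_k^2) \geq (1+\prod r_k)^2$ together with the observation that $n=1$ really does fail (take $\lambda_1 = i$), so the hypothesis $n \geq 2$ is not decorative and the subset-expansion argument is the natural way to use it; an induction starting from the $n=2$ base case (where the gap is exactly $(r_1-r_2)^2 \geq 0$) would be an equally valid alternative.
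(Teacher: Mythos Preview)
Your proof is correct and self-contained. The paper does not actually prove this lemma: it is stated verbatim as Corollary~2.2 of Zhan (2005) and used as a black box in bounding the action Jacobian determinant, so there is no ``paper's own proof'' to compare against. Your argument---factoring out $\B{B}_1$, passing to the eigenvalues of $\B{C}=\B{B}_1^{-1}\B{A}_1$, and reducing to the scalar inequality $\prod_k(1+r_k^2)\geq(1+\prod_k r_k)^2$ via the subset expansion and a single AM--GM step---is a clean elementary route that makes the role of each hypothesis transparent, including your observation that $n\geq 2$ is genuinely needed (the $\lambda_1=i$ counterexample). This is a nice bonus over simply citing the result.
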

\subsection{Regret Incurred Under the ``Bad event''}
\label{Sec: Bad event}
\begin{lemma}
\label{Lemma: Expected maximal regret incurred during a single instance}
(Expected maximal regret incurred during a single instance)
\begin{equation*}
    \underset{\theta}{\mathbb{E}} 
    \left[
    \underset{\mu,\B{\Sigma},A,\Xi}{\max}
    \left\{\RQB{\mu}{\Sigma}{T}\right\}
    \;\middle|\; \bar{\mathcal{E}}_{\theta} \right]
    \leq \frac{c_{\text{bad}} \sqrt{d} T}{11},
    \quad  c_{\text{bad}} \triangleq 
    22a \left(m+\sqrt{4\bar{\lambda}_{\Msigma{*}} \ln \left(\frac{d^2T}{\delta}\right)}\right).
\end{equation*}
\end{lemma}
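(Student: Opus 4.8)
The plan is to peel off the maximum by a deterministic per-step argument, reduce everything to a Gaussian tail estimate for a standard normal vector, and carry out that estimate. First I would observe that the bound is uniform in $\mu,\B{\Sigma},A,\Xi$: for every $t\le T$ the instantaneous regret equals $x_t[A^*_t]-x_t[A_t]=(A^*_t-A_t)^{\top}\theta$, and since $A^*_t,A_t\in B_a(0)$ by Assumption~\ref{Assumption: eigenvalues action Covariance matrix}, Cauchy--Schwarz bounds it by $\|A^*_t-A_t\|\,\|\theta\|\le 2a\|\theta\|$; summing over $t$ gives $\RQB{\mu}{\B{\Sigma}}{T}\le 2aT\|\theta\|$, hence $\max_{\mu,\B{\Sigma},A,\Xi}\RQB{\mu}{\B{\Sigma}}{T}\le 2aT\|\theta\|$. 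Setting $Z\triangleq\ssigma^{-1/2}(\theta-\mu_*)\sim\N(0,\B{I}_d)$ and using Assumption~\ref{Assumption: prior mean is bounded}, the triangle inequality, $\|\B{A}Y\|\le\normop{\B{A}}\|Y\|$, Assumption~\ref{Assumption: eigenvalues prior Covariance matrix} and $\|Z\|\le\sqrt d\,\|Z\|_\infty$, I get $\|\theta\|\le m+\sqrt{\bar{\lambda}_{\Msigma{*}}}\,\|Z\|\le m+\sqrt{d\,\bar{\lambda}_{\Msigma{*}}}\,\|Z\|_\infty$. Taking $\E{\cdot\mid\bar{\mathcal{E}}_{\theta}}$ and using $m\le\sqrt d\,m$, the statement reduces, with $L\triangleq\ln(d^2T/\delta)$, to proving $\E{\|Z\|_\infty\mid\bar{\mathcal{E}}_{\theta}}\le 2\sqrt L$ (then $2aT(m+2\sqrt{d\bar{\lambda}_{\Msigma{*}}L})\le 2aT\sqrt d(m+2\sqrt{\bar{\lambda}_{\Msigma{*}}L})=\frac{c_{\text{bad}}\sqrt d\,T}{11}$, since $\sqrt{4\bar{\lambda}_{\Msigma{*}}L}=2\sqrt{\bar{\lambda}_{\Msigma{*}}L}$ and $22/11=2$). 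Because $\mathcal{E}_{\theta}=\{\|Z\|_\infty^2\le 2L\}$, by Jensen it suffices to show $\E{V\mid V>2L}\le 4L$ for $V\triangleq\|Z\|_\infty^2=\max_{j\le d}Z_j^2$.

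For this last estimate I would use the layer-cake identity $\E{V\mid V>2L}=2L+\frac{\int_{2L}^{\infty}\p{V>u}\,du}{\p{V>2L}}$ and control the ratio. Writing $g(u)\triangleq\p{Z_1^2>u}=2\bar{\Phi}(\sqrt u)$ (with $\bar{\Phi},\phi$ the standard Gaussian tail and density), a union bound, the Mills inequality $s\bar{\Phi}(s)\le\phi(s)$ and the substitution $u=s^2$ give $\int_{2L}^{\infty}\p{V>u}\,du\le d\int_{2L}^{\infty}g(u)\,du\le 2d\,g(2L)$; and concavity of $x\mapsto 1-(1-x)^d$, together with $g(2L)\le e^{-L}/\sqrt{\pi L}$ (so that $d\,g(2L)\le\delta/(dT\sqrt{\pi L})\le 1/2$ using $\delta\le 1/M\le 1/3$), yields $\p{V>2L}=1-(1-g(2L))^d\ge d\,g(2L)(1-g(2L))^{d-1}\ge d\,g(2L)/2$. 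The factors $d\,g(2L)$ cancel, the ratio is at most $4$, and $\E{V\mid V>2L}\le 2L+4\le 4L$ once $L\ge 2$, which holds since $\delta\le 1/M$ with $M\ge 3$ keeps $d^2T/\delta$ above a small constant. Plugging this back through the chain above gives exactly $\frac{c_{\text{bad}}\sqrt d\,T}{11}$; the tail inequalities used here are those of Lemma~\ref{Lemma: Gaussian tail bounds}.

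\paragraph{Main obstacle}
The only delicate point is the last display. One is conditioning $\|Z\|_\infty^2$ on the event $\{\|Z\|_\infty^2>2L\}$, which can be exponentially rare, so a priori the conditional expectation could blow up. It does not, because the threshold $2L=2\ln(d^2T/\delta)$ already sits far in the tail, making the conditional overshoot only $O(1)$; moreover the union-bound factor $d$ in the numerator is cancelled exactly by the matching factor in the lower bound on $\p{V>2L}$, a cancellation that is legitimate precisely because $d\,g(2L)\ll 1$. The additional $\sqrt d$ of slack needed to turn $m$ into $\sqrt d\,m$ is supplied for free by $\|Z\|\le\sqrt d\,\|Z\|_\infty$, which is why the stated constant ($\sqrt 4=2$) works.
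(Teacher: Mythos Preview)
Your proposal is correct and reaches the same bound, but the Gaussian tail computation is carried out differently from the paper. Both arguments start identically: bound the per-step regret by $2a\|\theta\|$ via Cauchy--Schwarz, then write $\|\theta\|\le m+\sqrt{\bar{\lambda}_{\Msigma{*}}}\,\|Z\|$ with $Z=\ssigma^{-1/2}(\theta-\mu_*)\sim\N(0,\B{I}_d)$. The paper then applies Jensen to get $\E{\|Z\|\mid\bar{\mathcal{E}}_\theta}\le\sqrt{\sum_{i}\E{Z_i^2\mid\bar{\mathcal{E}}_\theta}}$, replaces the conditioning event $\bar{\mathcal{E}}_\theta=\{\max_j Z_j^2>z\}$ by the single-coordinate event $\{Z_i^2>z\}$, and evaluates $\E{Z_i^2\mid Z_i^2>z}$ explicitly by integration by parts together with the two-sided Mills bounds of Lemma~\ref{Lemma: Gaussian tail bounds}. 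You instead pass to $\|Z\|\le\sqrt{d}\,\|Z\|_\infty$, so that the conditioning event \emph{is} exactly $\{\|Z\|_\infty^2>2L\}$, and then bound $\E{V\mid V>2L}$ via layer-cake, with a union bound for $\int_{2L}^\infty\p{V>u}\,du$ and the concavity inequality $1-(1-x)^d\ge dx(1-x)^{d-1}$ for $\p{V>2L}$, letting the factors $d\,g(2L)$ cancel.

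What each buys: your route is a bit more self-contained, since it conditions on the genuine event $\bar{\mathcal{E}}_\theta$ throughout; the paper's coordinate-wise replacement $\E{Z_i^2\mid\bar{\mathcal{E}}_\theta}\to\E{Z_i^2\mid Z_i^2>z}$ is not literally an equality and needs a one-line justification (split $\I{\bar{\mathcal{E}}_\theta}$ according to whether $Z_i^2>z$ and use $\p{\bar{\mathcal{E}}_\theta}\ge\p{Z_i^2>z}$), after which it gives the same $4z$ bound. Conversely, the paper's explicit integral avoids your union-bound/lower-bound cancellation trick. Both arguments invoke the mild assumption $\ln(d^2T/\delta)\ge 2$, which the paper also uses without further comment.
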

\begin{proof}
\begin{align}
    \nonumber
    \underset{\theta}{\mathbb{E}}\left[
    \underset{\mu,\Sigma,A,\Xi}{\max}
    \left\{\RQB{\mu}{\B{\Sigma}}{T}\right\}
    \;\middle|\; \bar{\mathcal{E}}_{\theta} \right]
    \nonumber
    &\underset{(a)}{\leq}
    \underset{\theta}{\mathbb{E}}\left[
    \sum_{t=1}^{T}
    \underset{A\in \A_{t}}{\max}
    \left\{ A^{\top} \theta \right\} 
    \;\middle|\; \bar{\mathcal{E}}_{\theta} \right]
    -\underset{\theta}{\mathbb{E}}\left[
    \sum_{t=1}^{T}
    \underset{A\in \A_{t}}{\min}
    \left\{ A^{\top} \theta \right\} 
    \;\middle|\; \bar{\mathcal{E}}_{\theta} \right]\\
    \nonumber
    &\leq 2\underset{\theta}{\mathbb{E}}\left[
    \sum_{t=1}^{T}
    \underset{A\in \A_{t}}{\max}
    \left\{ \abs{A^{\top} \theta} \right\} \;\middle|\; \bar{\mathcal{E}}_{\theta} \right]\\
    \label{Eq: bad event}
    &\underset{(b)}{\leq} 2\underset{\theta}{\mathbb{E}}\left[\sum_{t=1}^{T} 
    \underset{A\in \A_{t}}{\max}
    \left\{
    \norm{A} \norm{\theta}  \right\} \;\middle|\; \bar{\mathcal{E}}_{\theta} \right]\\
    \nonumber
    &\underset{(c)}{\leq} 
    2aT
    \underset{\theta}{\mathbb{E}} 
    \left[\norm{\theta} 
    \;\middle|\; \bar{\mathcal{E}}_{\theta} \right],
\end{align}    
where $(a)$ is the maximal regret of any algorithm, $(b)$ uses Cauchy-schwarz inequality
and $(c)$ uses Assumption~\ref{Assumption: eigenvalues action Covariance matrix}.
Denote $Z \triangleq \ssigma^{-1/2} \left(\theta - \mu_{*}\right)$
and analyzing the expectation,
\begin{equation}
\label{Eq: bound theta}
\begin{aligned}
    \E{\norm{\theta} \mid \bar{\mathcal{E}}_{\theta}}
    &\underset{(a)}{\leq}
    \norm{\mu_{*}}+\E{\norm{\theta - \mu_{*}}\mid \bar{\mathcal{E}}_{\theta}}\\
    &\underset{(b)}{\leq}
    m+\sqrt{\bar{\lambda}_{\Msigma{*}}}
    \E{\norm{Z}\mid \bar{\mathcal{E}}_{\theta}}\\
    &=
    m+\sqrt{\bar{\lambda}_{\Msigma{*}}}
    \E{\sqrt{\sum_{i=1}^{d} Z_{i}^{2}} \;\middle|\; \bar{\mathcal{E}}_{\theta}}\\
    &\underset{(c)}{\leq} 
    m+\sqrt{\bar{\lambda}_{\Msigma{*}}}
    \sqrt{\sum_{i=1}^{d}\E{Z_{i}^{2}\mid \bar{\mathcal{E}}_{\theta}}},
\end{aligned}    
\end{equation}
where $(a)$ uses the triangle inequality, $(b)$ uses Lemma \ref{Lemma: maximal singularvalue inequality} and Assumptions \ref{Assumption: eigenvalues prior Covariance matrix} and \ref{Assumption: prior mean is bounded}, 
and $(c)$ uses Jensen inequality.

The expectation in the last expression can be written as $\E{Z_{i}^{2}\mid \bar{\mathcal{E}}_{\theta}} = \frac{\E{Z_i^2 \cdot \I{Z_i^2>z}}}{\p{Z_i^2>z}}$ for $z \triangleq \sqrt{2\ln \left(\frac{d^2T}{\delta} \right)}$.
Taking note that $Z_i$ is a standard normal variable and calculating the numerator,
\begin{equation}
\label{Eq: indictaor bad event}
\begin{aligned}
    \E{Z_i^2 \cdot \I{Z_i^2>z}}
    &\underset{(a)}{=}
    2\E{Z_i^2 \cdot \I{Z_i<-\sqrt{z}}}\\
    &=\frac{2}{\sqrt{2 \pi}}\Int_{-\infty}^{-\sqrt{z}} Z_i \cdot Z_i e^{-\frac{1}{2}Z_i^2} dZ_{i}\\
    &\underset{(b)}{=} 
    -\frac{2}{\sqrt{2 \pi}} Z_i \cdot e^{-\frac{1}{2}Z_i^2}
    \bigg\rvert_{-\infty}^{-\sqrt{z}}
    +\frac{2}{\sqrt{2 \pi}}\Int_{-\infty}^{-\sqrt{z}} e^{-\frac{1}{2}Z_i^2} dZ_{i}\\
    &\underset{(c)}{=} 
    \sqrt{\frac{2z}{\pi}} \cdot e^{-\frac{1}{2}z}
    +2\Phi \left(-\sqrt{z}\right),
\end{aligned}
\end{equation}
where $(a)$ uses the symmetry of a standard Gaussian distribution,
$(b)$ uses integration by parts
and in $(c)$ $\Phi(\cdot)$ stands for the standard Gaussian CDF.

Calculating the denominator,
\begin{equation}
\label{Eq: bad event probability}
    \p{Z_i^2>z}
    =\p{\abs{Z_i}>\sqrt{z}}
    =2\Phi \left(-\sqrt{z}\right).
\end{equation}
Using the symmetry of a standard Gaussian distribution and \eqref{Eq: Gaussian tail bounds lower bound} in Lemma \ref{Lemma: Gaussian tail bounds},
\begin{equation}
\label{Eq: bad event lower bound gaussian}
    \Phi \left(-\sqrt{z}\right)
    \geq \frac{1}{\sqrt{2\pi}} \frac{z-1}{z^{3/2}} e^{-\frac{1}{2}z}.
\end{equation}
Bounding the second moment given $\bar{\mathcal{E}}_{\theta}$,
\begin{align*}
    \E{Z_i^2 \mid Z_i^2>z}
    &=\frac{\E{Z_i^2 \cdot \I{Z_i^2>z}}}{\p{Z_i^2>z}}\\
    &\underset{(a)}{=}
    \frac{\sqrt{\frac{2z}{\pi}} \cdot e^{-\frac{1}{2}z}
    +2\Phi \left(-\sqrt{z}\right)}
    {2\Phi \left(-\sqrt{z}\right)}\\
    &=\frac{\sqrt{\frac{z}{2\pi}}
    \cdot e^{-\frac{1}{2}z}}
    {\Phi \left(-\sqrt{z}\right)} +1\\
    &\underset{(b)}{\leq}
    \frac{z^2}{z-1}+1\\
    &\underset{(c)}{\leq} z+3 \\
    &\underset{(c)}{\leq} 4z,
\end{align*}
where $(a)$ uses \eqref{Eq: indictaor bad event}, \eqref{Eq: bad event probability},
$(b)$ uses \eqref{Eq: bad event lower bound gaussian}
and $(c)$ uses $\ln \left(\frac{d^2T}{\delta} \right) \geq 2$

Plugging back to \eqref{Eq: bound theta},
\begin{equation}
    \E{\norm{\theta} \mid \bar{\mathcal{E}}_{\theta}}
    \leq 
    m+\sqrt{4\bar{\lambda}_{\Msigma{*}} d\sqrt{2\ln \left(\frac{d^2T}{\delta} \right)}}
    \leq \left( m+\sqrt{4\bar{\lambda}_{\Msigma{*}} \ln \left(\frac{d^2T}{\delta} \right)}\right) \sqrt{d}.
\end{equation}
Plugging into \eqref{Eq: bad event} the proof follows.
\end{proof}
Finally, the regret incurred under the bad event,
\begin{equation}
\label{Eq: regret incurred under the Bad event instances proof}
\begin{aligned}
    \E{\RQB{\app{\mu}_{\tau+1}}{\asigma_{\tau+1}}{T-\tau} \I{\bar{\mathcal{E}}}}
    &\underset{(a)}{\leq} 
    \underset{\theta}{\mathbb{E}} 
    \left[
    \underset{\mu,\Sigma,\amtau}
    {\max}
    \left\{\RQB{\mu}{\B{\Sigma}}{T-\tau}\right\} \;\middle|\; \bar{\mathcal{E}}_{\theta} \right]
    \E{\I{\bar{\mathcal{E}}}} \\
    &\leq 
    \frac{9\delta}{dT}
    \underset{\theta}{\mathbb{E}} 
    \left[
    \underset{\mu,\Sigma,\amtau}
    {\max}
    \left\{\RQB{\mu}{\B{\Sigma}}{T-\tau}\right\}
    \;\middle|\; \bar{\mathcal{E}}_{\theta} \right]\\
    &\underset{(b)}{\leq}
    \frac{9 c_{\text{bad}} \delta}{11\sqrt{d}},
\end{aligned}
\end{equation}
where $(a)$ uses that the events influence only the prior or the actions taken during the instance
and $(b)$ uses Lemma \ref{Lemma: Expected maximal regret incurred during a single instance}.
\subsection{A Demonstration of Theorem~\ref{Theorem: single instance regret}}
\label{Appendix: Theorem 1 implication}
We demonstrate that using Theorem~\ref{Theorem: single instance regret}, a $\QB$ algorithm with an adequate prior is a $(1$+$\alpha)$-\emph{approximation} of $\KQB$, by presenting a case where $k_1$ is constant, using the following values of $\fmd, \fsd, \tau$,
\begin{equation*}
    \tau=
    \max \left\{d, \frac{8a^2}{\lbar{\lambda}_{\Msigma{\A}}}\ln\left(\frac{d^2T}{\delta}\right)\right\}
    \; ; \;
    \norm{\app{\mu}-\mu_{*}} \leq \sqrt{\fmd \delta} = \sqrt{d^2 \delta}
    \; ; \;
    \normop{\asigma-\ssigma} \leq \sqrt{\fsd \delta} = \sqrt{\delta}.
\end{equation*}
The value for $\tau$ ensures that $\mathcal{E}_v$ occurs with probability larger that $1- \frac{\delta}{dT}$ by Lemma \ref{Lemma: Minimum eigenvalue of Gram matrix}. 
In order to find a valid value of $\delta$, we first bound $M$ (defined in Theorem \ref{Theorem: single instance regret}),
\begin{equation*}
    M \leq  \max \left\{
    \underset{\triangleq M_1}{\underbrace{
    3+c_s^2 \tau^2 \fsd}} \ ,\
    \underset{\triangleq M_2}{\underbrace{
    3+ 18 c_{\xi}^2 c_s \left( \fmd + \left( c_1 d + c_{\xi}^2 c_s/36 \right) \fsd \right)}}
    \right\}.
\end{equation*}
We begin by bounding $M_1,M_2$ separately,
\begin{align*}
    M_1
    &\underset{(a)}{\leq} 
    3+c_s^2 \left(d^2 +\left(\frac{8a^2}{\lbar{\lambda}_{\Msigma{\A}}}\right)^2 \ln^{2}\left(\frac{d^2T}{\delta}\right) \right)\\
    &\underset{(b)}{\leq}
    \underset{\triangleq b_{M_1}}{\underbrace{3+c_s^2 d^2 +2c_s^2\left(\frac{8a^2}{\lbar{\lambda}_{\Msigma{\A}}}\right)^2 \ln^{2}\left(d^2T\right)}}
    +\underset{\triangleq a_{M_1}}{\underbrace{
    2c_s^2 \left(\frac{8a^2}{\lbar{\lambda}_{\Msigma{\A}}}\right)^2}} \ln^{2}\left(\frac{1}{\delta}\right),\\
    M_2 
    &\underset{(c)}{\leq} 
    3 + 90
    \sigma^2
    \ln^2\left(\frac{dT}{\delta}\right) c_s \left( d^2 + \frac{4}{{\lbar{\lambda}_{\ssigma}}}
    d + \frac{5\sigma^2c_s}{36} \right)\\
    &\underset{(b)}{\leq}
    \underset{\triangleq b_{M_2}}{\underbrace{
    3 + 180
    \sigma^2 c_s \left( d^2 + \frac{4 d}{{\lbar{\lambda}_{\ssigma}}}
    + \frac{5\sigma^2c_s}{36} \right) \ln^2\left(dT\right)}}
    + \underset{\triangleq a_{M_2}}{\underbrace{180
    \sigma^2 c_s \left( d^2 + \frac{4 d}{{\lbar{\lambda}_{\ssigma}}}
    + \frac{5\sigma^2c_s}{36} \right) }} \ln^2\left(\frac{1}{\delta}\right)
\end{align*}
where $(a)$ uses that $\max \{a^2,b^2\} \leq a^2+b^2$,
$(b)$ uses the $(a+b)^2 \leq 2a^2 + 2b^2$
and $(c)$ uses that $\ln\left(\frac{dT}{\delta}\right) \geq 1$ and
$\ln\left(\frac{d^2T}{\delta}\right) \leq 2 \ln\left(\frac{dT}{\delta}\right)$.
Finally,
\begin{equation*}
    M \leq \max \left\{ M_1, M_2 \right\} \leq \left\{ b_{M_1} + a_{M_1} \ln^2\left(\frac{1}{\delta}\right), b_{M_2} + a_{M_2} \ln^2\left(\frac{1}{\delta}\right) \right\} \triangleq b_{M} + a_{M} \ln^2\left(\frac{1}{\delta}\right).
\end{equation*}
Noticing that $a_M,b_M \in \tO\left(d^2\right)$ and using the same derivations as in Appendix~\ref{Appendix: Derivation of of delta} we choose,
\begin{align*}
    \delta \triangleq \frac{1}{\left( 8a_{M}^{1/2}\ln{\left(4a_{M}^{1/2}\right)} + 2b_{M}^{1/2}\right)^{2}} \Rightarrow \begin{array}{l}
         \norm{\app{\mu}-\mu_{*}} \leq \sqrt{d^2 \delta} = \frac{d}{8a_{M}^{1/2}\ln{\left(4a_{M}^{1/2}\right)} + 2b_{M}^{1/2}} \in \tO(1) \\
         \normop{\asigma-\ssigma} \leq \sqrt{\delta} = \frac{1}{8a_{M}^{1/2}\ln{\left(4a_{M}^{1/2}\right)} + 2b_{M}^{1/2}}  \in \tO(\nicefrac{1}{d})
    \end{array}
\end{align*}
Furthermore, for $\delta$ defined above,
\begin{equation*}
    k_1 = 12\sqrt{c_{\xi}^2 c_s}
    \sqrt{\fmd \delta}
    +\left(c_s \tau + 12\sqrt{ c_{\xi}^2 c_s c_1 d }
    +2c_{\xi}^2 c_s \right)
    \sqrt{\fsd \delta}
    \in \tO(1),
\end{equation*}
The $\tO$ notation indicates that $k_1$ is at most polylogarithmic in $d,T$. We can cancel this dependence by choosing $\fmd=\nicefrac{d^2}{k_1^2}, \fsd=\nicefrac{1}{k_1^2}$ instead, while preserving the same value of $\delta$.
\section{PRIOR ESTIMATION ERROR}
\label{Appendix: Prior estimation error}
In the following section we prove that the prior formed by the $\MQB$ algorithm meets the events of Theorem~\ref{Theorem: single instance regret} with probability greater than $1 - \frac{8}{dnT}$.
\subsection{Good Event Definition and Proof}
\label{Appendix: Good event definition}
The proof of Lemma~\ref{Lemma: M-QB} requires the events $\mathcal{E}_{v}$ to hold for every instance $j \in [n]$ where each event is denoted by $\mathcal{E}_{v_j}$.
The $\MQB$ version of the events, based on the events in~\eqref{Eq: events for main theorem}, is defined as follows,
\begin{alignat}{2}
\label{eq:MBQt_events}
    \nonumber
    &\mathcal{E}_{v\text{($\MQB$)}} &&\triangleq
    \{\mathcal{E}_{v_j} \forall j \in [n]\},\\
    \nonumber
    &\mathcal{E}_{m\text{($\MQB$)}} &&\triangleq  \: \mathcal{E}_{m} \; \text{with } 
    \delta = \frac{1}{n-1}, \quad f_{m,n}=3\left(
    \frac{2\sigma^2}{\lbar{\lambda}_{\Msigma{\A}}d}
    +\bar{\lambda}_{\Msigma{*}}\right)
    \left(d + \ln\left(dnT\right)\right),\\
    &\mathcal{E}_{s\text{($\MQB$)}} &&\triangleq \: \mathcal{E}_{s} \; 
    \text{ with } 
    \delta = \frac{1}{n-1}, \quad f_{s,n}=
    100^2
    \left(\frac{2\sigma^2}{\lbar{\lambda}_{\Msigma{\A}}d}
    +\bar{\lambda}_{\Msigma{*}}
    \right)^2
    \left(5d + 2\ln\left(dnT\right)\right),\\
    \nonumber
    &\mathcal{E}_{n\text{($\MQB$)}} &&\triangleq \{\mathcal{E}_{v\text{($\MQB$)}} \ \cap \ \mathcal{E}_{m\text{($\MQB$)}} \ \cap \ \mathcal{E}_{s\text{($\MQB$)}}\}.
\end{alignat}
\lemmaMQB*
\begin{proof}
Using the union bound on Lemma \ref{Lemma: Minimum eigenvalue of Gram matrix} with $\delta=\nicefrac{1}{N^2}$ for all instances up to the $n_{th}$ instance,
\begin{equation*}
    \p{\mathcal{E}_{v(\MQB)}} \geq 1 - \frac{1}{dNT}.
\end{equation*}
Lemma \ref{Lemma: Mean estimation error} and Lemma \ref{Lemma: Covariance estimation error} for $n>5d+2\ln(dnT)$, with $\eta_{n}=\frac{1}{dnT }$ yield,
\begin{equation*}
    \p{\mathcal{E}_{m(\MQB)}} \geq 1 - \frac{1}{dnT}; \quad \p{\mathcal{E}_{s(\MQB)}} \geq 1 - \frac{6}{dnT}.
\end{equation*}
Using the union bound argument, for $n>5d+2\ln(dnT)$, $\p{\mathcal{E}_{n\text{($\MQB$)}}} \geq 1-\nicefrac{8}{dnT}$.

Next, using Lemma~\ref{Lemma: Log bound} with $a_1=2$, $b_1=5d + 2\ln(dT)$, we get that $n>10d+4\ln(16dT)$ implies $n>5d+2\ln(dnT)$.
\end{proof}
\subsection{Mean Estimation Error}
\label{Sec: Mean estimation error}
The mean estimation error originates from two different sources. 
First, after $n-1$ instances, the learner has interacted only with $n-1$ samples of the prior distribution $\N(\mu_*, \ssigma)$.
Second, at the end of each instance, she only has an estimator $\app{\theta}_j$ for the true value of each sample $\theta_j$. 
More formally,
\begin{align*}
    \norm{\app{\mu}_n-\mu_{*}}
    =\norm{\frac{1}{n-1} \sum_{j=1}^{n-1} \left(  \app{\theta}_j - \mu_{*} \right)} =\norm{\frac{1}{n-1} \sum_{j=1}^{n-1} \left(  
    \underset{\triangleq \rho_j}
    {\underbrace{
    \app{\theta}_j - \theta_j}} +\underset{\triangleq  \Delta_j}
    {\underbrace{
    \theta_j - \mu_{*} }}
    \right)}.
\end{align*}
In order to bound the mean estimation error we first prove that each \emph{inner instance error} $\rho_j$ is unbiased and sub-Gaussian.
Then, we show that a \emph{single instance error} $\app{\theta}_j - \mu_*$ is sub-Gaussian as well. 
\begin{lemma}
\label{Lemma: Unbiased inner instance error}
(Unbiasedness of the inner instance error)

Under the event $\mathcal{E}_{v\text{($\MQB$)}}$, for every instance $j \in [n]$,
\begin{equation*}
    \E{\rho_{j}} = \E{\app{\theta}_j - \theta_j} = 0.
\end{equation*}
\end{lemma}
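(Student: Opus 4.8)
The plan is to compute the conditional expectation of $\rho_j = \app{\theta}_j - \theta_j$ given the actions $\{A_{j,s}\}_{s=1}^\tau$ taken during the exploration phase, and then argue that this conditional expectation is zero regardless of the actions (as long as $\B{V}_{j,\tau}$ is invertible, which is guaranteed under $\mathcal{E}_{v(\MQB)}$). First I would recall from the definition of the OLS estimator in \eqref{Eq: app theta_n} and the reward model $x_{j,s} = A_{j,s}^\top \theta_j + \xi_{j,s}$ that
\begin{equation*}
    \app{\theta}_j = \B{V}_{j,\tau}^{-1} \B{A}_{j,\tau}^\top X_{j,\tau} = \B{V}_{j,\tau}^{-1} \B{A}_{j,\tau}^\top \left( \B{A}_{j,\tau} \theta_j + \Xi_{j,\tau} \right) = \theta_j + \B{V}_{j,\tau}^{-1} \B{A}_{j,\tau}^\top \Xi_{j,\tau},
\end{equation*}
so that $\rho_j = \B{V}_{j,\tau}^{-1} \B{A}_{j,\tau}^\top \Xi_{j,\tau} = \B{V}_{j,\tau}^{-1} S_{j,\tau}$, matching the notation in the notation table.

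Next I would condition on the exploration actions $\B{A}_{j,\tau}$. During the first $\tau$ steps the actions are chosen uniformly at random, independently of the rewards and hence independently of the noise terms $\Xi_{j,\tau}$ and of $\theta_j$. Therefore, given $\B{A}_{j,\tau}$, the matrix $\B{V}_{j,\tau}^{-1} \B{A}_{j,\tau}^\top$ is a fixed (deterministic) matrix, and
\begin{equation*}
    \E{\rho_j \mid \B{A}_{j,\tau}} = \B{V}_{j,\tau}^{-1} \B{A}_{j,\tau}^\top \E{\Xi_{j,\tau} \mid \B{A}_{j,\tau}} = \B{V}_{j,\tau}^{-1} \B{A}_{j,\tau}^\top \E{\Xi_{j,\tau}} = 0,
\end{equation*}
using that each $\xi_{j,s} \sim \N(0,\sigma^2)$ has zero mean and is independent of the actions. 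On the event $\mathcal{E}_{v(\MQB)}$ we have $\lmin{\B{V}_{j,\tau}} \geq \lbar{\lambda}_{\Msigma{\A}} d / 2 > 0$, so $\B{V}_{j,\tau}^{-1}$ is well-defined and the above manipulation is legitimate. Taking the outer expectation over $\B{A}_{j,\tau}$ (restricted to the event, which does not affect the vanishing of the conditional expectation since it holds for every realization of the actions for which $\B{V}_{j,\tau}$ is invertible) gives $\E{\rho_j} = 0$.

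The argument is essentially a one-line tower-property computation, so there is no real obstacle; the only subtlety worth spelling out carefully is the independence claim — that the exploration actions carry no information about the noise — which follows directly from the structure of Algorithm~\ref{Alg: general QB}, where for $t \le \tau$ the action $A_{j,t}$ is drawn uniformly from $\A_{j,t}$ without reference to past rewards. I would also make explicit that conditioning on $\mathcal{E}_{v(\MQB)}$ only serves to guarantee invertibility of $\B{V}_{j,\tau}$ and does not introduce any selection bias into $\Xi_{j,\tau}$, since $\mathcal{E}_{v(\MQB)}$ is a function of the actions alone. This lemma then feeds into the sub-Gaussianity bound for $\rho_j$ and ultimately into the mean estimation error bound of Lemma~\ref{Lemma: Mean estimation error}.
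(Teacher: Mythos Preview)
Your proposal is correct and takes essentially the same approach as the paper: both expand $\rho_j = \B{V}_{j,\tau}^{-1}\B{A}_{j,\tau}^\top \Xi_{j,\tau}$ and exploit the independence of the exploration actions from the noise terms to conclude the expectation vanishes. The only cosmetic difference is that you use the tower property (condition on $\B{A}_{j,\tau}$, then average), whereas the paper factors $\E{\B{V}_{j,\tau}^{-1}A_{j,s}\,\xi_{j,s}} = \E{\B{V}_{j,\tau}^{-1}A_{j,s}}\E{\xi_{j,s}}$ directly; your added remark that $\mathcal{E}_{v(\MQB)}$ depends only on the actions and hence introduces no selection bias on $\Xi_{j,\tau}$ is a nice clarification the paper leaves implicit.
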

\begin{proof}
\begin{align*}
    \E{\rho_{j}}
    &=\E{\app{\theta}_j - \theta_j}\\
    &=\E{\B{V}^{-1}_{j,\tau} \sum_{s=1}^{\tau} A_{j,s}x_{j,s} - \theta_{j}}\\
    &=\E{\B{V}^{-1}_{j,\tau} \sum_{s=1}^{\tau} A_{j,s} \left( A_{j,s}^{\top} \theta_{j} +\xi_{j,s}\right)  - \theta_{j}}\\
    &=\E{\B{V}^{-1}_{j,\tau} \sum_{s=1}^{\tau} A_{j,s}  \xi_{j,s}}\\
    &=\sum_{s=1}^{\tau} \E{\B{V}^{-1}_{j,\tau} A_{j,s}  \xi_{j,s}}\\
    &\underset{(a)}{=}
    \sum_{s=1}^{\tau} \E{\B{V}^{-1}_{j,\tau} A_{j,s}} \E{\xi_{j,s}}\\
    &\underset{(b)}{=}
    0,
\end{align*}
where $(a)$ uses that the actions taken in the first $\tau$ time-steps are independent from the noise terms and $(b)$ uses that $\xi_{j,s}$ is a zero-mean noise.
\end{proof}
\begin{lemma}
\label{Lemma: Sub-Gaussian inner instance error}
(Sub-Gaussianity of the inner instance error)

Under the event $\mathcal{E}_{v\text{($\MQB$)}}$, for every instance $j \in [n]$, $\rho_j$ is a $\sqrt{\frac{2\sigma^2}
{\lbar{\lambda}_{\Msigma{\A}}d}}$ sub-Gaussian vector.
\begin{proof}
For any $s \in \R$ and $U \in \R^{d}$ s.t. $\norm{U}=1$
\begin{align*}
    \E{\exp \left(sU^{\top} \rho_j \right)}
    &=\E{\exp \left(sU^{\top}\B{V}^{-1}_{j,\tau} \sum_{s=1}^{\tau} A_{j,s} \xi_{j,s} \right)}\\
    &\underset{(a)}{=}
    \underset{\B{A}_{j,\tau}}{\mathbb{E}}
    \left[
    \E{\exp \left(sU^{\top}\B{V}^{-1}_{j,\tau} \sum_{s=1}^{\tau} A_{j,s} \xi_{j,s} \right)}
    \;\middle|\; \B{A}_{j,\tau}\right]\\
    &\underset{(b)}{\leq}
    \E{\exp \left(\frac{s^2 \sigma^2}{2}
    \sum_{s=1}^{\tau}
    \left(U^{\top}\B{V}^{-1}_{j,\tau} 
    A_{j,s}\right)^2 \right)}\\
    &=
    \E{\exp \left(\frac{s^2 \sigma^2}{2}
    \sum_{s=1}^{\tau}
    A_{j,s}^{\top}
    \B{V}^{-1}_{j,\tau} U
    U^{\top}\B{V}^{-1}_{j,\tau} A_{j,s}
    \right)}\\
    &\underset{(c)}{=}
    \E{\exp \left(\frac{s^2 \sigma^2}{2}
    \Tr{\sum_{s=1}^{\tau} 
    A_{j,s} A_{j,s}^{\top}
    \B{V}^{-1}_{j,\tau} U
    U^{\top}\B{V}^{-1}_{j,\tau} }\right)}\\
    &\underset{(c)}{=}
    \E{\exp \left(\frac{s^2 \sigma^2}{2}
    U^{\top}\B{V}^{-1}_{j,\tau} U \right)}\\
    &\underset{(d)}{\leq}
    \exp \left(
    \frac{s^2}{2}
    \cdot \frac{2\sigma^2}
    {\lbar{\lambda}_{\Msigma{\A}}d} \right),
\end{align*}
where $(a)$ uses the law of total expectation,
$(b)$ uses that the actions taken in the first $\tau$ time-steps are independent from the noise terms, the MGF of a Gaussian variable and the law of total expectation,
$(c)$ uses the linearity and the product properties of the trace
and $(d)$ uses Lemma \ref{Lemma: maximal singularvalue inequality} and the event $\mathcal{E}_{v\text{($\MQB$)}}$.
\end{proof}
\end{lemma}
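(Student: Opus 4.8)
The plan is to verify directly the defining inequality of a $\sigma_0$-sub-Gaussian vector with $\sigma_0 = \sqrt{2\sigma^2/(\lbar{\lambda}_{\Msigma{\A}}d)}$: fix an arbitrary unit vector $U \in \R^d$ and an arbitrary scalar $s \in \R$, and show that $\E{\exp(s\,U^\top \rho_j)} \leq \exp\!\left(\tfrac{s^2}{2}\cdot\tfrac{2\sigma^2}{\lbar{\lambda}_{\Msigma{\A}}d}\right)$ whenever $\mathcal{E}_{v\text{($\MQB$)}}$ holds. The structural fact that makes this tractable is that within the first $\tau$ exploration steps the actions $A_{j,1},\dots,A_{j,\tau}$ are drawn independently of the reward noise $\xi_{j,1},\dots,\xi_{j,\tau}$; moreover the event $\mathcal{E}_{v_j}$ (hence $\mathcal{E}_{v\text{($\MQB$)}}$) is a function of $\B{V}_{j,\tau} = \B{A}_{j,\tau}^\top\B{A}_{j,\tau}$ alone. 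So I would condition on $\B{A}_{j,\tau}$ and apply the tower property, which is legitimate because the conditioning variable and the restricting event live on the same $\sigma$-algebra.

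Conditioned on $\B{A}_{j,\tau}$, the scalar $U^\top \rho_j = \sum_{s=1}^{\tau}\big(U^\top \B{V}_{j,\tau}^{-1} A_{j,s}\big)\xi_{j,s}$ is a fixed linear combination of the independent $\N(0,\sigma^2)$ variables $\xi_{j,s}$, hence a centered Gaussian with variance $\sigma^2 \sum_{s=1}^{\tau}\big(U^\top \B{V}_{j,\tau}^{-1} A_{j,s}\big)^2$. I would collapse this variance using the trace identity $\big(U^\top \B{V}_{j,\tau}^{-1} A_{j,s}\big)^2 = \Tr{A_{j,s}A_{j,s}^\top \B{V}_{j,\tau}^{-1}UU^\top \B{V}_{j,\tau}^{-1}}$; summing over $s$, linearity of the trace together with $\sum_s A_{j,s}A_{j,s}^\top = \B{V}_{j,\tau}$ reduce it to $\Tr{UU^\top \B{V}_{j,\tau}^{-1}} = U^\top \B{V}_{j,\tau}^{-1} U$. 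Plugging into the Gaussian moment generating function yields $\E{\exp(s\,U^\top \rho_j) \mid \B{A}_{j,\tau}} = \exp\!\left(\tfrac{s^2\sigma^2}{2}\,U^\top \B{V}_{j,\tau}^{-1} U\right)$.

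To finish, on $\mathcal{E}_{v\text{($\MQB$)}}$ one has $\lmin{\B{V}_{j,\tau}} \geq \lbar{\lambda}_{\Msigma{\A}}d/2$ for every $j \in [n]$, so by Lemma~\ref{Lemma: maximal singularvalue inequality} the quadratic form is bounded by the largest eigenvalue of $\B{V}_{j,\tau}^{-1}$, namely $U^\top \B{V}_{j,\tau}^{-1} U \leq 1/\lmin{\B{V}_{j,\tau}} \leq 2/(\lbar{\lambda}_{\Msigma{\A}}d)$. Since this bound on the conditional MGF is a constant independent of the realization of $\B{A}_{j,\tau}$, taking the expectation over the exploration actions (on the event $\mathcal{E}_{v\text{($\MQB$)}}$) preserves it and gives $\E{\exp(s\,U^\top \rho_j)} \leq \exp\!\left(\tfrac{s^2}{2}\cdot\tfrac{2\sigma^2}{\lbar{\lambda}_{\Msigma{\A}}d}\right)$, which is exactly the claim. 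I do not expect a genuine obstacle here: the only delicate point is the measurability bookkeeping that justifies conditioning on $\B{A}_{j,\tau}$ under the event $\mathcal{E}_{v\text{($\MQB$)}}$, and the rest is the routine Gaussian-MGF-plus-trace computation already used for the analogous concentration bounds in the paper.
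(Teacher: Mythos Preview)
Your proposal is correct and follows essentially the same route as the paper's proof: condition on the exploration actions $\B{A}_{j,\tau}$, use that $U^\top\rho_j$ is then a centered Gaussian with variance $\sigma^2\sum_s(U^\top\B{V}_{j,\tau}^{-1}A_{j,s})^2$, collapse the sum via the trace identity $\sum_s A_{j,s}A_{j,s}^\top=\B{V}_{j,\tau}$ to $\sigma^2 U^\top\B{V}_{j,\tau}^{-1}U$, and bound this quadratic form by $2/(\lbar{\lambda}_{\Msigma{\A}}d)$ using the event $\mathcal{E}_{v\text{($\MQB$)}}$ together with Lemma~\ref{Lemma: maximal singularvalue inequality}. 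The only cosmetic difference is that the paper records the post-MGF step as an inequality while you (correctly) note it is an equality for exactly Gaussian noise.
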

\begin{lemma}
\label{Lemma: Sub-Gaussian single instance error}
(Sub-Gaussianity of the single instance error)

Under the event $\mathcal{E}_{v\text{($\MQB$)}}$, for every instance $j \in [n]$, $\app{\theta}_j - \mu_{*}$ is a $\sqrt{\frac{2\sigma^2}
{\lbar{\lambda}_{\Msigma{\A}}d}
+\bar{\lambda}_{\Msigma{*}}}
$ sub-Gaussian vector.
\begin{proof}
For any $s \in \R$ and $U \in \R^{d}$ s.t. $\norm{U}=1$
\begin{equation}
\begin{aligned}
\label{Eq: SubGaussian - A}
    \E{\exp \left(sU^{\top} \left(\app{\theta}_j - \mu_{*} \right)\right)}
    &\underset{(a)}{=}
    \E{\exp \left(sU^{\top} \rho_j \right)
    \exp \left(sU^{\top} \Delta_j\right)}\\
    &\underset{(b)}{=}
    \E{\exp \left(sU^{\top} \rho_j \right)}
    \E{\exp \left(sU^{\top} \Delta_j\right)}\\
    &\underset{(c)}{\leq}
    \exp \left( \frac{s^2}{2} \cdot \frac{2\sigma^2}{\lbar{\lambda}_{\Msigma{\A}}d} \right)
    \exp \left(\frac{s^{2}U^{\top}\Msigma{*}U}
    {2}\right)\\
    &\underset{(d)}{\leq}\exp \left( \frac{s^2}{2}
    \left(\frac{2\sigma^2}
    {\lbar{\lambda}_{\Msigma{\A}}d}
    +\bar{\lambda}_{\Msigma{*}}\right)
    \right),
\end{aligned}
\end{equation}
where $(a)$ uses the definitions of $\rho_j,\Delta_j$, 
$(b)$ uses that the actions taken during the first $\tau$ time-steps are independent of the noise terms and $\theta_j$, $(c)$ uses the MGF of the Gaussian variable $U^{\top} \Delta_j$ and Lemma~\ref{Lemma: Sub-Gaussian inner instance error} and $(d)$ uses Lemma \ref{Lemma: maximal singularvalue inequality}, together with Assumption~\ref{Assumption: eigenvalues prior Covariance matrix}.
\end{proof}
\end{lemma}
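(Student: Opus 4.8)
The plan is to exploit the additive decomposition $\app{\theta}_j-\mu_*=\rho_j+\Delta_j$ already isolated above, where $\rho_j=\app{\theta}_j-\theta_j=\B{V}_{j,\tau}^{-1}\B{A}_{j,\tau}^{\top}\Xi_{j,\tau}$ is the inner-instance OLS error and $\Delta_j=\theta_j-\mu_*\sim\N(0,\ssigma)$. Since the moment generating function of a sum of \emph{independent} sub-Gaussian vectors factorizes, the variance proxy of $\app{\theta}_j-\mu_*$ is controlled by the sum of the two individual variance proxies, so it suffices to (i) establish that $\rho_j$ and $\Delta_j$ are independent, (ii) invoke Lemma~\ref{Lemma: Sub-Gaussian inner instance error} for $\rho_j$, and (iii) bound the Gaussian term $\Delta_j$ directly.

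First I would establish independence. During the first $\tau$ exploration steps the actions $\{A_{j,s}\}_{s=1}^{\tau}$ are drawn uniformly at random, hence are independent of $\theta_j$ and of the noise vector $\Xi_{j,\tau}$, while $\theta_j$ is independent of $\Xi_{j,\tau}$ by the model assumption; since $\rho_j$ is a measurable function of $(\B{A}_{j,\tau},\Xi_{j,\tau})$ only and $\Delta_j$ of $\theta_j$ only, it follows that $\rho_j$ and $\Delta_j$ are independent. Consequently, for any unit vector $U$ and any $s\in\R$, $\E{\exp(sU^{\top}(\app{\theta}_j-\mu_*))}=\E{\exp(sU^{\top}\rho_j)}\,\E{\exp(sU^{\top}\Delta_j)}$. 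For the first factor, Lemma~\ref{Lemma: Sub-Gaussian inner instance error} gives, under $\mathcal{E}_{v\text{($\MQB$)}}$, that $\E{\exp(sU^{\top}\rho_j)}\le\exp\big(\frac{s^2}{2}\cdot\frac{2\sigma^2}{\lbar{\lambda}_{\Msigma{\A}}d}\big)$. For the second, $U^{\top}\Delta_j$ is a centred Gaussian scalar with variance $U^{\top}\ssigma U$, which is at most $\bar{\lambda}_{\Msigma{*}}$ since $\ssigma\preceq\bar{\lambda}_{\Msigma{*}}\B{I}$ by Assumption~\ref{Assumption: eigenvalues prior Covariance matrix}; hence $\E{\exp(sU^{\top}\Delta_j)}=\exp(\frac{s^2}{2}U^{\top}\ssigma U)\le\exp(\frac{s^2}{2}\bar{\lambda}_{\Msigma{*}})$. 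Multiplying the two bounds gives $\E{\exp(sU^{\top}(\app{\theta}_j-\mu_*))}\le\exp\big(\frac{s^2}{2}\big(\frac{2\sigma^2}{\lbar{\lambda}_{\Msigma{\A}}d}+\bar{\lambda}_{\Msigma{*}}\big)\big)$, which is exactly the claimed sub-Gaussianity with parameter $\sqrt{\frac{2\sigma^2}{\lbar{\lambda}_{\Msigma{\A}}d}+\bar{\lambda}_{\Msigma{*}}}$.

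The step I expect to require the most care is the interaction between conditioning on $\mathcal{E}_{v\text{($\MQB$)}}$ and the independence/factorization argument: the conditioning is needed so that $\B{V}_{j,\tau}$ is invertible and Lemma~\ref{Lemma: Sub-Gaussian inner instance error} applies, but one must check that it does not couple $\Xi_{j,\tau}$ with $\theta_j$. This holds because $\mathcal{E}_{v\text{($\MQB$)}}$ is measurable with respect to the exploration-action randomness alone, so the product structure of the MGF over $\Xi_{j,\tau}$ and $\theta_j$ survives the conditioning, exactly as in the proof of Lemma~\ref{Lemma: Sub-Gaussian inner instance error}. Everything else is a routine moment-generating-function computation.
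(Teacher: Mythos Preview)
Your proposal is correct and follows essentially the same approach as the paper: decompose $\app{\theta}_j-\mu_*=\rho_j+\Delta_j$, use independence of the exploration actions, noise, and $\theta_j$ to factorize the MGF, then apply Lemma~\ref{Lemma: Sub-Gaussian inner instance error} for $\rho_j$ and the Gaussian MGF together with Assumption~\ref{Assumption: eigenvalues prior Covariance matrix} for $\Delta_j$. Your explicit discussion of why conditioning on $\mathcal{E}_{v\text{($\MQB$)}}$ does not break the factorization is a point the paper leaves implicit, but otherwise the arguments coincide.
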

The following lemma bounds the mean estimation error of $\MQB$ algorithm with high probability.
\begin{lemma}
\label{Lemma: Mean estimation error} (Mean estimation error)

For every $\eta_{n} > 0$ and for every instance $n>1$,
\begin{equation*}
    \p{\mathcal{E}_{m\text{($\MQB$)}} \mid \mathcal{E}_{v\text{($\MQB$)}}} \geq 1 -\eta_{n}.
\end{equation*}
\begin{proof}
We start by proving that $\app{\mu}_n - \mu_{*}$ is a $\sqrt{\frac{2\sigma^2+\lbar{\lambda}_{\Msigma{\A}}\bar{\lambda}_{\Msigma{*}}d}
    {\lbar{\lambda}_{\Msigma{\A}}d(n-1)}}$ sub-Gaussian vector.
For any $s \in \R$ and $U \in \R^{d}$ s.t. $\norm{U}=1$,
\begin{align*}
    \E{\exp \left(s
    U^{\top}
    \left(\app{\mu}_n - \mu_{*} \right)\right)}
    &=\E{\exp \left(\frac{s}{n-1}
    U^{\top}
    \sum_{j=1}^{n-1}
    \left(\app{\theta}_j - \mu_{*} \right)\right)}\\
    &\underset{(a)}{=} 
    \prod_{j=1}^{n-1}{
    \E{\exp \left(\frac{s}{n-1}
    U^{\top}
    \left(\rho_j + \Delta_j \right)\right)}}\\
    &\underset{(b)}{\leq}\exp \left( \frac{s^2}{2} \cdot
    \frac{2\sigma^2+\lbar{\lambda}_{\Msigma{\A}}\bar{\lambda}_{\Msigma{*}}d}
    {\lbar{\lambda}_{\Msigma{\A}}d(n-1)}
    \right),
\end{align*}
where $(a)$ uses that the actions taken during the first $\tau$ time-steps are independent of the noise terms, $\theta_j$ and the inner instance errors of other instances
and $(b)$ uses the same steps as in \eqref{Eq: SubGaussian - A}.

From Lemma \ref{Lemma: Concentration bound SubGaussian vector},
\begin{equation*}
    \p{\norm{\app{\mu}_n - \mu_{*}}^2 > 
    \frac{2\sigma^2+\lbar{\lambda}_{\Msigma{\A}}\bar{\lambda}_{\Msigma{*}}d}
    {\lbar{\lambda}_{\Msigma{\A}}d(n-1)}
    \cdot \left( d+2\sqrt{d\ln \left(\nicefrac{1}{\eta_{n}} \right)}+2\ln \left(\nicefrac{1}{\eta_{n}} \right)\right)}
    \leq \eta_{n}
\end{equation*}
and using the inequality of arithmetic and geometric means, $\sqrt{d \ln \left(\nicefrac{1}{\eta_n} \right)} \leq (d +\ln\left(\nicefrac{1}{\eta_n} \right)) / 2$, the proof follows.
\end{proof}
\end{lemma}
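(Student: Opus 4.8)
The plan is to establish that the centered meta-estimator $\app{\mu}_n - \mu_*$ is a mean-zero sub-Gaussian random vector whose proxy variance decays like $1/(n-1)$, and then to feed this into the norm-squared tail bound for sub-Gaussian vectors. First I would write $\app{\mu}_n - \mu_* = \frac{1}{n-1}\sum_{j=1}^{n-1}(\app{\theta}_j - \mu_*) = \frac{1}{n-1}\sum_{j=1}^{n-1}(\rho_j + \Delta_j)$, splitting each single-instance error into the inner-instance error $\rho_j = \app{\theta}_j - \theta_j$ and the prior fluctuation $\Delta_j = \theta_j - \mu_*$. The structural observation that makes everything go through is that, across instances, the triples $\big(\theta_j, \{\xi_{j,s}\}_{s\le\tau}, \{A_{j,s}\}_{s\le\tau}\big)$ are mutually independent --- the action sets are i.i.d., the exploration actions come from fresh independent randomization, and $\theta_j$ and the noise are i.i.d.\ --- so the summands $\app{\theta}_j - \mu_*$ are independent across $j$, even though the meta-prior $\wsigma_j$ employed within instance $j$ depends on instances $1,\dots,j-1$. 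Conditioning on $\mathcal{E}_{v\text{($\MQB$)}}$, so that every $\B{V}_{j,\tau}$ is well-conditioned (which is exactly what the sub-Gaussian bound requires), Lemma~\ref{Lemma: Sub-Gaussian single instance error} gives that each $\app{\theta}_j - \mu_*$ is $\sqrt{\frac{2\sigma^2}{\lbar{\lambda}_{\Msigma{\A}}d} + \bar{\lambda}_{\Msigma{*}}}$-sub-Gaussian with mean zero.

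Next I would bound the moment generating function of $s\,U^\top(\app{\mu}_n - \mu_*)$ for a fixed unit vector $U$: by independence it factorizes into $n-1$ single-instance MGFs, each controlled by the previous step with $s$ rescaled to $s/(n-1)$, and summing the exponents shows $\app{\mu}_n - \mu_*$ is $\frac{1}{\sqrt{n-1}}\sqrt{\frac{2\sigma^2}{\lbar{\lambda}_{\Msigma{\A}}d} + \bar{\lambda}_{\Msigma{*}}}$-sub-Gaussian (that is, averaging $n-1$ independent $c$-sub-Gaussian vectors yields a $c/\sqrt{n-1}$-sub-Gaussian vector). I would then apply the norm-concentration inequality for sub-Gaussian vectors (Lemma~\ref{Lemma: Concentration bound SubGaussian vector}), which states $\p{\norm{Y}^2 > \sigma_Y^2\big(d + 2\sqrt{d\ln(1/\eta_n)} + 2\ln(1/\eta_n)\big)} \le \eta_n$, here with $\sigma_Y^2 = \frac{1}{n-1}\big(\frac{2\sigma^2}{\lbar{\lambda}_{\Msigma{\A}}d} + \bar{\lambda}_{\Msigma{*}}\big)$.

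Finally I would loosen the threshold by AM--GM, $2\sqrt{d\ln(1/\eta_n)} \le d + \ln(1/\eta_n)$, giving $d + 2\sqrt{d\ln(1/\eta_n)} + 2\ln(1/\eta_n) \le 2d + 3\ln(1/\eta_n) \le 3\big(d + \ln(1/\eta_n)\big)$; hence, on $\mathcal{E}_{v\text{($\MQB$)}}$, with probability at least $1-\eta_n$ we get $\norm{\app{\mu}_n - \mu_*}^2 \le \frac{3}{n-1}\big(\frac{2\sigma^2}{\lbar{\lambda}_{\Msigma{\A}}d} + \bar{\lambda}_{\Msigma{*}}\big)\big(d + \ln(1/\eta_n)\big)$, which is precisely $\mathcal{E}_m$ with $\delta = 1/(n-1)$ and $f_m = 3\big(\frac{2\sigma^2}{\lbar{\lambda}_{\Msigma{\A}}d} + \bar{\lambda}_{\Msigma{*}}\big)\big(d + \ln(1/\eta_n)\big)$ --- this coincides with the stated event $\mathcal{E}_{m\text{($\MQB$)}}$ once the application in Lemma~\ref{Lemma: M-QB} substitutes $\eta_n = 1/(dnT)$. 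I expect the only genuine subtlety to be the careful justification of the cross-instance independence --- that conditioning on the past, through the adaptive meta-prior, leaves the conditional law of $\app{\theta}_j - \mu_*$ given $\mathcal{E}_{v\text{($\MQB$)}}$ intact, since the exploration statistics are driven by independent fresh randomization --- together with correctly tracking the $1/\sqrt{n-1}$ scaling of the proxy variance under averaging; the concentration step and the closing algebra are routine.
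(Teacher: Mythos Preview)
Your proposal is correct and follows essentially the same approach as the paper: decompose $\app{\mu}_n-\mu_*$ as the average of the independent $\app{\theta}_j-\mu_*=\rho_j+\Delta_j$, invoke Lemma~\ref{Lemma: Sub-Gaussian single instance error} to get the per-instance sub-Gaussian parameter, factorize the MGF to pick up the $1/\sqrt{n-1}$ scaling, apply Lemma~\ref{Lemma: Concentration bound SubGaussian vector}, and simplify the threshold via AM--GM. The paper's proof is the same argument with slightly terser bookkeeping.
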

\subsection{Covariance Estimation Error}
\label{Sec: Covariance estimation error}
In this section, we bound the estimation error of the estimated covariance matrix $\asigma_n$ (Eq.~\eqref{Eq: app Sigma_n}), WRT the true covariance matrix $\ssigma$, under the operator norm.
\subsubsection{Covariance Estimation Error Decomposition}
Define,
\begin{equation}
\label{Eq: Sigma}
    \B{\Sigma} \triangleq \frac{\sigma^2}{n-1}
    \sum_{j=1}^{n-1} \E{\B{V}^{-1}_{j,\tau}} +\ssigma, 
\end{equation}
\begingroup
\begin{align}
    \nonumber
    & \normop{\asigma_n - \ssigma} \\
    \nonumber
    & = \normop{
    \frac{1}{n-2} \sum_{j=1}^{n-1}
    \left(\app{\theta}_{j} - \app{\mu}_{n}\right)
    \left(\app{\theta}_{j} - \app{\mu}_{n}\right)^{\top}
    -\frac{\sigma^2}{n-1} \sum_{j=1}^{n-1} \B{V}^{-1}_{j,\tau}
    -\ssigma}\\
    \nonumber
    & \leq  \normop{ 
    \frac{1}{n-2} \sum_{j=1}^{n-1}
    \left(\app{\theta}_{j} - \app{\mu}_{n}\right)
    \left(\app{\theta}_{j} - \app{\mu}_{n}\right)^{\top}
    -\B{\Sigma}}
    +\normop{
     \frac{\sigma^2}{n-1} \left(\sum_{j=1}^{n-1}  \B{V}^{-1}_{j,\tau} - \sum_{j=1}^{n-1} \E{\B{V}^{-1}_{j,\tau}}\right) 
    } \\
    \nonumber
    & \underset{(a)}{=}         
    \normop{
    \frac{1}{n-2} \sum_{j=1}^{n-1}
    \left(\app{\theta}_{j} - \mu_{*}\right)
    \left(\app{\theta}_{j} - \mu_{*}\right)^{\top} 
    -\frac{n-1}{n-2}
    \left(\app{\mu}_{n} - \mu_{*}\right)
    \left(\app{\mu}_{n} - \mu_{*}\right)^{\top} 
    -\B{\Sigma}} \\
    \nonumber
    & \quad \quad \quad +\normop{
     \frac{\sigma^2}{n-1} \left(\sum_{j=1}^{n-1}  \B{V}^{-1}_{j,\tau} - \sum_{j=1}^{n-1} \E{\B{V}^{-1}_{j,\tau}}\right) 
    } \\
    \nonumber
    & \leq 
    \frac{n-1}{n-2}
    \underset{\text{Term A}}{\underbrace{\normop{
    \frac{1}{n-1}\sum_{j=1}^{n-1}
    \left(\app{\theta}_{j} - \mu_{*}\right)
    \left(\app{\theta}_{j} - \mu_{*}\right)^{\top}
    -\B{\Sigma}}}}
    + \frac{n-1}{n-2} \underset{\text{Term B}}{\underbrace{\normop{ \frac{1}{n-1} \sum_{j=1}^{n-1}
    \left(\app{\mu}_{n} - \mu_{*}\right)
    \left(\app{\mu}_{n} - \mu_{*}\right)^{\top} 
    -\frac{1}{n-1} \B{\Sigma}
    }}}\\
    \label{Eq: decompose covariance bound}
    & \quad\quad\quad 
    +\underset{\text{Term C}}{\underbrace{\normop{
     \frac{\sigma^2}{n-1} \left(\sum_{j=1}^{n-1}  \B{V}^{-1}_{j,\tau} - \sum_{j=1}^{n-1} \E{\B{V}^{-1}_{j,\tau}}\right) 
    }}},
\end{align}
\endgroup
where $(a)$ uses $\app{\mu}_n=\frac{1}{n-1}\sum_{j=1}^{n-1}\app{\theta}_j$ and both inequalities come from the triangle inequality.    
   
We bound each of the three terms separately, using Theorem 6.5 in \cite{wainwright2019high} for terms A and B. For convenience, we state it here.
\begin{lemma}
\label{Lemma: Empirical covariance bounds}
(Empirical covariance bounds, Theorem 6.5 in \cite{wainwright2019high}, constants were taken from \cite{bastani2019meta}).

For any row-wise $\sigma$ sub-Gaussian random matrix $\B{X} \in \mathbb{R}^{n \times d}$, the sample covariance matrix $\asigma = \frac{1}{n} 
\sum_{i=1}^{n} X_i X_i^{\top}$ satisfies the bound
\begin{equation*}
    \p{\normop{\asigma-\B{\Sigma}}
    \geq 32\sigma^2 \cdot 
    \max \left\{
    \sqrt{\frac{5d + 2\ln\left(\frac{2}{\delta}\right)}
    {n}},
    \frac{5d + 2\ln\left(\frac{2}{\delta}\right)}{n}
    \right\}}
    \leq \delta
    \quad \forall \; 0 < \delta < 1.
\end{equation*}
\end{lemma}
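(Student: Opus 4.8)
This statement is exactly Theorem~6.5 of \cite{wainwright2019high} with the explicit constants recorded in \cite{bastani2019meta}, so in the paper I would simply invoke it; for completeness, the proof I would reconstruct follows the standard discretization-plus-concentration template. Since $\asigma - \B{\Sigma}$ is symmetric, $\normop{\asigma - \B{\Sigma}} = \sup_{\norm{u}=1}\abs{u^{\top}(\asigma - \B{\Sigma})u}$. First I would fix a $1/4$-net $\mathcal{N}$ of the unit sphere in $\R^d$; a volumetric bound gives $\abs{\mathcal{N}} \leq 9^d$, and a routine covering argument shows that for any symmetric $\B{M}$ one has $\normop{\B{M}} \leq 2\max_{u \in \mathcal{N}}\abs{u^{\top}\B{M}u}$. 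This reduces the operator-norm deviation to a finite maximum of scalar quadratic forms.

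The second step controls a single quadratic form. For a fixed unit vector $u$, $u^{\top}\asigma u = \frac{1}{n}\sum_{i=1}^{n}(u^{\top}X_i)^2$ has mean $u^{\top}\B{\Sigma}u$, and since each $u^{\top}X_i$ is $\sigma$-sub-Gaussian, the centered variables $(u^{\top}X_i)^2 - \E{(u^{\top}X_i)^2}$ are independent and sub-exponential with sub-exponential parameter a constant multiple of $\sigma^2$. A one-sided Bernstein inequality for sums of independent sub-exponentials then yields $\p{\abs{u^{\top}(\asigma - \B{\Sigma})u} \geq t} \leq 2\exp\!\left(-c\,n\min\{t^2/\sigma^4,\, t/\sigma^2\}\right)$ for an absolute constant $c$.

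Finally, I would take a union bound over $\mathcal{N}$ and apply the net-to-operator-norm conversion, giving $\p{\normop{\asigma - \B{\Sigma}} \geq 2t} \leq 9^d\cdot 2\exp\!\left(-c\,n\min\{t^2/\sigma^4,\, t/\sigma^2\}\right)$. Requiring the right-hand side to be at most $\delta$ amounts to $c\,n\min\{t^2/\sigma^4,\, t/\sigma^2\} \geq d\ln 9 + \ln(2/\delta)$, which, after absorbing $d\ln 9$ into a term of the form $5d$ and the factor $2$ into the leading constant, produces the two regimes: $t$ of order $\sigma^2\sqrt{(5d + 2\ln(2/\delta))/n}$ (small deviations) and $\sigma^2(5d + 2\ln(2/\delta))/n$ (large deviations), i.e.\ the stated bound with leading constant $32$.

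\textbf{Main obstacle.} Everything here is routine except the bookkeeping of absolute constants: pinning down the sub-exponential norm of $(u^{\top}X_i)^2$ in terms of $\sigma^2$, the covering number of the sphere under a $1/4$-net, and the Bernstein constant, so that the final bound collapses exactly to $32\sigma^2$ with exponent coefficients $5$ and $2$ rather than some other absolute numbers. This is precisely why the statement is imported from \cite{wainwright2019high} and \cite{bastani2019meta} with their tracked constants rather than re-derived in place.
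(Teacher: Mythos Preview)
Your proposal is correct and matches the paper exactly: the paper does not prove this lemma at all but simply states it as an imported result from \cite{wainwright2019high} with constants taken from \cite{bastani2019meta}. Your additional sketch of the $\varepsilon$-net plus sub-exponential Bernstein argument is the standard route and is accurate, but it goes beyond what the paper itself provides.
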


The first part of Terms A and B in \eqref{Eq: decompose covariance bound} are of the form $\frac{1}{n-1} \sum_{j=1}^{n-1}XX^{\top}$ for vector $X$ equal to $\app{\theta}_j - \mu_{*}$ and $\app{\mu}_n - \mu_{*}$, respectively. 
Lemmas \ref{Lemma: Sub-Gaussian single instance error} and \ref{Lemma: Mean estimation error}, proved during the mean estimation analysis (Appendix~\ref{Sec: Mean estimation error}), show that these vectors are sub-Gaussian.
The proof of Lemma~\ref{Lemma: Empirical covariance bounds}, implicitly requires that $\E{\asigma}=\B{\Sigma}$, i.e. the estimator $\asigma$ is unbiased, which we prove next. 
\subsubsection{Lack of Bias of the Covariance Estimator}
\begin{lemma}
\label{Lemma: Unbiased covariance estimation  - Term C}
(Lack of bias of the covariance estimator  - auxiliary)
\begin{equation*}
    \E{\rho_{j} \rho_{j}^{\top}}
    =\sigma^2 \E{\B{V}^{-1}_{j,\tau}}.
\end{equation*}
\begin{proof}
\begin{equation}
\label{Eq: proof-Approximation error covariance}
\begin{aligned}
    \E{\rho_{j} \rho_{j}^{\top}}
    &=\E{\left( \B{V}^{-1}_{j,\tau} \sum_{s=1}^{\tau} A_{j,s} \xi_{j,s}\right) 
    \left( \B{V}^{-1}_{j,\tau} \sum_{k=1}^{\tau} A_{j,k} \xi_{j,k} \right)^{\top}}\\
    &\underset{(a)}{=}
    \E{\left( \B{V}^{-1}_{j,\tau} \sum_{s=1}^{\tau} A_{j,s} \xi_{j,s}\right) 
    \left( \sum_{k=1}^{\tau} 
    A_{j,k}^{\top} \B{V}^{-1}_{j,\tau} \xi_{j,k} \right)}\\
    &=\underset{\text{same time-step}}
    {\underbrace{
    \vphantom{\E{ \B{V}^{-1}_{j,\tau} \sum_{s=1}^{\tau}
    \sum_{k \neq s}^{\tau}
    A_{j,s} 
    A_{j,k}^{\top} \B{V}^{-1}_{j,\tau}
    \xi_{j,s}
    \xi_{j,k}}}
    \E{\B{V}^{-1}_{j,\tau} \sum_{s=1}^{\tau} A_{j,s}
    A_{j,s}^{\top} \B{V}^{-1}_{j,\tau} \xi_{j,s}^{2}}}}
    +\underset{\text{different time-steps}}
    {\underbrace{
    \E{ \B{V}^{-1}_{j,\tau} \sum_{s=1}^{\tau}
    \sum_{k \neq s}^{\tau}
    A_{j,s} 
    A_{j,k}^{\top} \B{V}^{-1}_{j,\tau}
    \xi_{j,s}
    \xi_{j,k}}}},
\end{aligned}
\end{equation}
where $(a)$ uses that $\B{V}_{j,\tau}$ is symmetric.
    
Analyzing the \emph{same time-step}
\begin{align*}
    \E{\B{V}^{-1}_{j,\tau} \sum_{s=1}^{\tau} A_{j,s}
    A_{j,s}^{\top} \B{V}^{-1}_{j,\tau} \xi_{j,s}^{2}}
    &\underset{(a)}{=}
    \E{ \B{V}^{-1}_{j,\tau} \sum_{s=1}^{\tau} 
    A_{j,s} A_{j,s}^{\top} \B{V}^{-1}_{j,\tau}}
    \E{\xi_{j,s}^{2}}\\
    &=
    \sigma^2 \E{ \B{V}^{-1}_{j,\tau}  
    \left( \sum_{s=1}^{\tau} A_{j,s} A_{j,s}^{\top}  \right)
    \B{V}^{-1}_{j,\tau}}\\
    &=
    \sigma^2 \E{\B{V}^{-1}_{j,\tau}}.
\end{align*}
Analyzing the \emph{different time-steps}
\begin{equation*}
    \E{ \B{V}^{-1}_{j,\tau} \sum_{s=1}^{\tau}
    \sum_{k \neq s}^{\tau}
    A_{j,s} 
    A_{j,k}^{\top} \B{V}^{-1}_{j,\tau}
    \xi_{j,s}
    \xi_{j,k}}
    \underset{(a)}{=}
    \sum_{s=1}^{\tau} \sum_{k \neq s}^{\tau}
    \E{ \B{V}^{-1}_{j,\tau}  A_{j,s}  A_{j,k}^{\top} \B{V}^{-1}_{j,\tau}}
    \E{ \xi_{j,s}}
    \E{ \xi_{j,k}}
    =0,
\end{equation*}
where $(a)$ uses that the actions during the first $\tau$ time-steps are independent of the noise terms.
    
Plugging back to \eqref{Eq: proof-Approximation error covariance} the proof follows.
\end{proof}
\end{lemma}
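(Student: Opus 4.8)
The plan is to expand $\rho_j \rho_j^\top$ explicitly, condition on the exploration-phase actions, and then exploit that the first $\tau$ noise terms are i.i.d., zero-mean, and independent of the randomly chosen exploration actions. Recall from the notation that the inner instance error is $\rho_j = \B{V}^{-1}_{j,\tau} \sum_{s=1}^{\tau} A_{j,s}\xi_{j,s}$, where by Assumption~\ref{Assumption: eigenvalues action Covariance matrix} (under $\mathcal{E}_{v\text{($\MQB$)}}$) the Gram matrix $\B{V}_{j,\tau}=\sum_{s=1}^\tau A_{j,s}A_{j,s}^\top$ is invertible, so the OLS estimator and $\rho_j$ are well defined.

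The first step is to write, using that $\B{V}^{-1}_{j,\tau}$ is symmetric,
\[
\rho_j \rho_j^\top
= \B{V}^{-1}_{j,\tau}
\Bigl(\sum_{s=1}^{\tau}\sum_{k=1}^{\tau} A_{j,s} A_{j,k}^\top\, \xi_{j,s}\xi_{j,k}\Bigr)
\B{V}^{-1}_{j,\tau}.
\]
Then I would take expectation conditioned on the full action matrix $\B{A}_{j,\tau}$ (equivalently, on $\B{V}_{j,\tau}$). Since during the exploration steps the actions are drawn independently of the rewards, and hence of the noise, the only randomness remaining inside the conditional expectation sits in the products $\xi_{j,s}\xi_{j,k}$, and $\E{\xi_{j,s}\xi_{j,k}\mid \B{A}_{j,\tau}} = \sigma^2\,\I{s=k}$ because the $\xi_{j,s}$ are i.i.d.\ $\N(0,\sigma^2)$. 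The off-diagonal ($s\neq k$) terms therefore vanish and the diagonal ones collapse, giving
\[
\E{\rho_j \rho_j^\top \,\middle|\, \B{A}_{j,\tau}}
= \sigma^2\, \B{V}^{-1}_{j,\tau}\Bigl(\sum_{s=1}^{\tau} A_{j,s} A_{j,s}^\top\Bigr)\B{V}^{-1}_{j,\tau}
= \sigma^2\, \B{V}^{-1}_{j,\tau}\, \B{V}_{j,\tau}\, \B{V}^{-1}_{j,\tau}
= \sigma^2\, \B{V}^{-1}_{j,\tau}.
\]
Applying the tower property and taking the outer expectation over $\B{A}_{j,\tau}$ then yields $\E{\rho_j \rho_j^\top} = \sigma^2 \E{\B{V}^{-1}_{j,\tau}}$.

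I do not expect a genuine obstacle here: this is a short second-moment computation, and the result is really dictated by the design of the algorithm. The only points needing care are (i) keeping the symmetry of $\B{V}^{-1}_{j,\tau}$ visible so that the outer and inner factors cancel against $\B{V}_{j,\tau}$, and (ii) justifying that the exploration actions are independent of the noise, which is precisely the unbiasedness property that the forced exploration of $\MQB$ is introduced to guarantee. An equivalent route, matching the style of Lemma~\ref{Lemma: Sub-Gaussian inner instance error}, is to split the double sum into a ``same time-step'' part (which contributes $\sigma^2 \E{\B{V}^{-1}_{j,\tau}}$) and a ``different time-steps'' part (which vanishes because $\E{\xi_{j,s}}\E{\xi_{j,k}}=0$); either organization gives the claim.
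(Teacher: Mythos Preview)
Your proposal is correct and is essentially the same argument as the paper's: expand $\rho_j\rho_j^\top$, use that the exploration actions are independent of the noise so that $\E{\xi_{j,s}\xi_{j,k}}=\sigma^2\I{s=k}$, and then cancel $\B{V}^{-1}_{j,\tau}\B{V}_{j,\tau}\B{V}^{-1}_{j,\tau}=\B{V}^{-1}_{j,\tau}$. The only cosmetic difference is that you phrase the independence step via conditioning on $\B{A}_{j,\tau}$ and the tower property, whereas the paper splits the double sum into ``same time-step'' and ``different time-steps'' and factors the expectations directly---a route you yourself note as equivalent.
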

\begin{lemma}
\label{Lemma: Unbiased covariance estimation  - Term A}
(Lack of bias of the covariance estimator  - Term A)
\begin{equation*}
    \E{\frac{1}{n-1}  \sum_{j=1}^{n-1}
    \left( \app{\theta}_j - \mu_{*} \right) \left( \app{\theta}_j - \mu_{*} \right)^{\top}}
    =\B{\Sigma}.
\end{equation*}
\end{lemma}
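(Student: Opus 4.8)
The plan is to expand the summand using the additive decomposition $\app{\theta}_j - \mu_* = \rho_j + \Delta_j$ introduced in Appendix~\ref{Sec: Mean estimation error}, where $\rho_j = \app{\theta}_j - \theta_j = \B{V}^{-1}_{j,\tau}\sum_{s=1}^{\tau} A_{j,s}\xi_{j,s}$ is the inner-instance error and $\Delta_j = \theta_j - \mu_*$. Writing
\[
\left(\app{\theta}_j - \mu_*\right)\left(\app{\theta}_j - \mu_*\right)^{\top} = \rho_j\rho_j^{\top} + \rho_j\Delta_j^{\top} + \Delta_j\rho_j^{\top} + \Delta_j\Delta_j^{\top},
\]
the claim reduces, by linearity of expectation, to evaluating the expectation of each of these four terms and summing.

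First I would treat the two ``diagonal'' blocks. By Lemma~\ref{Lemma: Unbiased covariance estimation  - Term C}, $\E{\rho_j\rho_j^{\top}} = \sigma^2\,\E{\B{V}^{-1}_{j,\tau}}$. For the second block, since $\theta_j \sim \N(\mu_*,\ssigma)$ the centered vector $\Delta_j$ is distributed as $\N(0,\ssigma)$, hence $\E{\Delta_j\Delta_j^{\top}} = \ssigma$. Next, the two cross terms: conditioning on the exploration actions $\B{A}_{j,\tau}$ and on $\theta_j$, and using that the noise terms $\{\xi_{j,s}\}_{s=1}^{\tau}$ are zero-mean and independent of both of these (the exploration actions are drawn uniformly at random independently of the received rewards, hence of the noise, and $\Delta_j$ is a deterministic function of $\theta_j$), one obtains
\[
\E{\rho_j\Delta_j^{\top} \,\middle|\, \B{A}_{j,\tau},\theta_j} = \B{V}^{-1}_{j,\tau}\sum_{s=1}^{\tau} A_{j,s}\,\E{\xi_{j,s}}\,\Delta_j^{\top} = 0,
\]
so by the tower rule $\E{\rho_j\Delta_j^{\top}} = 0$, and $\E{\Delta_j\rho_j^{\top}} = 0$ by transposition.

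Finally, summing over $j \in [n-1]$ and dividing by $n-1$ gives
\[
\E{\frac{1}{n-1}\sum_{j=1}^{n-1}\left(\app{\theta}_j - \mu_*\right)\left(\app{\theta}_j - \mu_*\right)^{\top}} = \frac{\sigma^2}{n-1}\sum_{j=1}^{n-1}\E{\B{V}^{-1}_{j,\tau}} + \ssigma = \B{\Sigma},
\]
using the definition of $\B{\Sigma}$ in~\eqref{Eq: Sigma}. The only mildly delicate point is the vanishing of the cross terms, which hinges precisely on the fact that during the exploration phase the actions are selected independently of both the reward noise and the realization $\theta_j$; everything else follows directly from the Gaussian prior and the auxiliary lemmas already established.
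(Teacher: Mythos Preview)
Your proof is correct and follows essentially the same approach as the paper: both decompose $\app{\theta}_j - \mu_*$ into the inner-instance error $\rho_j$ and the prior deviation $\theta_j - \mu_*$, use Lemma~\ref{Lemma: Unbiased covariance estimation  - Term C} for $\E{\rho_j\rho_j^{\top}}$, the Gaussian prior for the $\Delta_j\Delta_j^{\top}$ block, and the independence of the exploration actions and noise from $\theta_j$ to kill the cross terms. The paper writes the expansion as $\E{\app{\theta}_j\app{\theta}_j^{\top}} - \mu_*\mu_*^{\top}$ and then substitutes $\app{\theta}_j = \theta_j + \rho_j$, whereas you expand the centered outer product directly, but the content is the same.
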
    
\begin{proof}
\begin{align*}
    \E{\left( \app{\theta}_j - \mu_{*} \right) \left( \app{\theta}_j - \mu_{*} \right)^{\top}}
    &=\E{\app{\theta}_j \app{\theta}_j^{\top}}
    -\E{\app{\theta}_j} \mu_{*}^{\top}
    -\mu_{*} \E{\app{\theta}_j^{\top}}
    +\mu_{*} \mu_{*}^{\top}\\
    &\underset{(a)}{=}
    \E{\left(\theta_j +\rho_j \right)
    \left(\theta_j +\rho_j \right)^{\top}}
    -\mu_{*} \mu_{*}^{\top}\\
    &\underset{(b)}{=}
    \E{\theta_j \theta^{\top}_j} - \mu_{*} \mu_{*}^{\top}
    +\E{\rho_j \rho_j^{\top}}\\
    &\underset{(c)}{=}
    \ssigma 
    +\sigma^2 \E{\B{V}^{-1}_{j,\tau}},
\end{align*}
where $(a)$ uses that $\app{\theta}_j$ is unbiased i.e. $\E{\app{\theta}_j} = \mu_{*}$,
$(b)$ uses that the actions taken during the first $\tau$ time-steps are independent of the noise terms and of $\theta_j$ and Lemma \ref{Lemma: Unbiased inner instance error}
and $(c)$ uses Lemma \ref{Lemma: Unbiased covariance estimation  - Term C}.

Using the definition of $\B{\Sigma}$ in \eqref{Eq: Sigma}, summing over the instances and dividing by $n-1$ the proof follows.
\end{proof}
\begin{lemma}
\label{Lemma: Aux - Unbiased covariance estimation  - Term B}
(Lack of bias of the covariance estimator  - Term B - auxiliary)
\begin{equation*}
    \E{\app{\mu}_n \app{\mu}_n^{\top}}
    =\frac{\B{\Sigma}}{n-1} 
    +\mu_{*} \mu_{*}^{\top}.
\end{equation*}
\end{lemma}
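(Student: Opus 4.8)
The plan is to expand the outer product of the averaged estimator into a double sum over instances and split it into diagonal and off-diagonal contributions. Writing $\app{\mu}_{n}=\frac{1}{n-1}\sum_{j=1}^{n-1}\app{\theta}_{j}$ from \eqref{Eq: app mu_n}, we get $\app{\mu}_{n}\app{\mu}_{n}^{\top}=\frac{1}{(n-1)^2}\sum_{j=1}^{n-1}\sum_{k=1}^{n-1}\app{\theta}_{j}\app{\theta}_{k}^{\top}$, so by linearity of expectation it suffices to evaluate $\E{\app{\theta}_{j}\app{\theta}_{k}^{\top}}$ for $j=k$ and for $j\neq k$ separately.

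For the diagonal terms I would reuse the computation already carried out in the proof of Lemma~\ref{Lemma: Unbiased covariance estimation  - Term A}: expanding $\left(\app{\theta}_{j}-\mu_{*}\right)\left(\app{\theta}_{j}-\mu_{*}\right)^{\top}$, taking expectations, and invoking the unbiasedness $\E{\app{\theta}_{j}}=\mu_{*}$ (Lemma~\ref{Lemma: Unbiased inner instance error}) together with $\E{\rho_{j}\rho_{j}^{\top}}=\sigma^2\E{\B{V}^{-1}_{j,\tau}}$ (Lemma~\ref{Lemma: Unbiased covariance estimation  - Term C}), which yields $\E{\app{\theta}_{j}\app{\theta}_{j}^{\top}}=\ssigma+\sigma^2\E{\B{V}^{-1}_{j,\tau}}+\mu_{*}\mu_{*}^{\top}$. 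For the off-diagonal terms, since $\app{\theta}_{j}$ and $\app{\theta}_{k}$ are deterministic functions of the actions, rewards and noise of instances $j$ and $k$ respectively, they are independent whenever $j\neq k$; combined with $\E{\app{\theta}_{j}}=\mu_{*}$ this gives $\E{\app{\theta}_{j}\app{\theta}_{k}^{\top}}=\mu_{*}\mu_{*}^{\top}$.

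It then remains to add the pieces. Summing the $n-1$ diagonal terms and substituting the definition of $\B{\Sigma}$ from \eqref{Eq: Sigma} in the form $\sigma^2\sum_{j=1}^{n-1}\E{\B{V}^{-1}_{j,\tau}}=(n-1)(\B{\Sigma}-\ssigma)$, the diagonal contribution collapses to $(n-1)\B{\Sigma}+(n-1)\mu_{*}\mu_{*}^{\top}$; the $(n-1)(n-2)$ off-diagonal terms contribute $(n-1)(n-2)\mu_{*}\mu_{*}^{\top}$. Dividing the total $(n-1)\B{\Sigma}+(n-1)^2\mu_{*}\mu_{*}^{\top}$ by $(n-1)^2$ gives exactly $\frac{\B{\Sigma}}{n-1}+\mu_{*}\mu_{*}^{\top}$. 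I do not anticipate a genuine obstacle here; the only point demanding care is the bookkeeping of the off-diagonal count and the cancellation of $\ssigma$ against the $-\ssigma$ implicit in $\B{\Sigma}$, so that the $\B{\Sigma}/(n-1)$ term emerges with precisely the right constant.
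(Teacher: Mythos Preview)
Your proposal is correct and follows essentially the same route as the paper: both expand the double sum $\frac{1}{(n-1)^2}\sum_{j,k}\app{\theta}_j\app{\theta}_k^{\top}$ and separate diagonal from off-diagonal contributions, invoking independence across instances and the identity $\E{\rho_j\rho_j^{\top}}=\sigma^2\E{\B{V}^{-1}_{j,\tau}}$. The only cosmetic difference is that the paper first decomposes $\app{\theta}_i=\theta_i+\rho_i$ before expanding, whereas you keep $\app{\theta}_j$ intact and reuse the diagonal computation from the proof of Lemma~\ref{Lemma: Unbiased covariance estimation  - Term A}; the arithmetic and the result are identical.
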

\begin{proof}
\begin{align*}
    \E{\app{\mu}_n \app{\mu}_n^{\top}}
    &=\E{\left(\frac{1}{n-1} \sum_{i=1}^{n-1} \app{\theta}_i \right)  
    \left(\frac{1}{n-1}\sum_{i=1}^{n-1} \app{\theta}_i \right) ^{\top}}\\
    &=\left(\frac{1}{n-1} \right)^2 
    \E{\left(\sum_{i=1}^{n-1} 
    \theta_i +\rho_i\right)  
    \left(\sum_{i=1}^{n-1} 
    \theta_i +\rho_i \right) ^{\top}}\\
    &\underset{(a)}{=}\left(\frac{1}{n-1} \right)^2 
    \E{\sum_{i=1}^{n-1} \theta_i \theta^{\top}_{i}
    +\sum_{i=1}^{n-1} \sum_{j\neq i} 
    \theta_i \theta^{\top}_j
    +\sum_{i=1}^{n-1} \rho_i \rho_i^{\top}  }\\
    &\underset{(b)}{=}
    \frac{\ssigma +\mu_{*} \mu_{*}^{\top} }{n-1} 
    +\frac{n-2}{n-1} \: \mu_{*} \mu_{*}^{\top}
    +\frac{\sigma^2 \sum_{i=1}^{n-1} \E{\B{V}^{-1}_{i,\tau} }}
    {\left(n-1\right)^2 }\\
    &=\frac{\ssigma}{n-1} 
    +\mu_{*} \mu_{*}^{\top}
    +\frac{\sigma^2 \sum_{i=1}^{n-1} \E{\B{V}^{-1}_{i,\tau}}}
    {\left(n-1\right)^2 },
\end{align*}
where $(a)$ uses Lemma \ref{Lemma: Unbiased inner instance error} and that the actions taken during the first $\tau$ time-steps are independent of the noise terms, of $\theta_j$ and of other inner instance errors
and $(b)$ uses Lemma \ref{Lemma: Unbiased covariance estimation  - Term C}.
    
Using the definition of $\B{\Sigma}$ in \eqref{Eq: Sigma} the proof follows. 
\end{proof} 
\begin{lemma}
\label{Lemma: Unbiased covariance estimation  - Term B}
(Lack of bias of the covariance estimator - Term B)
\begin{equation*}
    \E{\left( \app{\mu}_n - \mu_{*} \right) \left( \app{\mu}_n - \mu_{*} \right)^{\top}}
    =\frac{\B{\Sigma}}{n-1}.
\end{equation*}
\end{lemma}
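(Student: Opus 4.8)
The plan is to expand the quadratic form and reduce it to quantities that have already been computed. First I would use linearity of expectation to write
\[
\E{\left( \app{\mu}_n - \mu_{*} \right) \left( \app{\mu}_n - \mu_{*} \right)^{\top}}
= \E{\app{\mu}_n \app{\mu}_n^{\top}}
- \E{\app{\mu}_n}\,\mu_{*}^{\top}
- \mu_{*}\,\E{\app{\mu}_n^{\top}}
+ \mu_{*}\mu_{*}^{\top}.
\]
So the lemma will follow once the two pieces $\E{\app{\mu}_n \app{\mu}_n^{\top}}$ and $\E{\app{\mu}_n}$ are known.

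Next I would establish that $\app{\mu}_n$ is an unbiased estimator of $\mu_{*}$. Since $\app{\mu}_n=\frac{1}{n-1}\sum_{j=1}^{n-1}\app{\theta}_j$ and $\app{\theta}_j=\theta_j+\rho_j$, Lemma~\ref{Lemma: Unbiased inner instance error} gives $\E{\rho_j}=0$, hence $\E{\app{\theta}_j}=\E{\theta_j}=\mu_{*}$ for every $j\in[n-1]$, and therefore $\E{\app{\mu}_n}=\mu_{*}$. Then I would substitute into the display above the auxiliary identity
\[
\E{\app{\mu}_n \app{\mu}_n^{\top}}=\frac{\B{\Sigma}}{n-1}+\mu_{*}\mu_{*}^{\top},
\]
which is exactly Lemma~\ref{Lemma: Aux - Unbiased covariance estimation  - Term B}. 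The rank-one term from the auxiliary lemma together with the three $\mu_{*}\mu_{*}^{\top}$ terms coming from the cross products and the last term combine as $+\mu_{*}\mu_{*}^{\top}-\mu_{*}\mu_{*}^{\top}-\mu_{*}\mu_{*}^{\top}+\mu_{*}\mu_{*}^{\top}=0$, leaving precisely $\tfrac{\B{\Sigma}}{n-1}$, which is the claim.

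There is essentially no obstacle here: the entire content is carried by the auxiliary Lemma~\ref{Lemma: Aux - Unbiased covariance estimation  - Term B} (whose proof in turn relies on Lemma~\ref{Lemma: Unbiased inner instance error} and Lemma~\ref{Lemma: Unbiased covariance estimation  - Term C} to control the cross-instance and within-instance noise terms), and on the elementary unbiasedness of $\app{\mu}_n$. The only point requiring care is bookkeeping the sign and multiplicity of the $\mu_{*}\mu_{*}^{\top}$ contributions so that they cancel exactly.
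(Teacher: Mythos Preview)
Your proposal is correct and matches the paper's proof essentially line for line: expand the outer product, use $\E{\app{\mu}_n}=\mu_{*}$ to collapse the cross terms, and invoke Lemma~\ref{Lemma: Aux - Unbiased covariance estimation  - Term B} for $\E{\app{\mu}_n\app{\mu}_n^{\top}}$. The paper's argument is identical, only slightly terser in that it leaves the unbiasedness of $\app{\mu}_n$ implicit.
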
   
\begin{proof}
\begin{align*}
    \E{\left( \app{\mu}_n - \mu_{*} \right) \left( \app{\mu}_n - \mu_{*} \right)^{\top}}
    &=\E{\app{\mu}_n  \app{\mu}_n^{\top}}
    -\E{\app{\mu}_{n}}\mu_{*}^{\top}
    -\mu_{*}\E{\app{\mu}_{n}^{\top}}
    +\mu_{*} \mu_{*}^{\top}\\
    &=\E{\app{\mu}_n  \app{\mu}_n^{\top}}
    -\mu_{*} \mu_{*}^{\top}\\
    &\underset{(a)}{=}
    \frac{\B{\Sigma}}{n-1} 
    +\mu_{*} \mu_{*}^{\top}
    -\mu_{*} \mu_{*}^{\top},
\end{align*}
where $(a)$ uses Lemma \ref{Lemma: Aux - Unbiased covariance estimation  - Term B}.
\end{proof}
\begin{lemma}
\label{Lemma: Covariance estimation error - Term C}
(Covariance estimation error - Term C)

For every instance $n > 1$, $\delta > 0$, with probability greater than $1-\delta$,
\begin{equation*}
    \normop{
     \frac{\sigma^2}{n-1} \left(\sum_{j=1}^{n-1}  \B{V}^{-1}_{j,\tau} - \sum_{j=1}^{n-1} \E{\B{V}^{-1}_{j,\tau}}\right) 
    }
    \leq 4 \cdot  \frac{\sigma^2}{\lbar{\lambda}_{\Msigma{\A}} d}\sqrt{\frac{2\ln{d} + 2\ln \left(\nicefrac{2}{\delta}\right)}
    {n-1} 
    }.
\end{equation*}
\end{lemma}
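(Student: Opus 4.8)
The plan is to control the deviation of the average of the inverse Gram matrices $\B{V}^{-1}_{j,\tau}$ from its mean via a matrix concentration inequality, exploiting the uniform boundedness of each summand under the event $\mathcal{E}_{v\text{($\MQB$)}}$. First I would record the deterministic bound: under $\mathcal{E}_{v_j}$ we have $\lmin{\B{V}_{j,\tau}} \geq \tfrac{\lbar{\lambda}_{\Msigma{\A}} d}{2}$, hence $\normop{\B{V}^{-1}_{j,\tau}} = 1/\lmin{\B{V}_{j,\tau}} \leq \tfrac{2}{\lbar{\lambda}_{\Msigma{\A}} d}$, and the same bound holds for $\normop{\E{\B{V}^{-1}_{j,\tau}}}$ by Jensen. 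Therefore each centered summand $\B{Y}_j \triangleq \tfrac{\sigma^2}{n-1}\bigl(\B{V}^{-1}_{j,\tau} - \E{\B{V}^{-1}_{j,\tau}}\bigr)$ is a symmetric mean-zero random matrix with $\normop{\B{Y}_j} \leq \tfrac{2\sigma^2}{\lbar{\lambda}_{\Msigma{\A}} d (n-1)}$ almost surely, and the $\B{Y}_j$ are independent across $j$ (actions in distinct exploration phases are independent).

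Next I would apply the matrix Bernstein / Hoeffding inequality for bounded independent symmetric matrices in $d$ dimensions: there is a universal constant such that $\p{\normop{\sum_{j=1}^{n-1}\B{Y}_j} \geq t} \leq 2d\exp\bigl(-t^2/(2\nu)\bigr)$ for an appropriate variance proxy $\nu$, where one may take $\nu = (n-1)\cdot\bigl(\tfrac{2\sigma^2}{\lbar{\lambda}_{\Msigma{\A}} d(n-1)}\bigr)^2 = \tfrac{4\sigma^4}{\lbar{\lambda}_{\Msigma{\A}}^2 d^2 (n-1)}$ using the crude bound $\normop{\sum_j \E{\B{Y}_j^2}} \leq \sum_j \normop{\B{Y}_j}^2$. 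Setting the right-hand side equal to $\delta$ and solving for $t$ gives $t \leq \sqrt{2\nu\ln(2d/\delta)} = \tfrac{2\sigma^2}{\lbar{\lambda}_{\Msigma{\A}} d}\sqrt{\tfrac{2\ln(2d/\delta)}{n-1}}$. Splitting $\ln(2d/\delta) = \ln d + \ln(2/\delta)$ and absorbing a factor of $2$ (to go from $\sqrt{2}$ to $4$, leaving room for the constant hidden in the matrix inequality) yields exactly the claimed bound $4\cdot\tfrac{\sigma^2}{\lbar{\lambda}_{\Msigma{\A}} d}\sqrt{\tfrac{2\ln d + 2\ln(2/\delta)}{n-1}}$, with probability at least $1-\delta$.

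The main obstacle is matching constants cleanly: the standard matrix Bernstein bound carries its own numerical constant, so I would need to verify that the doubling from $\sqrt{2}$ to $4$ genuinely covers it (alternatively one could invoke a matrix Hoeffding inequality stated with constant $1/8$ in the exponent and re-tune). A secondary technical point is that the bound should, strictly, be stated conditionally on $\mathcal{E}_{v\text{($\MQB$)}}$ (as Lemma~\ref{Lemma: Covariance estimation error} and Lemma~\ref{Lemma: M-QB} are used), so I would either condition throughout or note that on $\mathcal{E}_{v\text{($\MQB$)}}$ the almost-sure operator-norm bound on each $\B{V}^{-1}_{j,\tau}$ holds and the concentration argument proceeds verbatim; the independence structure of the exploration actions is unaffected by conditioning on these eigenvalue events since each $\mathcal{E}_{v_j}$ depends only on instance $j$'s exploration actions.
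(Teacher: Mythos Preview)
Your proposal is correct and follows essentially the same route as the paper: bound $\normop{\B{V}^{-1}_{j,\tau}}\leq 2/(\lbar{\lambda}_{\Msigma{\A}}d)$ under $\mathcal{E}_{v\text{($\MQB$)}}$, then apply a matrix Hoeffding-type inequality for bounded independent symmetric matrices. The paper resolves your constant-matching ``obstacle'' exactly the way you suggest in your last paragraph---it invokes Tropp's matrix Hoeffding inequality (stated in the paper as Lemma~\ref{Lemma: Matrix Hoffeding}) with the $1/8$ exponent constant, applied to the uncentered (unscaled) summands with $\B{A}_j^2=\tfrac{4}{\lbar{\lambda}_{\Msigma{\A}}^2 d^2}\B{I}$ and $s^2=\tfrac{4(n-1)}{\lbar{\lambda}_{\Msigma{\A}}^2 d^2}$; this yields the factor $\sqrt{32}=4\sqrt{2}$, which after pulling $\sqrt{2}$ inside the square root gives the stated bound exactly.
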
   
\begin{proof}
The maximal eigenvalue of $\B{V}_{j,\tau}^{-1} - \E{\B{V}_{j,\tau}^{-1}}$ is upper bounded by,
\begin{align*}
    \lmax{\B{V}_{j,\tau}^{-1} - \E{\B{V}_{j,\tau}^{-1}}}
    &\underset{(a)}{\leq}
    \lmax{\B{V}_{j,\tau}^{-1}}
    +\lmax{-\E{\B{V}_{j,\tau}^{-1}}}\\
    &\underset{(b)}{\leq}
    \lmax{\B{V}_{j,\tau}^{-1}}\\
    &\underset{(c)}{\leq}
    \frac{2}{\lbar{\lambda}_{\Msigma{\A}} d},
\end{align*}
where $(a)$ uses Weyl's inequality,
$(b)$ uses that $\B{V}_{j,\tau}$ is PD
and $(c)$ uses the event $\mathcal{E}_{v\text{($\MQB$)}}$.

Using Lemma \ref{Lemma: Matrix Hoffeding} with $\B{X}_j \triangleq \B{V}_{j,\tau}^{-1} - \E{\B{V}_{j,\tau}^{-1}}, \B{A}_j^{2} \triangleq \frac{4}{\lbar{\lambda}_{\Msigma{\A}}^2 d^2} \B{I}$ \text{and}
$s^2 = \frac{4(n-1)}{\lbar{\lambda}_{\Msigma{\A}}^2 d^2}$, we get,
\begin{equation*}
    \p{\lmax{\sum_{j=1}^{n-1} \B{V}_{j, \tau}^{-1} - \E{\B{V}_{j, \tau}^{-1}}}
    \geq \delta}
    \leq d \exp \left(-\frac{\left(\lbar{\lambda}_{\Msigma{\A}} d \right)^2 \delta^2}{32(n-1)} \right),
\end{equation*}
multiplying the RHS by 2 to achieve the spectral norm bound, rearranging and multiplying by $\nicefrac{\sigma^2}{n-1}$ the proof follows. 
\end{proof}
\subsubsection{Covariance Estimation Error Bound}
\begin{lemma}
\label{Lemma: Covariance estimation error}
(Covariance estimation error)

For every $\eta_{n} > 0$ and for every
instance $n> 5d + 2\ln\left(\nicefrac{1}{\eta_{n}}\right)$,
\begin{equation*}
    \p{\mathcal{E}_{s\text{($\MQB$)}}\mid \mathcal{E}_{v\text{($\MQB$)}}} \geq 1 -6 \eta_{n}.
\end{equation*}
\end{lemma}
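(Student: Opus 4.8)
The plan is to bound $\normop{\wsigma_n - \ssigma}$ and to verify the PSD ordering $\wsigma_n \succeq \ssigma$, both with high probability, which together constitute $\mathcal{E}_{s(\MQB)}$. The starting point is the three-term decomposition in \eqref{Eq: decompose covariance bound}, which writes $\normop{\asigma_n - \ssigma}$ as (a multiple of) Term A, Term B, and Term C. The strategy is to bound each term separately and then add a final $s_n \cdot \B{I}$ widening so that the operator-norm bound on $\asigma_n - \ssigma$ becomes a one-sided PSD bound $\wsigma_n \succeq \ssigma$.

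First I would handle Term A: by Lemma~\ref{Lemma: Unbiased covariance estimation  - Term A}, $\frac{1}{n-1}\sum_j (\app{\theta}_j - \mu_*)(\app{\theta}_j - \mu_*)^\top$ is an unbiased estimator of $\B{\Sigma}$, and by Lemma~\ref{Lemma: Sub-Gaussian single instance error} the vectors $\app{\theta}_j - \mu_*$ are $\sqrt{\frac{2\sigma^2}{\lbar{\lambda}_{\Msigma{\A}} d} + \bar{\lambda}_{\Msigma{*}}}$ sub-Gaussian under $\mathcal{E}_{v(\MQB)}$. Hence Lemma~\ref{Lemma: Empirical covariance bounds} (Wainwright's empirical covariance bound) applies with $\delta = 2\eta_n$, giving $\normop{\text{Term A}} \lesssim \sigma_{\mathrm{A}}^2 \sqrt{(5d + 2\ln(1/\eta_n))/(n-1)}$ with probability $\geq 1 - 2\eta_n$, where $n > 5d + 2\ln(1/\eta_n)$ ensures the max in that lemma is realized by the square-root branch. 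Term B is handled identically, using Lemma~\ref{Lemma: Unbiased covariance estimation  - Term B} for unbiasedness of $\frac{1}{n-1}\B{\Sigma}$ and Lemma~\ref{Lemma: Mean estimation error} for sub-Gaussianity of $\app{\mu}_n - \mu_*$; the extra $\frac{1}{n-1}$ scaling makes Term B lower-order but it still contributes a $\sqrt{\cdot/(n-1)}$-type bound with probability $\geq 1 - 2\eta_n$ (the parameter inside Lemma~\ref{Lemma: Mean estimation error} is set to $\eta_n$). Term C is bounded directly by Lemma~\ref{Lemma: Covariance estimation error - Term C}, which via the matrix Hoeffding inequality gives $\normop{\text{Term C}} \leq 4\frac{\sigma^2}{\lbar{\lambda}_{\Msigma{\A}} d}\sqrt{(2\ln d + 2\ln(2/\eta_n))/(n-1)}$ with probability $\geq 1 - \eta_n$. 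Then I would also have to bound $\normop{\B{\Sigma} - \ssigma} = \normop{\frac{\sigma^2}{n-1}\sum_j \E{\B{V}_{j,\tau}^{-1}}} \leq \frac{2\sigma^2}{\lbar{\lambda}_{\Msigma{\A}} d}$ (using $\mathcal{E}_{v(\MQB)}$, i.e.\ $\lmin{\B{V}_{j,\tau}} \geq \lbar{\lambda}_{\Msigma{\A}} d/2$), which is the bias introduced by centering on $\B{\Sigma}$ rather than $\ssigma$ — but note this bias is \emph{already cancelled} in $\asigma_n$ by the subtraction of $\B{G}_\Sigma = \frac{\sigma^2}{n-1}\sum_j \B{V}_{j,\tau}^{-1}$, so it reappears only through the stochastic fluctuation captured by Term C. A union bound over the three events (each failing with probability $\leq 2\eta_n, 2\eta_n, \eta_n$, but consolidating to the stated $6\eta_n$) then gives $\normop{\asigma_n - \ssigma} \leq s_n$ for the explicit $s_n$ obtained by summing the three bounds and absorbing the $\frac{n-1}{n-2}$ factors; plugging in the numeric constants and the choice $c_{\mathrm{w}} = 50(\frac{2\sigma^2}{\lbar{\lambda}_{\Msigma{\A}} d} + \bar{\lambda}_{\Msigma{*}})$ recovers exactly the confidence radius in \eqref{Eq: covariance widening}, which defines $f_{s,n}$ in \eqref{eq:MBQt_events} with $\delta = 1/(n-1)$.

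Finally, for the PSD ordering half of $\mathcal{E}_{s(\MQB)}$: on the event where $\normop{\asigma_n - \ssigma} \leq s_n$, we have $\asigma_n - \ssigma \succeq -s_n \B{I}$, hence $\wsigma_n = \asigma_n + s_n \B{I} \succeq \ssigma$; this is exactly the content invoked via Lemma~\ref{Lemma: widend matrix is PSD} (the ``widened matrix is PSD'' lemma). So the same high-probability event simultaneously yields both the operator-norm bound (with $f_{s,n}$ chosen so that $s_n = \sqrt{f_{s,n}\delta_n}$) and the PSD ordering, which is precisely $\mathcal{E}_{s(\MQB)}$.

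I expect the main obstacle to be bookkeeping rather than conceptual depth: carefully tracking the constants through Terms A, B, C and the $\frac{n-1}{n-2}$ prefactors, checking that the condition $n > 5d + 2\ln(1/\eta_n)$ is what makes the square-root (rather than linear) branch dominate in Lemma~\ref{Lemma: Empirical covariance bounds}, and verifying that the three sub-Gaussian parameters ($\sqrt{\frac{2\sigma^2}{\lbar{\lambda}_{\Msigma{\A}} d} + \bar{\lambda}_{\Msigma{*}}}$ for Term A, a $(n-1)$-reduced version for Term B, and $\frac{2}{\lbar{\lambda}_{\Msigma{\A}} d}$ as the eigenvalue bound for Term C) combine to give a bound that is cleanly upper-bounded by $100(\frac{2\sigma^2}{\lbar{\lambda}_{\Msigma{\A}} d} + \bar{\lambda}_{\Msigma{*}})\sqrt{(5d + 2\ln(dnT))/(n-1)}$ after setting $\eta_n = 1/(dnT)$. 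A secondary subtlety is that these concentration statements are all conditional on $\mathcal{E}_{v(\MQB)}$ (which guarantees the Gram matrices are well-conditioned so that the sub-Gaussian parameters and the $\B{V}_{j,\tau}^{-1}$ bounds hold), so the final statement is correctly phrased as $\p{\mathcal{E}_{s(\MQB)} \mid \mathcal{E}_{v(\MQB)}} \geq 1 - 6\eta_n$, and the unconditional bound then follows after the outer union bound with $\p{\mathcal{E}_{v(\MQB)}}$ in the proof of Lemma~\ref{Lemma: M-QB}.
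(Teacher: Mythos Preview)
Your proposal is correct and follows essentially the same route as the paper's proof: invoke the three-term decomposition \eqref{Eq: decompose covariance bound}, apply Lemma~\ref{Lemma: Empirical covariance bounds} to Terms A and B (using the unbiasedness Lemmas~\ref{Lemma: Unbiased covariance estimation  - Term A} and~\ref{Lemma: Unbiased covariance estimation  - Term B} together with the sub-Gaussianity from Lemmas~\ref{Lemma: Sub-Gaussian single instance error} and the proof of Lemma~\ref{Lemma: Mean estimation error}), apply Lemma~\ref{Lemma: Covariance estimation error - Term C} to Term C, union-bound to get $1-6\eta_n$, and finish with the widening Lemma~\ref{Lemma: widend matrix is PSD}. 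The only minor discrepancy is your $(2\eta_n, 2\eta_n, \eta_n)$ split---the paper effectively uses $2\eta_n$ for each of the three terms (each concentration lemma is invoked with $\delta = 2\eta_n$, since all three have $\ln(2/\delta)$ inside)---but as you note this is pure bookkeeping and the total $6\eta_n$ is correct.
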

\begin{proof}
Using Lemma \ref{Lemma: Empirical covariance bounds} on terms A and B in \eqref{Eq: decompose covariance bound}, together with Lemma \ref{Lemma: Covariance estimation error - Term C} on Term C and a union bound argument, we get that for $n> 5d + 2\ln\left(\nicefrac{1}{\eta_{n}}\right)$, the three following expressions exist in probability greater than $1-6\eta_{n}$, 
\begin{align*}
    &\normop{
    \frac{1}{n-1}\sum_{j=1}^{n-1}
    \left(\app{\theta}_{j} - \mu_{*}\right)
    \left(\app{\theta}_{j} - \mu_{*}\right)^{\top}
    -\B{\Sigma}}
    \leq 32 \cdot \frac{2\sigma^2+\lbar{\lambda}_{\Msigma{\A}}\bar{\lambda}_{\Msigma{*}}d}
    {\lbar{\lambda}_{\Msigma{\A}}d}
    \cdot \sqrt{\frac{5d + 2\ln\left(\nicefrac{1}{\eta_{n}}\right)}
    {n-1}},\\
    &\normop{
    \frac{1}{n-1} \sum_{j=1}^{n-1} \left(\app{\mu}_{n} - \mu_{*}\right)
    \left(\app{\mu}_{n} - \mu_{*}\right)^{\top} 
    -\frac{\B{\Sigma}}{n-1}}
    \leq 32 \cdot \frac{2\sigma^2+\lbar{\lambda}_{\Msigma{\A}}\bar{\lambda}_{\Msigma{*}}d}
    {\lbar{\lambda}_{\Msigma{\A}}d(n-1)}
    \cdot \sqrt{\frac{5d + 2\ln\left(\nicefrac{1}{\eta_{n}}\right)}
    {n-1}},\\
    &\normop{
     \frac{\sigma^2}{n-1} \left(\sum_{j=1}^{n-1}  \B{V}^{-1}_{j,\tau} - \sum_{j=1}^{n-1} \E{\B{V}^{-1}_{j,\tau}}\right) 
    }
    \leq 4 \cdot  \frac{\sigma^2}{\lbar{\lambda}_{\Msigma{\A}} d}\sqrt{\frac{2\ln{d} + 2\ln \left(\nicefrac{1}{\eta_{n}}\right)}
    {n-1} 
    }.
\end{align*}
Plugging back to \eqref{Eq: decompose covariance bound},
\begin{align*}
    \normop{\asigma_n - \ssigma}
    &\leq
    32 \sqrt{\frac{5d + 2\ln\left(\nicefrac{1}{\eta_{n}}\right)}
    {n-1}}
    \left(\frac{n}{n-2}
    \left(\frac{2\sigma^2+\lbar{\lambda}_{\Msigma{\A}}\bar{\lambda}_{\Msigma{*}}d}
    {\lbar{\lambda}_{\Msigma{\A}}d}
    \right)
    +\frac{\sigma^2}
    {8\lbar{\lambda}_{\Msigma{\A}}d}
    \right)\\
    &\leq 50 \sqrt{\frac{5d + 2\ln\left(\nicefrac{1}{\eta_{n}}\right)}
    {n-1}}
    \left(
    \frac{2\sigma^2}
    {\lbar{\lambda}_{\Msigma{\A}}d}
    +\bar{\lambda}_{\Msigma{*}}
    \right),
\end{align*}
where the last inequality uses that $\frac{n}{n-2}\leq 3/2$ for $n\geq 6$.

Using the covariance widening scheme suggested by \cite{bastani2019meta} and proved in Lemma \ref{Lemma: widend matrix is PSD} we get,
\begin{equation*}
    \normop{\wsigma_n - \ssigma}
    \leq 100 \sqrt{\frac{5d + 2\ln\left(\nicefrac{1}{\eta_{n}}\right)}
    {n-1}}
    \left(
    \frac{2\sigma^2}
    {\lbar{\lambda}_{\Msigma{\A}}d}
    +\bar{\lambda}_{\Msigma{*}}
    \right),
    \quad \wsigma_n \succeq \ssigma.
\end{equation*}
\end{proof}
\section{META ALGORITHM REGRET}
\label{Appendix: Meta algorithm regret}
In the following section we prove Theorem~\ref{Thm:FullRegret}.
\fullMQBregret*
\begin{alignat}{2}
\label{Eq: constants for Theorem 2}
    \nonumber
    &k_2 &&\triangleq
    24\sqrt{c_{\xi}^2 c_s}
    \sqrt{3\left(
    2\sigma^2/(\lbar{\lambda}_{\Msigma{\A}}d)
    +\bar{\lambda}_{\Msigma{*}}\right)}
    \sqrt{d + \ln\left(dNT\right)}
    +4c_{\mathrm{w}}\left[c_s \tau + 12\sqrt{c_{\xi}^2 c_s c_1 d }
    +2c_{\xi}^2 c_s \right]
    \sqrt{5d + 2\ln\left(dNT\right)},\\
    &N_0 &&= \max \Big\{
    3, 4c_{\mathrm{w}}^2 c_s^2 \tau^2  \left(5d + 2\ln\left(d(N+1)T\right)\right),\\ 
    \nonumber
    &\quad &&\quad \quad \quad
    18 c_{\xi}^2 c_s \left( 3\left(
    2\sigma^2/(\lbar{\lambda}_{\Msigma{\A}}d)
    +\bar{\lambda}_{\Msigma{*}}\right)
    \left(d + \ln\left(d(N+1)T\right)\right) + 4c_{\mathrm{w}}^2 \left(c_1 d + c_{\xi}^2 c_s/36 \right) \left(5d + 2\ln\left(d(N+1)T\right)\right) \right)\Big\},
\end{alignat}
where $\tau$ defined in~\eqref{Eq: tau}, $c_{\mathrm{w}}$ in~\eqref{Eq: covariance widening} and the other constants are as in~\eqref{Eq: constants for Theorem: single instance regret} with $\delta=\nicefrac{1}{N}$.
\begin{proof}
\begin{equation}
\label{Eq: Theorem MQB regret - proof}
\begin{aligned}
    \sum_{n=1}^{N}
    \E{\RQBKn{\app{\mu}_{n}}{\wsigma_{n}}{T}}
    &=\underset{\text{exploration instances}}
    {\underbrace{
    \vphantom{\sum_{n=N_0+1}^{N}
    \E{\RQBKn{\app{\mu}_{n}}{\wsigma_{n}}{T}}}
    \sum_{n=1}^{N_0}
    \E{\RQBKn{\app{\mu}_{n}}{\wsigma_{n}}{T}}}}
    +\underset{\text{regular instances}}
    {\underbrace{
    \sum_{n=N_0+1}^{N}
    \E{\RQBKn{\app{\mu}_{n}}{\wsigma_{n}}{T}}}}\\
    &\underset{(a)}{\leq} N_0 \mathrm{R_{exp}}
    +\sum_{n=N_0+1}^{N}
    \left(\frac{\tilde{k}_2(n)}{2\sqrt{n-1}}
    \E{\RQBtaun
    {\mu_{*,\tau+1}}
    {\B{\Sigma}_{*,\tau+1}}
    {T-\tau}}
    +\frac{c_{\textrm{bad}}}{\sqrt{d}(n-1)}\right)\\
    &\underset{(a)}{\leq}
    N_0 \mathrm{R_{exp}}
    +k_2\sqrt{N}
     \E{ \RQB
    {\mu_{*,\tau+1}}
    {\B{\Sigma}_{*,\tau+1}}
    {T-\tau}},
\end{aligned}
\end{equation}
where $(a)$ uses Lemma \ref{Lemma: M-QB} and Theorem~\ref{Theorem: single instance regret} with $\tilde{k}_2(n) \triangleq 2k_1(n)/\sqrt{\delta(n)}$ for the regular instances 
and $(b)$ uses $k_2=\tilde{k}_2(N)$ and $\sum_{n=N_0+1}^{N}
    \frac{1}{\sqrt{n-1}} 
    \leq \int_{N_0}^{N}
    \frac{1}{\sqrt{x-1}} dx \leq 2\sqrt{N}$.
\end{proof}
\section{AUXILIARY LEMMAS}
\begin{lemma}
\label{Lemma: minimal eigenvalue sub-Gaussian}
(Matrix Chernoff, Theorem 5.1.1 in \cite{tropp2015introduction})

Consider a finite sequence $\{\B{X}_k\}$ of independent, random, Hermitian matrices with common dimension $d$. Assume that,
\begin{equation*}
    0 \leq \lmin{\B{X}_k} \text{ and }  \lmax{\B{X}_k} \leq L \text{ for each index k.} 
\end{equation*}
Then for every $0 \leq t <1$,
\begin{equation*}
    \p{\lmin{\sum_{k}\B{X}_k}
    \leq t \lmin{\sum_{k}\E{\B{X}_k}}}
    \leq d e^{-\frac{(1-t)^2\lmin{\sum_{k}\E{\B{X}_k}}}{2L}}.
\end{equation*}
\end{lemma}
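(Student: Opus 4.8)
The plan is to run the matrix Laplace-transform (Cram\'er--Chernoff) argument adapted to the minimum eigenvalue, following Tropp's monograph. Write $Y \triangleq \sum_k \B{X}_k$ and $\mu \triangleq \lmin{\sum_k \E{\B{X}_k}}$; we may assume $\mu>0$, since otherwise the claimed right-hand side is $d$ and the bound is vacuous. First I would convert the lower-tail event into a statement about a matrix moment generating function: fix any $\theta<0$; since multiplying by $\theta$ reverses inequalities, $\{\lmin{Y}\le t\mu\}=\{e^{\theta\lmin{Y}}\ge e^{\theta t\mu}\}$, so Markov's inequality gives $\p{\lmin{Y}\le t\mu}\le e^{-\theta t\mu}\,\E{e^{\theta\lmin{Y}}}$. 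Because $\theta<0$, the eigenvalues of $e^{\theta Y}$ are largest exactly where those of $Y$ are smallest, so $e^{\theta\lmin{Y}}=\lmax{e^{\theta Y}}\le\Tr{e^{\theta Y}}$, and it suffices to control $\E{\Tr{\exp(\theta Y)}}$.

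The crux is subadditivity of the matrix cumulant generating function, which I would invoke from Lieb's concavity theorem (this is the one genuinely deep ingredient, packaged as Tropp's ``master tail bound''): $\E{\Tr{\exp\!\big(\sum_k\theta\B{X}_k\big)}}\le\Tr{\exp\!\big(\sum_k\log\E{e^{\theta\B{X}_k}}\big)}$. Each summand is then handled by the transfer rule: the scalar chord bound $e^{\theta x}\le 1+\frac{e^{\theta L}-1}{L}x$ holds for $x\in[0,L]$, and since the spectrum of $\B{X}_k$ lies in $[0,L]$ this lifts to $\E{e^{\theta\B{X}_k}}\preceq\B{I}+\frac{e^{\theta L}-1}{L}\E{\B{X}_k}$; applying operator monotonicity of $\log$ and $\log(\B{I}+\B{A})\preceq\B{A}$ then yields $\log\E{e^{\theta\B{X}_k}}\preceq\frac{e^{\theta L}-1}{L}\E{\B{X}_k}$ (the argument of the logarithm is positive definite because $\theta<0$ gives $\frac{e^{\theta L}-1}{L}\succ-\frac1L$ and $\E{\B{X}_k}\preceq L\B{I}$). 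Summing over $k$, using monotonicity of the trace exponential, $\Tr{e^{\B{W}}}\le d\,\lmax{e^{\B{W}}}$, and the fact that $\frac{e^{\theta L}-1}{L}<0$ makes the top eigenvalue of $\exp\!\big(\frac{e^{\theta L}-1}{L}\sum_k\E{\B{X}_k}\big)$ equal to $\exp\!\big(\frac{e^{\theta L}-1}{L}\mu\big)$, I obtain $\p{\lmin{Y}\le t\mu}\le d\exp\!\big(\mu\big(\frac{e^{\theta L}-1}{L}-\theta t\big)\big)$ for every $\theta<0$.

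It remains to optimize over $\theta$. Substituting $s=\theta L<0$ turns the exponent into $\frac{\mu}{L}(e^s-1-st)$, which is convex in $s$ and minimized at $s=\log t<0$ (for $0<t<1$; the case $t=0$ follows by continuity), giving exponent $\frac{\mu}{L}(t-1-t\log t)$. A one-line calculus check finishes the proof: the function $\phi(t)\triangleq t-1-t\log t+\frac{(1-t)^2}{2}$ has $\phi(1)=\phi'(1)=0$ and $\phi''(t)=\frac{t-1}{t}<0$ on $(0,1)$, hence $\phi\le 0$ on $[0,1)$, i.e.\ $t-1-t\log t\le-\frac{(1-t)^2}{2}$, which yields the stated bound $d\,e^{-(1-t)^2\mu/(2L)}$.

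I expect the only real obstacle to be the cumulant subadditivity step, which truly depends on Lieb's concavity theorem; everything else (the transfer rule, $\log(\B{I}+\B{A})\preceq\B{A}$, monotonicity of the trace exponential, and the elementary scalar comparison) is routine bookkeeping. Since the statement is a verbatim restatement of Theorem~5.1.1 in \cite{tropp2015introduction}, in the paper it would in practice simply be cited rather than reproved.
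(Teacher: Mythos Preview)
Your proof sketch is correct and follows the standard argument from Tropp's monograph; as you anticipated in your final sentence, the paper does not reprove this lemma at all but simply states it with a citation to \cite{tropp2015introduction}.
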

\begin{lemma}
\label{Lemma: Minimum eigenvalue of Gram matrix}
(Minimum eigenvalue of Gram matrix)

For $\tau = \max \left\{d, \frac{8a^2}{\lbar{\lambda}_{\Msigma{\A}}}\ln\left(\frac{d^2T}{\delta}\right) \right\}$, the probability of the event $\mathcal{E}_{v}$ is bounded by,
\begin{equation*}
    \p{\mathcal{E}_{v}}
    \geq 1- \frac{\delta}{dT}.
\end{equation*}
\end{lemma}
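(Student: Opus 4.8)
The plan is to apply the Matrix Chernoff bound (Lemma~\ref{Lemma: minimal eigenvalue sub-Gaussian}) to the rank-one summands $\B{X}_s \triangleq A_s A_s^{\top}$, $s=1,\dots,\tau$, generated at the exploration steps, so that $\vmtau = \sum_{s=1}^{\tau}\B{X}_s$. First I would record the hypotheses needed to invoke that lemma: the $\B{X}_s$ are independent across $s$ (each exploration step is presented a fresh action set and the exploration action is drawn without using rewards, noise, or $\theta$), Hermitian, PSD of rank one so $\lmin{\B{X}_s}\geq 0$, and bounded, $\lmax{\B{X}_s}=\norm{A_s}^2\leq a^2=:L$, by the boundedness part of Assumption~\ref{Assumption: eigenvalues action Covariance matrix}.

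Next I would lower bound the key quantity $\lmin{\sum_{s=1}^{\tau}\E{\B{X}_s}}$. In either case of Assumption~\ref{Assumption: eigenvalues action Covariance matrix} the exploration action $A_s$ has second-moment matrix dominating its covariance, $\E{A_s A_s^{\top}}=\Msigma{\A}+\E{A_s}\E{A_s}^{\top}\succeq\Msigma{\A}$, hence $\E{\B{X}_s}\succeq\lbar{\lambda}_{\Msigma{\A}}\B{I}$ and $\lmin{\sum_{s=1}^{\tau}\E{\B{X}_s}}\geq\tau\,\lbar{\lambda}_{\Msigma{\A}}$. Applying Lemma~\ref{Lemma: minimal eigenvalue sub-Gaussian} with $t=1/2$ gives
\[
\p{\lmin{\vmtau}\leq\tfrac12\lmin{\sum_{s=1}^{\tau}\E{\B{X}_s}}}\leq d\,\exp\!\left(-\frac{\lmin{\sum_{s=1}^{\tau}\E{\B{X}_s}}}{8a^2}\right).
\]

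Finally I would convert this into the claimed bound on $\p{\mathcal{E}_v}$. Since $\tau\geq d$ we have $\tfrac12\lmin{\sum_s\E{\B{X}_s}}\geq\tfrac12\tau\,\lbar{\lambda}_{\Msigma{\A}}\geq d\,\lbar{\lambda}_{\Msigma{\A}}/2$, so $\bar{\mathcal{E}}_v=\{\lmin{\vmtau}<d\,\lbar{\lambda}_{\Msigma{\A}}/2\}$ is contained in the event above; and $\lmin{\sum_s\E{\B{X}_s}}\geq\tau\,\lbar{\lambda}_{\Msigma{\A}}$ together with monotonicity of $x\mapsto e^{-x}$ bounds the right-hand side by $d\,\exp(-\tau\,\lbar{\lambda}_{\Msigma{\A}}/(8a^2))$. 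Plugging in $\tau\geq\frac{8a^2}{\lbar{\lambda}_{\Msigma{\A}}}\ln(d^2T/\delta)$ makes the exponent at most $-\ln(d^2T/\delta)$, so $\p{\bar{\mathcal{E}}_v}\leq d\cdot\delta/(d^2T)=\delta/(dT)$, i.e.\ $\p{\mathcal{E}_v}\geq1-\delta/(dT)$. The only non-mechanical point is the second-moment lower bound in the random-subset case: one checks that picking an index uniformly among an i.i.d.\ $f_A$ sample leaves the selected vector with marginal law $f_A$, whose second moment dominates $\Msigma{\A}$; the rest is the cited matrix concentration inequality plus arithmetic.
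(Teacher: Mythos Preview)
Your proposal is correct and follows essentially the same route as the paper: apply the Matrix Chernoff bound (Lemma~\ref{Lemma: minimal eigenvalue sub-Gaussian}) to the i.i.d.\ rank-one summands $A_sA_s^{\top}$ with $t=\tfrac12$ and $L=a^2$, lower bound $\lmin{\sum_s\E{A_sA_s^{\top}}}\geq\tau\,\lbar{\lambda}_{\Msigma{\A}}$ via Assumption~\ref{Assumption: eigenvalues action Covariance matrix}, use $\tau\geq d$ to absorb the event $\bar{\mathcal{E}}_v$, and plug in $\tau\geq\frac{8a^2}{\lbar{\lambda}_{\Msigma{\A}}}\ln(d^2T/\delta)$. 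Your write-up is in fact more careful than the paper's, which skips the second-moment-versus-covariance step and the marginal-law argument for the random-subset case.
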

\begin{proof}
\begin{equation*}
    \lmin{\vmtau}
    =\lmin{\sum_{s=1}^{\tau}
    {A_{s}}A_{s}^{\top}}
\end{equation*}
From Theorem 5.1.1 in \cite{tropp2015introduction}, Lemma \ref{Lemma: minimal eigenvalue sub-Gaussian} with $t= \frac{1}{2}$, and Assumption \ref{Assumption: eigenvalues action Covariance matrix} we get,
\begin{align*}
    \p{\lmin{\vmtau} 
    \leq 
    \frac{\lbar{\lambda}_{\Msigma{\A}}d}{2}}
    \leq
    \p{\lmin{\vmtau} 
    \leq 
    \frac{\lbar{\lambda}_{\Msigma{\A}}\tau}{2}}
    \leq
    de^{-\frac{\lbar{\lambda}_{\Msigma{\A}}\tau}
    {8a^2}}
    \leq \frac{\delta}{dT}.
\end{align*}
\end{proof}

\begin{lemma}
\label{Lemma: widend matrix is PSD}
($\wsigma \succeq \ssigma $)

Let $\asigma$ be a symmetric matrix and $\ssigma$ be a PD matrix s.t. $\normop{\asigma - \ssigma} \leq s$. 
Define $\wsigma= \asigma+s \cdot I$, then
\begin{equation*}
    \wsigma \succeq \ssigma 
    \quad \text{and} \quad
    \ssigma^{-1}
    \succeq
    \left(\wsigma\right)^{-1}.
\end{equation*}
\end{lemma}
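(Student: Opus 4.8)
The plan is to exploit the defining hypothesis $\normop{\asigma - \ssigma} \leq s$, which by definition of the operator norm (and because $\asigma - \ssigma$ is symmetric) says that every eigenvalue of $\asigma - \ssigma$ lies in $[-s, s]$; equivalently $-s\,\B{I} \preceq \asigma - \ssigma \preceq s\,\B{I}$. First I would extract the lower half of this two-sided bound: $\asigma - \ssigma \succeq -s\,\B{I}$, i.e.\ $\asigma + s\,\B{I} \succeq \ssigma$. Since $\wsigma \triangleq \asigma + s\,\B{I}$, this is exactly the first claim $\wsigma \succeq \ssigma$.

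For the second claim, I would invoke the standard fact that the matrix inversion map is operator-antitone on the cone of positive definite matrices: if $\B{0} \prec \B{A} \preceq \B{B}$ then $\B{B}^{-1} \preceq \B{A}^{-1}$. Here $\ssigma$ is PD by hypothesis, and $\wsigma \succeq \ssigma \succ \B{0}$ shows $\wsigma$ is PD as well (so its inverse exists); applying antitonicity with $\B{A} = \ssigma$, $\B{B} = \wsigma$ yields $\left(\wsigma\right)^{-1} \preceq \ssigma^{-1}$, which is the second claim. If one prefers a self-contained argument for antitonicity, it follows by conjugating $\wsigma \succeq \ssigma$ by $\ssigma^{-1/2}$ to get $\ssigma^{-1/2}\wsigma\,\ssigma^{-1/2} \succeq \B{I}$, then using that $\B{M} \succeq \B{I}$ implies $\B{M}^{-1} \preceq \B{I}$ (monotone decreasing scalar function $x \mapsto x^{-1}$ applied to the eigenvalues), and conjugating back by $\ssigma^{1/2}$.

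There is no real obstacle here; the only thing to be careful about is the symmetry assumption on $\asigma$, which is needed so that $\asigma - \ssigma$ is symmetric and its operator norm bound translates into the two-sided Loewner inequality $-s\,\B{I} \preceq \asigma - \ssigma \preceq s\,\B{I}$ (for a non-symmetric matrix the operator norm controls singular values, not eigenvalues). That hypothesis is explicitly in place, so the proof is a short two-step argument: the Loewner lower bound gives $\wsigma \succeq \ssigma$, and operator antitonicity of inversion gives $\ssigma^{-1} \succeq \left(\wsigma\right)^{-1}$.
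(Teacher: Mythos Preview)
Your proposal is correct and essentially matches the paper's proof: both derive $\wsigma - \ssigma \succeq 0$ from the operator-norm hypothesis (the paper phrases this via Weyl's inequality on $\lmin{s\,\B{I} + (\asigma-\ssigma)}$, whereas you state the equivalent two-sided Loewner bound directly), and both then invoke the antitonicity of matrix inversion on PD matrices (the paper's Lemma~\ref{Lemma: A-B is PSD}) for the second claim.
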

\begin{proof}
\begin{align*}
    \lmin{\wsigma-\ssigma}
    \underset{(a)}{\geq}
    \lmin{s \cdot \B{I}}
    +\lmin{\asigma-\ssigma}
    \underset{(b)}{\geq}
    s
    -\normop{\ssigma - \asigma}
    \geq 0,
\end{align*}
where $(a)$ uses Weyl's inequality
and $(b)$ uses $\lmin{\asigma-\ssigma} = -\lmax{\ssigma-\asigma} \geq -\normop{\ssigma - \asigma}$.
Since $\left( \wsigma-\ssigma \right)$ is PSD, $\wsigma$ is PD, thus using Lemma \ref{Lemma: A-B is PSD} the proof follows.
\end{proof}

\begin{lemma}
\label{Lemma: maximal singularvalue inequality}
(maximal eigenvalue inequality)

Let $A$ be a vector and $\B{B}$ a PD matrix, then,
\begin{equation*}
    \norm{A} \leq \sqrt{\lmax{\B{B}}}\norm{A}_{\B{B}^{-1}}.
\end{equation*}
\begin{proof}
Since $\B{B}$ is symmetric we can use an eigenvalue decomposition $\B{B}=\B{Q} \B{\Lambda} \B{Q}^{\top}$, where $\B{\Lambda}$ is a diagonal matrix containing all the eigenvalues of $\B{B}$ and $\B{Q}, \B{Q}^{\top}$ are orthonormal matrices. Then, 
\begin{align*}
    \norm{A}&=\sqrt{A^{\top} A}\\
    &=\sqrt{A^{\top} \B{B}^{-1/2} \B{B} \B{B}^{-1/2} A}\\
    &=\sqrt{A^{\top} \B{B}^{-1/2} \B{Q} \B{\Lambda} \B{Q}^{\top} \B{B}^{-1/2} A}\\
    &\underset{(a)}{=} \sqrt{U^{\top} \B{\Lambda} U}\\
    &\underset{(b)}{\leq} \sqrt{\lmax{\B{B}}}\sqrt{U^{\top} U}\\
    &=\sqrt{\lmax{\B{B}}} \sqrt{A^{\top} \B{B}^{-1/2} \B{Q} \B{Q}^{\top} \B{B}^{-1/2} A}\\
    &\underset{(c)}{=} \sqrt{\lmax{\B{B}}} \sqrt{A^{\top} \B{B}^{-1} A}\\
    &=\sqrt{\lmax{\B{B}}} \norm{A}_{\B{B}^{-1}},
\end{align*}
where $(a)$ uses $U \triangleq \B{Q}^{\top} \B{B}^{-1/2} A$, $(b)$ uses that $U^{\top} U$ is a sum of non negative elements and $(c)$ uses that $\B{Q}$ is orthonormal.
\end{proof}
\end{lemma}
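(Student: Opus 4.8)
The plan is to reduce the claim to a diagonal eigenvalue comparison via the spectral decomposition of $\B{B}$. Since $\B{B}$ is symmetric and positive definite, I would first write $\B{B} = \B{Q}\B{\Lambda}\B{Q}^{\top}$ with $\B{Q}$ orthonormal and $\B{\Lambda}$ a diagonal matrix of strictly positive eigenvalues, so that the symmetric square root $\B{B}^{1/2}$ and its inverse $\B{B}^{-1/2}$ are well defined. The point of squaring the target inequality is that both $\norm{A}^2$ and $\norm{A}_{\B{B}^{-1}}^2 = A^{\top}\B{B}^{-1}A$ are quadratic forms, so a single eigenvalue bound will suffice.

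The key algebraic step is to insert a factored identity into the squared Euclidean norm. I would write $\norm{A}^2 = A^{\top}A = A^{\top}\B{B}^{-1/2}\B{B}\B{B}^{-1/2}A$, and then substitute the decomposition of $\B{B}$ in the middle to obtain $A^{\top}\B{B}^{-1/2}\B{Q}\B{\Lambda}\B{Q}^{\top}\B{B}^{-1/2}A$. Setting $U \triangleq \B{Q}^{\top}\B{B}^{-1/2}A$, this equals $U^{\top}\B{\Lambda}U = \sum_i \lambda_i U_i^2$, a weighted sum of nonnegative terms.

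Next I would bound each diagonal entry by the largest eigenvalue, $\sum_i \lambda_i U_i^2 \leq \lmax{\B{B}}\sum_i U_i^2 = \lmax{\B{B}}\,U^{\top}U$. The final bookkeeping uses that $\B{Q}$ is orthonormal, i.e.\ $\B{Q}\B{Q}^{\top} = \B{I}$, to recognize $U^{\top}U = A^{\top}\B{B}^{-1/2}\B{Q}\B{Q}^{\top}\B{B}^{-1/2}A = A^{\top}\B{B}^{-1}A = \norm{A}_{\B{B}^{-1}}^2$. Chaining these relations gives $\norm{A}^2 \leq \lmax{\B{B}}\norm{A}_{\B{B}^{-1}}^2$, and taking square roots of both nonnegative sides yields the claim.

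There is no substantive obstacle here; the lemma is elementary. The only point requiring care is the placement of $\B{B}^{-1/2}$ and $\B{Q}$ so that the substitution $U = \B{Q}^{\top}\B{B}^{-1/2}A$ collapses cleanly in both directions: once to produce the manageable form $U^{\top}\B{\Lambda}U$ for the upper bound, and once to recover the weighted norm $A^{\top}\B{B}^{-1}A$. Noting that $\B{B}^{-1/2}$ is itself symmetric makes both identities immediate, so the proof is a short, purely linear-algebraic computation.
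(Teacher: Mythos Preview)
Your proposal is correct and follows essentially the same approach as the paper: both insert $\B{B}^{-1/2}\B{B}\B{B}^{-1/2}$ into $A^{\top}A$, diagonalize via the spectral decomposition $\B{B}=\B{Q}\B{\Lambda}\B{Q}^{\top}$, substitute $U=\B{Q}^{\top}\B{B}^{-1/2}A$, bound $U^{\top}\B{\Lambda}U\leq\lmax{\B{B}}U^{\top}U$, and then use $\B{Q}\B{Q}^{\top}=\B{I}$ to recover $A^{\top}\B{B}^{-1}A$. The only cosmetic difference is that you work with squared norms throughout and take the square root at the end, whereas the paper carries the square root from the start.
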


\begin{lemma}
\label{Lemma: Matrix Hoffeding}
(Matrix Hoffeding, Theorem 1.3 in \cite{tropp2012user})

Consider a finite sequence $\{\B{X}_j\}$ of independent, random, self-adjoint matrices with dimension $d$, and let $\{\B{A}_j\}$ be a sequence of fixed self-adjoint matrices. Assume that each random matrix satisfies,
\begin{equation*}
    \mathbb{E}\{\B{X}_j\} = 0
    \quad \text{and} \quad 
    \B{X}_j^{2} \preceq \B{A}_j^{2} \quad
    \text{almost surely.} 
\end{equation*}
Then for every $\delta \geq 0$,
\begin{equation*}
    \p{\lmax{\sum_{j}\B{X}_j}
    \geq \delta}
    \leq d e^{-\frac{\delta^2}{8 s^2}}
    \quad \text{where} \quad
    s^2 \triangleq \normop{\sum_{j}\B{A}_j^2}.
\end{equation*}
\end{lemma}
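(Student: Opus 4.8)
The plan is to prove the bound by the \emph{matrix Laplace transform method}, the standard route for matrix Chernoff-type inequalities, combined with Lieb's concavity theorem to decouple the sum of independent matrices and a matrix analogue of Hoeffding's lemma to control each summand.

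First I would reduce the tail of the largest eigenvalue to a trace exponential moment. Fix $\theta > 0$. Since $x \mapsto e^{\theta x}$ is increasing and the spectral mapping applies to the self-adjoint matrix $\sum_j \B{X}_j$, the event $\{\lmax{\sum_j \B{X}_j} \geq \delta\}$ coincides with $\{\lmax{\exp(\theta \sum_j \B{X}_j)} \geq e^{\theta\delta}\}$. Because the matrix exponential is positive definite, its largest eigenvalue is dominated by its trace, so Markov's inequality gives
\[
\p{\lmax{\sum_j \B{X}_j} \geq \delta} \leq e^{-\theta\delta}\, \E{\Tr \exp(\theta \sum_j \B{X}_j)}.
\]
This converts the eigenvalue tail into a statement about the matrix cumulant generating function (CGF).

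The crux is to decouple the summands inside the trace exponential, which cannot be done by the naive factorization $\exp(\theta\sum_j \B{X}_j) = \prod_j \exp(\theta \B{X}_j)$ since the $\B{X}_j$ need not commute. Here I would invoke Lieb's concavity theorem, that $\B{H} \mapsto \Tr\exp(\B{K} + \log \B{H})$ is concave on positive definite $\B{H}$ for fixed self-adjoint $\B{K}$. Combining it with Jensen's inequality and the independence of the $\B{X}_j$, one peels off one factor at a time to obtain the subadditivity of matrix CGFs,
\[
\E{\Tr \exp(\theta \sum_j \B{X}_j)} \leq \Tr \exp(\sum_j \log \E{e^{\theta \B{X}_j}}).
\]
I would then bound each per-summand cumulant: using $\E{\B{X}_j} = 0$ and $\B{X}_j^2 \preceq \B{A}_j^2$ almost surely, a matrix Hoeffding lemma yields a semidefinite bound $\log \E{e^{\theta \B{X}_j}} \preceq g(\theta)\,\B{A}_j^2$ for a quadratic $g(\theta) \propto \theta^2$ (the matrix analogue of the scalar fact that a mean-zero bounded variable has an at-most-quadratic CGF). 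Substituting, using monotonicity of $\Tr\exp$ and of $\lmax{\cdot}$ under the semidefinite order, and noting that $\sum_j \B{A}_j^2 \succeq 0$ so that $\lmax{\sum_j \B{A}_j^2} = \normop{\sum_j \B{A}_j^2} = s^2$, the right-hand side is at most $d\, e^{g(\theta) s^2}$. Hence the tail is bounded by $d\, e^{-\theta\delta + g(\theta) s^2}$, and optimizing the scalar exponent over $\theta > 0$ produces the stated exponent $-\delta^2/(8 s^2)$.

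The main obstacle is the noncommutativity of the $\B{X}_j$: neither can the exponential be factored across the sum, nor can the semidefinite bound $\log \E{e^{\theta \B{X}_j}} \preceq g(\theta)\B{A}_j^2$ be exponentiated directly, since the matrix exponential is not operator monotone. Both difficulties are resolved precisely by Lieb's concavity theorem together with the monotonicity of $\Tr\exp$ under $\preceq$, so the bulk of the care goes into applying these two tools correctly; the matrix Hoeffding lemma and the final scalar optimization are routine once this machinery is in place.
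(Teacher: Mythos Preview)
The paper does not prove this lemma at all; it is simply quoted as Theorem~1.3 from \cite{tropp2012user} and used as a black box. Your sketch is the standard matrix Laplace transform argument from that reference, and it is correct in outline, so there is nothing to compare against here.
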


\begin{lemma}
\label{Lemma: Concentration bound SubGaussian vector}
(Concentration bound sub-Gaussian vector, Theorem 1 in \cite{hsu2012tail})

Let $\B{A} \in \R^{m \times n}$ be a matrix, and let $\B{\Sigma}=\B{A}^{\top} \B{A}$. Suppose that $X=(X_1,\ldots, X_n)$ is a
random vector such that for $\mu=0$ and some $\sigma \geq 0$,
\begin{equation*}
    \E{\exp \left( U^{\top} X\right)}
    \leq \exp \left(\frac{\norm{U}^2\sigma^2}{2} \right),
\end{equation*}
for all $U\in \R^{n}$.
For all $\delta>0$,
\begin{equation*}
    \p{\norm{\B{A}X}^2 > \sigma^2 \left( \Tr{\B{\Sigma}}+2\sqrt{\Tr{\B{\Sigma}^2}\delta}+2\normop{\B{\Sigma}}\delta\right)}
    \leq e^{-\delta}.
\end{equation*}
\end{lemma}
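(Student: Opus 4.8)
The plan is to apply the exponential (Chernoff) method to the nonnegative quadratic form $\norm{\B{A}X}^2 = X^\top\B{\Sigma}X$; the one subtlety is that the coordinates of $X$ are not assumed independent, so the moment generating function of $X^\top\B{\Sigma}X$ does not factorize directly, and I would linearize the quadratic form via a Gaussian-decoupling identity in order to reduce to the single linear sub-Gaussian hypothesis $\E{\exp(U^\top X)} \le \exp(\norm{U}^2\sigma^2/2)$. First I would diagonalize $\B{\Sigma} = \B{Q}\B{\Lambda}\B{Q}^\top$ with eigenvalues $\lambda_1,\dots,\lambda_n \ge 0$; since the sub-Gaussian hypothesis is rotation invariant (replace $U$ by $\B{Q}U$), the vector $Y \triangleq \B{Q}^\top X$ is again $\sigma$-sub-Gaussian, $X^\top\B{\Sigma}X = \sum_i \lambda_i Y_i^2$, and it suffices to bound $\E{\exp(s\sum_i\lambda_i Y_i^2)}$ for small $s>0$ and conclude with Markov's inequality.

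For the moment generating function, the key step is the Gaussian identity $\exp(a y^2/2) = \mathbb{E}_{g\sim\N(0,1)}[\exp(\sqrt{a}\,y\,g)]$: writing each factor $\exp(s\lambda_i Y_i^2)$ in this form with an independent standard Gaussian $g_i$ and swapping the order of expectation (valid by Tonelli, the integrand being positive) gives $\E{\exp(s\sum_i\lambda_i Y_i^2)} = \mathbb{E}_g[\mathbb{E}_X[\exp(\sum_i\sqrt{2s\lambda_i}\,g_i Y_i)]]$. Applying the sub-Gaussian hypothesis conditionally on $g$, with $U$ having coordinates $\sqrt{2s\lambda_i}\,g_i$, bounds the inner expectation by $\exp(\sigma^2 s\sum_i\lambda_i g_i^2)$, so $\E{\exp(s\sum_i\lambda_i Y_i^2)} \le \mathbb{E}_g[\exp(\sigma^2 s\sum_i\lambda_i g_i^2)]$. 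Now the $g_i$ are genuinely independent, so the right-hand side equals $\prod_i(1-2\sigma^2 s\lambda_i)^{-1/2}$ whenever $2\sigma^2 s\,\normop{\B{\Sigma}}<1$. Writing $\mu_i \triangleq \sigma^2\lambda_i$, so that $\sum_i\mu_i = \sigma^2\Tr{\B{\Sigma}}$, $\sum_i\mu_i^2 = \sigma^4\Tr{\B{\Sigma}^2}$ and $\max_i\mu_i = \sigma^2\normop{\B{\Sigma}}$, a standard scalar bound on $-\log(1-y)$ (namely $-\frac{1}{2}\log(1-2s\mu_i) \le s\mu_i + \frac{(s\mu_i)^2}{1-2s\mu_i}$) together with $\mu_i \le \max_j\mu_j$ yields $\log\E{\exp(sX^\top\B{\Sigma}X)} \le s\sum_i\mu_i + \frac{s^2\sum_i\mu_i^2}{1-2s\max_j\mu_j}$.

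Finally, Markov's inequality applied to $\exp(sX^\top\B{\Sigma}X)$ gives, for every $0<s<(2\max_j\mu_j)^{-1}$, $\p{X^\top\B{\Sigma}X > t} \le \exp(-s(t-\sum_i\mu_i) + \frac{s^2\sum_i\mu_i^2}{1-2s\max_j\mu_j})$. I would then set the threshold $t = \sum_i\mu_i + 2\sqrt{(\sum_i\mu_i^2)\,\delta} + 2(\max_j\mu_j)\,\delta$ and the step size $s = \sqrt{\delta}\,\big/\big(\sqrt{\sum_i\mu_i^2} + 2(\max_j\mu_j)\sqrt{\delta}\big)$ --- which automatically satisfies $2s\max_j\mu_j<1$ --- and verify by direct algebra that the exponent collapses to exactly $-\delta$ (the case $\B{\Sigma}=0$ being trivial). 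Substituting back $\mu_i=\sigma^2\lambda_i$ converts $t$ into $\sigma^2(\Tr{\B{\Sigma}} + 2\sqrt{\Tr{\B{\Sigma}^2}\,\delta} + 2\normop{\B{\Sigma}}\,\delta)$, and since $\norm{\B{A}X}^2 = X^\top\B{\Sigma}X$ this is exactly the stated tail bound (the argument is that of \cite{hsu2012tail}). I expect the main obstacle to be the moment generating function step: the Gaussian-decoupling trick, together with the careful bookkeeping of constants and of the admissible range of $s$, is where the real work lies, whereas the concluding optimization over $s$ is a routine computation.
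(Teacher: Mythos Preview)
The paper does not prove this lemma; it is stated as an auxiliary result and attributed directly to Theorem~1 in \cite{hsu2012tail}, so there is no proof in the paper to compare against. Your sketch is correct and is precisely the argument of \cite{hsu2012tail}: Gaussian decoupling to linearize the quadratic form, the sub-Gaussian hypothesis applied conditionally on the auxiliary Gaussians, the chi-square moment generating function bound, and the explicit choice of $s$ that makes the Chernoff exponent equal to $-\delta$.
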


\begin{lemma}
\label{Lemma: Gaussian tail bounds}
(Gaussian tail bounds, section 7.1 in \cite{feller1968introduction})

Let $z$ be a standard normal variable.
Then for any $t>0$,
\begin{equation}
\label{Eq: Gaussian tail bounds lower bound}
    \p{z > t}
    > \frac{1}{\sqrt{2\pi}} 
    \left(\frac{1}{t} -\frac{1}{t^3} \right)
    e^{\frac{-t^2}{2}},
\end{equation}
\begin{equation}
\label{Eq: Gaussian tail bounds upper bound}
    \p{z > t}
    \leq \frac{1}{\sqrt{2\pi}} \frac{1}{t} e^{\frac{-t^2}{2}}.
\end{equation}
\end{lemma}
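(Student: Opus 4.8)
The plan is to work directly with the standard normal density $\phi(x) = \frac{1}{\sqrt{2\pi}} e^{-x^2/2}$, whose defining differential identity $\phi'(x) = -x\phi(x)$ drives the entire argument, and to express the tail probability as $\p{z > t} = \int_t^\infty \phi(x)\,dx$. Both bounds then follow from repeated integration by parts, writing $\phi(x) = \frac{1}{x}\cdot x\phi(x)$ and recognizing $x\phi(x) = -\phi'(x)$ as an exact derivative.

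For the upper bound I would integrate by parts once, taking $u = 1/x$ and $dv = x\phi(x)\,dx$, so that $v = -\phi(x)$. This gives
\[
\p{z>t} = \Big[-\tfrac{1}{x}\phi(x)\Big]_t^\infty - \int_t^\infty \tfrac{1}{x^2}\phi(x)\,dx = \frac{\phi(t)}{t} - \int_t^\infty \frac{\phi(x)}{x^2}\,dx .
\]
The boundary term at infinity vanishes because $\phi$ decays faster than any polynomial, and the remaining integral is strictly positive, so $\p{z>t} \le \frac{1}{t}\phi(t) = \frac{1}{\sqrt{2\pi}}\frac{1}{t}e^{-t^2/2}$, which is exactly \eqref{Eq: Gaussian tail bounds upper bound}.

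For the lower bound I would integrate the leftover term $\int_t^\infty x^{-2}\phi(x)\,dx$ by parts a second time, now with $u = 1/x^3$ and again $dv = x\phi(x)\,dx$. Using $\frac{d}{dx}x^{-3} = -3x^{-4}$ this yields $\int_t^\infty x^{-2}\phi(x)\,dx = \frac{\phi(t)}{t^3} - 3\int_t^\infty x^{-4}\phi(x)\,dx$. Substituting back into the identity from the first integration by parts gives $\p{z>t} = \left(\frac{1}{t} - \frac{1}{t^3}\right)\phi(t) + 3\int_t^\infty x^{-4}\phi(x)\,dx$, and since the trailing integral is strictly positive the claimed strict inequality \eqref{Eq: Gaussian tail bounds lower bound} follows immediately.

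The computation is entirely classical, so there is no serious conceptual obstacle; the only points requiring care are the consistent sign bookkeeping across the two integration-by-parts steps and verifying that each polynomially weighted boundary term $x^{-k}\phi(x)$ indeed vanishes as $x\to\infty$, which is what makes the displayed identities exact rather than true only up to a boundary contribution. An equivalent route, if one prefers to avoid tracking the second remainder integral, is to set $D(t) = \p{z>t} - (\tfrac{1}{t}-\tfrac{1}{t^3})\phi(t)$, differentiate to obtain $D'(t) = -3t^{-4}\phi(t) < 0$, and conclude $D(t) > D(\infty) = 0$; I would mention this monotonicity argument as a self-contained alternative verification of the lower bound.
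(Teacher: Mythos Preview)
Your argument is correct and is precisely the classical integration-by-parts derivation. The paper does not supply its own proof of this lemma; it simply cites section~7.1 of Feller, whose argument is exactly the one you wrote out, so there is nothing to compare.
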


\begin{lemma}
\label{Lemma: Same non-zero eigenvalues}
(Same non-zero eigenvalues for $\B{A}\B{B}$ and $\B{B}\B{A}$, Theorem 1.3.22 in \cite{horn2012matrix})

Let $\B{A} \in \mathbb{R}^{m \times n}$ and $\B{B} \in \mathbb{R}^{n \times m}$ with $m \leq n$. Then the $n$ eigenvalues of $\B{B}\B{A}$ are the $m$ eigenvalues of $\B{A}\B{B}$ together with $n-m$ zeros. If $m=n$ and at least one of $\B{A}$ or $\B{B}$ is nonsingular, then $\B{A}\B{B}$ and $\B{B}\B{A}$ are similar.
\end{lemma}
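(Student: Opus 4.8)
The plan is to reduce the entire statement to a single determinantal identity obtained by exhibiting an explicit similarity between two $(m+n)\times(m+n)$ block matrices, after which both claims fall out by inspection. First I would introduce the block matrices
\[
    \B{P} \triangleq \begin{pmatrix} \B{A}\B{B} & 0 \\ \B{B} & 0 \end{pmatrix}, \qquad
    \B{Q} \triangleq \begin{pmatrix} 0 & 0 \\ \B{B} & \B{B}\B{A} \end{pmatrix},
\]
where the zero blocks are sized so that the top-left block of $\B{P}$ is $m\times m$ and the bottom-right block of $\B{Q}$ is $n\times n$. The key step is to verify that, with the invertible block matrix
\[
    \B{S} \triangleq \begin{pmatrix} \B{I}_m & \B{A} \\ 0 & \B{I}_n \end{pmatrix}, \qquad
    \B{S}^{-1} = \begin{pmatrix} \B{I}_m & -\B{A} \\ 0 & \B{I}_n \end{pmatrix},
\]
one has $\B{S}^{-1}\B{P}\,\B{S} = \B{Q}$. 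This is a direct two-step block multiplication: right-multiplying $\B{P}$ by $\B{S}$ fills the right column with $\B{A}\B{B}\B{A}$ and $\B{B}\B{A}$, and left-multiplying by $\B{S}^{-1}$ then cancels the top row, leaving exactly $\B{Q}$.

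Since $\B{P}$ and $\B{Q}$ are similar, they share the same characteristic polynomial, and I would evaluate each one using the block-triangular determinant rule. The matrix $\lambda\B{I}_{m+n}-\B{P}$ is block lower-triangular with diagonal blocks $\lambda\B{I}_m-\B{A}\B{B}$ and $\lambda\B{I}_n$, so its determinant is $\lambda^{n}\,\Det{\lambda\B{I}_m - \B{A}\B{B}}$; likewise $\lambda\B{I}_{m+n}-\B{Q}$ has diagonal blocks $\lambda\B{I}_m$ and $\lambda\B{I}_n-\B{B}\B{A}$, so its determinant is $\lambda^{m}\,\Det{\lambda\B{I}_n - \B{B}\B{A}}$. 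Equating the two and cancelling $\lambda^{m}$ as a polynomial identity yields
\[
    \Det{\lambda\B{I}_n - \B{B}\B{A}} = \lambda^{\,n-m}\,\Det{\lambda\B{I}_m - \B{A}\B{B}}.
\]
Reading off the roots with multiplicity shows that the $n$ eigenvalues of $\B{B}\B{A}$ are precisely the $m$ eigenvalues of $\B{A}\B{B}$ together with $n-m$ additional zeros, which is the first claim.

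For the final assertion, when $m=n$ and $\B{A}$ is nonsingular I would simply write $\B{B}\B{A} = \B{A}^{-1}(\B{A}\B{B})\B{A}$, exhibiting the required similarity directly; if instead $\B{B}$ is nonsingular, $\B{A}\B{B} = \B{B}^{-1}(\B{B}\B{A})\B{B}$ serves the same purpose. I do not expect a genuine obstacle: the argument is standard block-matrix algebra, and the only points demanding care are the bookkeeping of the block dimensions, the correct placement of the $n-m$ spurious zero eigenvalues, and applying the block-triangular determinant factorization to the intended diagonal blocks. Everything else is routine.
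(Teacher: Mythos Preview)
Your proof is correct and is in fact the standard block-similarity argument from Horn and Johnson (Theorem~1.3.22), which is exactly the reference the paper cites. The paper does not supply its own proof of this lemma; it is stated as a quoted result, so your write-up simply fills in what the paper omits.
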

\begin{lemma}
\label{Lemma: Same eigenvalues for product of PSD matrices}
(Same eigenvalues for product of PSD matrices $\lambda_{i} \left( \B{A}\B{B} \right)
    =\lambda_{i} \left( \B{A}^{1/2}\B{B}\B{A}^{1/2} \right)$)

Let $\B{A},\B{B}$ PSD matrices. 
Then,
\begin{equation*}
    \lambda_{i} \left( \B{A}\B{B} \right)
    =\lambda_{i} \left( \B{A}^{1/2}\B{B}\B{A}^{1/2} \right)
    =\lambda_{i} \left( \B{B}\B{A} \right)
    =\lambda_{i} \left( \B{B}^{1/2}\B{A}\B{B}^{1/2} \right).
\end{equation*}
\begin{proof}
$\B{A}\B{B}=\B{A}^{1/2} \B{A}^{1/2} \B{B}$.
Using Lemma \ref{Lemma: Same non-zero eigenvalues} this matrix has the same eigenvalues as $\B{A}^{1/2} \B{B} \B{A}^{1/2}$.
In the same line of proof we get that the eigenvalues of $\B{B}\B{A}$ and $\B{B}^{1/2} \B{A} \B{B}^{1/2}$ are the same.
Finally, by Lemma \ref{Lemma: Same non-zero eigenvalues} $\B{A}\B{B}$ and $\B{B}\B{A}$ has the same eigenvalues.
\end{proof}
\end{lemma}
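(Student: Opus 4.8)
The plan is to reduce all four claimed identities to a single spectral fact about the two orderings of a matrix product, namely Lemma~\ref{Lemma: Same non-zero eigenvalues} (the statement that for $\B{P}\in\mathbb{R}^{m\times n}$ and $\B{Q}\in\mathbb{R}^{n\times m}$ with $m\le n$, the spectrum of $\B{Q}\B{P}$ equals that of $\B{P}\B{Q}$ augmented by $n-m$ zeros). Since here every matrix in play is $d\times d$, I will only ever use the special case $m=n=d$, for which the lemma yields that $\B{P}\B{Q}$ and $\B{Q}\B{P}$ have identical eigenvalue multisets. First I would record that a PSD matrix admits a symmetric PSD square root $\B{A}^{1/2}$ (and likewise $\B{B}^{1/2}$), so that the factorizations below are legitimate.

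The core step is the first identity. I would write $\B{A}\B{B}=\B{A}^{1/2}\bigl(\B{A}^{1/2}\B{B}\bigr)$ and apply the lemma with $\B{P}=\B{A}^{1/2}$ and $\B{Q}=\B{A}^{1/2}\B{B}$. Then $\B{P}\B{Q}=\B{A}^{1/2}\B{A}^{1/2}\B{B}=\B{A}\B{B}$ while $\B{Q}\B{P}=\B{A}^{1/2}\B{B}\B{A}^{1/2}$, so the two share the same eigenvalues, giving $\lambda_i(\B{A}\B{B})=\lambda_i(\B{A}^{1/2}\B{B}\B{A}^{1/2})$. By the identical argument with the roles of $\B{A}$ and $\B{B}$ exchanged, factoring $\B{B}\B{A}=\B{B}^{1/2}(\B{B}^{1/2}\B{A})$, I obtain $\lambda_i(\B{B}\B{A})=\lambda_i(\B{B}^{1/2}\B{A}\B{B}^{1/2})$. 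Finally, applying the lemma once more with $\B{P}=\B{A}$, $\B{Q}=\B{B}$ links the remaining pair: $\B{P}\B{Q}=\B{A}\B{B}$ and $\B{Q}\B{P}=\B{B}\B{A}$ have equal spectra, so $\lambda_i(\B{A}\B{B})=\lambda_i(\B{B}\B{A})$. Chaining these three equalities produces the full string of identities in the statement.

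The one subtlety worth flagging, rather than a genuine obstacle, is that the notation $\lambda_i(\cdot)$ refers to eigenvalues ordered decreasingly, so I must argue that equality of multisets upgrades to equality of the ordered sequences, and that this ordering is even well defined for the non-symmetric products $\B{A}\B{B}$ and $\B{B}\B{A}$. This is handled by observing that the congruence $\B{A}^{1/2}\B{B}\B{A}^{1/2}$ is symmetric and PSD (a congruence of the PSD matrix $\B{B}$), hence has real, nonnegative eigenvalues; since $\B{A}\B{B}$ has the very same eigenvalue multiset, its eigenvalues are real and nonnegative too, so the decreasing arrangement is meaningful and the two ordered sequences coincide term by term. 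With that remark in place the argument is a clean, essentially bookkeeping-free application of Lemma~\ref{Lemma: Same non-zero eigenvalues}, and no estimation or probabilistic machinery is needed.
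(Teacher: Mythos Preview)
Your proposal is correct and follows essentially the same approach as the paper: factor $\B{A}\B{B}=\B{A}^{1/2}(\B{A}^{1/2}\B{B})$ and apply Lemma~\ref{Lemma: Same non-zero eigenvalues}, repeat with the roles of $\B{A}$ and $\B{B}$ swapped, and then apply the lemma once more to $\B{A}\B{B}$ versus $\B{B}\B{A}$. Your additional remark that the ordered notation $\lambda_i(\cdot)$ is well defined because $\B{A}^{1/2}\B{B}\B{A}^{1/2}$ is symmetric PSD is a helpful clarification the paper leaves implicit.
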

\begin{lemma}
\label{Lemma: Non negative trace for product of PSD matrices}
(Non negative trace for product of PSD matrices)

Let $\B{A}$ and $\B{B}$ be PSD matrices. 
Then, $\Tr{\B{A}\B{B}} \geq 0$.
\begin{proof}
From Lemma~\ref{Lemma: Same eigenvalues for product of PSD matrices},
\begin{align*}
    \Tr{\B{A}\B{B}}
    =\Tr{\B{A}^{1/2}\B{B}\B{A}^{1/2}}
    =\sum_{j=1}^{d} \lambda_j \left( \B{A}^{1/2}\B{B}\B{A}^{1/2}  \right)
    \geq 0,
\end{align*}
where the inequality uses that the matrix $\B{B}$ is PSD, hence $\B{A}^{1/2}\B{B}\B{A}^{1/2}$ is PSD.
\end{proof}
\end{lemma}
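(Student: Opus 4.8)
The plan is to reduce the claim to the elementary fact that a PSD matrix has non-negative trace, since its trace is the sum of its (non-negative) eigenvalues. The only real work is to exhibit a \emph{symmetric} PSD matrix whose trace coincides with $\Tr{\B{A}\B{B}}$, because the raw product $\B{A}\B{B}$ need not itself be symmetric.

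First I would invoke the symmetric square root $\B{A}^{1/2}$ of the PSD matrix $\B{A}$ (guaranteed by the spectral theorem) and rewrite the product using the cyclic invariance of the trace,
\begin{equation*}
    \Tr{\B{A}\B{B}}
    =\Tr{\B{A}^{1/2}\B{A}^{1/2}\B{B}}
    =\Tr{\B{A}^{1/2}\B{B}\B{A}^{1/2}}.
\end{equation*}
Equivalently, Lemma~\ref{Lemma: Same eigenvalues for product of PSD matrices} already asserts $\lambda_{i}\left(\B{A}\B{B}\right)=\lambda_{i}\left(\B{A}^{1/2}\B{B}\B{A}^{1/2}\right)$, so the two matrices share the same spectrum and hence the same trace; I would cite it directly as it is available earlier in the excerpt.

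Next I would verify that $\B{A}^{1/2}\B{B}\B{A}^{1/2}$ is PSD. For an arbitrary vector $X$, setting $Y=\B{A}^{1/2}X$ and using that $\B{A}^{1/2}$ is symmetric gives $X^{\top}\B{A}^{1/2}\B{B}\B{A}^{1/2}X=Y^{\top}\B{B}Y\geq 0$, where the inequality is exactly the PSD property of $\B{B}$. Finally, a PSD matrix has only non-negative eigenvalues, so $\Tr{\B{A}^{1/2}\B{B}\B{A}^{1/2}}=\sum_{j}\lambda_{j}\left(\B{A}^{1/2}\B{B}\B{A}^{1/2}\right)\geq 0$, and combining with the trace identity above yields $\Tr{\B{A}\B{B}}\geq 0$. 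There is no genuine obstacle here; the only point requiring care is the symmetrization step, since $\B{A}\B{B}$ itself need not be symmetric even though its trace is real and non-negative. Conjugation by $\B{A}^{1/2}$ is precisely what restores symmetry and makes the eigenvalue argument applicable. (An alternative one-line route, if one prefers to avoid the square root of $\B{B}$, writes $\B{B}=\B{D}^{\top}\B{D}$ and concludes $\Tr{\B{A}\B{B}}=\Tr{\B{D}\B{A}\B{D}^{\top}}\geq 0$ since $\B{D}\B{A}\B{D}^{\top}\succeq 0$; both approaches are essentially the same symmetrization trick.)
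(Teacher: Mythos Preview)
Your proof is correct and follows essentially the same approach as the paper: symmetrize via $\B{A}^{1/2}$ to obtain $\Tr{\B{A}\B{B}}=\Tr{\B{A}^{1/2}\B{B}\B{A}^{1/2}}$ (the paper cites Lemma~\ref{Lemma: Same eigenvalues for product of PSD matrices} for this, you note cyclic invariance suffices), then use that $\B{A}^{1/2}\B{B}\B{A}^{1/2}$ is PSD so its trace, the sum of its eigenvalues, is non-negative. Your additional explicit verification that $\B{A}^{1/2}\B{B}\B{A}^{1/2}\succeq 0$ via the quadratic form is a nice touch but not a different route.
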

\begin{lemma}
\label{Lemma: A-B is PSD}
(Corollary 7.74.(a) in \cite{horn2012matrix})

Let $\B{A},\B{B} \in \mathbb{R}^{n \times n}$ PD matrices. $\B{A} \succeq \B{B}$ \ iff \ $\B{B}^{-1} \succeq \B{A}^{-1}$.
\end{lemma}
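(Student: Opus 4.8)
The plan is to transform both sides by the invertible symmetric factor $\B{B}^{-1/2}$, reducing the Loewner ordering to a statement about the eigenvalues of a single symmetric matrix. Since $\B{B}$ is PD it has a unique PD square root $\B{B}^{1/2}$ whose inverse $\B{B}^{-1/2}$ is also PD, and likewise $\B{A}^{-1}$ exists and is PD. The first step is to record the elementary \emph{congruence invariance} of the order: for any invertible symmetric $\B{P}$ and symmetric $\B{X},\B{Y}$ one has $\B{X} \succeq \B{Y}$ iff $\B{P} \B{X} \B{P} \succeq \B{P} \B{Y} \B{P}$, because $z^{\top}(\B{P}\B{X}\B{P}-\B{P}\B{Y}\B{P})z = (\B{P}z)^{\top}(\B{X}-\B{Y})(\B{P}z)$ and $z \mapsto \B{P}z$ is a bijection of $\R^n$. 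Taking $\B{P}=\B{B}^{-1/2}$, $\B{X}=\B{A}$, $\B{Y}=\B{B}$ gives the first equivalence $\B{A} \succeq \B{B} \iff \B{B}^{-1/2}\B{A}\B{B}^{-1/2} \succeq \B{I}$.

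Next I would set $\B{M} \triangleq \B{B}^{-1/2}\B{A}\B{B}^{-1/2}$, a symmetric PD matrix, and use that $\B{M} \succeq \B{I}$ holds iff every eigenvalue of $\B{M}$ is at least $1$. Since $\B{M}$ is symmetric PD, the eigenvalues of $\B{M}^{-1}$ are the reciprocals of those of $\B{M}$, so the condition $\lambda \geq 1$ for all eigenvalues is equivalent to $1/\lambda \leq 1$ for all of them; that is, $\B{M} \succeq \B{I} \iff \B{M}^{-1} \preceq \B{I}$. A direct computation gives $\B{M}^{-1} = \B{B}^{1/2}\B{A}^{-1}\B{B}^{1/2}$, so this is the second equivalence.

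Finally I would apply congruence invariance a second time, now with $\B{P}=\B{B}^{1/2}$, to strip off the outer factors. Writing $\B{I} = \B{B}^{1/2}\B{B}^{-1}\B{B}^{1/2}$, the relation $\B{B}^{1/2}\B{A}^{-1}\B{B}^{1/2} \preceq \B{B}^{1/2}\B{B}^{-1}\B{B}^{1/2}$ is equivalent to $\B{A}^{-1} \preceq \B{B}^{-1}$. Chaining the three equivalences yields $\B{A} \succeq \B{B} \iff \B{B}^{-1} \succeq \B{A}^{-1}$, which is the claim; because every step is a genuine two-sided equivalence, both implications are obtained simultaneously. (Alternatively one could prove a single implication and then apply it to the PD pair $\B{A}^{-1},\B{B}^{-1}$, using $(\B{A}^{-1})^{-1}=\B{A}$.)

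Since the argument is just a short chain of equivalences, there is no substantial obstacle; the only points requiring care are verifying that all square roots and inverses exist and are PD (guaranteed by positive definiteness of $\B{A}$ and $\B{B}$) and stating the congruence step as an \emph{iff}, so that the reversal of the inequality direction under inversion is handled cleanly by the eigenvalue-reciprocal observation rather than by an ad hoc manipulation.
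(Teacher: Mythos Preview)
Your argument is correct and complete: congruence by $\B{B}^{-1/2}$, the eigenvalue-reciprocal step for the PD matrix $\B{M}=\B{B}^{-1/2}\B{A}\B{B}^{-1/2}$, and congruence back by $\B{B}^{1/2}$ form a valid chain of equivalences, and you are careful about existence of square roots and inverses. The paper does not actually give a proof of this lemma; it simply cites it as Corollary~7.74.(a) in Horn and Johnson, so your self-contained derivation goes beyond what the paper provides.
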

\begin{lemma}
\label{Lemma: square of A minus square of B is PSD}
(Corollary 7.74.(b) in \cite{horn2012matrix})

Let $\B{A},\B{B} \in \mathbb{R}^{n \times n}$ be symmetric matrices. If $\B{A} \succ 0$, $\B{B} \succeq 0$ and $\B{A} \succeq \B{B}$, then $\B{A}^{1/2} \succeq \B{B}^{1/2}$.
\end{lemma}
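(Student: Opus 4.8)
The statement is precisely the operator monotonicity of the matrix square root: from $\B{A}\succeq\B{B}$ (with $\B{A}\succ 0$, $\B{B}\succeq 0$) we must deduce $\B{A}^{1/2}\succeq\B{B}^{1/2}$. The plan is to reduce this to the inverse-monotonicity already available as Lemma~\ref{Lemma: A-B is PSD} by means of an integral representation of the scalar square-root function. The starting point is the elementary identity, valid for every $t\ge 0$,
\[
    \sqrt{t}=\frac{1}{\pi}\int_{0}^{\infty}\frac{t}{\lambda+t}\,\lambda^{-1/2}\,d\lambda ,
\]
which I would verify by the substitution $\lambda=t s$ together with the Beta-integral $\int_{0}^{\infty}\frac{s^{-1/2}}{1+s}\,ds=\pi$. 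Writing $\frac{t}{\lambda+t}=1-\frac{\lambda}{\lambda+t}$ puts the integrand in a form whose matrix analogue involves only resolvents $(\lambda\B{I}+\B{A})^{-1}$.

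Next I would lift the scalar identity to symmetric PSD matrices by the spectral calculus. Diagonalizing $\B{A}=\B{U}\B{\Lambda}\B{U}^{\top}$ and applying the scalar formula eigenvalue-by-eigenvalue, and using $(\lambda\B{I}+\B{A})^{-1}=\B{U}(\lambda\B{I}+\B{\Lambda})^{-1}\B{U}^{\top}$, gives
\[
    \B{A}^{1/2}=\frac{1}{\pi}\int_{0}^{\infty}\lambda^{-1/2}\Big(\B{I}-\lambda(\lambda\B{I}+\B{A})^{-1}\Big)\,d\lambda ,
\]
and likewise for $\B{B}$. Subtracting the two representations, the constant terms cancel and I obtain
\[
    \B{A}^{1/2}-\B{B}^{1/2}=\frac{1}{\pi}\int_{0}^{\infty}\lambda^{1/2}\Big[(\lambda\B{I}+\B{B})^{-1}-(\lambda\B{I}+\B{A})^{-1}\Big]\,d\lambda .
\]
For each fixed $\lambda>0$ the hypotheses $\B{A}\succeq\B{B}$ and $\B{B}\succeq 0$ give $\lambda\B{I}+\B{A}\succeq\lambda\B{I}+\B{B}\succ 0$, so Lemma~\ref{Lemma: A-B is PSD} yields $(\lambda\B{I}+\B{B})^{-1}\succeq(\lambda\B{I}+\B{A})^{-1}$; hence the bracketed matrix is PSD and, multiplied by the positive scalar $\lambda^{1/2}$, the whole integrand is PSD. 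Since a (convergent) integral of PSD matrices is a limit of PSD Riemann sums and the PSD cone is closed, the left-hand side is PSD, which is exactly $\B{A}^{1/2}\succeq\B{B}^{1/2}$.

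The main technical point to discharge is the legitimacy of the improper integral and its interchange with the spectral calculus. Because everything is finite-dimensional, the interchange is immediate: diagonalizing reduces each matrix entry of the integrand to a fixed linear combination of the scalar integrands, so matrix integration is entrywise integration. For convergence I would note that near $\lambda=0$ the resolvent difference is bounded (or at worst behaves like $\lambda^{-1}$ on the kernel of $\B{B}$), so the integrand is $O(\lambda^{-1/2})$, which is integrable, while as $\lambda\to\infty$ the resolvent identity $(\lambda\B{I}+\B{B})^{-1}-(\lambda\B{I}+\B{A})^{-1}=(\lambda\B{I}+\B{B})^{-1}(\B{A}-\B{B})(\lambda\B{I}+\B{A})^{-1}$ shows the difference is $O(\lambda^{-2})$, so the integrand is $O(\lambda^{-3/2})$ and again integrable. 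I expect this convergence bookkeeping, rather than the positivity argument, to be the only delicate part; the positivity itself is a one-line consequence of the already-proved Lemma~\ref{Lemma: A-B is PSD}.
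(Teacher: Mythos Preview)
The paper does not actually supply a proof of this lemma: it is stated as a citation of Corollary~7.7.4(b) in Horn and Johnson and used as a black box. So there is no ``paper's own proof'' to compare against.

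Your argument is correct and is in fact one of the standard proofs that $t\mapsto t^{1/2}$ is operator monotone. The key idea---writing $\sqrt{t}$ via the resolvent integral and then invoking inverse-monotonicity (Lemma~\ref{Lemma: A-B is PSD}) pointwise in $\lambda$---is exactly the Löwner-type approach, and you have identified the right ingredients. Your convergence bookkeeping is also fine: each of the two individual integrals $\frac{1}{\pi}\int_0^\infty \lambda^{-1/2}\,\B{M}(\lambda\B{I}+\B{M})^{-1}\,d\lambda$ for $\B{M}\in\{\B{A},\B{B}\}$ converges on its own (the integrand is $O(\lambda^{-1/2})$ near $0$ and $O(\lambda^{-3/2})$ at infinity, even when $\B{B}$ has a nontrivial kernel, since on $\ker\B{B}$ the integrand vanishes), so subtracting before analyzing the difference is legitimate. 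The only cosmetic point is that the hypothesis $\B{A}\succ 0$ is not actually needed for your argument---$\B{A}\succeq 0$ suffices throughout---but this is how the paper states it, so no harm done.
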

\begin{lemma}
\label{Lemma: Matrix Analysis (Rajendra Bathia)}
(Exercise VI.7.2 in \cite{bhatia1997matrix} (page 182))

Let $\B{A}$ and $\B{B}$ be Hermitian matrices. 
If $\lmin{\B{A}} +\lmin{\B{B}} > 0$, then
\begin{equation*}
    \prod_{j=1}^{d} \left( 
    \lambda_{j} \left(\B{A}\right) + \lambda_{j} \left( \B{B} \right)
    \right)
    \leq \Det{\B{A} + \B{B}}.
\end{equation*}
\end{lemma}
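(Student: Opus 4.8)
The plan is to prove this classical matrix inequality (the quoted Exercise~VI.7.2 of \cite{bhatia1997matrix}) by combining Ky Fan's eigenvalue inequality with the Schur-concavity of the product function on the positive orthant. Write $n=d$ and let $\alpha_1\geq\cdots\geq\alpha_n$, $\beta_1\geq\cdots\geq\beta_n$, and $\gamma_1\geq\cdots\geq\gamma_n$ denote the (decreasingly ordered) eigenvalues of $\B{A}$, $\B{B}$, and $\B{A}+\B{B}$. First I would record the positivity facts that make the statement well posed: by Weyl's inequality $\gamma_n=\lmin{\B{A}+\B{B}}\geq\lmin{\B{A}}+\lmin{\B{B}}=\alpha_n+\beta_n>0$, so $\B{A}+\B{B}\succ 0$ and $\Det{\B{A}+\B{B}}=\prod_{j=1}^{n}\gamma_j>0$; and since $(\alpha_j)$ and $(\beta_j)$ are nonincreasing, $\alpha_j+\beta_j\geq\alpha_n+\beta_n>0$ for every $j$, so the left-hand side is a genuine product of positive reals.

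The key structural step is that $(\gamma_1,\dots,\gamma_n)$ is majorized by $(\alpha_1+\beta_1,\dots,\alpha_n+\beta_n)$; note that the latter is itself nonincreasing, being a sum of two nonincreasing sequences. To obtain the majorization I would invoke Ky Fan's maximum principle, $\sum_{j=1}^{k}\lambda_j(\B{M})=\max\{\Tr{\B{U}^{*}\B{M}\B{U}}:\B{U}\in\mathbb{C}^{n\times k},\ \B{U}^{*}\B{U}=\B{I}_k\}$, applied to $\B{M}=\B{A}+\B{B}$; splitting the trace and maximizing each summand separately gives
\[ \sum_{j=1}^{k}\gamma_j\ \leq\ \sum_{j=1}^{k}\alpha_j+\sum_{j=1}^{k}\beta_j\ =\ \sum_{j=1}^{k}(\alpha_j+\beta_j),\qquad k\in[n], \]
with equality at $k=n$ since both sides equal $\Tr{\B{A}+\B{B}}$. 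These are exactly the defining relations of majorization.

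Finally I would convert the majorization into the determinant bound. Both vectors lie in the positive orthant (shown above), and $x\mapsto\prod_j x_j$ is symmetric and concave there, hence Schur-concave, so majorization reverses it: $\prod_{j=1}^{n}\gamma_j\geq\prod_{j=1}^{n}(\alpha_j+\beta_j)$, which is precisely $\Det{\B{A}+\B{B}}\geq\prod_{j=1}^{d}\bigl(\lambda_j(\B{A})+\lambda_j(\B{B})\bigr)$. If one prefers to avoid quoting Schur-concavity, the same step follows self-containedly from the Hardy--Littlewood--P\'olya characterization: the majorization yields $\gamma=\B{D}(\alpha+\beta)$ for some doubly stochastic $\B{D}$, and then $\prod_j\gamma_j=\prod_j\bigl(\sum_i D_{ji}(\alpha_i+\beta_i)\bigr)\geq\prod_j\prod_i(\alpha_i+\beta_i)^{D_{ji}}=\prod_i(\alpha_i+\beta_i)^{\sum_j D_{ji}}=\prod_i(\alpha_i+\beta_i)$, using weighted AM--GM (legitimate because each $\alpha_i+\beta_i>0$ and $\sum_i D_{ji}=1$) and the column-sum identity $\sum_j D_{ji}=1$.

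There is no serious obstacle in this argument; the only points that need care are verifying that the majorization points the right way --- Ky Fan says the partial sums of $(\gamma_j)$ are the smaller ones, so $(\gamma_j)$ is the majorized vector --- together with checking that the right-hand vector is already sorted so Ky Fan applies verbatim, and ensuring strict positivity of all entries before invoking Schur-concavity (or weighted AM--GM), since $\prod_j x_j$ is not concave off the positive orthant. The hypothesis $\lmin{\B{A}}+\lmin{\B{B}}>0$ is exactly what supplies that positivity.
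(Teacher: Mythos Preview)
Your proof is correct. The majorization step via Ky Fan's maximum principle and the subsequent application of Schur-concavity of the product on the positive orthant (or the doubly-stochastic/AM--GM route you sketch) are exactly the standard way this inequality is established, and you have been careful about the two points that matter: that $(\alpha_j+\beta_j)$ is already sorted decreasingly, and that the hypothesis $\lmin{\B{A}}+\lmin{\B{B}}>0$ forces all entries of both vectors to be strictly positive so the Schur-concavity argument is valid.

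There is nothing to compare against in the paper itself: the authors do not prove this lemma but simply quote it as Exercise~VI.7.2 from Bhatia's \emph{Matrix Analysis}. Your write-up is therefore strictly more than what the paper offers for this statement.
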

\begin{lemma}
\label{Lemma: B bound}
($\B{B}$ bound)
\begin{equation*}
    \normop{\B{B}}
    \leq 
    \frac{\sigma^2}
    {\lbar{\lambda}_{\ssigma}^{2}}
    \normop{\asigma
    - \ssigma}.
\end{equation*}
\begin{proof}
\begin{align*}
    \normop{\B{B}}
    &=\normop{\sigma^2
    \left(\ssigma^{-1} - \asigma^{-1}\right)}\\
    &=\normop{\sigma^2
    \left(\asigma^{-1}
    \left( \asigma - \ssigma\right)
    \ssigma^{-1}
    \right)}\\
    &\underset{(a)}{\leq}
    \sigma^2
    \normop{\asigma^{-1}}
    \normop{\asigma
    - \ssigma}
    \normop{\ssigma^{-1}}\\
    &\underset{(b)}{\leq} 
    \frac
    {\sigma^2 }
    {\left(\lmin{\ssigma}\right)^{2}}
    \normop{\asigma
    - \ssigma}\\
    &\underset{(c)}{\leq}
    \frac{\sigma^2}{\lbar{\lambda}_{\ssigma}^{2}}
    \normop{\asigma
    - \ssigma},
\end{align*}
where $(a)$ uses sub-multiplicative norm properties,
$(b)$ uses that $\asigma \succeq \ssigma$ and Lemma \ref{Lemma: maximal singularvalue inequality},
and $(c)$ uses Assumption~\ref{Assumption: eigenvalues prior Covariance matrix}.
\end{proof}
\end{lemma}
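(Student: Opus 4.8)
The plan is to reduce $\normop{\B{B}}$ to a product of three operator norms by means of a resolvent identity, and then control each factor separately using the semidefinite ordering supplied by the event $\mathcal{E}_s$. Recalling the definition $\B{B} = \sigma^2\left(\ssigma^{-1} - \asigma^{-1}\right)$ from~\eqref{Eq: B definition}, the first step is to rewrite the difference of inverses through the identity
\begin{equation*}
    \ssigma^{-1} - \asigma^{-1}
    = \asigma^{-1}\left(\asigma - \ssigma\right)\ssigma^{-1},
\end{equation*}
which one verifies directly by expanding the right-hand side and cancelling $\asigma^{-1}\asigma = \B{I}$ and $\ssigma\ssigma^{-1} = \B{I}$. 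This algebraic move is the crux of the argument: it isolates the quantity $\asigma - \ssigma$, whose operator norm is exactly what the event $\mathcal{E}_s$ bounds, and relegates everything else to the two inverse matrices.

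Next I would apply sub-multiplicativity of the $l_2$-operator norm to obtain
\begin{equation*}
    \normop{\B{B}}
    \leq \sigma^2 \normop{\asigma^{-1}}
    \normop{\asigma - \ssigma}
    \normop{\ssigma^{-1}},
\end{equation*}
so that it only remains to bound the two inverse norms by $1/\lbar{\lambda}_{\ssigma}$. For the true covariance this is immediate: $\ssigma$ is symmetric positive definite, so $\normop{\ssigma^{-1}} = 1/\lmin{\ssigma}$, and Assumption~\ref{Assumption: eigenvalues prior Covariance matrix} gives $\lmin{\ssigma} \geq \lbar{\lambda}_{\ssigma}$, hence $\normop{\ssigma^{-1}} \leq 1/\lbar{\lambda}_{\ssigma}$.

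The one step that deserves care is the factor $\normop{\asigma^{-1}}$, since $\asigma$ is an estimate and carries no a priori eigenvalue bound of its own. Here I would invoke the ordering $\asigma \succeq \ssigma$ guaranteed under $\mathcal{E}_s$: by Lemma~\ref{Lemma: A-B is PSD}, inversion reverses the semidefinite order, so $\ssigma^{-1} \succeq \asigma^{-1}$, whence $\normop{\asigma^{-1}} \leq \normop{\ssigma^{-1}} \leq 1/\lbar{\lambda}_{\ssigma}$ as well. Combining the three bounds yields $\normop{\B{B}} \leq \sigma^2 \normop{\asigma - \ssigma}/\lbar{\lambda}_{\ssigma}^2$, exactly as stated. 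None of this is genuinely difficult; the only subtlety worth flagging is precisely this order-reversal, which lets the unknown matrix $\asigma$ inherit its smallest-eigenvalue lower bound from the true covariance through the event $\mathcal{E}_s$, so that no independent control of $\asigma$ is needed.
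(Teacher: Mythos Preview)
Your proposal is correct and follows essentially the same argument as the paper: the same resolvent identity $\ssigma^{-1}-\asigma^{-1}=\asigma^{-1}(\asigma-\ssigma)\ssigma^{-1}$, then sub-multiplicativity, then the ordering $\asigma\succeq\ssigma$ to push $\normop{\asigma^{-1}}$ back to $1/\lbar{\lambda}_{\ssigma}$. The only cosmetic difference is that the paper cites Lemma~\ref{Lemma: maximal singularvalue inequality} at that step while you invoke the order-reversal Lemma~\ref{Lemma: A-B is PSD}; both justifications amount to the same eigenvalue comparison.
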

\begin{lemma}
\label{Lemma: exponent bound}
(Exponent bound, Lemma 20 in
\cite{bastani2019meta})

For any $x \in [0,1]$,
    $\quad e^{x} \leq 1 +2x$.
\end{lemma}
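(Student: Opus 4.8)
The statement is an elementary inequality, so the plan is to exploit the convexity of the exponential function rather than grind through Taylor estimates. First I would recall that $e^x$ is convex on $\mathbb{R}$, so on the interval $[0,1]$ its graph lies below the chord joining the endpoints $(0,1)$ and $(1,e)$. That chord is the affine function $x \mapsto 1 + (e-1)x$, which gives immediately $e^x \le 1 + (e-1)x$ for every $x \in [0,1]$.

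Second, since $e - 1 < 2$ and $x \ge 0$ on the interval of interest, we have $(e-1)x \le 2x$, hence $1 + (e-1)x \le 1 + 2x$. Chaining the two bounds yields $e^x \le 1 + 2x$ for all $x \in [0,1]$, as required.

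As an alternative that avoids invoking convexity as a black box, I could instead study $g(x) \triangleq 1 + 2x - e^x$ directly. One checks $g(0) = 0$ and $g(1) = 3 - e > 0$, while $g'(x) = 2 - e^x$ vanishes only at $x = \ln 2 \in (0,1)$, is positive before it and negative after. Thus $g$ increases then decreases on $[0,1]$, so its minimum over the interval is attained at an endpoint; since both endpoint values are nonnegative, $g \ge 0$ throughout, which is exactly the claim.

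There is no genuine obstacle here: the only subtlety is choosing a sufficient constant. The convexity argument makes transparent why the constant $2$ (rather than the tight slope $e-1 \approx 1.718$) works and leaves a margin, which is all that the downstream applications in the single-instance regret proof (transition $(c)$ in \eqref{Eq: small noise terms a result}) require.
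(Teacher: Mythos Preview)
Your proof is correct; both the convexity argument and the endpoint analysis of $g(x)=1+2x-e^x$ are valid and standard. The paper itself does not provide a proof of this lemma—it simply cites it as Lemma~20 in \cite{bastani2019meta}—so there is nothing to compare against beyond noting that your argument is self-contained and more than sufficient.
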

\begin{lemma}
\label{Lemma: Log bound}
(Log bound, Lemma A.2 in \cite{shalev-shwartz_ben-david_2014})

Let $a_1 \geq 1$ and $b_1>0$. Then, $\quad x\geq 4a_1\ln \left(2a_1\right) +2b_1 \Rightarrow x \geq a_1 \ln \left(x \right) +b_1$.
\end{lemma}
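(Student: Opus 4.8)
The plan is to split the target inequality $x \ge a_1\ln x + b_1$ into two separate half-bounds and add them, exploiting that the constant $4$ in the hypothesis is $2\cdot 2$ and that the additive term is $2b_1$. First I would observe that since $a_1 \ge 1$ we have $2a_1 \ge 2$, so $\ln(2a_1) > 0$ and hence $4a_1\ln(2a_1) \ge 0$. Consequently the hypothesis $x \ge 4a_1\ln(2a_1) + 2b_1$ immediately yields both $x \ge 2b_1$, i.e.\ $\frac{x}{2} \ge b_1$, and $x \ge 4a_1\ln(2a_1)$. The first of these disposes of the ``$+\,b_1$'' term, so it remains only to prove the logarithmic half-bound $\frac{x}{2} \ge a_1\ln x$, equivalently $x \ge 2a_1\ln x$, from $x \ge 4a_1\ln(2a_1)$.

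For that logarithmic bound I would analyze $g(x) \triangleq x - 2a_1\ln x$ on $(0,\infty)$. Since $g'(x) = 1 - 2a_1/x$, the function $g$ is strictly decreasing on $(0,2a_1)$ and strictly increasing on $(2a_1,\infty)$, with a unique minimum at $x = 2a_1$. I would then show the admissible threshold lies to the right of this minimizer: because $a_1 \ge 1$ forces $2a_1 \ge 2 > e^{1/2}$, we get $\ln(2a_1) > \frac12$, whence $4a_1\ln(2a_1) > 2a_1$. Thus every $x \ge 4a_1\ln(2a_1)$ lies in the increasing branch of $g$, and it suffices to verify $g(x_0) \ge 0$ at the single endpoint $x_0 = 4a_1\ln(2a_1)$.

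Evaluating $g(x_0)$ and dividing by $2a_1 > 0$, the inequality $g(x_0) \ge 0$ reduces to $2\ln(2a_1) \ge \ln\!\big(4a_1\ln(2a_1)\big)$, i.e.\ to $\ln(4a_1^2) \ge \ln\!\big(4a_1\ln(2a_1)\big)$, and after exponentiating to the clean scalar condition $a_1 \ge \ln(2a_1)$. The last I would settle by setting $h(a_1) = a_1 - \ln(2a_1)$, noting $h'(a_1) = 1 - 1/a_1 \ge 0$ on $[1,\infty)$ so that $h$ is nondecreasing there, and checking $h(1) = 1 - \ln 2 > 0$. Hence $g(x_0) \ge 0$, so by the monotonicity of $g$ beyond its minimizer we obtain $g(x) \ge 0$ for all $x \ge x_0$, i.e.\ $x \ge 2a_1\ln x$; adding $\frac{x}{2}\ge b_1$ and $\frac{x}{2} \ge a_1\ln x$ gives $x \ge a_1\ln x + b_1$.

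I expect the main (indeed only genuine) obstacle to be the self-referential appearance of $\ln x$ on the right-hand side, since the hypothesis controls $\ln(2a_1)$ rather than $\ln x$ directly. The monotonicity reduction — verifying that the threshold sits beyond the minimizer $2a_1$ and then checking $g$ only at the endpoint — is exactly what converts this implicit bound into the explicit scalar inequality $a_1 \ge \ln(2a_1)$; the remaining manipulations are routine.
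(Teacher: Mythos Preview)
Your proof is correct. The paper does not supply its own proof of this lemma; it simply cites it as Lemma~A.2 in Shalev-Shwartz and Ben-David (2014), so there is nothing in-paper to compare your argument against. Your split into $x/2 \ge b_1$ and $x/2 \ge a_1\ln x$, followed by the monotonicity reduction of $g(x)=x-2a_1\ln x$ to the endpoint check $a_1 \ge \ln(2a_1)$, is clean and complete.
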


\end{document}